\let\oldsection\section
\RenewDocumentCommand{\section}{s o m}{%
  \IfBooleanTF{#1}
    {\oldsection*{\MakeUppercase{#3}}}
    {\IfValueTF{#2}
       {\oldsection[\MakeUppercase{#2}]{\MakeUppercase{#3}}}
       {\oldsection{\MakeUppercase{#3}}}
    }%
}
\newcommand{\one}{\mathbf{1}}
\renewcommand{\bar}{\overline}
\renewcommand{\tilde}{\widetilde}
\newcommand{\calL}{\mathcal{L}}
\newcommand{\calS}{\mathcal{S}}
\newcommand{\calR}{\mathcal{R}}
\newcommand{\calD}{\mathcal{D}}
\newcommand{\calH}{\mathcal{H}}
\newcommand{\calC}{\mathcal{C}}
\newcommand{\calM}{\mathcal{M}}
\newcommand{\calW}{\mathcal{W}}
\newcommand{\calA}{\mathcal{A}}
\newcommand{\calV}{\mathcal{V}}
\newcommand{\calG}{\mathcal{G}}
\newcommand{\calF}{\mathcal{F}}
\newcommand{\Identity}{\mathrm{Id}}
\newcommand{\PO}{\mathrm{PO}}
\def\eqref#1{equation~\ref{#1}}
\def\1{\bm{1}}
\DeclareMathAlphabet{\mathsfit}{\encodingdefault}{\sfdefault}{m}{sl}
\SetMathAlphabet{\mathsfit}{bold}{\encodingdefault}{\sfdefault}{bx}{n}
\newcommand{\E}{\varmathbb{E}}
\newcommand{\R}{\varmathbb{R}}
\newcommand{\inner}{\mathrm{inner}}
\DeclareMathOperator*{\argmax}{arg\,max}
\DeclareMathOperator*{\argmin}{arg\,min}
\newtheorem{theorem}{Theorem}
\newtheorem{definition}{Definition}
\newtheorem{lemma}{Lemma}
\newtheorem{assumption}{Assumption}
\newtheorem{corollary}{Corollary}
\newcommand{\set}[1]{\left\{ #1 \right\}}
\newcommand{\norm}[1]{\left\lVert #1 \right \rVert}
\newcommand{\abs}[1]{\left| #1 \right|}
\newcommand{\frakm}{\mathcal{M}}
\begin{document}

\title{Performative Reinforcement Learning with Linear Markov Decision Process}

\author{  Debmalya Mandal\\Dept. of Computer Science\\University of Warwick, UK\\ \texttt{Debmalya.Mandal@warwick.ac.uk}
\and Goran Radanovi\'c\\Max-Planck Institute for \\Software Systems, Germany\\ \texttt{gradanovic@mpi-sws.org} }

\date{~}
\maketitle


\begin{abstract}%
We study the setting of \emph{performative reinforcement learning} where the deployed policy affects both the reward, and the transition of the underlying Markov decision process. Prior work~\citep{MTR23} has addressed this problem under the tabular setting and established last-iterate convergence of repeated retraining with iteration complexity explicitly depending on the number of states. In this work, we generalize the results to \emph{linear Markov decision processes} which is the primary theoretical model of large-scale MDPs. The main challenge with linear MDP is that the regularized objective is no longer strongly concave and we want a bound that scales with the dimension of the features, rather than states which can be infinite. Our first result shows that repeatedly optimizing a regularized objective converges to a \emph{performatively stable policy}. In the absence of strong concavity, our analysis leverages a new recurrence relation that uses a specific linear combination of optimal dual solutions for proving convergence. We then tackle the finite sample setting where the learner has access to a set of trajectories drawn from the current policy. We consider a reparametrized version of the primal problem, and construct an empirical Lagrangian which is to be optimized from the samples. We show that, under a \emph{bounded coverage} condition, repeatedly solving a saddle point of this empirical Lagrangian converges to a performatively stable solution, and also construct a primal-dual algorithm that solves the empirical Lagrangian efficiently. Finally, we show several applications of the general framework of performative RL including multi-agent systems.
\end{abstract}

\doparttoc
\faketableofcontents
\section{Introduction}
The success of reinforcement learning in challenging domains like Go~\citep{GO17}, Poker~\citep{Poker19}, language modeling~\citep{ouyang2022training} has led to its adoption in various human-facing systems. Indeed we are increasingly observing the use of RL in social media recommendations~\citep{RLRecSurvey22}, healthcare~\citep{YLNY21}, traffic modeling~\citep{wei2021recent}, and many other open-ended problems. However, the classical framework of reinforcement learning is not directly applicable in these reactive settings. For example, a news recommender system might affect people's preferences, and a traffic prediction system influences how drivers choose traffic routes~\citep{dulac2019challenges}. The common theme underlying these different settings is that there is  no fixed Markov decision process (MDP), but rather the underlying MDP changes in response to the deployed policy. 

Recently, \citet{MTR23} introduced the framework of \emph{performative reinforcement learning} to encompass such reactive setting in reinforcement learning. In performative RL, the deployed policy affects both the reward and the probability transition function of the underlying MDP. In contrary to the classical RL, there is no fixed optimal policy, rather the new goal is to find a \emph{performatively stable policy}, a policy which is optimal with respect to the updated MDP after deployment. The main result of \cite{MTR23} is that repeatedly optimizing a regularized value function converges to an approximate performative stable policy in a \emph{tabular} setting.

However, most of the practical applications of RL involve a large number of states, and requires \emph{function approximation} to deal with the complexity of the optimal policy or value function. Indeed, most of the  success of RL~\citep{mnih2015human, lillicrap2015continuous, GO17} can be attributed to handling large scale state spaces through deep neural networks. On the theoretical side, a significant effort has also been invested to design provable RL algorithms with various complexities of function approximation~\citep{JYWJ20, jin2021bellman, zhou2021nearly, yang2020reinforcement}. However, these works assume a \emph{static} setting as the underlying MDP doesn't change in response to the policy, and as discussed earlier, cannot  address the concerns of real-world applications. In this work, our goal is the design of provable RL algorithms with function approximation in the presence of \emph{performativity}. In particular, we ask the following questions.
\begin{quote}
    Can we design provably efficient RL algorithms that converge to \emph{performatively stable policy} under linear function approximation? Moreover, when there is only access to a finite number of samples, can we obtain stable policies with statistical and computational complexities depending only on the dimension of the features?
\end{quote}

The reader might ask whether existing works~\citep{MTR23, PZMH20} can be naturally extended to our setting. Although \citet{MTR23} showed that repeated optimization converges to a performatively stable policy, their approach doesn't translate to linear function approximation as the regularized objective is no longer a strongly concave function, and we need to devise new strategies for ensuring convergence. Furthermore, when we can access the underlying MDP only through trajectories collected from the deployed policy, we need to ensure that sample complexity only grows polynomially with the dimension of the features $D$, and is independent of the size of the state space, which can be infinite. In particular, our contributions are the following.
\begin{enumerate}
    \item We show that repeatedly optimizing a regularized objective converges to a \emph{performatively stable policy} in a linear MDP. In the absence of strong concavity of the objective, we establish a new recurrence relation depending on a time-varying linear combination of the optimal dual solutions of the regularized objective. 
    \item Furthermore, we show how to tune the strength of the regularization to obtain an approximate \emph{performatively} stable optimal policy based on the sensitivity of the environment. We then show that our method also converges to a \emph{performatively optimal policy} by establishing a bound on the distance between performatively optimal and stable policies.
    \item For the finite sample setting, we introduce a \emph{reparametrized} version of the primal problem. Based on the new reparametrized problem, we construct an empirical Lagrangian which is to be optimized from the samples, and show that, under a \emph{bounded coverage} condition, repeatedly solving a saddle point of this Lagrangian converges to a performatively stable policy, with a sample complexity polynomial in the dimension $D$ of the features. We also design an efficient algorithm for solving the empirical Lagrangian.
    \item Finally, we show several applications of the framework of performative RL involving stochastic Stackelberg games with one and multiple followers that further highlights the generality of the framework of performativity in RL.
\end{enumerate}


\subsection{Related Work}

We recognize two important lines of related work: {\em performative prediction} and {\em reinforcement learning}.

\textbf{Performative Prediction}. Performative prediction was introduced by \citet{PZM+20}, who proposed two important solution concepts, performative stability and optimality, and established convergence guarantees of repeated retraining approaches. A series of papers extended these results~\citep{MPZH20, MPZ21, izzo2021learn, lu2023bilevel, yan2024zero} and studied different variants of the canonical performative prediction setting. These variants include multi-agent settings~\citep{narang2023multiplayer, li2022multi, BHK20, piliouras2023multi} or {\em stateful} settings~\citep{BHK20,LW22,RRL+22,izzo2022learn}, where the distribution shifts are gradual. We refer the reader to \citet{hardt2023performative} for an extensive overview of the results related to performative prediction. In contrast to this line or work, we do not focus on prediction, but sequential decision making. 

\textbf{Performative Reinforcement Learning}. the results in this paper are most related to recent works on performative RL~\citep{MTR23,rank2024performative,pollatos2025corruption}. As we already mentioned in the introduction, we extend the framework of~\citet{MTR23} by considering linear MDPs. The framework of~\citet{rank2024performative} is orthogonal to ours as it considers tabular MDPs but also a different model of pefromativity, similar to the stateful performative prediction setting. Additionally, \citet{caiperformative} recently extended the performative prediction setting to dynamical systems. Their algorithm and analysis exploit linearity similar to ours. 

\textbf{Stochastic Games}. The performative RL framework is also closely related to stochastic games~\citep{shapley1953stochastic} and multi-agent RL~\citep{zhang2021multi}. More specifically, the framework relates to settings that consider commitment policies~\citep{von2010market, letchford2012computing,vorobeychik2012computing,dimitrakakis2017multi,zhong2021can}, in which a leader agent commits a policy to which a follower (or followers) respond by optimizing its utility function. 
Performative RL provides a more general abstraction, removing the need to prescribe a specific utility function to the follower. 

\textbf{Reinforcement Learning Theory}. Our work is related to the literature on function approximation in reinforcement learning~\citep{JYWJ20, yang2020reinforcement, zhou2021nearly,  du2021bilinear}. We adopt a common modeling assumption about the structural properties of the environment, formalizing it as the linear Markov decision process (e.g., see \cite{JYWJ20}). 
For finite sample setting, our approach is based on offline RL~\citep{levine2020offline, jin2021pessimism} and we utilize the results of~\cite{GNOP23} to establish finite sample guarantees. However, it is important to note that offline RL does not consider a type of distribution shift that we study here. Namely, in offline RL, a deployed policy does not affect the environment.  

\textbf{Non-Stationarity in RL}. More broadly, our work falls under the scope of non-stationary reinforcement learning~\citep{CSZ20, WL21}. However, the main difference is that the environment shift in our setting is induced by the policy, whereas existing work in non-stationary RL typically assumes policy-independent but bounded amount of shifts~\citep{besbes2014stochastic}.

\section{Model}
We consider Markov Decision Processes (MDPs) with a state space $S$, action set $A$, discount factor $\gamma$, and starting state distribution $\rho$. The reward and the probability transition functions of the MDP depend on the adopted policy. We consider infinite-horizon setting where the learner's goal is to  maximize the total sum of discounted rewards. 
%
When the learner adopts policy $\pi$, the underlying MDP has reward function $r_\pi$ and probability transition function $P_\pi$. We will write $M(\pi)$ to denote the corresponding MDP, i.e., $M(\pi) = (S, A, P_\pi, r_\pi, \rho)$. 

When the learner adopts policy $\pi$ and the underlying MDP is $M(\pi') = (S,A,P_{\pi'}, r_{\pi'}, \rho)$, $V^\pi_{\pi'}(\rho)$ denotes the value function, i.e., the expected sum of discounted rewards given the starting state distribution $\rho$. In particular, we will refer to $V^\pi_\pi(\rho)$ as the \emph{performative value function} which can be interpreted as follows. The learner adopts policy $\pi$, in response the MDP changes to $M(\pi)$ and then $V^\pi_\pi(\rho)$ is the learner's value function in the new MDP $M(\pi)$. We next define the \emph{performatively optimal policy}, which  maximizes performative value function.


\begin{definition}[Performatively Optimal Policy]\label{defn:perf-optimality}
A policy $\pi_P$ is performatively optimal if it maximizes performative value function, i.e., 
$
\pi_P \in \argmax_{\pi'} V^{\pi'}_{\pi'}(\rho)
$.
\end{definition}
Although, $\pi_{P}$ maximizes the performative value function, it need not be stable, i.e., it need not be optimal with respect to the changed environment $M(\pi_P)$. We next define the notion of performatively stable policy which captures this notion of stability.
\begin{definition}[Performatively Stable Policy]
A policy $\pi_S$ is \emph{performatively stable} if it satisfies the
condition
$
\pi_S \in \argmax_{\pi'} V^{\pi'}_{\pi_S}(\rho)
$.
\end{definition}
The definition of performatively stable policy implies that if the underlying MDP is $M(\pi_S)$ then an optimal policy is $\pi_{S}$. It is not a priori clear if a performatively stable policy always exists as the value function is a non-convex function of policies for most standard parametric representations including softmax and direct parameterization. And, for non-convex functions, it is not always possible to guarantee the existence of a fixed point. However, \citet{MTR23} observed that a performatively stable occupancy measure exists. In particular, given a policy $\pi$, let us define its long-term discounted state-action occupancy measure in the MDP $M(\pi)$ 
 as
$
d^\pi(s,a) = \E_{\tau \sim \varmathbb{Pr}^\pi_\pi} \left[\sum_{k=0}^\infty \gamma^k \mathrm{1}\set{s_k = s, a_k = a}\mid \rho\right]
$.
Given such an occupancy measure $d$, one can consider the following policy $\pi^d$.
\begin{align}\label{eq:measure-to-policy}
    \pi^d(a|s) = \left\{\begin{array}{cc}
        \frac{d(s,a)}{\sum_b d(s,b)} & \textrm{ if } \sum_b d(s,b) > 0 \\
        \frac{1}{A} & \textrm{ otherwise }
    \end{array} \right.
\end{align}
With this definition, we can pose the problem of finding a performatively stable occupancy measure. 
An occupancy measure $d_S$ is performatively stable if it is the optimal solution of the following problem.
\begin{align}\label{eq:rl-dual}
    \begin{split}
        d_S \in \ &\argmax_{d: d \ge 0} \sum_{s,a} d(s,a) r_d(s,a)\\
        \textrm{s.t.} & \sum_a d(s,a) = \rho(s) + \gamma \cdot \sum_{s',a} d(s',a) P_d(s',a,s) \ \forall s 
    \end{split}
\end{align}
With slight abuse of notation we are writing $r_d \coloneqq  r_{\pi_d}$ and $P_d \coloneqq P_{\pi^d}$ (as defined in \cref{eq:measure-to-policy}). Note that the objective is linear and the feasible region is a convex set. Therefore, fixed-point theorems can be used to show that a performatively stable occupancy measure $d_S$ always exists. 

Now suppose the learner has found a stable occupancy measure $d_S$ and the corresponding MDP is $M_S = M(d_S) = M(\pi^{d_S})$. Then after deploying the policy $\pi^{d_S}$ the resulting occupancy measure is $d_S$ and the learner doesn’t want to re-optimize. 

\textbf{Linear Markov Decision Process}. We now introduce the definition of the linear Markov decision process~\citep{JYWJ20}. We assume known features $\phi : \calS \times \calA \rightarrow \R^D$ where each state, action pair $(s,a)$ is represented by the feature $\phi(s,a)$. Although the classic paper of \cite{JYWJ20} consider infinite state-space, we will assume finite state-space and write $\Phi \in \R^{SA \times D}$ to denote the feature matrix. However, the state-space can be arbitrarily large compared to the dimension $D$ of the features. We assume finite state-space but we believe our argument can be generalized to infinite state-space as well. In particular, observe that, one can always approximate a continuous and bounded state space by a finite state-space up to any desired accuracy and consider the feature matrix for that finite state-space. 

In our setting, the MDP's are parameterized by the occupancy measure $d$. Given an occupancy measure $d$, let $r_d$ be the corresponding reward function and $P_d$ be the corresponding probability transition function. Then there exist unknown parameters $\theta_d \in \R^D$ and $D$-dimensional measure $\bm{\mu}_d = (\mu^1_d,\ldots, \mu^D_d)$ such that the following holds.
\begin{equation}
    \label{eq:defn-linear-MDP}
    r_d(s,a) = \left \langle \phi(s,a), \theta_d\right \rangle\quad P_d(s' \mid s,a) = \left \langle \phi(s,a), \mu_d(s')\right \rangle
\end{equation}
We will assume that the parameters are bounded, i.e., $\norm{\theta_d}_2 \le \sqrt{D}$ and $\bm{\mu}_d(\calS) \le \sqrt{D}$ for any occupancy measure $d$. In matrix notations, we can write reward $r_d = \Phi \theta_d$. Moreover, let $P_d \in \R^{S \times SA}$ be the probability transition matrix with entries $P_d(s' ; s,a) = P_d(s' \mid s,a)$. Then we have $P_d = \bm{\mu}_d \Phi^\top$. Substituting the expression of $r_d$ and $P_d$ in the optimization problem~\cref{eq:rl-dual} we get the following problem.
\begin{align}
\label{eq:rl-matrix-dual}
    \begin{split}
        \max_{d: d \ge 0} &\ d^\top \Phi \theta_d \\
        \textrm{s.t.} &\ Bd = \rho + \gamma \cdot \bm{\mu}_d \Phi^\top d\\
    \end{split}
\end{align}
Here the matrix $B \in \R^{S \times SA}$ is defined as follows.
\begin{align*}
B(s;(s',a') ) = \left\{ \begin{array}{cc}
    1 &  \textrm{if } s'=s\\
    0 & \textrm{o.w.}
\end{array}\right.
\end{align*}

\section{Repeated Retraining}
In order to obtain a stable policy, we first design a repeated optimization scheme. Let $r_t$ (resp. $P_t$) be the reward (resp. transition probability) at iteration $t$. Under the assumption of linear MDP, we can equivalently assume that the relevant parameters at iteration $t$ are $\theta_t$ and $\bm{\mu}_t$. Then we solve the following regularized optimization problem to obtain the new occupancy measure $d_{t+1}$.
\begin{align}
\label{eq:regularized-rl-matrix-dual}
    \begin{split}
        \max_{d: d \ge 0} &\ d^\top \Phi \theta_t  - \frac{\lambda}{2} d^\top \Phi \Phi^\top d \\
        \textrm{s.t.} &\ Bd = \rho + \gamma \cdot \bm{\mu}_t \Phi^\top d
    \end{split}
\end{align}
Given an occupancy measure $d$, we will write $\theta_d = \calF_\theta(d)$ to denote the $D$-dimensional parameter for the reward, and $\bm{\mu}_d = \calF_\mu(d)$ to denote the $D$-dimensional measure $\bm{\mu}_d$. Moreover, given an occupancy measure $d$ we deploy the policy $\pi_d$ as defined in \cref{eq:measure-to-policy}.

In response to the deployed policy $\pi_d$, the underlying environment changes and generates parameters $\theta_d$ and $\bm{\mu}_d$. We will assume that if two occupancy measures generate the same policy (according to \cref{eq:measure-to-policy}) then the corresponding parameters are also the same.
\begin{assumption}\label{asn:measure-to-parameters}
    For any two occupancy measures $d_1$ and $d_2$ such that $\pi_{d_1} = \pi_{d_2}$ (as defined in \cref{eq:measure-to-policy}), it follows that $\theta_{d_1} = \theta_{d_2}$ and $\bm{\mu}_{d_1} = \bm{\mu}_{d_2}$.
\end{assumption}
 The above assumption essentially says that the environment responds in terms of the induced policy $\pi_d$ for an occupancy measure $d$, and if $d_1 \neq d_2$ but $\pi_{d_1} = \pi_{d_2} = \pi$ then the environment changes identically for these two cases. 
\Cref{alg:repeated-optimization} details the repeated optimization method. In order to prove convergence of \Cref{alg:repeated-optimization} we will make the following assumptions.

\begin{algorithm}[!h]
Initialize occupancy measure $d_0$.\\
\For{$t = 1,\ldots$}{
Obtain parameters $\theta_{t}=\calF_\theta(d_{t-1})$ and $\bm{\mu}_{t} = \calF_\mu(d_{t-1})$.\\
Solve optimization problem~\cref{eq:regularized-rl-matrix-dual} to obtain occupancy measure $d_{t}$.\\
Deploy policy $\pi_{t}$ given as\\
\begin{flalign}
&\pi_{t}(a|s) = \left\{ \begin{array}{cc}
    \frac{d_t(s,a)}{\sum_b d_t(s,b)} & \textrm{ if } \sum_b d_t(s,b) > 0 \\
    \frac{1}{A} & \textrm{ otherwise }
\end{array}\right.&&
\end{flalign}
}
\caption{Repeated Optimization~\label{alg:repeated-optimization}}
\end{algorithm}

\begin{assumption}\label{asn:lipschitzness}
    The  mappings $(\calF_\theta(\cdot), \calF_\mu(\cdot))$ are $(\varepsilon_\theta, \varepsilon_\mu)$-sensitive i.e. the following holds for any two occupancy measures $d$ and $d'$
 \begin{align*}
    &\norm{\calF_\theta(d) - \calF_\theta(d')}_2 \le \varepsilon_\theta \norm{d - d'}_2\ \textrm{and}\\ &\norm{\calF_\mu(d) - \calF_\mu(d')}_2 \le \varepsilon_\mu \norm{d - d'}_2
\end{align*}
\end{assumption}
\begin{assumption}\label{asn:features}
    The matrix $\Phi$ has rank $D$ and satisfies $\lambda_{\max}(\Phi^\top \Phi) \le \frakm$ for some constant $\frakm$. Moreover, for any two valid occupancy measures $d, d'$ (i.e.,  $d,d' \in \{d\in \R^{SA}: d\ge 0 \textrm{ and } Bd = \rho + \gamma \mu_d \Phi^\top d\}$), we have $(d-d')^\top \Phi \Phi^\top (d - d') \ge \kappa \norm{d - d'}_2^2$ for some $\kappa > 0$.
\end{assumption}



Assumption \ref{asn:lipschitzness} is standard in the literature on performative prediction, and states that the environment doesn't change too much if the occupancy measure of the deployed policy doesn't change much. The second assumption \ref{asn:features} has two parts. First, the feature matrix has full column rank and $\lambda_{\max}(\Phi^\top \Phi) \le \frakm$. Suppose the rank of the feature matrix is strictly less than $d$, then it is possible to reduce the dimension of the features through orthogonalization. We require the assumption of bounded eigenvalue only because we prove convergence of occupancy measures in $L_2$ norm which is much stronger than the convergence in $L_1$ norm for an object of arbitrarily large dimension. Since $\norm{\Phi}_1 \le 1$, without the bounded eigenvalue assumption, we can prove convergence of occupancy measures in $L_1$ norm. 

The second part of the assumption concerns the row-space of the feature matrix $\Phi$, and says that for two different occupancy measures $d,d'$  the $L_2$-norm of $\Phi^\top (d-d')$ is at least  $\sqrt{\kappa}$ times the $L_2$-norm of $d-d'$. 
Without this assumption, there can be many occupancy measures $d$ such that $\Phi^\top d = \Phi^\top d_S$ where $d_S$ is the stable occupancy measure, and \Cref{alg:repeated-optimization} can converge to one such measure, and not necessarily to $d_S$.

\begin{theorem}\label{thm:convergence-rpo}
    Suppose assumptions~\ref{asn:measure-to-parameters},~\ref{asn:lipschitzness}, \ref{asn:features} hold and $\alpha = \frac{\sqrt{\frakm}}{\sqrt{A}(1-\gamma)}$ and $\varepsilon_\mu < \frac{2 \sqrt{\kappa}}{25 \gamma \alpha^2}$. If \Cref{alg:repeated-optimization} is run with regularization parameter $\lambda > \frac{25\left(\varepsilon_\theta + \alpha \gamma \sqrt{D} \varepsilon_\mu \right)}{8\sqrt{\kappa}}$ then we are guaranteed that
    $$\textstyle 
    \norm{d_{t} - d_S}_2 \le \delta \quad \forall t \ge \ln \left( \frac{2}{\delta(1-\gamma)}\right)/\ln(1/r),
    $$
    where $r = \frac{5}{4} \sqrt{\frac{\varepsilon_\theta + \alpha \gamma \sqrt{D} \varepsilon_\mu   }{\lambda \sqrt{\kappa}} + \frac{4 \gamma \varepsilon_\mu \alpha^2}{\sqrt{\kappa}}} < 1$.
\end{theorem}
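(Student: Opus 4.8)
The plan is to characterize both the per-iteration optimum $d_t$ and the (regularized) stable measure $d_S$ through the first-order KKT conditions of the quadratic program \cref{eq:regularized-rl-matrix-dual}, and then derive a one-step contraction on $\|d_t - d_S\|_2$. Writing $A_t = B - \gamma\bm{\mu}_t\Phi^\top$ for the policy-dependent Bellman-flow operator, stationarity for $d_t$ reads $\lambda\Phi\Phi^\top d_t = \Phi\theta_t - A_t^\top\nu_t + \xi_t$, where $\nu_t$ is the multiplier of the flow constraint (morally the value function of the regularized MDP) and $\xi_t \ge 0$ the multiplier of $d\ge 0$; the analogous identity holds at $d_S$ with the data $(\theta_S,\bm{\mu}_S)$ that $d_S$ generates about itself. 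A preliminary step, existence and uniqueness of $d_S$, follows from the curvature hypothesis (uniqueness) and a fixed-point argument (existence). The core computation is to subtract the two stationarity identities and pair the difference with $d_t - d_S$.

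On the left I would invoke the restricted-curvature part of \cref{asn:features}, $(d_t-d_S)^\top\Phi\Phi^\top(d_t-d_S)\ge\kappa\|d_t-d_S\|_2^2$, which supplies the strong-convexity surrogate that the bare objective lacks. On the right three groups appear. The nonnegativity term $(d_t-d_S)^\top(\xi_t-\xi_S)$ is $\le 0$ by complementary slackness ($\xi_t^\top d_t=\xi_S^\top d_S=0$, all quantities nonnegative), so it is simply dropped. The reward term $(d_t-d_S)^\top\Phi(\theta_t-\theta_S)$ is handled by Cauchy--Schwarz with $\lambda_{\max}(\Phi^\top\Phi)\le\frakm$ and the $\varepsilon_\theta$-sensitivity of \cref{asn:lipschitzness}, yielding a cross term in $\|d_t-d_S\|_2\,\|d_{t-1}-d_S\|_2$. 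The transition term $-(d_t-d_S)^\top(A_t^\top\nu_t-A_S^\top\nu_S)$ I would rewrite using the feasibility identities $A_t(d_t-d_S)=\gamma(\bm{\mu}_t-\bm{\mu}_S)\Phi^\top d_S$ and $A_S(d_t-d_S)=\gamma(\bm{\mu}_t-\bm{\mu}_S)\Phi^\top d_t$, so that it becomes $\Delta\bm{\mu}=\bm{\mu}_t-\bm{\mu}_S$ paired against the duals; here $\|\Delta\bm{\mu}\|\le\varepsilon_\mu\|d_{t-1}-d_S\|_2$, and the measure bound $\bm{\mu}(\mathcal{S})\le\sqrt{D}$ together with a uniform bound $\alpha$ on the value-function-type duals keeps everything in terms of $D$ rather than $|S|$.

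The hard part will be the \emph{difference} of dual solutions $\nu_t-\nu_S$, which survives every symmetric splitting of the transition term and cannot be made small by a uniform bound alone (a constant bound only produces a non-contracting, linear-in-$\|d_{t-1}-d_S\|_2$ error). This is precisely the obstruction created by the loss of strong concavity. The resolution is to avoid controlling $\nu_t-\nu_S$ head-on and instead work with a carefully weighted, time-varying combination of the optimal duals: since the regularizer makes the objective strongly convex in the feature image $\Phi^\top d$, the residual dual contribution can be re-expressed through $\Phi\Phi^\top(d_t-d_S)$, converting the offending dual difference into a term proportional to $\lambda\,\|d_t-d_S\|_2\|d_{t-1}-d_S\|_2$. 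The factor $\lambda$ then cancels against the $\lambda$ on the left, which is exactly why the $\lambda$-independent summand $\tfrac{4\gamma\varepsilon_\mu\alpha^2}{\sqrt\kappa}$ appears inside $r$, while the reward and the $\nu_S$-anchored pieces contribute $\tfrac{\varepsilon_\theta+\alpha\gamma\sqrt{D}\varepsilon_\mu}{\lambda\sqrt\kappa}$.

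Collecting the estimates yields a quadratic recurrence of the schematic form $\lambda\kappa\|d_t-d_S\|_2^2 \le a\,\|d_t-d_S\|_2\|d_{t-1}-d_S\|_2 + b\,\|d_{t-1}-d_S\|_2^2$; solving this quadratic and simplifying the constants gives the contraction $\|d_t-d_S\|_2\le r\,\|d_{t-1}-d_S\|_2$ with the stated $r$, the square root being an artifact of the quadratic. The two hypotheses $\lambda>\tfrac{25(\varepsilon_\theta+\alpha\gamma\sqrt{D}\varepsilon_\mu)}{8\sqrt\kappa}$ and $\varepsilon_\mu<\tfrac{2\sqrt\kappa}{25\gamma\alpha^2}$ are exactly what force each of the two summands of $r^2=\tfrac{25}{16}(\cdots)$ below $\tfrac{8}{25}$, so that $r<1$. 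I would then unroll the geometric recursion, bound the initial gap by $\|d_0-d_S\|_2\le\|d_0-d_S\|_1\le\tfrac{2}{1-\gamma}$ using that discounted occupancy measures have total mass $\tfrac{1}{1-\gamma}$, and solve $r^{t}\cdot\tfrac{2}{1-\gamma}\le\delta$ for $t$ to obtain the iteration complexity $\ln\!\big(\tfrac{2}{\delta(1-\gamma)}\big)/\ln(1/r)$.
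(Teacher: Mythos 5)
Your overall architecture is sound as far as it goes: the KKT stationarity identities at $d_t$ and $d_S$, pairing their difference with $d_t-d_S$, restricted curvature from \Cref{asn:features} on the left, complementary slackness to discard $(d_t-d_S)^\top(\xi_t-\xi_S)$, the feasibility identities $A_t(d_t-d_S)=\gamma(\bm{\mu}_t-\bm{\mu}_S)\Phi^\top d_S$, and the final geometric unrolling are all correct, and you rightly identify the crux: the surviving term $\gamma\bigl[(\bm{\mu}_t-\bm{\mu}_S)\Phi^\top d_S\bigr]^\top(\nu_t-\nu_S)$ cannot be tamed by a uniform bound on the dual difference. The genuine gap is that your resolution of this term is asserted in a single sentence and does not hold up as stated. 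The only identity that "re-expresses the dual contribution through $\Phi\Phi^\top(d_t-d_S)$" is the difference of the two stationarity conditions,
$$
A_t^\top(\nu_t-\nu_S)=\Phi(\theta_t-\theta_S)-\lambda\Phi\Phi^\top(d_t-d_S)+(\xi_t-\xi_S)+\gamma\Phi(\bm{\mu}_t-\bm{\mu}_S)^\top\nu_S .
$$
Substituting this identity back into the pairing is circular (it reproduces the equation you differenced), so you must solve for $\nu_t-\nu_S$ by inverting $A_t^\top$. That requires two ingredients absent from your proposal: (i) a quantitative bound on $\norm{(A_tA_t^\top)^{-1}A_t}_2$, which is exactly the content of \Cref{lem:bound-M-t-pseudoinv} via $\sigma_{\min}(B-\gamma P_t)\ge\sqrt{A}(1-\gamma)$; and, more seriously, (ii) control of $\xi_t-\xi_S$. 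Once $\nu_t-\nu_S$ is moved out of the pairing, the nonnegativity multipliers reappear paired against $\gamma(\bm{\mu}_t-\bm{\mu}_S)\Phi^\top d_S$ rather than against $d_t-d_S$, so complementary slackness gives no sign information there, and a uniform bound on $\norm{\xi_t-\xi_S}_2$ (a quantity that itself scales with $\lambda$) reinstates precisely the non-contracting linear error you set out to avoid. You have relocated the obstruction from $\nu_t-\nu_S$ to $\xi_t-\xi_S$, not removed it.

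It is instructive to contrast this with how the paper closes the same hole. The paper never differences KKT systems; it works with the dual problem $\min_{h,g\ge 0}\calF_t(h,g)=\frac{1}{2\lambda}\norm{M_t h+\Phi^\dagger g+\theta_t}_2^2-h^\top\rho$, $M_t=\Phi^\dagger B^\top-\gamma\bm{\mu}_t^\top$, and exploits that $h$ is unconstrained: $\nabla_h\calF_t=0$ forces the particular combination $u=M_t h+\Phi^\dagger g$ to equal $\lambda(M_t^\dagger)^\top\rho-\theta_t$, an explicit function of the problem data alone. Consequently $u_{t+1}-u_S$ contains no difference of dual solutions whatsoever — only data differences (handled by \Cref{asn:lipschitzness} and a pseudoinverse perturbation bound) and the mismatch $(M_{t+1}-M_t)h_{t+1}$, which is a sensitivity factor times a single dual solution, bounded in \Cref{lem:bound-dual-solution}; that lemma in turn rests on the minimum-norm argument that one may take $g^\star=0$, which is the analogue of the $\xi$-control your route is missing. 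Note also the structural consequence: the paper's mechanism yields the two-step recurrence $\norm{d_{t+1}-d_S}_2\le\beta\left(\norm{d_t-d_S}_2+\norm{d_{t-1}-d_S}_2\right)$, and the stated rate $r=\frac{5}{4}\sqrt{\beta}$ arises as (a bound on) the root of the characteristic equation $x^2-\beta x-\beta=0$, whereas your one-step quadratic inequality, even if completed, would produce a different rate and different constants in the conditions on $\lambda$ and $\varepsilon_\mu$ than the ones the theorem asserts. Until you supply a working substitute for the paper's dual-optimality formula (or a proof that the multipliers can be chosen with $\xi_t=\xi_S=0$), the proof is incomplete at its central step.
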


The full proof is provided in \Cref{sec:convergence-rpo-proof}. Here we discuss the main challenges and the differences with the approach taken in \cite{MTR23}. 
 We consider the dual formulation of the optimization problem defined in \cref{eq:regularized-rl-matrix-dual}. 
 For the tabular setting, prior approach showed that the sequence of dual optimal solutions converge to a stable solution, and used the convergence of the dual optimal solutions to establish convergence of the sequence of primal optimal solutions $\{d_t\}_{t \ge 1}$. 
 However, for our setting, the dual problem need not be strongly-convex and we cannot use the same proof strategy. We first use assumption \ref{asn:features} to establish the following recurrence relation.
    \begin{equation}\label{eq:temp-bound-difference}
    \norm{d_{t+1} - d_S}_2 \le \frac{1}{\lambda \sqrt{\kappa}}\left(\varepsilon_\theta \norm{d_t - d_S}_2 + \norm{u_{t+1} - u_S}_2 \right)
    \end{equation}
    Here, $u_t$ is a linear combination of dual optimal solution $(g_t, h_t)$, i.e., $u_t = M_t h_t + W g_t$ (similarly $u_S = M_s h_S + W g_S$). Then we focus on bounding the difference $\norm{u_{t+1} - u_S}_2$. 
    We use first-order optimality conditions of the dual problem at $(g_t, h_t)$, and assumption \ref{asn:lipschitzness} to establish the following bound.
    \begin{align*}
    \norm{u_{t+1} - u_S}_2 &\le \left(\varepsilon_\theta + \gamma \varepsilon_\mu \norm{h_{t+1}}_2 + 3\lambda \gamma \varepsilon_\mu \norm{M_t^\dagger}_2^2 \right) \norm{d_{t-1} - d_S}_2 + \gamma \varepsilon_\mu \norm{h_{t+1}}_2 \norm{d_t - d_S}_2
    \end{align*}
    We then use a bound on the dual optimal solution $h_{t+1}$(\Cref{lem:bound-dual-solution}), and assumption \ref{asn:features} (i.e., bound on the norm of ${M_t^\dagger}$) to establish a bound of the form $\norm{u_{t+1} - u_S}_2 \le \alpha_1 \norm{d_{t-1} - d_S}_2 + \alpha_2 \norm{d_t - d_S}_2$. Substituting this bound in \cref{eq:temp-bound-difference}, we can establish the following recurrence relation: $\norm{d_{t+1} - d_S}_2 \le \beta_1/\lambda \norm{d_t - d_S}_2 + \beta_2 /\lambda \norm{d_{t-1} - d_S}_2$ for some constants $\beta_1, \beta_2$. Finally, by choosing an appropriate value of  $\lambda$, we can ensure that the sequence $\{d_t\}_{t\ge 1}$ is a contraction.

We obtain a linear convergence rate similar to the result of \cite{MTR23}, i.e., the number of iterations required to obtain a $\delta$-approximate stable occupancy measure is $O(\log(1/\delta))$. Moreover, when instantiated to the tabular setting (i.e., $D = SA$) the required level of regularization is $\lambda = O\left( \varepsilon_\theta + \gamma \frac{{S}\sqrt{A}}{1-\gamma} \varepsilon_\mu\right)$ for $\kappa = O(1)$ and $\frakm = SA$. This is an improvement by a factor of $S^{3/2}/(1-\gamma)^3$ compared to prior work. In order to understand why the strength of regularization is important, we next present two results that show why the required value of $\lambda$ controls the approximation with respect to the unregularized objective.

\subsection{Approximating the Unregularized Objective}
\Cref{thm:convergence-rpo} proves that repeatedly optimizing the regularized objective of ~\cref{eq:regularized-rl-matrix-dual} converges to a stable solution (say $d^\lambda_S$). We can show that this stable solution also approximates the performatively stable and optimal solution with respect to the original unregularized objective. Because of limited space, here we provide informal statements of the claims, and full details are provided in \Cref{sec:apx-stability-proof} and \Cref{sec:apx-optimality-proof}. We will require the following definition of an  approximately stable policy.
\begin{definition}
    An occupancy measure $d_S$ is $\beta$-approximately stable if
    $$
    d_S^\top r_S \ge \max_{d \in \calC(P_S)} d^\top r_S - \beta.
    $$
    Here $r_S = r_{d_S}$ (resp. $P_S=P_{d_S}$) is the reward (probability transition) induced by the policy $\pi^{d_S}$, and $\calC(P_S)$ is the set of valid occupancy measures with respect to $P_S$.
\end{definition}
The next theorem states that the stable occupancy measure according to the regularized objective ($d^\lambda_S$) is approximately stable with respect to the unregularized objective.

\begin{theorem}\label{thm:apx-stability}
    Suppose the assumptions of \Cref{thm:convergence-rpo} hold. Then there exists a choice of regularization parameter ($\lambda$) such that repeatedly optimizing objective define in ~\cref{eq:regularized-rl-matrix-dual} converges to a stable solution $d^\lambda_S$ that is $\frac{25\frakm \left( \varepsilon_\theta + \alpha \gamma \sqrt{D} \varepsilon_\mu \right) }{16 \sqrt{\kappa}(1-\gamma)^2}$-approximately stable with respect to the unregularized objective.
\end{theorem}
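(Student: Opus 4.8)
The plan is to reduce the claim to the standard ``regularization gap'' estimate and then control that gap using Assumption~\ref{asn:features} together with the fact that every feasible occupancy measure carries a fixed $L_1$ mass. Fix the stable solution $d^\lambda_S$ produced by \Cref{thm:convergence-rpo} and let $\theta^\star = \calF_\theta(d^\lambda_S)$ and $\bm\mu^\star = \calF_\mu(d^\lambda_S)$ be the parameters it induces. Freezing the environment at $(\theta^\star,\bm\mu^\star)$, consider the feasible set
\begin{align*}
\calC = \set{ d \ge 0 : Bd = \rho + \gamma\, \bm\mu^\star \Phi^\top d },
\end{align*}
write $f(d) = d^\top \Phi \theta^\star$ for the unregularized objective and $f_\lambda(d) = f(d) - \tfrac{\lambda}{2} d^\top \Phi\Phi^\top d$ for the regularized one, and let $d^\star \in \argmax_{d \in \calC} f(d)$. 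By construction $d^\lambda_S$ is exactly the maximizer of $f_\lambda$ over $\calC$, so being $\varepsilon$-approximately stable with respect to the unregularized objective is precisely the statement $f(d^\star) - f(d^\lambda_S) \le \varepsilon$, and this is the single quantity I would bound.

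For the core estimate I would use optimality of $d^\lambda_S$ for the regularized problem, $f_\lambda(d^\lambda_S) \ge f_\lambda(d^\star)$, and rearrange to
\begin{align*}
f(d^\star) - f(d^\lambda_S) \;\le\; \frac{\lambda}{2}\Big( (d^\star)^\top \Phi\Phi^\top d^\star - (d^\lambda_S)^\top \Phi\Phi^\top d^\lambda_S \Big) \;\le\; \frac{\lambda}{2}\, (d^\star)^\top \Phi\Phi^\top d^\star,
\end{align*}
where the last step drops the nonnegative subtracted term. It then remains to bound the quadratic form. Since the nonzero spectra of $\Phi^\top\Phi$ and $\Phi\Phi^\top$ coincide, Assumption~\ref{asn:features} gives $(d^\star)^\top \Phi\Phi^\top d^\star \le \frakm \norm{d^\star}_2^2 \le \frakm \norm{d^\star}_1^2$. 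The final ingredient is that every feasible point of $\calC$ has $L_1$ mass $\tfrac{1}{1-\gamma}$: summing the flow constraint $Bd = \rho + \gamma \bm\mu^\star\Phi^\top d$ over all states and using $d \ge 0$, $\sum_s \rho(s) = 1$, and $\sum_{s'} P_{d^\lambda_S}(s' \mid s,a) = 1$ yields $\norm{d}_1 = 1 + \gamma \norm{d}_1$, hence $\norm{d}_1 = \tfrac{1}{1-\gamma}$. Combining the three bounds gives $f(d^\star) - f(d^\lambda_S) \le \frac{\lambda \frakm}{2(1-\gamma)^2}$.

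To finish, I would take $\lambda$ equal to the threshold of \Cref{thm:convergence-rpo}, namely $\lambda = \frac{25(\varepsilon_\theta + \alpha\gamma\sqrt{D}\varepsilon_\mu)}{8\sqrt{\kappa}}$ (the bound above is increasing in $\lambda$, so the smallest admissible regularization gives the tightest estimate, and one may take $\lambda$ arbitrarily close to this threshold from above). Substituting gives exactly $\frac{25\frakm(\varepsilon_\theta + \alpha\gamma\sqrt{D}\varepsilon_\mu)}{16\sqrt{\kappa}(1-\gamma)^2}$, as claimed. I do not expect a serious obstacle here, since this is the classical bias-of-regularization argument; the only points requiring care are (i) correctly identifying ``approximately stable'' with near-optimality in the environment frozen at $d^\lambda_S$'s own induced parameters, so that $d^\star$ and $d^\lambda_S$ share the same feasible set $\calC$, and (ii) transferring the eigenvalue bound from $\Phi^\top\Phi$ to $\Phi\Phi^\top$ and combining it with the fixed $L_1$ mass of occupancy measures, which is exactly what converts the $\lambda$-penalty into the stated $\frakm/(1-\gamma)^2$ factor.
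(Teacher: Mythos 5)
Your proposal is correct and follows essentially the same route as the paper's own proof: freeze the environment at the parameters induced by $d^\lambda_S$, invoke optimality of $d^\lambda_S$ for the regularized objective over the common feasible set, drop the nonnegative subtracted quadratic term, bound $(d^\star)^\top \Phi\Phi^\top d^\star \le \frakm/(1-\gamma)^2$ via Assumption~\ref{asn:features} and the fixed mass $\norm{d}_1 = \tfrac{1}{1-\gamma}$, and substitute the threshold value of $\lambda$ from \Cref{thm:convergence-rpo}. The only (immaterial) differences are that you bound $\norm{d}_2$ by $\norm{d}_1$ directly where the paper argues via the entries of $(1-\gamma)d$ lying in $[0,1]$, and that you note the strict-inequality threshold must be approached from above, a point the paper itself elides.
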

Note that as the performativity vanishes, i.e., $\varepsilon_\theta, \varepsilon_\mu \rightarrow 0$, the solution $d^\lambda_S$ converges to the performatively stable solution with respect to the unregularized objective. We next turn to bounding the approximation with respect to the performatively optimal policy. Let $d_{\PO}$ be the performatively optimal occupancy measure. With a slight abuse of notation, we will write $V^{d_\PO}_{d_\PO}(\rho) = V^{\pi^{d_\PO}}_{\pi^{d_\PO}}(\rho)$ to denote the value of the policy $\pi^{d_\PO}$ in the MDP induced by $\pi^{d_\PO}$.  
 
\begin{theorem}[Informal Statement]\label{thm:apx-optimality}
    Suppose the assumptions of \Cref{thm:convergence-rpo} hold, and $\Delta = \frac{3 \gamma \varepsilon_\mu \frakm \sqrt{D} }{(1-\gamma)^2}  + \varepsilon_\theta \sqrt{\frakm}$, and $\lambda_0 = \frac{25}{8\sqrt{\kappa}}\left( \varepsilon_\theta + \alpha \gamma \sqrt{D}\varepsilon_\mu\right)$. Then there exists a choice of regularization parameter $(\lambda)$ such that repeatedly optimizing objective~(\ref{eq:regularized-rl-matrix-dual}) converges to a solution $d^\lambda_S$ with the guarantee that $$V^{d_{\PO}}_{d_{\PO}}(\rho)  - V^{d^\lambda_S}_{d^\lambda_S}(\rho) \le O\left(\max\set{1,\Delta} \Delta + \lambda_0 \right).$$
\end{theorem}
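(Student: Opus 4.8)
The plan is to control the performative suboptimality $V^{d_{\PO}}_{d_{\PO}}(\rho)-V^{d^\lambda_S}_{d^\lambda_S}(\rho)$ by isolating the effect of the policy-induced environment shift from the effect of regularization. Writing every performative value in occupancy form, $V^{d}_{d}(\rho)=\langle d,\Phi\theta_d\rangle$, and letting $\hat{d}_{\PO}$ denote the occupancy measure of the policy $\pi_{d_{\PO}}$ \emph{inside the stable environment} $M(\pi_{d^\lambda_S})$ (so that $V^{d_{\PO}}_{d^\lambda_S}(\rho)=\langle\hat{d}_{\PO},\Phi\theta_{d^\lambda_S}\rangle$), I would insert this mixed term and split
\begin{equation*}
V^{d_{\PO}}_{d_{\PO}}(\rho)-V^{d^\lambda_S}_{d^\lambda_S}(\rho)=\underbrace{\big(V^{d_{\PO}}_{d_{\PO}}(\rho)-V^{d_{\PO}}_{d^\lambda_S}(\rho)\big)}_{\text{(I): environment shift}}+\underbrace{\big(V^{d_{\PO}}_{d^\lambda_S}(\rho)-V^{d^\lambda_S}_{d^\lambda_S}(\rho)\big)}_{\text{(II): regularized optimality}}.
\end{equation*}

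For term (II) I would use that $d^\lambda_S$ maximizes the regularized objective $J_{d'}(d)=\langle d,\Phi\theta_{d'}\rangle-\tfrac{\lambda}{2}d^\top\Phi\Phi^\top d$ of \cref{eq:regularized-rl-matrix-dual} over all occupancy measures feasible in $M(\pi_{d^\lambda_S})$. Since $\hat{d}_{\PO}$ is feasible there by construction, the optimality inequality $J_{d^\lambda_S}(\hat{d}_{\PO})\le J_{d^\lambda_S}(d^\lambda_S)$ rearranges to $\langle\hat{d}_{\PO}-d^\lambda_S,\Phi\theta_{d^\lambda_S}\rangle\le\tfrac{\lambda}{2}\,\hat{d}_{\PO}^\top\Phi\Phi^\top\hat{d}_{\PO}$, whose right-hand side is at most $\tfrac{\lambda\frakm}{2(1-\gamma)^2}$ by Assumption \ref{asn:features} together with $\norm{\hat{d}_{\PO}}_1\le\tfrac{1}{1-\gamma}$. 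Choosing $\lambda=\Theta(\lambda_0)$ (which also satisfies the regularization threshold of \Cref{thm:convergence-rpo}, guaranteeing convergence to $d^\lambda_S$) turns this into the $O(\lambda_0)$ contribution.

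For term (I) I would expand $V^{d_{\PO}}_{d_{\PO}}(\rho)-V^{d_{\PO}}_{d^\lambda_S}(\rho)=\langle d_{\PO},\Phi(\theta_{d_{\PO}}-\theta_{d^\lambda_S})\rangle+\langle d_{\PO}-\hat{d}_{\PO},\Phi\theta_{d^\lambda_S}\rangle$. The first piece is bounded by Cauchy--Schwarz, the spectral bound $\norm{\Phi^\top d_{\PO}}_2\le\sqrt{\frakm}\,\norm{d_{\PO}}_2$ of Assumption \ref{asn:features}, and the reward sensitivity $\varepsilon_\theta$ of Assumption \ref{asn:lipschitzness}, contributing the $\varepsilon_\theta\sqrt{\frakm}$ part of $\Delta$ times $\norm{d_{\PO}-d^\lambda_S}_2$. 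The second piece compares the occupancy of the \emph{same} policy $\pi_{d_{\PO}}$ under the two kernels $\bm{\mu}_{d_{\PO}}$ and $\bm{\mu}_{d^\lambda_S}$; a simulation-lemma / resolvent-perturbation argument bounds $\norm{d_{\PO}-\hat{d}_{\PO}}_2$ by $O\!\big(\tfrac{\gamma}{(1-\gamma)^2}\norm{(\bm{\mu}_{d_{\PO}}-\bm{\mu}_{d^\lambda_S})\Phi^\top}\big)$, after which the transition sensitivity $\varepsilon_\mu$, the measure bound $\sqrt{D}$, and $\norm{\Phi\theta_{d^\lambda_S}}_2\le\sqrt{\frakm D}$ assemble the $\tfrac{3\gamma\varepsilon_\mu\frakm\sqrt{D}}{(1-\gamma)^2}$ part of $\Delta$. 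Thus term (I) is at most $\Delta\cdot\norm{d_{\PO}-d^\lambda_S}_2$.

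It remains to bound $\norm{d_{\PO}-d^\lambda_S}_2$ and combine. The crude occupancy bound $\norm{d_{\PO}-d^\lambda_S}_2\le\tfrac{2}{1-\gamma}$ always holds; a sensitivity-based refinement (using $\kappa\lambda$-strong convexity of the regularizer along $\Phi^\top d$, from Assumption \ref{asn:features}) makes the distance itself scale with $\Delta$ when performativity is small, so that taking the worse of the two estimates produces the envelope $\max\{1,\Delta\}$ and hence term (I) $=O(\max\{1,\Delta\}\Delta)$; adding term (II) gives the claimed $O(\max\{1,\Delta\}\Delta+\lambda_0)$. I expect the main obstacle to be the second piece of term (I): because the flow-conservation constraint $Bd=\rho+\gamma\bm{\mu}_d\Phi^\top d$ couples an occupancy measure with its \emph{own} policy-induced transition kernel, $d_{\PO}$ and $d^\lambda_S$ cannot be compared inside a single fixed MDP. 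Passing to the surrogate $\hat{d}_{\PO}$ and controlling $\norm{d_{\PO}-\hat{d}_{\PO}}_2$ through a perturbation bound on $(I-\gamma P)^{-1}$ is the genuinely RL-specific step with no analogue in scalar performative prediction, and it is precisely where the $(1-\gamma)^{-2}$, $\frakm$, and $\sqrt{D}$ factors of $\Delta$ originate.
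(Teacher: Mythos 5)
Your proposal is correct, but it takes a genuinely different route from the paper's. The paper never splits the unregularized values: it works throughout with the regularized reward $RR(d;M)=d^\top\Phi\theta-\tfrac{\lambda}{2}d^\top\Phi\Phi^\top d$, uses the chain $RR(d^\lambda_{\PO\to\PO};M^\lambda_\PO)\ge RR(d^\lambda_S;M^\lambda_S)\ge RR(d^\lambda_{\PO\to S};M^\lambda_S)$, and reduces the suboptimality gap to $(1+\Delta)\norm{d^\lambda_{\PO\to S}-d^\lambda_S}_2+\lambda\tfrac{\frakm}{(1-\gamma)^2}$. Because that coefficient is $1+\Delta$ rather than $\Delta$, the paper is then forced to prove a dedicated result (\Cref{lem:distance-optimality-stability}) showing $\norm{d^\lambda_{\PO\to S}-d^\lambda_S}_2\le\tfrac{4\Delta}{\kappa\lambda}$ — obtained by playing a $\kappa\lambda$-strong-concavity \emph{lower} bound on the regularized-reward gap against a cross-environment performative-optimality \emph{upper} bound — and finally to balance $\lambda$ between $\lambda_0$ and $\sqrt{S_1/S_2}$, giving $4\sqrt{(1+\Delta)\Delta\frakm/(\kappa(1-\gamma)^2)}+\lambda_0\frakm/(1-\gamma)^2$. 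Your decomposition sidesteps all of this: your term (II) is exactly the approximate-stability argument of \Cref{thm:apx-stability} and costs only $\tfrac{\lambda\frakm}{2(1-\gamma)^2}$, while your term (I) compares the \emph{same} policy in two environments, so every piece it produces carries a sensitivity factor; consequently the crude bound $\norm{d_{\PO\to S}-d^\lambda_S}_2\le\tfrac{2}{1-\gamma}$ already yields term (I) $=O(\Delta)$, and with $\lambda=\Theta(\lambda_0)$ you get $O(\Delta+\lambda_0)$, which implies (and for small $\Delta$ improves on) the stated $O(\max\set{1,\Delta}\Delta+\lambda_0)$. Two caveats on your write-up. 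First, your final "refinement"/$\max\set{1,\Delta}$ envelope step is unnecessary — the crude bound suffices since $\max\set{1,\Delta}\ge 1$ — and, as sketched, it would not go through on its own: strong concavity along $\Phi^\top d$ only \emph{lower}-bounds the regularized-reward gap, and to convert that into a distance bound you would need precisely the cross-environment optimality inequality that drives the paper's \Cref{lem:distance-optimality-stability}. Second, when applying $\varepsilon_\mu$-sensitivity to $\norm{\bm{\mu}_{d_\PO}-\bm{\mu}_{d^\lambda_S}}_2$, the clean move (which the paper makes) is to invoke \Cref{asn:measure-to-parameters}: since $\hat{d}_\PO$ induces the same policy as $d_\PO$, one has $\bm{\mu}_{d_\PO}=\calF_\mu(\hat{d}_\PO)$, so the right-hand side involves $\norm{\hat{d}_\PO-d^\lambda_S}_2$ (two measures in one MDP) rather than the self-referential $\norm{d_\PO-d^\lambda_S}_2$; with crude norm bounds this is harmless, but it matters if one wants the sharper distance-based bounds. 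What the paper's longer route buys is \Cref{lem:distance-optimality-stability} itself — a bound on the optimal-versus-stable occupancy distance, of independent interest — at the price of a final guarantee scaling as $\sqrt{\Delta}$ for small $\Delta$, whereas your route gives a linear-in-$\Delta$ guarantee but no such distance bound.
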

The full statement of the theorem requires introducing new definitions. Hence we provide the formal statement and other details in the appendix \Cref{sec:apx-optimality-proof}. The main step of the proof is to show the distance between the performatively optimal solution and performatively stable solution can be bounded by $O(\Delta/\kappa \lambda)$ (\cref{lem:distance-optimality-stability}). Note that the above theorem bounds the gap in performative value function under $d_{\PO}$ and $d^\lambda_S$. As $\varepsilon_\theta, \varepsilon_\mu \rightarrow 0$ both $\Delta, \lambda_0$ approach zero,  the suboptimality gap approaches zero.

\section{Finite Sample Setting} \label{sec:finite-sample}
In the previous section, we assumed the learner has full access to the new model $(P_t, r_t)$ at every iteration $t$. In this section, we relax this assumption and consider a setting where the learner deploys policy $\pi_t$ in MDP $M_t$ and only accesses samples through this deployed policy. In particular, we assume the following data generation process at iteration $t$.

\textbf{Data}: Given the occupancy measure $d_t$, let $\Tilde{d}_t$ be the normalized occupancy measure defined as $\Tilde{d}_t(s,a) = (1-\gamma) d_t(s,a)$ for any state, action pair $(s,a)$. For each $j \in [m_t]$, we first sample a starting state $s^0_j \sim \rho(\cdot)$, then sample $(s_j, a_j) \sim \Tilde{d}_t(\cdot)$. Finally, the next state $s'_j \sim P_t(\cdot \mid s_j, a_j)$ and reward $r_j \sim r_t(s_j, a_j)$. Therefore, for each $j \in [m_t]$, we have the tuple $(s^0_j, s_j, a_j, r_j,s'_j)$ and the collection of $m_t$ such tuples constitute the dataset $\calD_t$ at iteration $t$.

We will follow an approach similar to \citet{MTR23} by constructing the Lagrangian corresponding to the optimization problem~\Cref{eq:regularized-rl-matrix-dual}, and then solving for a saddle point of the Lagrangian. In order for this approach to work, we need to show that the empirical version of the Lagrangian is close to the true Lagrangian through an $\varepsilon$-net construction. However, the standard Lagrangian of \Cref{eq:regularized-rl-matrix-dual} has infinite dimensional variables, and the size of any $\varepsilon$-net is unbounded, and hence the previous argument cannot be applied. Therefore, we introduce a reparametrization of the primal variable $d$ by introducing a finite-dimensional variable. Let us rewrite the optimization problem~\Cref{eq:regularized-rl-matrix-dual} by introducing the variable $\nu = \Phi^\top d$. Note that $\nu \in \R^D$ whereas $d$ can be infinite-dimensional.
\begin{align}
\label{eq:regularized-rl-matrix-dual-new}
    \begin{split}
        \max_{\nu, d\ge 0} &\ \nu^\top \theta_t  - \frac{\lambda}{2} \nu^\top \nu \\
        \textrm{s.t.} &\ Bd = \rho + \gamma \cdot \bm{\mu}_t \nu \\
        &\nu = \Phi^\top d\\
    \end{split}
\end{align}
The Lagrangian of the optimization problem~\ref{eq:regularized-rl-matrix-dual-new} is  
\begin{align*}
    \calL(d,\nu; g, \omega) &= \nu^\top \theta_t  - \frac{\lambda}{2} \nu^\top \nu + \left \langle g, \rho + \gamma \cdot \bm{\mu}_t \nu - Bd \right \rangle  + \left\langle \omega, \Phi^\top d - \nu \right \rangle
\end{align*}
We now introduce an empirical version of the Lagrangian which can be estimated from the data. Let us write $\Sigma_t = \E_{(s,a) \sim \pi_t}[\phi(s,a) \phi(s,a)^\top]$ as the expected covariance matrix of the features observed under policy $\pi_t$ in the model $M_t$. Then it is possible to derive the following equivalent expression of the Lagrangian $\calL(\cdot)$ (the proof is in \Cref{subsec:derivation-lagrangian}).
\begin{align}
    \calL(d,\nu; g, \omega) &= \nu^\top \left( \theta_t  + \gamma \cdot \bm{\mu}^\top_t g - \omega \right) - \frac{\lambda}{2} \nu^\top \nu   + \left \langle g, \rho \right \rangle + \left \langle d, \Phi \omega - B^\top g\right \rangle \label{defn:Lagrangian}\\
    &= \nu^\top \Sigma_t^{-1} \mathop{\E}_{\stackrel{(s,a) \sim \pi_t}{ s' \sim P_t(\cdot \mid s,a)} }\left[ \phi(s,a) r_t(s,a)  + \gamma \cdot g(s')  - \phi(s,a) \phi(s,a)^\top \omega \right] \nonumber - \frac{\lambda}{2} \nu^\top \nu  + \mathop{\E}_{s^0 \sim \rho}\left[ g(s^0) \right] + \left \langle d, \Phi \omega - B^\top g\right \rangle\nonumber 
\end{align}
This motivates the following choice of the empirical Lagrangian.
\begin{align}
\label{eq:empirical-Lagrangian}\textstyle 
\begin{split}
    &\widehat{\calL}_t(d, \nu; g, \omega) = \nu^\top \Sigma_t^{-1} \cdot \frac{1}{m_t} \sum_{j=1}^{m_t} \phi(s_j,a_j) \left( r_t(s_j,a_j)  + \gamma \cdot g(s'_j) -  \phi(s_j,a_j)^\top \omega \right)   - \frac{\lambda}{2}\nu^\top \nu + \frac{1}{m_t} \sum_{j=1}^{m_t} g(s^0_j) + \left \langle d, \Phi \omega - B^\top g\right \rangle  
    \end{split}
\end{align}

\Cref{alg:repeated-optimization-finite} depicts our approach for the finite sample setting, which repeatedly solves a saddle point of the empirical Lagrangian defined in \Cref{eq:empirical-Lagrangian}. Since this is an infinite-dimensional optimization problem,  it is not obvious that the problem can be solved. We first assume that we can exactly solve the optimization problem, and provide convergence guarantees for \Cref{alg:repeated-optimization-finite}. In the next subsection, we will show how to efficiently solve the saddle point optimization problem. We will make the following assumption regarding the deployed policy.


\begin{algorithm}[!h]
\caption{Repeated Optimization from Finite Samples~\label{alg:repeated-optimization-finite}}
Initialize initial policy $\pi_0$.\\
\For{$t = 1,\ldots$}{
Deploy policy $\pi_t$ and collect dataset $\calD_t$ of size $m_t$.\\
Solve the min-max  problem.
\begin{equation}\label{eq:saddle-point-optimization}
(\widehat{d}_t, \widehat{\nu}_t; \widehat{g}_t, \widehat{\omega}_t) \leftarrow \max_{d,\nu\in \R^D} \min_{g,\omega \in \R^D} \widehat{\calL}_t(d,\nu; g,\omega)
\end{equation}
Set policy $\pi_{t+1}$ as 
\begin{flalign}
& \pi_{t+1}(a|s) = \left\{ \begin{array}{cc}
    \frac{\widehat{d}_t(s,a)}{\sum_b \widehat{d}_t(s,b)} & \textrm{ if } \sum_b  \widehat{d}_t(s,b)  > 0 \\
    \frac{1}{A} & \textrm{ otherwise }
\end{array}\right.&&
\end{flalign}
}
\end{algorithm}
\begin{assumption}[Bounded Coverage Ratio]\label{asn:bounded-coverage}
    For any policy $\pi$, let $d_\pi$ be the occupancy measure of the policy $\pi$ in the MDP $M_\pi$, and let $\Sigma_\pi = \E_{(s,a) \sim d_\pi}[\phi(s,a) \phi(s,a)^\top]$. Moreover, let $d_\star$ be the occupancy measure of the optimal policy in $M_\pi$\footnote{This is the standard optimal policy in the MDP $M_\pi$ and not the performatively optimal policy.}. Then, there exists a constant $B > 0$ such that
    $$
    \E_{(s,a)\sim d_\star}[\phi(s,a)^\top] \ \Sigma_\pi^{-2} \ \E_{(s,a)\sim d_\star}[ \phi(s,a)] \le B.
    $$
\end{assumption}

Assumption \ref{asn:bounded-coverage} generalizes single-policy concentrability assumption  popular in the literature on offline reinforcement learning~\cite{jin2021pessimism}. In offline RL, it is assumed that there is an overlap between the data generating policy and the optimal policy. Without this assumption, even with an infinite amount of data, no information about the optimal policy can be obtained. Consequently,  the best policy that can be learned from the dataset will always be suboptimal. Note that, our algorithm proceeds in iterations, and at iteration $t$, the learner deploys a policy $\pi_t$, collects a dataset $D_t$ from the new MDP $M(\pi_t)$ and learns an optimal policy in the new environment. In order to approximate the optimal policy in the new environment there must be an overlap between the deployed policy $\pi_t$ and $\pi_t^\star$ the policy that is optimal in the new MDP $M(\pi_t)$. This is precisely what assumption \ref{asn:bounded-coverage} states.

\begin{theorem}\label{thm:finite-sample-convergence}
Suppose assumptions~\ref{asn:lipschitzness}, \ref{asn:features}, and \ref{asn:bounded-coverage} hold and $\alpha = \frac{\sqrt{\frakm}}{\sqrt{A}(1-\gamma)}$,   $c_1 = \min\set{1, {\gamma^2} \cdot \sigma^\star_{\min}(\bm{\mu}_d \bm{\mu}_d^\top) }$\footnote{$\sigma^\star_{\min}(A)$ is the smallest positive eignevalue of a symmetric matrix $A$. }, and $\varepsilon_\mu < \frac{\sqrt{\kappa}}{24 \gamma \alpha^2}$. If \Cref{alg:repeated-optimization-finite} is run with  $\lambda > \frac{6\left(\varepsilon_\theta + \alpha \gamma \sqrt{D} \varepsilon_\mu \right)}{\sqrt{\kappa B}}$ and the number of samples $m_t = O\left( \frac{D^5 B^2 \lambda^4 }{(1-\gamma)^2 c_1^4\delta^4 \kappa}\log \frac{DB\lambda t}{c_1 \delta \kappa p}\right)$, then with probability at least $1-p$, 
    $$\textstyle 
    \lVert \widehat{d}_{t} - d_S\rVert_2 \le \delta \quad \forall t \ge \ln \left( \frac{2}{\delta(1-\gamma)}\right)/\ln(1/r),
    $$
    with $r$ as defined in \Cref{thm:convergence-rpo}.
\end{theorem}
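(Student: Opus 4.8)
The plan is to reduce the finite-sample guarantee to the exact-solver guarantee of \Cref{thm:convergence-rpo} by controlling the per-iteration statistical error incurred from replacing the true Lagrangian $\calL$ by its empirical counterpart $\widehat{\calL}_t$. Concretely, for each $t$ let $\bar d_t$ denote the \emph{exact} solution of the regularized problem \eqref{eq:regularized-rl-matrix-dual} with parameters $\theta_t = \calF_\theta(\widehat d_{t-1})$ and $\bm\mu_t = \calF_\mu(\widehat d_{t-1})$ induced by the policy actually deployed at round $t$; this is the iterate the idealized algorithm would have produced from the same history. I would split the analysis into a \emph{statistical step}, bounding $\norm{\widehat d_t - \bar d_t}_2 \le \epsilon_{\mathrm{stat}}(m_t)$ with high probability, and a \emph{deterministic step}, which re-uses the contraction machinery of \Cref{thm:convergence-rpo} applied to the sequence $\{\bar d_t\}$. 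A triangle inequality then yields a \emph{perturbed} second-order recurrence of the form $\norm{\widehat d_{t+1} - d_S}_2 \le \tfrac{\beta_1}{\lambda}\norm{\widehat d_t - d_S}_2 + \tfrac{\beta_2}{\lambda}\norm{\widehat d_{t-1} - d_S}_2 + \epsilon_{\mathrm{stat}}$, whose homogeneous part has the same rate $r<1$ as in \Cref{thm:convergence-rpo}.

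For the statistical step, the obstacle is that the primal variable $d$ is infinite-dimensional, so I would crucially exploit the reparametrization \eqref{eq:regularized-rl-matrix-dual-new}: the only variables entering the data-dependent part of $\widehat{\calL}_t$ are the finite-dimensional $\nu, \omega \in \R^D$ together with the dual $g$, all of which are bounded — $\nu = \Phi^\top d$ is bounded because $\norm{d}_1 \le 1/(1-\gamma)$ together with \Cref{asn:features}, while $\norm{g}_2, \norm{\omega}_2$ are bounded via \Cref{lem:bound-dual-solution}. On these bounded $D$-dimensional domains I would build an $\epsilon$-net (whose cardinality is exponential in $D$, hence contributing the $\log$ and $D$-power factors to $m_t$), apply a Bernstein bound to each empirical average appearing in \eqref{eq:empirical-Lagrangian} at each net point, and union bound, obtaining $\sup \abs{\widehat{\calL}_t - \calL} \le \epsilon$ with $\epsilon = \tilde{O}(\mathrm{poly}(D,B,\lambda)/\sqrt{m_t})$. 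Here \Cref{asn:bounded-coverage} is exactly what controls the amplification of the empirical noise by $\Sigma_t^{-1}$: the coverage constant $B$ enters the variance of the estimator $\nu^\top \Sigma_t^{-1}$ applied to the empirical means in \eqref{eq:empirical-Lagrangian}, analogous to the offline-RL estimates of \cite{GNOP23}. Since both $\widehat{\calL}_t$ and $\calL$ are $\lambda$-strongly concave in $\nu$, uniform $\epsilon$-closeness transfers to $\norm{\widehat\nu_t - \bar\nu_t}_2 \lesssim \sqrt{\epsilon/\lambda}$; this square root is the source of the $\delta^4$ (rather than $\delta^2$) dependence in $m_t$. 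Finally \Cref{asn:features}, $\norm{\Phi^\top(d-d')}_2 \ge \sqrt\kappa\,\norm{d-d'}_2$, converts closeness in $\nu$ into closeness in $d$, giving $\epsilon_{\mathrm{stat}} \lesssim \sqrt{\epsilon/(\lambda\kappa)}$, with $c_1$ entering when recovering $d$ from the flow constraints and bounding the relevant pseudoinverses.

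For the deterministic step I would verify that the exact map $\widehat d_{t-1} \mapsto \bar d_t$ obeys the identical chain of bounds as in the proof of \Cref{thm:convergence-rpo} — the recurrence \eqref{eq:temp-bound-difference}, the bound on $\norm{u_{t+1}-u_S}_2$, \Cref{lem:bound-dual-solution}, and \Cref{asn:features} — with $d_{t-1}, d_t$ replaced by $\widehat d_{t-2}, \widehat d_{t-1}$, since those arguments only use the optimality conditions at the true parameters and the sensitivity of \Cref{asn:lipschitzness}, both of which hold verbatim. Iterating the perturbed recurrence and choosing $m_t$ as stated so that $\epsilon_{\mathrm{stat}} \le \delta(1-r)\cdot\text{const}$ drives $\norm{\widehat d_t - d_S}_2$ below $\delta$ after $\ln(2/(\delta(1-\gamma)))/\ln(1/r)$ rounds; a union bound over the $t$ rounds with per-round failure probability proportional to $p/t^2$ accounts for the $\log(\cdots t\cdots)$ factor and yields overall probability $1-p$. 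The main obstacle I anticipate is the saddle-point stability argument of the statistical step: because the Lagrangian is only linear (not strongly concave) in the infinite-dimensional $d$, one cannot bound $\norm{\widehat d_t - \bar d_t}_2$ directly, and the argument must route through strong concavity in $\nu$ and then invert the feature map using $\kappa$ and the constraint geometry (the $c_1$ term), while keeping every estimate uniform over the $\epsilon$-net.
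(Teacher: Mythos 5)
Your overall architecture --- comparing the empirical iterate $\widehat d_t$ to the exact iterate obtained from the same deployed policy, proving a per-round statistical error bound, feeding it into the contraction of \Cref{thm:convergence-rpo} as a perturbed second-order recurrence, and closing with a $p/t^2$ union bound over rounds --- is exactly the paper's plan, and your accounting of the $\delta^4$ dependence (square root from strong concavity in $\nu$, then $m_t\sim 1/\varepsilon^2$) is also right. The genuine gap is in the core of your statistical step: you propose to build an $\varepsilon$-net jointly over $\nu$, $\omega$, \emph{and the dual variable $g$}, treating all three as bounded $D$-dimensional objects. But $g$ is the multiplier of the flow constraint $Bd = \rho + \gamma\bm{\mu}_t\nu$, i.e.\ it has one coordinate per state, so it is infinite-dimensional exactly like $d$; no finite net of its ball exists, and your net-plus-union-bound argument for $\sup\abs{\widehat{\calL}_t - \calL_t}$ cannot even be set up. This is precisely the obstacle the paper singles out, and its resolution is the main technical content of the finite-sample analysis: concentration is proved \emph{only} for $\tilde g \in \argmin_{g}\widehat{\calL}_t(d,\nu,g,\omega)$, which can be taken to be supported on the $O(m_t)$ sampled states (hence effectively finite-dimensional), and the saddle-point comparison (\Cref{lem:saddle-point-approximation}) is deliberately structured so that the quantity $\widehat{\calL}_t - \calL_t$ is only ever evaluated at such empirical minimizers. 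Without this device (or an equivalent one), the rest of your statistical step collapses.

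A second gap: when converting $\norm{\widehat\nu_t - \nu^\star_t}_2 \lesssim \sqrt{\varepsilon/\lambda}$ into $\norm{\widehat d_t - d^\star_t}_2$ via \Cref{asn:features}, you implicitly use $\widehat\nu_t = \Phi^\top \widehat d_t$. That identity holds for the exact problem ($\nu^\star_t = \Phi^\top d^\star_t$) but \emph{not} at the empirical saddle point, where the coupling $\nu = \Phi^\top d$ is enforced only through the empirical covariance. The paper patches this by invoking the first-order condition $\nabla_\omega \widehat{\calL}_t = 0$ at $(\widehat d_t,\widehat\nu_t,\widehat g_t,\widehat\omega_t)$ together with matrix concentration of $\frac{1}{m_t}\sum_j \phi(s_j,a_j)\phi(s_j,a_j)^\top$ around $\Sigma_t$, yielding $\norm{\widehat\nu_t - \Phi^\top\widehat d_t}_2 \le 8\sqrt{B}\,\varepsilon$ before the $\kappa$-inversion; this step is missing from your outline. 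Finally, $c_1$ does not arise from ``recovering $d$ from the flow constraints'': it comes from bounding the norms of the optimal dual pair $(g^\star_t,\omega^\star_t)$ of the \emph{reparametrized} dual via a quadratic-programming lemma whose curvature constant is the smallest positive eigenvalue of a block matrix built from $\bm{\mu}_t\bm{\mu}_t^\top$; your citation of \Cref{lem:bound-dual-solution} is inapplicable, since that lemma bounds the dual $h$ of the un-reparametrized problem used in \Cref{thm:convergence-rpo}.
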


Suppose $B,\kappa,c_1 = O(1)$. Then \Cref{thm:finite-sample-convergence} shows that the required number of samples $m_t = \tilde{O}\left( \frac{D^5}{(1-\gamma)^2 \delta^4} \right)$ which is independent of the number of states and depends polynomially on the dimension of the features.  We can apply our algorithm to the tabular setting by instantiating $D=\abs{\calS}\abs{\calA}$. This gives a sample complexity of 
 $\tilde{O}\left( \frac{S^5A^5}{(1-\gamma)^2\delta^4}\right)$
 for obtaining a $\delta$-approximate stable solution. On the other hand, \citet{MTR23} has a sample complexity of 
 $\tilde{O}\left( \frac{S^6 A^2}{(1-\gamma)^4\delta^4}\right)$. Both approaches solve the empirical Lagrangian for the finite sample setting.  Below we provide a brief sketch of the proof.

We first show that the saddle point of the Lagrangian has bounded norm. In particular, if $(d^\star, \nu^\star, g^\star, \omega^\star) \leftarrow \max_{d,\nu} \min_{g,\omega} \calL_t(d,\nu; g, \omega)$ then $\norm{\Sigma_t^{-1} \nu^\star}_2 \le \sqrt{B}$, $\norm{g^\star}_2 + \norm{\omega^\star}_2 \le \frac{\lambda + 2D}{c_1}$, and $\norm{d^\star}_2 \le \frac{1}{1-\gamma}$. This allows us to consider an equivalent max-min optimization problem where the norms of the variables are bounded. 

The next step is to show that the empirical Lagrangian is close to the true Lagrangian so that a saddle point of the latter is also an approximate saddle point of the true Lagrangian. Since the variables $\nu$ and $\omega$ are $D$-dimensional and bounded norms, we can construct an $\varepsilon$-net and use union bound to show that the approximate Lagrangian $\widehat{\calL}_t(d,\nu,g,\omega)$ is close to the true Lagrangian $\calL_t(d,\nu,g,\omega)$ for any $\nu$ and $\omega$. However, the same argument does not apply to the variable $g$ as it is infinite-dimensional.  In fact, we first prove 
\begin{align*}
\abs{\widehat{\calL}_t(d,\nu, \tilde{g}, \omega) - \calL_t(d,\nu, \tilde{g},\omega)} \le \varepsilon
\end{align*}
holds for any $d, \nu,\omega, \textrm{ and } 
\tilde{g} \in \argmin_g \widehat{\calL}_t(d,\nu,g,\omega)$. This allows us to use $\varepsilon$-net construction only for the set of possible minimizers of $\widehat{\calL}_t(d,\nu, \cdot, \omega)$. 
Using this result, \Cref{lem:saddle-point-approximation} shows that a saddle point $(\widehat{d}, \widehat{\nu}, \widehat{g}, \widehat{\omega})$ of the empirical Lagrangian is an approximate saddle point of the true Lagrangian in the sense that $\calL(d^\star_t, \nu^\star_t; g^\star_t, \omega^\star_t) - \calL(\widehat{d}_t, \widehat{\nu}_t; g^\star_t, \omega^\star_t) \le 2 \varepsilon$. This allows us to show that the occupancy measures $d^\star_t$ and $\widehat{d}_t$ are close i.e. $\lVert d^\star_t - \widehat{d}_t\rVert_2 \le O\left(\sqrt{\frac{\varepsilon}{\lambda \kappa}} +  \frac{\varepsilon}{\sqrt{\kappa}}\right)$. The rest of the proof uses results of \Cref{thm:convergence-rpo} which shows that the sequence $d^\star_t$ converges to $d_S$ with the right value of  $\lambda$, and hence the sequence $\widehat{d}_t$ also converges to $d_S$.

\subsection{Solving the Empirical Lagrangian}


\Cref{alg:repeated-optimization-finite} finds an approximate stable policy in the finite samples setting, but it assumes that the empirical Lagrangian problem (as defined in \Cref{eq:empirical-Lagrangian}) is solvable. In this subsection, we construct an efficient algorithm to obtain a saddle point of the empirical Lagrangian. Our algorithm is motivated by the recent work of \cite{GNOP23} who designed a primal-dual algorithm for solving offline RL problem with linear MDPs. In our setting, the Lagrangian is strongly concave in $\nu$, but linear in other variables, and we update their method accordingly.

\Cref{alg:offline-primal-dual} describes a method to obtain an approximate saddle point of the empirical Lagrangian \Cref{eq:empirical-Lagrangian}. A standard strategy to solve such an optimization problem would be to perform alternate gradient descent in variables $g, \omega$ and ascent in variables $d, \nu$. However,  note that the variables $d$ and $g$ are infinite-dimensional, and their gradients cannot be explicitly computed. Hence we take gradient descent steps with respect to $\omega$ and ascent steps with respect to $\nu$, and represent $d$ and $g$ using $\nu$ and $\omega$. Furthermore, the objective is strongly concave in $\nu$ and we can replace the gradient ascent step in $\nu$ by a single update step. The algorithm runs for $T$ iterations and at each iteration, it takes  $K$ gradient steps for $\omega$, and one update step for $\nu$. 
The gradient with respect to $\omega$ is given as follows.
$$
\nabla_\omega \widehat{\calL}_t(d,\nu; g, \omega) = \Phi^\top d - \frac{1}{m_t} \sum_{j=1}^{m_t} \phi(s_j,a_j) \phi(s_j,a_j)^\top \Sigma_t^{-1} \nu 
$$
The gradient with respect to $\nu$ is given as follows.
\begin{align*}
\nabla_\nu \widehat{\calL}_t(d,\nu; g, \omega) = \Sigma_t^{-1}\cdot \frac{1}{m_t} \sum_{j=1}^{m_t} \phi(s_j,a_j) \left( r_t(s_j,a_j)  + \gamma \cdot g(s'_j)   - \phi(s_j,a_j)^\top \omega \right) - \lambda \cdot \nu
\end{align*}

Setting $\nabla_\nu \widehat{\calL}_t(d,\nu; g, \omega) = 0$, we get the update step for the variable $\nu$ (line 5, \Cref{alg:offline-primal-dual}).
Note that, computing the gradient requires the knowledge of $d$ and $g$. We maintain $d$ and $g$ as a function of the parameters $\nu$ and $\omega$. In particular, at iteration $t$, define policy $\pi_t$ as 
$$\pi_t(a \mid s) = \sigma\left( \eta_\pi \cdot \sum_{j< t} \phi(s,a)^\top \omega_j\right)$$
where $\sigma(\cdot)$ is the sigmoid function, and  $\eta_\pi$ is a constant. Then we choose $g_t$ as $g^{\pi_t, \omega_t}$ with 
$$g^{\pi_t, \omega_t}(s,a) = \sum_{a} \pi_t(a\mid s) \phi(s,a)^\top \omega_t.$$ 
And, we choose $d_t$ as $d^{\pi_t,\nu_t}$ with $$d^{\pi_t,\nu_t}(s,a) = \pi_t(a \mid s) \cdot \left(\rho(s) + \gamma \cdot \bm{\mu}(s)^\top \nu_t \right).$$ 
Since the measure $\bm{\mu}$ is unknown we cannot use this representation of the occupancy measure, and instead build an estimator of $d^{\pi_t, \nu_t}$. Given a tuple $(s^0, s, a, r, s')$ let
$$
\widehat{d}^{\pi_t,\nu_t}(\tilde{s},\tilde{a}) = \pi_t(\tilde{a} \mid \tilde{s}) \cdot \left( \one\set{\tilde{s} = s^0} + \nu_t^\top \Sigma_t^{-1} \phi(s,a) \one\set{\tilde{s} = s'} \right).
$$
It can be easily checked that $\E\left[ \widehat{d}^{\pi_t,\nu_t}({s},{a})\right] = {d}^{\pi_t,\nu_t}({s},{a})$. The next theorem proves that the policy returned by \Cref{alg:offline-primal-dual} is approximately optimal with respect to the regularized objective.

\begin{algorithm}[!t]
\caption{Offline Regularized Primal-Dual\label{alg:offline-primal-dual}}
\KwIn{(a) Dataset $\calD$, and and (b) Number of iterations $T_{\inner}$ and $K$.}
Set $\calW = \{\omega : \norm{\omega}_2 \le \frac{2D}{1-\gamma}\}$,  $\calV = \{\nu: \norm{\nu}_2 \le D \sqrt{B} \}$, $\eta_\omega = \frac{D\sqrt{B}}{\sqrt{K (B + (1-\gamma)^{-2})}}$,  $\eta_\pi = \sqrt{\frac{\log A}{T}} \frac{1-\gamma}{D}$.\\ 
Initialize $\omega_0 \in \calW, \nu_0 \in \calV$ and $\pi_0$.\\
\For{$\ell=1,\ldots,T_{\inner}$}{
\tcc{Take a gradient step for $\omega_\ell$}
Initialize $\omega_{\ell,0} = \omega_{\ell-1}$.\\
\For{$k=0,\ldots,K-1$}{
Obtain sample $(s^0_{\ell,k}, s_{\ell,k}, a_{\ell,k}, r_{\ell,k}, s'_{\ell,k})$ uniformly at random from $\calD$.\\
$d_{\ell,k}(s,a) = \pi_\ell(a|s)\cdot(  \nu_{\ell-1}^\top \Sigma^{-1} \phi(s_{\ell,k},a_{\ell,k}) \one \set{s = s'_{\ell,k}} + \one\{s=s^0_{\ell,k}\})$.\\
$\widetilde{g}_{\omega_{\ell,k}} = \Phi^\top d_{\ell-1,k} - \phi(s_{\ell,k},a_{\ell,k})\phi(s_{\ell,k},a_{\ell,k})^\top \Sigma^{-1} \nu_{\ell-1}$.\\
$\omega_{\ell,k+1} \leftarrow \textrm{Proj}_{\mathcal{W}}(\omega_{\ell,k} - \eta_\omega \cdot \widetilde{g}_{w_{\ell,k}})$.
}
Set $\omega_\ell = \frac{1}{K} \sum_{k=1}^K \omega_{\ell,k}$.\\
\tcc{Update variable $\nu_\ell$ and $\pi_\ell$}
\vspace{-2em}
\begin{align*}
\nu_\ell \leftarrow \frac{1}{\lambda} \Sigma^{-1}\cdot \frac{1}{\abs{\calD}} \sum_{j = 1}^{\abs{\calD}} \phi(s_j,a_j) &\left( r_j   + \gamma \cdot g^{\pi_{\ell-1}, \omega_{\ell-1} }(s'_j)   - \phi(s_j,a_j)^\top \omega_{\ell - 1} \right).
\end{align*}
$\pi_{\ell}(a \mid s) = \sigma\left(\eta_\pi \cdot \sum_{j=1}^\ell \phi(s,a)^\top  \omega_{\ell - 1} \right)$.
}
Return $\pi_j $ where $j \sim \textrm{uniform}([T])$.
\end{algorithm}

\begin{theorem}\label{thm:convergence-offline-primal-dual}
    Suppose \Cref{alg:offline-primal-dual} is run with $K \ge \frac{144 D^2 B}{\varepsilon^2}\left(B + (1-\gamma)^{-2}\right)$ and $T \ge \frac{576D^2}{\varepsilon^2} \cdot \frac{\log A}{(1-\gamma)^2}$. Let $\tilde{\nu}$ be the average feature  of a policy selected uniformly at random from $\set{\pi_t}_{t\in [T]}$. Then for a policy $\pi^\star$ with  $\nu^\star = \Phi^\top d^{\pi^\star}$ we have, 
    $$\textstyle 
 \left \langle \tilde{\nu}, \theta_t \right \rangle- \frac{\lambda}{2} \norm{\tilde{\nu}}_2^2 \ge \nu^{\star^\top} \theta_t - \frac{\lambda}{2} \norm{\nu^\star}_2^2 - \varepsilon.
$$
\end{theorem}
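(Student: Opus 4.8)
The plan is to analyze Algorithm~\ref{alg:offline-primal-dual} as a repeated zero-sum game played on the empirical Lagrangian~\cref{eq:empirical-Lagrangian}, in which the maximizing player controls $(\pi,\nu)$ (equivalently the occupancy measure $d$ through the representation $d^{\pi,\nu}$) while the minimizing player controls $(\omega, g)$ (through $g^{\pi,\omega}$). The target inequality is an average-iterate guarantee, so I would first reduce it to a statement about the time-averaged regularized value. Since $\tilde\nu$ is the expected feature $\frac1T\sum_t \Phi^\top d^{\pi_t}$ of a uniformly random policy and the objective $f(\nu)=\langle\nu,\theta_t\rangle-\frac{\lambda}{2}\norm{\nu}_2^2$ is concave, Jensen's inequality gives $f(\tilde\nu)\ge\frac1T\sum_t f(\Phi^\top d^{\pi_t})$; because $f(\Phi^\top d^{\pi})$ is exactly the regularized primal objective~\cref{eq:regularized-rl-matrix-dual} evaluated at the induced occupancy $d^\pi$, it suffices to show the averaged primal value of the $\pi_t$ exceeds $f(\nu^\star)-\varepsilon$. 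I would express this averaged primal gap as a duality gap of the Lagrangian along the iterates, using the fact that the $\nu_\ell$ update is the exact maximizer of the $\lambda$-strongly-concave-in-$\nu$ Lagrangian; the $\nu$-coordinate therefore contributes no regret and collapses into the closed form in the statement.

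Next I would decompose the remaining duality gap into the two online-learning regrets produced by the two loops. The inner loop runs $K$ steps of projected stochastic gradient descent on $\omega\in\calW$; using $\nabla_\omega\widehat{\calL}_t=\Phi^\top d-\frac1{m_t}\sum_j\phi\phi^\top\Sigma_t^{-1}\nu$ together with Assumption~\ref{asn:bounded-coverage} to bound $\norm{\Sigma_t^{-1}\nu}_2\le\sqrt B$, I would bound the stochastic gradient's second moment by $O(B+(1-\gamma)^{-2})$. With the prescribed step size $\eta_\omega$, tuned as effective radius $O(D\sqrt B)$ over gradient bound $O(\sqrt{B+(1-\gamma)^{-2}})$ divided by $\sqrt K$, the standard OGD bound yields expected $\omega$-regret of order $D\sqrt B\,\sqrt{(B+(1-\gamma)^{-2})/K}$. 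The outer loop updates $\pi_\ell$ by the softmax, which for each state is Hedge over $\Delta(A)$ run on the reward sequence $\phi(\cdot)^\top\omega_\ell$; with $\eta_\pi=\sqrt{\log A/T}\,(1-\gamma)/D$ and the bound $\abs{\phi(s,a)^\top\omega}\le O(D/(1-\gamma))$ (where the $(1-\gamma)^{-1}$ reflects the discounted visitation weighting), the Hedge regret against the fixed comparator $\pi^\star$ is of order $\frac{D}{1-\gamma}\sqrt{T\log A}$.

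The main obstacle is controlling the discrepancy between the sampled quantities and their true counterparts, since $d$, $g$, and $\bm\mu_t$ are never accessed directly and are replaced by the estimators $\widehat d^{\pi,\nu}$ and $g^{\pi,\omega}$. I would verify the claimed unbiasedness $\E[\widehat d^{\pi,\nu}]=d^{\pi,\nu}$, bound the per-step variances again in terms of $B$ and $(1-\gamma)^{-2}$ using the boundedness of $\calW$ and $\calV$, and carry the expectations through both regret analyses so that every bound holds in expectation over the sampling and the final uniform draw of $\pi_j$. The delicate point is that the $\omega$-gradient and the policy reward both depend on the same sampled transition via $\Sigma_t^{-1}$, so the two analyses must be coupled through a single filtration; I would handle this with a martingale-difference argument, using the domain bounds for sub-Gaussian-type control of the cumulative noise. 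Setting each of the two regret terms below $\varepsilon/2$ then forces precisely the stated thresholds $K\ge 144D^2B(B+(1-\gamma)^{-2})/\varepsilon^2$ and $T\ge 576D^2\log A/((1-\gamma)^2\varepsilon^2)$, up to the explicit constants.

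Finally I would assemble the pieces: summing the coupled regret bounds certifies a duality gap of at most $\varepsilon$, the strong concavity in $\nu$ converts this into the primal-value gap $f(\nu^\star)-\frac1T\sum_t f(\Phi^\top d^{\pi_t})\le\varepsilon$, and the Jensen step from the first paragraph lifts it to $f(\tilde\nu)\ge f(\nu^\star)-\varepsilon$, which is exactly the claim. Because $\nu^\star=\Phi^\top d^{\pi^\star}$ is a single fixed comparator, both regret bounds are only needed against this one point rather than uniformly over the domain, which keeps the constants tight and avoids any covering-number overhead in this subroutine.
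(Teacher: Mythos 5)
Your overall architecture coincides with the paper's: both analyses view \Cref{alg:offline-primal-dual} as no-regret dynamics on the Lagrangian, split the regret into a $\nu$-part (zero, by exact maximization of the strongly concave coordinate), an $\omega$-part (projected stochastic gradient descent over each inner loop of length $K$, against the comparator $\omega^\star_{\ell-1}=\theta_t+\gamma\bm{\mu}_t^\top V^{\pi_{\ell-1}}$, whose defining property $\phi(s,a)^\top\omega^\star_{\ell-1}=Q^{\pi_{\ell-1}}(s,a)$ is what collapses the Lagrangian to value-function quantities), and a $\pi$-part (per-state Hedge/mirror descent), with the same gradient bounds, step sizes, and thresholds on $K$ and $T$. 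The one genuinely different move is your opening reduction: you invoke Jensen's inequality for the concave $f(\nu)=\langle\nu,\theta_t\rangle-\frac{\lambda}{2}\norm{\nu}_2^2$ to reduce the claim to a bound on the averaged primal values $\frac{1}{T}\sum_\ell f(\Phi^\top d^{\pi_{\ell-1}})$, whereas the paper applies its regret identity a second time with the comparator pair $(\tilde{\nu},\pi^\dagger)$, where $\pi^\dagger$ is the policy induced by the mixture occupancy measure.

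This is where your proposal has a concrete gap. The duality-gap-along-iterates quantity that the regret machinery actually controls is $\frac{1}{T}\sum_\ell\bigl(\langle V^{\pi_{\ell-1}},\rho\rangle-\frac{\lambda}{2}\norm{\nu_\ell}_2^2\bigr)$ (this is the content of \Cref{lem:offline-primal-dual}), where $\nu_\ell$ is the algorithm's closed-form iterate computed from $(\pi_{\ell-1},\omega_{\ell-1})$. The quantity your Jensen step needs is $\frac{1}{T}\sum_\ell\bigl(\langle V^{\pi_{\ell-1}},\rho\rangle-\frac{\lambda}{2}\norm{\Phi^\top d^{\pi_{\ell-1}}}_2^2\bigr)$. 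These differ by $\frac{\lambda}{2}\cdot\frac{1}{T}\sum_\ell\bigl(\norm{\nu_\ell}_2^2-\norm{\Phi^\top d^{\pi_{\ell-1}}}_2^2\bigr)$, and nothing in your argument controls this term: the no-regret-in-$\nu$ observation only says that $\nu_\ell$ maximizes the Lagrangian at $(\pi_{\ell-1},\omega_{\ell-1})$, not that it equals, or is close to, the induced feature expectation $\Phi^\top d^{\pi_{\ell-1}}$ at finite $K,T$; and since the regularizer enters with opposite signs in the two expressions, the discrepancy cannot be signed away. The paper's second regret application is precisely the device that bypasses this: it yields the identity $\calR(\tilde{\nu},\pi^\dagger,\cdot)=\frac{\lambda}{2}\bigl(\frac{1}{T}\sum_\ell\norm{\nu_\ell}_2^2-\norm{\tilde{\nu}}_2^2\bigr)$, trading the iterate norms directly for $\norm{\tilde{\nu}}_2^2$ without ever passing through $\frac{1}{T}\sum_\ell f(\Phi^\top d^{\pi_{\ell-1}})$. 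To repair your route you would need an extra argument bounding $\norm{\nu_\ell-\Phi^\top d^{\pi_{\ell-1}}}_2$ on average (for instance, extracting it from the $\omega$-player's small regret, since $\omega$ is exactly the multiplier enforcing $\nu=\Phi^\top d$), or you would have to adopt the paper's two-comparator trick. A secondary, smaller inaccuracy: the dual comparators are not a ``single fixed point'' --- the $\omega$-comparator $\omega^\star_{\ell-1}$ varies with $\ell$ (which is why the paper describes the quantity as a dynamic regret); it is fixed only within each inner loop, and that is what makes the per-block OGD bound applicable.
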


The proof follows an argument similar to the proof of theorem 5.3 from \cite{GNOP23}. We upper bound the sub-optimality gap in the regularized objective by a dynamic regret, and then \Cref{lem:regret-upper-bound} in the appendix, first expresses the regret in terms of regret in $\omega,\nu$, and $\pi$ and provides an upper bound on each term separately. As an immediate corollary of \Cref{thm:convergence-offline-primal-dual} we can show that the policy returned by \Cref{alg:offline-primal-dual} is an approximate saddle point of the empirical Lagrangian, which is required by \Cref{alg:repeated-optimization-finite}.

\begin{corollary}\label{cor:apx-empirical-saddle-point}
    Under the same setting as in \Cref{thm:convergence-offline-primal-dual}, let the number of samples $m_t \ge  O\left( \frac{D^5 B \lambda^4 }{(1-\gamma)^2 c_1^4\varepsilon^2}\log \frac{DB\lambda }{c_1 \varepsilon p}\right)$ where $c_1 = \min \set{1, \gamma^2 \cdot \sigma^\star_{\min}(\bm{\mu}_d \bm{\mu}_d^\top)}$. Then there exists $\tilde{g}, \tilde{\omega}$ so that with probability at least $1-p$,
    $$\max_{d,\nu} \widehat{\calL}_t({d}, {\nu}; \tilde{g}, \tilde{\omega} ) - 2\varepsilon \le \widehat{\calL}_t(\tilde{d}, \tilde{\nu}; \tilde{g}, \tilde{\omega} ) \le \min_{g,\omega} \widehat{\calL}_t(\tilde{d}, \tilde{\nu}; {g}, {\omega} ) + 2 \varepsilon.$$
\end{corollary}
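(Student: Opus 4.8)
The plan is to derive Corollary~\ref{cor:apx-empirical-saddle-point} by combining the optimization guarantee of Theorem~\ref{thm:convergence-offline-primal-dual} with a concentration argument that the empirical Lagrangian $\widehat{\calL}_t$ tracks the true Lagrangian $\calL_t$ on the relevant bounded domains. First I would recall that the saddle-point norm bounds from the proof sketch of Theorem~\ref{thm:finite-sample-convergence} confine the optimal variables to $\norm{\Sigma_t^{-1}\nu^\star}_2 \le \sqrt{B}$, $\norm{g^\star}_2 + \norm{\omega^\star}_2 \le \frac{\lambda + 2D}{c_1}$, and $\norm{d^\star}_2 \le \frac{1}{1-\gamma}$, so the search can be restricted to the compact sets $\calW, \calV$ used in Algorithm~\ref{alg:offline-primal-dual} without loss. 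The output $(\tilde d, \tilde\nu)$ of the primal-dual routine, together with its dual iterates, will serve as the claimed approximate saddle point $(\tilde d, \tilde\nu; \tilde g, \tilde\omega)$.

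Next I would translate the regularized-objective guarantee of Theorem~\ref{thm:convergence-offline-primal-dual} into a statement about the Lagrangian. Since $\calL_t(d,\nu;g,\omega)$ is strongly concave in $\nu$ and the primal objective $\langle\nu,\theta_t\rangle - \frac{\lambda}{2}\norm{\nu}_2^2$ is exactly the value of the inner $\min_{g,\omega}$ problem at the optimum, the sub-optimality bound $\langle\tilde\nu,\theta_t\rangle - \frac{\lambda}{2}\norm{\tilde\nu}_2^2 \ge \langle\nu^\star,\theta_t\rangle - \frac{\lambda}{2}\norm{\nu^\star}_2^2 - \varepsilon$ should yield the two-sided inequality in the corollary for the \emph{true} Lagrangian $\calL_t$, i.e. $\max_{d,\nu}\calL_t(d,\nu;\tilde g,\tilde\omega) - \varepsilon \le \calL_t(\tilde d,\tilde\nu;\tilde g,\tilde\omega) \le \min_{g,\omega}\calL_t(\tilde d,\tilde\nu;g,\omega) + \varepsilon$, with $(\tilde g,\tilde\omega)$ taken as the (averaged) dual iterates produced by the algorithm. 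The remaining gap is purely statistical: I would invoke the same $\varepsilon$-net plus union-bound construction already sketched for Theorem~\ref{thm:finite-sample-convergence}, covering the $D$-dimensional bounded variables $\nu,\omega$ and handling the infinite-dimensional $g$ only on the set of minimizers of $\widehat{\calL}_t(d,\nu,\cdot,\omega)$, to conclude $\abs{\widehat{\calL}_t - \calL_t} \le \varepsilon$ uniformly. Feeding in the sample size $m_t \ge O\!\left(\frac{D^5 B \lambda^4}{(1-\gamma)^2 c_1^4 \varepsilon^2}\log\frac{DB\lambda}{c_1\varepsilon p}\right)$ guarantees this holds with probability at least $1-p$, and adding the two $\varepsilon$ contributions gives the $2\varepsilon$ slack in both directions.

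**The main obstacle** I anticipate is bookkeeping the error budget so that the single $\varepsilon$ from the optimization guarantee and the single $\varepsilon$ from concentration combine cleanly into the stated $2\varepsilon$, rather than into a looser constant. In particular, the concentration error must be controlled \emph{simultaneously} over the max-side and min-side subproblems, which requires the uniform bound $\abs{\widehat{\calL}_t(d,\nu;g,\omega) - \calL_t(d,\nu;g,\omega)} \le \varepsilon$ to hold on the full product of bounded domains for $(\nu,\omega)$ and on the minimizer set for $g$; care is needed because the near-saddle point $(\tilde d,\tilde\nu)$ was produced by the \emph{empirical} dynamics, so its optimality is certified against $\widehat{\calL}_t$ while the claim is phrased against $\widehat{\calL}_t$ as well — meaning I must check that the optimization guarantee of Theorem~\ref{thm:convergence-offline-primal-dual}, which is stated for the regularized primal value, indeed corresponds to approximate optimality of the \emph{empirical} Lagrangian and not merely the population one. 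Verifying this correspondence, and confirming that the averaged dual iterates $\tilde\omega$ (and the induced $\tilde g$) satisfy the min-side inequality with the same slack, is where I would spend the most effort; the rest is routine substitution of the sample-complexity expression from the proof of Theorem~\ref{thm:finite-sample-convergence}.
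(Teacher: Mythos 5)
The central simplification you are missing is that the corollary only asserts the \emph{existence} of $(\tilde{g},\tilde{\omega})$, so you may choose them after seeing $(\tilde{d},\tilde{\nu})$: the paper simply takes $(\tilde{g},\tilde{\omega}) \in \argmin_{g,\omega}\widehat{\calL}_t(\tilde{d},\tilde{\nu};g,\omega)$. With that choice the right-hand inequality $\widehat{\calL}_t(\tilde{d},\tilde{\nu};\tilde{g},\tilde{\omega}) \le \min_{g,\omega}\widehat{\calL}_t(\tilde{d},\tilde{\nu};g,\omega) + 2\varepsilon$ is trivial (it holds with equality and no slack), and the whole proof reduces to the left-hand inequality, which follows from the chain: $\widehat{\calL}_t(\tilde{d},\tilde{\nu};\tilde{g},\tilde{\omega}) \ge \calL_t(\tilde{d},\tilde{\nu};\tilde{g},\tilde{\omega}) - \varepsilon$ by the concentration bound in the proof of \Cref{lem:apx-saddle-point} (applicable \emph{precisely because} $\tilde{g}$ is an empirical minimizer); then $\calL_t(\tilde{d},\tilde{\nu};\tilde{g},\tilde{\omega}) = \tilde{\nu}^\top\theta_t - \frac{\lambda}{2}\norm{\tilde{\nu}}_2^2$ since $(\tilde{d},\tilde{\nu})$ is feasible (it comes from an actual policy, so the constraint terms in the Lagrangian vanish for any dual variables); then \Cref{thm:convergence-offline-primal-dual} gives $\tilde{\nu}^\top\theta_t - \frac{\lambda}{2}\norm{\tilde{\nu}}_2^2 \ge \nu^{\star\top}\theta_t - \frac{\lambda}{2}\norm{\nu^\star}_2^2 - \varepsilon$, and the comparator value is re-expressed as a Lagrangian value to dominate the max side.

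Your proposal instead commits to the algorithm's averaged dual iterates as $(\tilde{g},\tilde{\omega})$, and this creates two concrete problems that you flag but do not resolve. First, with that choice the min-side inequality becomes a claim of approximate \emph{dual} optimality of the iterates with respect to $\widehat{\calL}_t(\tilde{d},\tilde{\nu};\cdot,\cdot)$, which \Cref{thm:convergence-offline-primal-dual} does not supply --- it only controls the primal regularized value $\left\langle\tilde{\nu},\theta_t\right\rangle - \frac{\lambda}{2}\norm{\tilde{\nu}}_2^2$; you would need a separate regret-to-saddle-point conversion for the $(g,\omega)$-player, which is exactly the "most effort" step you leave open. Second, the uniform concentration bound $\abs{\widehat{\calL}_t - \calL_t} \le \varepsilon$ that you plan to invoke is established only for $g$ ranging over the set of \emph{minimizers} of $\widehat{\calL}_t(d,\nu;\cdot,\omega)$ (that restriction is how the paper evades the infinite-dimensional $\varepsilon$-net), and the algorithm's iterate $g^{\pi_\ell,\omega_\ell}$ is not such a minimizer, so the transfer between $\calL_t$ and $\widehat{\calL}_t$ at your chosen $\tilde{g}$ is not covered by the available lemma. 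Both obstacles disappear simultaneously under the paper's choice of $(\tilde{g},\tilde{\omega})$, which is why the paper's proof is only a few lines long.
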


The next result shows that we can use algorithm 3 as an approximate oracle to solve the saddle point optimization problem (\Cref{eq:saddle-point-optimization}) in \Cref{alg:repeated-optimization-finite} and obtain convergence to performatively stable solution by choosing an appropriate value of the regularization parameter $\lambda$.

\begin{corollary}\label{cor:apx-oracle-convergence}
    Under the same setting as in \Cref{thm:finite-sample-convergence}, suppose $\lambda > \frac{18(\varepsilon_\theta + \alpha \gamma \sqrt{D} \varepsilon_\mu  }{\sqrt{\kappa B}}$. If \Cref{alg:repeated-optimization-finite} uses \Cref{alg:offline-primal-dual}  every iteration to approximately solve the saddle point optimization~\Cref{eq:saddle-point-optimization}, then with probability at least $1-p$,
    $$\textstyle 
    \lVert \tilde{d}_{t} - d_S\rVert_2 \le \delta \quad \forall t \ge \ln \left( \frac{2}{\delta(1-\gamma)}\right)/\ln(1/r),
    $$
    with $r$ as defined in \Cref{thm:convergence-rpo}.
\end{corollary}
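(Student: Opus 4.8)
The plan is to treat \Cref{alg:offline-primal-dual} as an inexact oracle for the saddle-point step of \Cref{alg:repeated-optimization-finite} and to show that the additional error it introduces, controlled by \Cref{cor:apx-empirical-saddle-point}, can be absorbed into the contraction established in \Cref{thm:convergence-rpo}. First I would invoke \Cref{cor:apx-empirical-saddle-point}: with $m_t$ samples and the stated inner-loop budgets $K,T$, the output $(\tilde{d}_t, \tilde{\nu}_t; \tilde{g}_t, \tilde{\omega}_t)$ of \Cref{alg:offline-primal-dual} is a $2\varepsilon$-approximate saddle point of the empirical Lagrangian $\widehat{\calL}_t$. Combining this with the uniform closeness $\abs{\widehat{\calL}_t - \calL_t} \le \varepsilon$ over the bounded domain — established in the proof of \Cref{thm:finite-sample-convergence} via the $\varepsilon$-net argument — shows that $\tilde{d}_t$ is an $O(\varepsilon)$-approximate saddle point of the \emph{true} Lagrangian $\calL_t$.

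Next, exactly as in \Cref{thm:finite-sample-convergence}, I would use the strong concavity of $\calL_t$ in $\nu$ (regularization $\lambda$) together with \Cref{asn:features} (the $\kappa$ lower bound on $\Phi^\top$) to convert this approximate-saddle guarantee into a bound on the occupancy measures, $\norm{\tilde{d}_t - d_t^\star}_2 \le O\!\left(\sqrt{\varepsilon/(\lambda\kappa)} + \varepsilon/\sqrt{\kappa}\right)$, where $d_t^\star$ denotes the exact optimizer of the regularized problem~(\ref{eq:regularized-rl-matrix-dual}) under the parameters $(\theta_t, \bm{\mu}_t)$ induced by the deployed policy $\pi_t$, i.e.\ by $\tilde{d}_{t-1}$.

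I would then set up a \emph{perturbed} version of the recurrence from \Cref{thm:convergence-rpo}. Since $d_t^\star$ is itself an exact solution, the two-step recurrence $\norm{d_t^\star - d_S}_2 \le (\beta_1/\lambda)\norm{\tilde{d}_{t-1} - d_S}_2 + (\beta_2/\lambda)\norm{\tilde{d}_{t-2} - d_S}_2$ holds with the approximate iterates $\tilde{d}_{t-1}, \tilde{d}_{t-2}$ in place of the idealized ones. Adding the oracle error via the triangle inequality yields
\begin{align*}
\norm{\tilde{d}_t - d_S}_2 \le \frac{\beta_1}{\lambda}\norm{\tilde{d}_{t-1} - d_S}_2 + \frac{\beta_2}{\lambda}\norm{\tilde{d}_{t-2} - d_S}_2 + e,
\end{align*}
with $e = O\!\left(\sqrt{\varepsilon/(\lambda\kappa)} + \varepsilon/\sqrt{\kappa}\right)$. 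Choosing $\lambda > \frac{18(\varepsilon_\theta + \alpha\gamma\sqrt{D}\varepsilon_\mu)}{\sqrt{\kappa B}}$ — a factor-three increase over \Cref{thm:finite-sample-convergence} that leaves slack for the inexact oracle — keeps the homogeneous part a contraction with the same rate $r<1$ as in \Cref{thm:convergence-rpo}. A standard analysis of such a perturbed linear recurrence gives $\norm{\tilde{d}_t - d_S}_2 \le r^t\cdot\frac{2}{1-\gamma} + \frac{e}{1-r}$, so selecting $\varepsilon$ — hence $m_t$, $K$, $T$ — small enough that $e/(1-r)$ is negligible against $\delta$ drives the transient $r^t\cdot\frac{2}{1-\gamma}$ below $\delta$ exactly at the threshold $t \ge \ln(2/(\delta(1-\gamma)))/\ln(1/r)$ inherited from \Cref{thm:convergence-rpo}.

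The main obstacle I anticipate is controlling the propagation of the per-iteration oracle error $e$ through the \emph{two-step} performative recurrence so that it does not accumulate across iterations. Unlike a single inexact optimization, the error at step $t$ re-enters the environment through the deployed policy $\pi_{t+1}$ built from $\tilde{d}_t$, so one must verify that this feedback loop remains a contraction and that the steady-state error $e/(1-r)$ depends only on the current approximation level rather than the full history; the extra regularization slack encoded in the constant $18$ is precisely what guarantees this while preserving the rate $r$.
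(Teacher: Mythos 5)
Your proposal is correct and follows essentially the same route as the paper: invoke \Cref{cor:apx-empirical-saddle-point} to treat \Cref{alg:offline-primal-dual} as a $2\varepsilon$-approximate oracle for the saddle-point step, convert that into a per-iteration bound $\norm{\tilde{d}_t - d_t^\star}_2 = O\left(\sqrt{\varepsilon/(\lambda\kappa)} + \sqrt{B/\kappa}\,\varepsilon\right)$, and absorb this perturbation into the two-step recurrence of \Cref{thm:convergence-rpo}, with the factor-of-three larger regularization $\lambda > \frac{18(\varepsilon_\theta + \alpha\gamma\sqrt{D}\varepsilon_\mu)}{\sqrt{\kappa B}}$ providing the slack. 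The only point of divergence is the middle step: you pass from an approximate saddle point of $\widehat{\calL}_t$ to an approximate saddle point of the \emph{true} Lagrangian $\calL_t$ and then apply strong concavity of $\calL_t$, which would require re-proving \Cref{lem:saddle-point-approximation} with $2\varepsilon$ slack in the terms that currently exploit exactness of the empirical saddle point (and note that the closeness $\abs{\widehat{\calL}_t - \calL_t}\le\varepsilon$ is \emph{not} uniform over all bounded $g$, only at minimizers of $\widehat{\calL}_t$, since $g$ is infinite-dimensional); the paper instead applies strong concavity of $\widehat{\calL}_t$ against the \emph{exact} empirical saddle point to get $\norm{\tilde{\nu}_t - \widehat{\nu}_t}_2 \le \sqrt{2\varepsilon/\lambda}$, then reuses the bounds $\norm{\widehat{\nu}_t - \Phi^\top\widehat{d}_t}_2 \le 8\sqrt{B}\varepsilon$ and $\norm{d_t^\star - \widehat{d}_t}_2$ already established in the proof of \Cref{thm:finite-sample-convergence}, thereby avoiding any new saddle-point-approximation argument.
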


\section{Applications}\label{sec:applications}
We provide several applications of  performative RL involving stochastic games.

\textbf{Stochastic Stackelberg Game}: Consider two RL agents ($1$ and $2$) interacting in an MDP with a shared state-space $S$. The set of actions available to agent $1$ (resp. $2$) is $A_1$ (resp. $A_2$). The transitions and rewards are determined by the actions of both the players, i.e., there exist reward functions $r_1, r_2: S\times A_1 \times A_2 \rightarrow \R$, and transition $P: S \times A_1 \times A_2 \rightarrow \Delta(S)$.

We adopt the framework of Stackelberg game~\citep{von2010market} where agent $1$ is the \emph{leader} and agent $2$ is the \emph{follower}. The first agent deploys a stationary policy (say $\pi_1$) to maximize her reward, and in response, the second agent adopts a policy $\pi_2$ that is obtained through Boltzmann softmax operator with temperature parameter $\beta$. To be precise, given policy $\pi_1$ of agent $1$, let the modified MDP be $(S,A_2,\bar{r}_2, \bar{P}_2,\gamma) $ where $\bar{r}_2(s,a_2) = \sum_{a_1} \pi_1(a \mid s) r_2(s,a_1,a_2)$ and $\bar{P}_2(s' \mid s, a_2) = \sum_{a_1} \pi_1(a \mid s) P(s' \mid s, a_1, a_2)$. Now let $Q_2^\star: S \times A_2 \rightarrow \R$ be the optimal state,action Q-function in this new MDP. Then the policy adopted by the second agent is $\pi_2(a \mid s) = \frac{\exp(\beta Q_2^\star(s,a))}{\sum_b \exp(\beta Q_2^\star(s,b))}$. 

Therefore, for a policy $\pi_1$ by the agent $1$, the second agent updates her policy, and from the perspective of the first agent the underlying MDP changes to $M(\pi_1)$. The next lemma bounds the sensitivity of the reward, and probability transition functions.

\begin{lemma}\label{lem:two-agent-mdp}
    Suppose $\norm{\pi_1(\cdot \mid s) - \tilde{\pi}_1(\cdot \mid s)}_1 \le \delta$, and $R = \max_{i}\max_{s,a} r_i(s,a)$. Then, $\forall s,a,s'$
    \begin{align*} 
    &\abs{{r}_{\pi_1}(s,a) - {r}_{\tilde{\pi}_1}(s,a)} \le \delta \cdot \frac{2\sqrt{2} \beta A_1 A_2^{3/2}  R^2}{(1-\gamma)^2} \ \textrm{ and }\\
    &\abs{P_{\pi_1}(s' | s,a) - {P}_{\tilde{\pi}_1}(s' | s,a)} \le \delta \cdot \frac{2\sqrt{2}A_1 A_2^{3/2} \beta   R}{(1-\gamma)^2}.
    \end{align*}
\end{lemma}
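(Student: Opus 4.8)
The plan is to track how a perturbation in the leader's policy $\pi_1$ propagates through the follower's best response and back into the leader's effective reward and transition. The dependence chain is
$\pi_1 \mapsto (\bar r_2, \bar P_2) \mapsto Q_2^\star \mapsto \pi_2 \mapsto (r_{\pi_1}, P_{\pi_1})$,
and I would bound the sensitivity of each arrow in turn and then compose. For the first arrow, writing $\bar r_2(s,a_2) = \sum_{a_1}\pi_1(a_1\mid s)\, r_2(s,a_1,a_2)$ and analogously for $\bar P_2$, a direct application of H\"older's inequality with $\norm{\pi_1(\cdot\mid s) - \tilde\pi_1(\cdot\mid s)}_1 \le \delta$ gives $\norm{\bar r_2(s,\cdot) - \tilde{\bar{r}}_2(s,\cdot)}_\infty \le \delta R$ and $\norm{\bar P_2(\cdot\mid s,a_2) - \tilde{\bar{P}}_2(\cdot\mid s,a_2)}_1 \le \delta$; converting these pointwise bounds into the $L_2/L_\infty$ norms used downstream is where the factors of $A_1$ and $A_2$ begin to enter.

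Second, and this is the crux, I would establish a simulation-type bound showing that $\norm{Q_2^\star - \tilde Q_2^\star}_\infty \le \frac{1}{1-\gamma}\bigl(\norm{\bar r_2 - \tilde{\bar{r}}_2}_\infty + \gamma V_{\max}\max_{s,a_2}\norm{\bar P_2(\cdot\mid s,a_2) - \tilde{\bar{P}}_2(\cdot\mid s,a_2)}_1\bigr)$ with $V_{\max} = R/(1-\gamma)$. Since $Q_2^\star$ is defined only implicitly as the fixed point of the Bellman optimality operator of the modified MDP, I would subtract the two Bellman optimality equations, split the transition term into a piece controlled by $\norm{V_2^\star - \tilde V_2^\star}_\infty \le \norm{Q_2^\star - \tilde Q_2^\star}_\infty$ (using the $1$-Lipschitzness of $\max$ over $a_2$) and a piece controlled by the transition perturbation times $V_{\max}$, and then close the resulting self-referential inequality using the $\gamma$-contraction. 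This yields $\norm{Q_2^\star - \tilde Q_2^\star}_\infty = O\!\bigl(\delta R/(1-\gamma)^2\bigr)$ up to the action-count factors.

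Finally, I would bound the change in the follower's softmax response through the Lipschitz constant of the temperature-$\beta$ softmax, using that its Jacobian is $\beta(\mathrm{diag}(p) - pp^\top)$, which gives $\norm{\pi_2(\cdot\mid s) - \tilde\pi_2(\cdot\mid s)}_1 \lesssim \beta\,\norm{Q_2^\star(s,\cdot) - \tilde Q_2^\star(s,\cdot)}$ after the $L_2\to L_1$ conversion that contributes the remaining powers of $A_2$. Composing with the leader-side identities $r_{\pi_1}(s,a_1) = \sum_{a_2}\pi_2(a_2\mid s)\, r_1(s,a_1,a_2)$ and $P_{\pi_1}(s'\mid s,a_1) = \sum_{a_2}\pi_2(a_2\mid s)\, P(s'\mid s,a_1,a_2)$, one more H\"older step gives $\abs{r_{\pi_1} - r_{\tilde\pi_1}} \le R\,\norm{\pi_2(\cdot\mid s) - \tilde\pi_2(\cdot\mid s)}_1$ and $\abs{P_{\pi_1} - P_{\tilde\pi_1}} \le \norm{\pi_2(\cdot\mid s) - \tilde\pi_2(\cdot\mid s)}_1$, and substituting the previous two bounds produces the claimed estimates (the extra factor of $R$ on the reward side coming from the leader's reward magnitude, which is absent on the transition side since transition kernels are bounded by $1$). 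The main obstacle is the implicit simulation-lemma step, where the fixed-point definition of $Q_2^\star$ must be unwound so that the $\gamma$-contraction cleanly absorbs the self-referential term; a secondary bookkeeping difficulty is arranging the softmax Lipschitz argument and the norm conversions so that exactly the stated powers of $A_1$ and $A_2^{3/2}$ emerge.
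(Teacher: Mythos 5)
Your proposal is correct, and it would in fact prove a slightly stronger bound than the lemma states. Your outer steps coincide with the paper's: the H\"older bounds $\norm{\bar r_2(s,\cdot)-\tilde{\bar r}_2(s,\cdot)}_\infty \le \delta R$ and $\norm{\bar P_2(\cdot\mid s,a_2)-\tilde{\bar P}_2(\cdot\mid s,a_2)}_1 \le \delta$, the softmax Lipschitz step with constant $\sqrt{2}\beta$, and the final composition back through $r_1$ and $P$ are all exactly what the paper does. The crux step is where you genuinely diverge: you bound $\norm{Q_2^\star - \tilde Q_2^\star}_\infty$ by the classical simulation lemma --- subtracting the two Bellman optimality equations, using $1$-Lipschitzness of the max to get $\norm{V_2^\star - \tilde V_2^\star}_\infty \le \norm{Q_2^\star - \tilde Q_2^\star}_\infty$, and closing the self-referential inequality with the $\gamma$-contraction --- whereas the paper never touches the Bellman operator. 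Instead it writes $Q_2^\star(s,a)$ as an inner product of reward with an occupancy measure, exploits suboptimality of the other policy under $(\bar r_2,\bar P_2)$, and controls the resulting ``crossed'' occupancy measure via its Lemma~\ref{lem:transition-to-measure-bound} (occupancy measures under perturbed transitions differ by at most $\beta\gamma/(1-\gamma)^2$ in $L_1$). Two consequences of the difference are worth noting. First, your route is tighter: the paper's factor $A_1$ enters only through a loose intermediate bound $\sum_{s'}\sum_{a_1}\abs{\Delta\pi_1(a_1\mid s)}P(s'\mid s,a_1,a_2) \le \delta A_1$, where the exchange of sums you perform gives $\delta$ outright; your final constants are therefore smaller by roughly $2A_1$, and the stated inequalities follow a fortiori since $A_1 \ge 1$. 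Second, the paper's occupancy-measure argument is the one that survives the generalization to Lemma~\ref{lem:multi-agent-mdp}, where the followers play a welfare-optimal CCE: the CCE welfare function $Q_f^\star$ is not the fixed point of any Bellman optimality operator, so your contraction argument has no analogue there, while the suboptimality-plus-occupancy-comparison argument adapts (with $\varepsilon$-CCE bookkeeping). So your proof is a cleaner and sharper way to get Lemma~\ref{lem:two-agent-mdp} itself, at the cost of not setting up the machinery the paper reuses for the multi-follower case.
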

If agent $1$ assumes that the MDP is linear, then \Cref{lem:two-agent-mdp} implies that the sensitivity parameters are $\varepsilon_\theta = \frac{2\sqrt{2} \beta A_1 A_2^{3/2}  r^2_{\max}}{(1-\gamma)^2} $ and $\varepsilon_\mu = \frac{2\sqrt{2} \beta A_1 A_2^{3/2}  r_{\max}}{(1-\gamma)^2}$. Therefore, according to \Cref{thm:convergence-rpo}, if agent $1$ repeatedly optimizes a regularized objective with appropriate $\lambda$, they would converge to a performatively stable policy.

\textbf{Multiple Followers}: We now generalize the previous two-player stochastic Stackelberg game to a setting where there are $m$ followers. Now the reward of the $j$-th agent is $r_j: S \times \prod_{i=1}^{m+1} A_i \rightarrow \R$ and the probability transition function is given as $P: S \times \prod_{i=1}^{m+1} A_i \rightarrow \Delta(S)$. As before, agent $1$ is the \emph{leader} and adopts a stationary policy (say $\pi_1$). This induces a stochastic game $\overline{\calM} = (S, \{A_i\}_{i=2}^{m+1}, \overline{P}, \{\bar{r}_i\}_{i=2}^{m+1}, \gamma)$ among the $m$ follower agents where $\bar{r}_i(s, \bm{a}) = \sum_{a_1} \pi_1(a_1 \mid s) r_i(s, a_1, \bm{a})$ and $\overline{P}(s' \mid s, \bm{a}) = \sum_{a_1} \pi_1(a_1 \mid s) P(s' \mid s, a_1, \bm{a})$. 

We assume that the $m$ follower agents play a policy that is an approximation of the optimal coarse correlated equilibrium (CCE) obtained through the Boltzmann softmax operator with temperature parameter $\beta$. In particular, let $\pi^\star_f : S \rightarrow \Delta\left( \prod_{i=2}^{m+1} A_i\right)$ be the CCE in the game $\overline{\calM}$ that maximizes the welfare, i.e., the sum of expected rewards of the $m$ followers. Additionally, let $Q^\star_f(s, \bm{a}) = \sum_{i=2}^{m+1} \E_{\pi^\star_f} \left[ \sum_{t=0}^\infty \gamma^t \bar{r}_i(s_t, \bm{a}_t) \mid s, \bm{a}\right]$ be the  state-action welfare function under the policy $\pi^\star_f$. Then the $m$ followers adopt a policy $\pi_f$ that is given as $\pi_f(\bm{a} \mid s) = \frac{\exp(\beta Q_f^\star(s,\bm{a}))}{\sum_{\bm{b}} \exp(\beta Q_f^\star(s,\bm{b}))}$. Then the following bound holds.

\begin{lemma}\label{lem:multi-agent-mdp}
    Let $\norm{\pi_1(\cdot | s) - \tilde{\pi}_1(\cdot | s)}_1 \le \delta$, $A = \max_i A_i$, and $R = \max_{i, s,\bm{a}} r_i(s,\bm{a})$. Then, $\forall s,a $
    \begin{align*}
    &\abs{{r}_{\pi_1}(s,a) - {r}_{\tilde{\pi}_1}(s,a)} \le \delta \cdot \frac{3 \sqrt{2} \beta m A^{3m/2+1}  R^2}{(1-\gamma)^2} \ \textrm{ and }\\
    &\abs{P_{\pi_1}(s' | s,a) - {P}_{\tilde{\pi}_1}(s' | s,a)} \le \delta \cdot \frac{3 \sqrt{2} \beta m A^{3m/2+1}  R}{(1-\gamma)^2}
    \end{align*}
\end{lemma}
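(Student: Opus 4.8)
The plan is to follow the same chain of Lipschitz estimates as in \Cref{lem:two-agent-mdp}, with the single follower's optimal $Q$-function replaced by the followers' \emph{welfare} $Q$-function $Q^\star_f$ and the action set $A_2$ replaced by the joint follower action set $\bm{A}=\prod_{i=2}^{m+1}A_i$, where $\abs{\bm{A}}\le A^m$. First I would reduce both target quantities to the sensitivity of the followers' joint policy $\pi_f$: since $r_{\pi_1}(s,a_1)=\sum_{\bm a}\pi_f(\bm a\mid s)\,r_1(s,a_1,\bm a)$ and $P_{\pi_1}(s'\mid s,a_1)=\sum_{\bm a}\pi_f(\bm a\mid s)\,P(s'\mid s,a_1,\bm a)$, the perturbations obey $\abs{r_{\pi_1}-r_{\tilde\pi_1}}\le R\,\norm{\pi_f(\cdot\mid s)-\tilde\pi_f(\cdot\mid s)}_1$ and $\abs{P_{\pi_1}-P_{\tilde\pi_1}}\le \norm{\pi_f(\cdot\mid s)-\tilde\pi_f(\cdot\mid s)}_1$ (the extra $R$ in the first is exactly why the reward bound carries $R^2$ and the transition bound only $R$). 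Because $\pi_f$ is a Boltzmann response to $Q^\star_f$, Lipschitzness of the softmax map over $\bm A$ (from $\ell_\infty$ logits to $\ell_1$ outputs) reduces everything to controlling $\norm{Q^\star_f-\tilde Q^\star_f}_\infty$; the norm conversions over the joint action set, together with the leader's set $A_1\le A$, are what introduce the polynomial action factor $\abs{\bm A}^{3/2}\le A^{3m/2}$ and the extra power of $A$ in the exponent.

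The two outer links are routine. For the induced game $\overline{\calM}$, each aggregated reward $\bar{r}_i(s,\bm a)=\sum_{a_1}\pi_1(a_1\mid s)r_i(s,a_1,\bm a)$ changes by at most $R\delta$, so the welfare reward $\bar{W}=\sum_i\bar{r}_i$ changes by at most $mR\delta$ in $\ell_\infty$ (this is where the factor $m$ enters); and the aggregated transition $\overline{P}(\cdot\mid s,\bm a)$ changes by at most $\delta$ in $\ell_1$, by pulling the $\abs{\pi_1-\tilde\pi_1}$ sum outside and using $\sum_{s'}P(s'\mid s,a_1,\bm a)=1$. Feeding $\varepsilon_r=mR\delta$, $\varepsilon_P=\delta$, and $\norm{Q^\star_f}_\infty\le mR/(1-\gamma)$ into the chain is what produces the $mR^2/(1-\gamma)^2$ shape after the softmax step multiplies by $\beta R\abs{\bm A}^{3/2}$.

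The crux is bounding $\norm{Q^\star_f-\tilde Q^\star_f}_\infty$, which is genuinely harder than in \Cref{lem:two-agent-mdp}. There the follower faces a single-agent MDP, so $Q_2^\star$ is an \emph{optimal} $Q$-function, i.e.\ the fixed point of a $\gamma$-contraction, and comparing the two Bellman optimality operators gives $\norm{Q_2^\star-\tilde Q_2^\star}_\infty\le\frac{1}{1-\gamma}(\varepsilon_r+\gamma\varepsilon_P\norm{\tilde Q_2^\star}_\infty)$. Here $Q^\star_f$ is instead the welfare $Q$-function of the \emph{welfare-maximizing CCE} $\pi^\star_f$, which may jump discontinuously with the game, so no contraction is directly available. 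I would write $Q^\star_f=\bar{W}+\gamma\,\overline{P}\,V^\star_f$ with $V^\star_f$ the optimal CCE-welfare value, and bound $\norm{V^\star_f-\tilde V^\star_f}_\infty$ by a two-sided sandwich: (i) a simulation lemma showing that \emph{any fixed} correlated policy has welfare differing by $O\!\big((\varepsilon_r+\gamma\varepsilon_P\norm{V^\star_f}_\infty)/(1-\gamma)\big)$ between the two games; (ii) the observation that an exact CCE of one game is an $O(\varepsilon_r+\gamma\varepsilon_P\norm{V^\star_f}_\infty)$-approximate CCE of the other, since the deviation constraints are themselves expectations of rewards and continuation values that move by the simulation bound; and (iii) stability of the welfare-maximizing-CCE value under this relaxation of the (linear) CCE and flow constraints. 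Step (iii) is the real obstacle: it amounts to Lipschitz continuity of the value of the linear program defining the welfare-maximizing CCE over the occupancy-measure polytope, which I would justify using boundedness of that polytope (total discounted mass $1/(1-\gamma)$) and of the associated dual multipliers. Propagating the resulting bound on $V^\star_f$ back through $Q^\star_f=\bar{W}+\gamma\overline{P}V^\star_f$ yields the $1/(1-\gamma)^2$ dependence, and assembling all the links reproduces the stated constants.
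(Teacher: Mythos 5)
Your plan reproduces the paper's proof of \Cref{lem:multi-agent-mdp} almost step for step: reducing both perturbation bounds to $\norm{\pi_f(\cdot\mid s)-\tilde{\pi}_f(\cdot\mid s)}_1$, invoking softmax Lipschitzness to pass to the welfare $Q$-function $Q^\star_f$, bounding the aggregated rewards by $\delta R$ per agent and the aggregated transitions by $\delta$ in $\ell_1$ (your transition bound is in fact slightly tighter than the paper's $\delta A_1$), using a simulation lemma for a fixed correlated policy (the paper's \Cref{lem:transition-to-measure-bound}), and observing that an exact CCE of the perturbed game is an $\varepsilon$-CCE of the original one. Your steps (i) and (ii) are exactly the paper's two ingredients (the paper organizes them as a two-case analysis rather than a sandwich, but the content is the same), and your accounting of where $m$, $A^{3m/2+1}$, and $(1-\gamma)^{-2}$ enter matches the paper's.

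The divergence is at your step (iii), and you have correctly located the crux --- but neither your sketch nor the paper actually closes it. The paper disposes of this step in a single unproven sentence: ``the optimal welfare over the set of $\varepsilon$-CCE is at most $\varepsilon m$ plus the optimal welfare over the set of exact CCE.'' Your proposed justification --- Lipschitz continuity of the welfare-maximizing-CCE program's value via boundedness of the occupancy polytope and of the dual multipliers --- runs into a real obstruction: the multipliers on the CCE incentive constraints are not bounded in terms of $m$, and the claim in this linear-in-$\varepsilon$ form is false for general games. Consider two followers, the first with actions $\{a,b\}$ and own-payoffs $0$ and $-\varepsilon$ on those actions, the second with a single action and payoffs $0$ and $M$ on the two outcomes. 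The only incentive constraint is follower $1$'s deviation to $a$, which reads $\varepsilon\mu(b)\le 0$ for an exact CCE (forcing $\mu(b)=0$, hence exact optimal welfare $0$), but is vacuous for an $\varepsilon$-CCE; so the optimal $\varepsilon$-CCE welfare is $M-\varepsilon$, exceeding the exact optimum by the payoff scale rather than by $O(\varepsilon m)$. Thus welfare stability under $\varepsilon$-relaxation of the CCE constraints requires either additional structure or a genuinely different argument; as written, your step (iii) is a real gap --- though it is precisely the gap that the paper's own proof glosses over.
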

Now, the sensitivity parameters grow exponentially with the number of followers. This is unavoidable for a general stochastic game, but  can be significantly reduced for  succinct games (e.g.,~\citep{kearns2013graphical}). 



\section{Conclusion}
\label{sec:conclusion}
In this work, we provide computationally and statistically efficient algorithms for performative RL with large-scale MDPs. Our work is centered around the linear MDP model, and it would be interesting to generalize our approach to nonlinear function approximation (e.g., ~\citep{wang2020reinforcement}). However, there are two challenges with  general function approximation. First, a performatively stable policy might not exist since the value function is non-convex in both policy and occupancy measures. Therefore, we might have to resort to a locally stable solution~\citep{LW24}. Second, obtaining \emph{last-iterate} convergence in non-convex performative prediction is difficult, and to the best of our knowledge, the existing result provides \emph{best-iterate} convergence. In this work, we have provided \emph{last-iterate} convergence for linear MDPs, and obtaining similar convergence for general MDPs is significantly harder.

It could also be exciting to consider the effects of multiple learners in performative RL by imposing a specific structure on the underlying game so that independent repeated optimization still converges~\citep{piliouras2023multi}. Finally, we briefly study the approximation of performatively optimal policy, and exploring the design of such policies is another avenue of future research.
\subsection*{Acknowledgements}
The work of Goran Radanovic was funded by the Deutsche Forschungsgemeinschaft (DFG, German Research Foundation) – project number 467367360.

\bibliography{icrl_refs}

\begin{thebibliography}{}

\bibitem[Afsar et~al., 2022]{RLRecSurvey22}
Afsar, M.~M., Crump, T., and Far, B. (2022).
\newblock Reinforcement learning based recommender systems: A survey.
\newblock {\em ACM Computing Surveys}, 55(7):1--38.

\bibitem[Besbes et~al., 2014]{besbes2014stochastic}
Besbes, O., Gur, Y., and Zeevi, A. (2014).
\newblock Stochastic multi-armed-bandit problem with non-stationary rewards.
\newblock {\em Advances in neural information processing systems}, 27.

\bibitem[Brown et~al., 2020]{BHK20}
Brown, G., Hod, S., and Kalemaj, I. (2020).
\newblock Performative prediction in a stateful world.
\newblock {\em arXiv preprint arXiv:2011.03885}.

\bibitem[Brown and Sandholm, 2019]{Poker19}
Brown, N. and Sandholm, T. (2019).
\newblock Superhuman ai for multiplayer poker.
\newblock {\em Science}, 365(6456):885--890.

\bibitem[Cai et~al., 2025]{caiperformative}
Cai, S., Han, F., and Cao, X. (2025).
\newblock Performative control for linear dynamical systems.
\newblock In {\em The Thirty-eighth Annual Conference on Neural Information Processing Systems}.

\bibitem[Cheung et~al., 2020]{CSZ20}
Cheung, W.~C., Simchi-Levi, D., and Zhu, R. (2020).
\newblock Reinforcement learning for non-stationary markov decision processes: The blessing of (more) optimism.
\newblock In {\em International Conference on Machine Learning}, pages 1843--1854. PMLR.

\bibitem[Dimitrakakis et~al., 2017]{dimitrakakis2017multi}
Dimitrakakis, C., Parkes, D.~C., Radanovic, G., and Tylkin, P. (2017).
\newblock Multi-view decision processes: the helper-ai problem.
\newblock {\em Advances in neural information processing systems}, 30.

\bibitem[Du et~al., 2021]{du2021bilinear}
Du, S., Kakade, S., Lee, J., Lovett, S., Mahajan, G., Sun, W., and Wang, R. (2021).
\newblock Bilinear classes: A structural framework for provable generalization in rl.
\newblock In {\em International Conference on Machine Learning}, pages 2826--2836. PMLR.

\bibitem[Dulac-Arnold et~al., 2019]{dulac2019challenges}
Dulac-Arnold, G., Mankowitz, D., and Hester, T. (2019).
\newblock Challenges of real-world reinforcement learning.
\newblock {\em arXiv preprint arXiv:1904.12901}.

\bibitem[Gabbianelli et~al., 2024]{GNOP23}
Gabbianelli, G., Neu, G., Papini, M., and Okolo, N.~M. (2024).
\newblock Offline primal-dual reinforcement learning for linear mdps.
\newblock In {\em International Conference on Artificial Intelligence and Statistics}, pages 3169--3177. PMLR.

\bibitem[Hardt and Mendler-D{\"u}nner, 2023]{hardt2023performative}
Hardt, M. and Mendler-D{\"u}nner, C. (2023).
\newblock Performative prediction: Past and future.
\newblock {\em arXiv preprint arXiv:2310.16608}.

\bibitem[Hazan, 2016]{hazan2016}
Hazan, E. (2016).
\newblock Introduction to online convex optimization.
\newblock {\em Foundations and Trends{\textregistered} in Optimization}, 2(3-4):157--325.

\bibitem[Izzo et~al., 2021]{izzo2021learn}
Izzo, Z., Ying, L., and Zou, J. (2021).
\newblock How to learn when data reacts to your model: performative gradient descent.
\newblock In {\em International Conference on Machine Learning}, pages 4641--4650. PMLR.

\bibitem[Izzo et~al., 2022]{izzo2022learn}
Izzo, Z., Zou, J., and Ying, L. (2022).
\newblock How to learn when data gradually reacts to your model.
\newblock In {\em International Conference on Artificial Intelligence and Statistics}, pages 3998--4035. PMLR.

\bibitem[Jin et~al., 2021a]{jin2021bellman}
Jin, C., Liu, Q., and Miryoosefi, S. (2021a).
\newblock Bellman eluder dimension: New rich classes of rl problems, and sample-efficient algorithms.
\newblock {\em Advances in neural information processing systems}, 34:13406--13418.

\bibitem[Jin et~al., 2020]{JYWJ20}
Jin, C., Yang, Z., Wang, Z., and Jordan, M.~I. (2020).
\newblock Provably efficient reinforcement learning with linear function approximation.
\newblock In {\em Conference on Learning Theory}, pages 2137--2143. PMLR.

\bibitem[Jin et~al., 2021b]{jin2021pessimism}
Jin, Y., Yang, Z., and Wang, Z. (2021b).
\newblock Is pessimism provably efficient for offline rl?
\newblock In {\em International Conference on Machine Learning}, pages 5084--5096. PMLR.

\bibitem[Kearns et~al., 2013]{kearns2013graphical}
Kearns, M., Littman, M.~L., and Singh, S. (2013).
\newblock Graphical models for game theory.
\newblock {\em arXiv preprint arXiv:1301.2281}.

\bibitem[Letchford et~al., 2012]{letchford2012computing}
Letchford, J., MacDermed, L., Conitzer, V., Parr, R., and Isbell, C. (2012).
\newblock Computing optimal strategies to commit to in stochastic games.
\newblock In {\em Proceedings of the AAAI Conference on Artificial Intelligence}, volume~26, pages 1380--1386.

\bibitem[Levine et~al., 2020]{levine2020offline}
Levine, S., Kumar, A., Tucker, G., and Fu, J. (2020).
\newblock Offline reinforcement learning: Tutorial, review, and perspectives on open problems.
\newblock {\em arXiv preprint arXiv:2005.01643}.

\bibitem[Li and Wai, 2022]{LW22}
Li, Q. and Wai, H.-T. (2022).
\newblock State dependent performative prediction with stochastic approximation.
\newblock In {\em International Conference on Artificial Intelligence and Statistics}, pages 3164--3186. PMLR.

\bibitem[Li and Wai, 2024]{LW24}
Li, Q. and Wai, H.-T. (2024).
\newblock Stochastic optimization schemes for performative prediction with nonconvex loss.
\newblock {\em arXiv preprint arXiv:2405.17922}.

\bibitem[Li et~al., 2022]{li2022multi}
Li, Q., Yau, C.-Y., and Wai, H.-T. (2022).
\newblock Multi-agent performative prediction with greedy deployment and consensus seeking agents.
\newblock {\em Advances in Neural Information Processing Systems}, 35:38449--38460.

\bibitem[Lillicrap et~al., 2015]{lillicrap2015continuous}
Lillicrap, T.~P., Hunt, J.~J., Pritzel, A., Heess, N., Erez, T., Tassa, Y., Silver, D., and Wierstra, D. (2015).
\newblock Continuous control with deep reinforcement learning.
\newblock {\em arXiv preprint arXiv:1509.02971}.

\bibitem[Lu, 2023]{lu2023bilevel}
Lu, S. (2023).
\newblock Bilevel optimization with coupled decision-dependent distributions.
\newblock In {\em International Conference on Machine Learning}, pages 22758--22789. PMLR.

\bibitem[Mandal et~al., 2023]{MTR23}
Mandal, D., Triantafyllou, S., and Radanovic, G. (2023).
\newblock Performative reinforcement learning.
\newblock In {\em International Conference on Machine Learning}, pages 23642--23680. PMLR.

\bibitem[Mendler-D{\"u}nner et~al., 2020]{MPZH20}
Mendler-D{\"u}nner, C., Perdomo, J., Zrnic, T., and Hardt, M. (2020).
\newblock Stochastic optimization for performative prediction.
\newblock {\em Advances in Neural Information Processing Systems}, 33:4929--4939.

\bibitem[Miller et~al., 2021]{MPZ21}
Miller, J.~P., Perdomo, J.~C., and Zrnic, T. (2021).
\newblock Outside the echo chamber: Optimizing the performative risk.
\newblock In {\em International Conference on Machine Learning}, pages 7710--7720. PMLR.

\bibitem[Mnih et~al., 2015]{mnih2015human}
Mnih, V., Kavukcuoglu, K., Silver, D., Rusu, A.~A., Veness, J., Bellemare, M.~G., Graves, A., Riedmiller, M., Fidjeland, A.~K., Ostrovski, G., et~al. (2015).
\newblock Human-level control through deep reinforcement learning.
\newblock {\em nature}, 518(7540):529--533.

\bibitem[Narang et~al., 2023]{narang2023multiplayer}
Narang, A., Faulkner, E., Drusvyatskiy, D., Fazel, M., and Ratliff, L.~J. (2023).
\newblock Multiplayer performative prediction: Learning in decision-dependent games.
\newblock {\em Journal of Machine Learning Research}, 24(202):1--56.

\bibitem[Ouyang et~al., 2022]{ouyang2022training}
Ouyang, L., Wu, J., Jiang, X., Almeida, D., Wainwright, C., Mishkin, P., Zhang, C., Agarwal, S., Slama, K., Ray, A., et~al. (2022).
\newblock Training language models to follow instructions with human feedback.
\newblock {\em Advances in neural information processing systems}, 35:27730--27744.

\bibitem[Perdomo et~al., 2020a]{PZMH20}
Perdomo, J., Zrnic, T., Mendler-D{\"u}nner, C., and Hardt, M. (2020a).
\newblock Performative prediction.
\newblock In {\em International Conference on Machine Learning}, pages 7599--7609. PMLR.

\bibitem[Perdomo et~al., 2020b]{PZM+20}
Perdomo, J., Zrnic, T., Mendler-D{\"u}nner, C., and Hardt, M. (2020b).
\newblock Performative prediction.
\newblock In {\em International Conference on Machine Learning}, pages 7599--7609. PMLR.

\bibitem[Piliouras and Yu, 2023]{piliouras2023multi}
Piliouras, G. and Yu, F.-Y. (2023).
\newblock Multi-agent performative prediction: From global stability and optimality to chaos.
\newblock In {\em Proceedings of the 24th ACM Conference on Economics and Computation}, pages 1047--1074.

\bibitem[Pollatos et~al., 2025]{pollatos2025corruption}
Pollatos, V., Mandal, D., and Radanovic, G. (2025).
\newblock On corruption-robustness in performative reinforcement learning.
\newblock In {\em Proceedings of the AAAI Conference on Artificial Intelligence (AAAI)}.

\bibitem[Rank et~al., 2024]{rank2024performative}
Rank, B., Triantafyllou, S., Mandal, D., and Radanovic, G. (2024).
\newblock Performative reinforcement learning in gradually shifting environments.
\newblock {\em arXiv preprint arXiv:2402.09838}.

\bibitem[Ray et~al., 2022]{RRL+22}
Ray, M., Ratliff, L.~J., Drusvyatskiy, D., and Fazel, M. (2022).
\newblock Decision-dependent risk minimization in geometrically decaying dynamic environments.
\newblock In {\em Proceedings of the AAAI Conference on Artificial Intelligence}, volume~36, pages 8081--8088.

\bibitem[Shapley, 1953]{shapley1953stochastic}
Shapley, L.~S. (1953).
\newblock Stochastic games.
\newblock {\em Proceedings of the national academy of sciences}, 39(10):1095--1100.

\bibitem[Silver et~al., 2017]{GO17}
Silver, D., Schrittwieser, J., Simonyan, K., Antonoglou, I., Huang, A., Guez, A., Hubert, T., Baker, L., Lai, M., Bolton, A., et~al. (2017).
\newblock Mastering the game of go without human knowledge.
\newblock {\em nature}, 550(7676):354--359.

\bibitem[Tropp et~al., 2015]{tropp2015}
Tropp, J.~A. et~al. (2015).
\newblock An introduction to matrix concentration inequalities.
\newblock {\em Foundations and Trends{\textregistered} in Machine Learning}, 8(1-2):1--230.

\bibitem[Vershynin, 2010]{Vershy10}
Vershynin, R. (2010).
\newblock Introduction to the non-asymptotic analysis of random matrices.
\newblock {\em arXiv preprint arXiv:1011.3027}.

\bibitem[Von~Stackelberg, 2010]{von2010market}
Von~Stackelberg, H. (2010).
\newblock {\em Market structure and equilibrium}.
\newblock Springer Science \& Business Media.

\bibitem[Vorobeychik and Singh, 2012]{vorobeychik2012computing}
Vorobeychik, Y. and Singh, S. (2012).
\newblock Computing stackelberg equilibria in discounted stochastic games.
\newblock In {\em Proceedings of the AAAI Conference on Artificial Intelligence}, volume~26, pages 1478--1484.

\bibitem[Wang et~al., 2020]{wang2020reinforcement}
Wang, R., Salakhutdinov, R.~R., and Yang, L. (2020).
\newblock Reinforcement learning with general value function approximation: Provably efficient approach via bounded eluder dimension.
\newblock {\em Advances in Neural Information Processing Systems}, 33:6123--6135.

\bibitem[Wedin, 1973]{Wedin73}
Wedin, P.-{\AA}. (1973).
\newblock Perturbation theory for pseudo-inverses.
\newblock {\em BIT Numerical Mathematics}, 13:217--232.

\bibitem[Wei and Luo, 2021]{WL21}
Wei, C.-Y. and Luo, H. (2021).
\newblock Non-stationary reinforcement learning without prior knowledge: An optimal black-box approach.
\newblock In {\em Conference on learning theory}, pages 4300--4354. PMLR.

\bibitem[Wei et~al., 2021]{wei2021recent}
Wei, H., Zheng, G., Gayah, V., and Li, Z. (2021).
\newblock Recent advances in reinforcement learning for traffic signal control: A survey of models and evaluation.
\newblock {\em ACM SIGKDD Explorations Newsletter}, 22(2):12--18.

\bibitem[Yan and Cao, 2024]{yan2024zero}
Yan, W. and Cao, X. (2024).
\newblock Zero-regret performative prediction under inequality constraints.
\newblock {\em Advances in Neural Information Processing Systems}, 36.

\bibitem[Yang and Wang, 2020]{yang2020reinforcement}
Yang, L. and Wang, M. (2020).
\newblock Reinforcement learning in feature space: Matrix bandit, kernels, and regret bound.
\newblock In {\em International Conference on Machine Learning}, pages 10746--10756. PMLR.

\bibitem[Yu et~al., 2021]{YLNY21}
Yu, C., Liu, J., Nemati, S., and Yin, G. (2021).
\newblock Reinforcement learning in healthcare: A survey.
\newblock {\em ACM Computing Surveys (CSUR)}, 55(1):1--36.

\bibitem[Zhang et~al., 2021]{zhang2021multi}
Zhang, K., Yang, Z., and Ba{\c{s}}ar, T. (2021).
\newblock Multi-agent reinforcement learning: A selective overview of theories and algorithms.
\newblock {\em Handbook of reinforcement learning and control}, pages 321--384.

\bibitem[Zhong et~al., 2021]{zhong2021can}
Zhong, H., Yang, Z., Wang, Z., and Jordan, M.~I. (2021).
\newblock Can reinforcement learning find stackelberg-nash equilibria in general-sum markov games with myopic followers?
\newblock {\em arXiv preprint arXiv:2112.13521}.

\bibitem[Zhou et~al., 2021]{zhou2021nearly}
Zhou, D., Gu, Q., and Szepesvari, C. (2021).
\newblock Nearly minimax optimal reinforcement learning for linear mixture markov decision processes.
\newblock In {\em Conference on Learning Theory}, pages 4532--4576. PMLR.

\end{thebibliography}

\clearpage
\onecolumn
\appendix
\addcontentsline{toc}{section}{Appendix}
\part{Appendix}

\parttoc
\section{Proof of  Theorem \texorpdfstring{\ref{thm:convergence-rpo} }{}  }
\label{sec:convergence-rpo-proof}
\begin{proof}
The Lagrangian of the optimization problem \cref{eq:regularized-rl-matrix-dual} is given as follows.
\begin{align*}
    \calL_t(d,h,g ) = d^\top \Phi \theta_t - \frac{\lambda}{2} d^\top \Phi \Phi^\top d + h^\top \left( Bd  - \rho - \gamma \cdot \bm{\mu}_t \Phi^\top d\right) + d^\top g 
\end{align*}
Let $\mathcal{F}_t$ be the function defined as $\mathcal{F}_t(h,g) = \max_d \calL_t(d,h,g)$. Then the dual of the optimization problem defined in \cref{eq:regularized-rl-matrix-dual} is given as follows.
$$
\min_{h,g \ge 0} \mathcal{F}_t(h,g)
$$
For a fixed $h$ and $g$, the gradient of $\calL_{t}(d,h)$ with respect to $d$ is given as follows.
\begin{align*}
    \nabla_d \calL_t(d,h) = \Phi \theta_{t} - \lambda \Phi \Phi^\top d + B^\top h - \gamma \Phi  \bm{\mu}_{t}^\top h + g
\end{align*}
At an optimal solution $d^\star$, $\nabla_d \calL_{t}(d^\star, h, g) = 0$, and we have
\begin{align*}
g &= -\Phi \theta_{t} + \lambda \Phi \Phi^\top d^\star - B^\top h + \gamma \Phi  \bm{\mu}_{t}^\top h\\
\Rightarrow g^\top d^\star &= -(d^\star)^\top \Phi \theta_{t} + \lambda (d^\star)^\top \Phi \Phi^\top d^\star - (d^\star)^\top B^\top h + \gamma (d^\star)^\top \Phi  \bm{\mu}_{t}^\top h
\end{align*}
Substituting the above expression of $g^\top d^\star$ we get the following form of the Lagrangian.
\begin{align}\label{eq:temp-Lagrangian}
\mathcal{F}_t(h,g) = \calL_t(d^\star,h,g)  = \frac{\lambda}{2}(d^\star)^\top \Phi \Phi^\top d^\star - h^\top \rho
\end{align}

Moreover, from the equation $\nabla_d \calL_t(d^\star,h,g) = 0$ we also have the following expression of $\Phi^\top d^\star$.
\begin{align}
    \Phi \Phi^\top d^\star &= \frac{1}{\lambda} \left(\Phi \theta_t + B^\top h - \gamma \Phi \bm{\mu}_t^\top h + g \right)\nonumber \\
    \Rightarrow \Phi^\top d^\star &= \frac{1}{\lambda}\left( \theta_t + (\Phi^\dagger B^\top - \gamma \bm{\mu}_t^\top ) h + \Phi^\dagger g\right) \label{eq:dual-to-primal-solution}
\end{align}
Here we use the fact that the matrix $\Phi $ has full row rank and its left pseudoinverse is $\Phi^\dagger = (\Phi^\top \Phi)^{-1}\Phi^\top$. Finally substituting the above expression of $\Phi^\top d^\star$ in \cref{eq:temp-Lagrangian} we get the following expression for the dual problem.
\begin{align}
    \label{eq:dual-regularized}
    \min_{h, g\ge 0} \mathcal{F}_t(h,g) = \frac{1}{2\lambda}\norm{(\Phi^\dagger B^\top - \gamma \bm{\mu}_t^\top ) h + \Phi^\dagger g +  \theta_t}_2^2  - h^\top \rho 
\end{align}
Let $(h_{t+1}, g_{t+1})$ be the optimal dual solutions corresponding to the occupancy measure $d_{t+1}$ i.e. $(h_{t+1}, g_{t+1}) \in \argmin_{h, g \ge 0} \calF_{t} (h,g)$. Additionally, let $(h_S, g_S)$ be the optimal dual solutions corresponding to the stable occupancy measure $d_S$ i.e. $(h_S, g_S) \in \argmin_{h,g\ge 0} \calF_S(h,g)$, where the objective $\calF_S(h,g)$ is given as follows.
\begin{align*}
    \calF_S(h,g) = \frac{1}{2\lambda}\norm{(\Phi^\dagger B^\top - \gamma \bm{\mu}_S^\top ) h + \Phi^\dagger g +  \theta_S}_2^2  - h^\top \rho 
\end{align*}
Let $u_t = (\Phi^\dagger B^\top - \gamma \bm{\mu}_t^\top ) h_t + \Phi^\dagger g_t$ and $u_S = (\Phi^\dagger B^\top - \gamma \bm{\mu}_S^\top ) h_S + \Phi^\dagger g_S$. Then using \cref{eq:dual-to-primal-solution} we get the following bound.
\begin{align}
    \sqrt{\kappa}\norm{d_{t+1} - d_S}_2 \le \norm{\Phi^\top d_{t+1} - \Phi^\top d_S}_2 &\le \frac{1}{\lambda}\left( \norm{\theta_{t+1} - \theta_S}_2 + \norm{u_{t+1} - u_S}_2 \right)\nonumber\\
    &\le \frac{1}{\lambda}\left( \varepsilon_\theta \norm{d_{t} - d_S}_2 + \norm{u_{t+1} - u_S}_2 \right) \label{eq:intermediate-bound-diff}
\end{align}
The first inequality uses assumption \ref{asn:features} and $d_{t+1} \neq d_S$. If $d_{t+1} = d_S$ we are already done. Now we need to bound the term $\norm{u_{t+1} - u_S}_2$. Note that the dual optimization problem in variable $h$ is unconstrained. Therefore, given $g_{t+1}$ we must have $\nabla_h \calF_{t}(h_{t+1}, g_{t+1}) = 0$. We will denote $M_{t} = \Phi^\dagger B^\top - \gamma \bm{\mu}_{t}^\top$. Moreover, the proof of \cref{lem:bound-M-t-pseudoinv} shows that the matrix $M_{t}$ has full column rank, and hence its right pseudoinverse exists and is given as $M_{t}^\dagger = M_{t}^\top (M_{t} M_{t}^\top)^{-1}$. 

\begin{align*}
&\frac{1}{\lambda} M_{t}^\top M_{t} h_{t+1} + \frac{1}{\lambda} M_{t}^\top (\Phi^\dagger g_{t+1}  + \theta_t) - \rho = 0\\
\Rightarrow& \quad M_{t} h_{t+1} = - \Phi^\dagger g_{t+1} - \theta_t + {\lambda} \left(M_{t}^\dagger\right)^\top \rho = -u_{t+1} + M_{t+1} h_{t+1} - \theta_t + {\lambda} \left(M_{t}^\dagger\right)^\top \rho
\end{align*}
Rearranging we get the following expression for $u_{t+1}$.
$$
u_{t+1} = (M_{t+1} - M_t) h_{t+1} - \theta_t + {\lambda} \left(M_{t}^\dagger\right)^\top \rho
$$
Similarly, we can establish the following relation.
$$
u_S =  -  \theta_S + {\lambda} \left(M_{S}^\dagger\right)^\top \rho
$$
Using the expressions of $u_{t+1}$ and $u_S$ we get the following upper bound. 
\begin{align*}
    \norm{u_{t+1} - u_S}_2 &\le \norm{\theta_{t} - \theta_S}_2 + \lambda \norm{\left((M_{t}^\dagger)^\top - (M_{S}^\dagger)^\top \right) \rho }_2 + \norm{M_{t+1} - M_t}_2 \norm{h_{t+1}}_2\\
    &\le  \varepsilon_\theta \norm{d_{t-1} - d_S}_2 + \lambda \norm{(M_{t}^\dagger)^\top - (M_{S}^\dagger)^\top}_2 + \gamma \norm{\bm{\mu}_t - \bm{\mu}_{t+1}}_2 \norm{h_{t+1}}_2 \quad \textrm{[By assumption \ref{asn:lipschitzness}]}\\
    &\le  \varepsilon_\theta \norm{d_{t-1} - d_S}_2 + \lambda \norm{(M_{t}^\dagger)^\top - (M_{S}^\dagger)^\top}_2  + \gamma \norm{\bm{\mu}_t - \bm{\mu}_{S}}_2 \norm{h_{t+1}}_2 + \gamma \norm{\bm{\mu}_{t+1} - \bm{\mu}_{S}}_2 \norm{h_{t+1}}_2 \\
    & \le (\varepsilon_\theta + \gamma \varepsilon_\mu \norm{h_{t+1}}_2) \norm{d_{t-1} - d_S}_2 + \gamma \varepsilon_\mu \norm{h_{t+1}}_2 \norm{d_t - d_S}_2  \\
    &+3\lambda \norm{M_{t}^\top -M_{S}^\top}_2 \norm{(M_{t}^\dagger)^\top}_2 \norm{(M_{S}^\dagger)^\top}_2 \quad \textrm{[By matrix perturbation bound\footnotemark  and assumption \ref{asn:lipschitzness}]} \\
    &\le (\varepsilon_\theta + \gamma \varepsilon_\mu \norm{h_{t+1}}_2) \norm{d_{t-1} - d_S}_2 + \gamma \varepsilon_\mu \norm{h_{t+1}}_2 \norm{d_t - d_S}_2  \\&+ 3\lambda \gamma \norm{\bm{\mu}_{t} - \bm{\mu}_S}_2 \norm{(M_{t} M_{t}^\top)^{-1} M_{t}}_2 \norm{(M_S M_S^\top)^{-1} M_S}_2\\
    &\le (\varepsilon_\theta + \gamma \varepsilon_\mu \norm{h_{t+1}}_2) \norm{d_{t-1} - d_S}_2 + \gamma \varepsilon_\mu \norm{h_{t+1}}_2 \norm{d_t - d_S}_2  \\&+ 3\lambda \gamma \varepsilon_\mu \norm{d_{t-1} - d_S}_2 \norm{(M_{t} M_{t}^\top)^{-1} M_{t}}_2 \norm{(M_S M_S^\top)^{-1} M_S}_2
\end{align*}
\footnotetext{For example, see Theorem 4.1 of \cite{Wedin73}.}

Using \Cref{lem:bound-M-t-pseudoinv} we get the following bound, $\norm{(M_{t} M_{t}^\top)^{-1} M_{t}}_2 \le \frac{\sqrt{\frakm}}{\sqrt{A}(1-\gamma)}$. Similarly, we can bound $\norm{(M_{S} M_{S}^\top)^{-1} M_{S}}_2 \le \frac{\sqrt{\frakm}}{\sqrt{A}(1-\gamma)}$. Finally, observe that $h_{t+1}$ is an optimal dual solution, and we can use \Cref{lem:bound-dual-solution} to bound norm of the solution $h_{t+1}$. Let $\alpha = \frac{\sqrt{\frakm}}{\sqrt{A}(1-\gamma)}$. Then we have,
$$
\norm{u_{t+1} - u_S}_2 \le \left( \varepsilon_\theta + 3 \lambda \gamma \varepsilon_\mu \alpha^2 + \alpha (\lambda \alpha + \sqrt{D}) \gamma \varepsilon_\mu \right) \norm{d_{t-1} - d_S}_2 + \gamma \varepsilon_\mu \alpha (\lambda \alpha + \sqrt{D}) \norm{d_t - d_S}_2
$$
Substituting the above bound in \cref{eq:intermediate-bound-diff} we get the following recurrence relation.
\begin{align}
    \norm{d_{t+1} - d_S}_2 &\le \underbrace{\frac{1}{\lambda \sqrt{\kappa}} \left( \varepsilon_\theta + 3 \lambda \gamma \varepsilon_\mu \alpha^2 + \alpha (\lambda \alpha + \sqrt{D}) \gamma \varepsilon_\mu \right)}_{:=\beta} \norm{d_{t-1} - d_S}_2\nonumber \\
    &+ \underbrace{\frac{1}{\lambda \sqrt{\kappa}} \left(\varepsilon_\theta +  \gamma \varepsilon_\mu \alpha (\lambda \alpha + \sqrt{D})\right)}_{:=\beta_1} \norm{d_t - d_S}_2\label{eq:defn-beta}
\end{align}
Notice that $\beta \ge \beta_1$, which gives us the following recurrence relation $\norm{d_{t+1} - d_S}_2 \le \beta \left(\norm{d_t - d_S}_2 + \norm{d_{t-1} - d_S}_2 \right)$. We claim that $\norm{d_{t+1} - d_S}_2 \le \frac{2}{1-\gamma} r^t$ for $r = \frac{\beta}{2} \left(1 + \frac{1}{2}\sqrt{1 + \frac{4}{\beta}} \right)$. The proof of this claim is through induction. Indeed, $\norm{d_0 - d_S}_2 \le \frac{1}{1-\gamma}$ as $\sum_{s,a} d(s,a) = \frac{1}{1-\gamma}$ for any occupancy measure $d$. Furthermore, assuming the claim holds for any index less than or equal to $t$, we get the following bound.
$$
\norm{d_{t+1} - d_S}_2 \le \frac{2\beta}{1-\gamma} \left(r^{t-1} + r^{t-2} \right) = \frac{2 r^{t-2}}{1-\gamma} \left(\beta r + \beta \right)
$$
It can be checked that $r = \frac{\beta}{2} \left(1 + \frac{1}{2}\sqrt{1 + \frac{4}{\beta}} \right)$ is one of the roots of $x^2 - \beta x - \beta = 0$ and $\beta r + \beta = r^2$. This proves that $\norm{d_{t+1} - d_S}_2 \le \frac{2}{1-\gamma}r^t$. If $r < 1$, then as long as $t \ge \ln \left( \frac{2}{\delta(1-\gamma)}\right)/\ln(1/r)$ we are guaranteed that $\norm{d_t - d_S}_2 \le \delta$.

Now we determine the condition that guarantees $r < 1$. Moreover, we can express the parameter $\beta$ as follows. Note that $r < \frac{\beta}{2}\left(1 + \frac{1}{2} + \frac{1}{\sqrt{\beta}}\right) < \frac{3}{4}\beta + \frac{\sqrt{\beta}  }{2}$. If we ensure that $\beta < \frac{16}{25}$ then we get $r < \left( \frac{3}{4} + \frac{1}{2}\right) \sqrt{\beta} < \frac{5}{4}\sqrt{\beta} < 1$. From the definition of $\beta$ in \cref{eq:defn-beta}, we can express $\beta$ as follows.
$$
\beta = \frac{\varepsilon_\theta + \alpha \gamma \sqrt{D} \varepsilon_\mu   }{\lambda \sqrt{\kappa}} + \frac{4 \gamma \varepsilon_\mu \alpha^2}{\sqrt{\kappa}}
$$
If $\lambda > \frac{25 (\varepsilon_\theta + \alpha \gamma \sqrt{D} \varepsilon_\mu )}{8\sqrt{\kappa}}$ then the first term in $\beta$ is less than $8/25$. In order for $\beta < 16/25$ we need $\varepsilon_\mu < \frac{2 \sqrt{\kappa}}{25 \gamma \alpha^2}$. 
\end{proof}

\begin{lemma}\label{lem:bound-M-t-pseudoinv}
    Suppose assumption~\ref{asn:features} holds, then $\norm{M_t^\dagger}_2 \le \frac{\sqrt{\frakm}}{\sqrt{A}(1-\gamma)}$.
\end{lemma}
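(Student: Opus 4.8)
The plan is to translate the desired bound on $\norm{M_t^\dagger}_2$ into a lower bound on the smallest singular value of $M_t$. Since $M_t \in \R^{D\times S}$ has full row rank $D$, its right pseudoinverse is $M_t^\dagger = M_t^\top(M_t M_t^\top)^{-1}$ and $\norm{M_t^\dagger}_2 = 1/\sigma_{\min}(M_t) = 1/\sqrt{\lambda_{\min}(M_t M_t^\top)}$. Hence it suffices to prove $\lambda_{\min}(M_t M_t^\top) \ge A(1-\gamma)^2/\frakm$, equivalently $\norm{M_t^\top v}_2 \ge \frac{\sqrt{A}(1-\gamma)}{\sqrt{\frakm}}\,\norm{v}_2$ for every $v\in\R^D$. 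I would also verify the full-rank claim as a byproduct, since it is precisely a strictly positive such lower bound that certifies $M_t M_t^\top$ is invertible.

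The first step is to expose the Bellman-flow structure hidden in $M_t$. Using $\Phi^\dagger \Phi = \Identity$ and $P_t = \bm{\mu}_t \Phi^\top$, I would rewrite $M_t = \Phi^\dagger B^\top - \gamma \bm{\mu}_t^\top = \Phi^\dagger(B-\gamma P_t)^\top$, so that $M_t^\top v = (B-\gamma P_t)\,x$ where $x = (\Phi^\dagger)^\top v = \Phi(\Phi^\top\Phi)^{-1}v$ lies in the column space $\calX$ of $\Phi$. Writing $x = \Phi w$ and using $\lambda_{\max}(\Phi^\top\Phi)\le\frakm$ (\Cref{asn:features}), a Rayleigh-quotient comparison gives $\norm{v}_2 = \norm{\Phi^\top x}_2 \le \sqrt{\frakm}\,\norm{x}_2$. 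Thus the claim reduces to the \emph{restricted lower bound}
\begin{align*}
\norm{(B-\gamma P_t)\,x}_2 \ge \sqrt{A}\,(1-\gamma)\,\norm{x}_2 \qquad \text{for all } x\in\calX .
\end{align*}

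This restricted bound is the heart of the matter, and I expect it to be the main obstacle. It cannot hold on all of $\R^{SA}$: the aggregation matrix $B$ has an $(SA-S)$-dimensional kernel, and one can choose $x$ with $Bx=0$ and $P_t x=0$, so the argument must genuinely use that $x$ is linear in the features. The idea is that for $x(s,a) = \langle\phi(s,a),w\rangle$ one has the clean identities $(Bx)(s) = \sum_a x(s,a)$ and $P_t x = \bm{\mu}_t\Phi^\top\Phi\, w$, after which I would estimate $\norm{(B-\gamma P_t)x}_2$ state by state: for each $s$ the $A$ action-copies aggregate through $B$ to a term of size $\sqrt{A}$ (via a Cauchy--Schwarz/averaging comparison across the $A$ actions), while the discounted transition term $\gamma P_t x$ is a stochastic average that the $(1-\gamma)$ factor is engineered to dominate. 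Combining the $\sqrt{A}$ from action aggregation with the $(1-\gamma)$ contraction yields the displayed inequality; the delicate point is controlling the interaction between the $B$ and $\gamma P_t$ terms so that cancellation on the feature subspace is ruled out.

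Finally, substituting the restricted bound together with $\norm{v}_2 \le \sqrt{\frakm}\,\norm{x}_2$ gives $\norm{M_t^\top v}_2 = \norm{(B-\gamma P_t)x}_2 \ge \sqrt{A}(1-\gamma)\norm{x}_2 \ge \tfrac{\sqrt{A}(1-\gamma)}{\sqrt{\frakm}}\norm{v}_2$, hence $\sigma_{\min}(M_t)\ge \tfrac{\sqrt{A}(1-\gamma)}{\sqrt{\frakm}}$ and therefore $\norm{M_t^\dagger}_2 \le \tfrac{\sqrt{\frakm}}{\sqrt{A}(1-\gamma)}$, as claimed. The analogous bound for $M_S$ follows verbatim, replacing $P_t$ by $P_S$, which is what the main proof also uses.
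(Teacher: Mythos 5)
Your reductions are correct and they isolate exactly the right quantity: since $M_t^\dagger=M_t^\top(M_tM_t^\top)^{-1}$, the lemma is equivalent to $\norm{M_t^\top v}_2\ge\frac{\sqrt{A}(1-\gamma)}{\sqrt{\frakm}}\norm{v}_2$ for all $v\in\R^D$, and via $M_t^\top v=(B-\gamma P_t)x$ with $x=(\Phi^\dagger)^\top v$ in the column space $\mathrm{col}(\Phi)$ and $\norm{v}_2\le\sqrt{\frakm}\norm{x}_2$, everything hinges on your restricted bound. The genuine gap is that this bound is never proved, and the intuition you offer for it is wrong. The mechanism ``the discounted transition term is a stochastic average that $(1-\gamma)$ dominates'' is valid only in the \emph{transposed} direction, where $(P_t^\top u)(s,a)=\sum_{s'}P_t(s'\mid s,a)u(s')$ is a convex average of a function $u$ of next states. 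In your direction $x$ plays the role of a measure: $(P_tx)(s')=\sum_{s,a}P_t(s'\mid s,a)x(s,a)$ aggregates inflowing mass over all $SA$ pairs, is not controlled by $\norm{x}_\infty$, and can cancel $Bx$ exactly \emph{even on the feature subspace}. Concretely, take $D=2$, $A=2$, any $S\ge2$, and set
\begin{align*}
\phi(s,1)=\bigl(\tfrac{1}{\sqrt{2}},\,z_s\bigr),\qquad \phi(s,2)=\bigl(\tfrac{1}{\sqrt{2}},\,-z_s\bigr),\qquad \mu_t(1)=(\sqrt{2},0),\qquad \mu_t(s')=(0,0)\ \ (s'\neq 1),
\end{align*}
with $0<\abs{z_s}\le\tfrac{1}{\sqrt{2}}$; this is a legitimate linear MDP ($P_t(1\mid s,a)=\phi(s,a)^\top\mu_t(1)=1$ for every $(s,a)$, parameters bounded, $\Phi$ of rank $2$). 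For $w=(0,1)$ and $x=\Phi w$ one has $x(s,1)=z_s$, $x(s,2)=-z_s$, hence $Bx=0$ and $P_tx=0$ (the total mass $\sum_{s,a}x(s,a)$ vanishes), so $(B-\gamma P_t)x=0$ while $0\neq x\in\mathrm{col}(\Phi)$; correspondingly $M_t^\top(\Phi^\top\Phi w)=0$ and $M_tM_t^\top$ is singular. Your sketch uses only feature-linearity (not the second half of Assumption~\ref{asn:features}, which in any case constrains $\ker(B-\gamma P_t)$ against $\mathrm{col}(\Phi)^\perp$, not against $\mathrm{col}(\Phi)$), so no amount of state-by-state Cauchy--Schwarz bookkeeping can complete it: the statement it targets is false at this level of generality.

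This is also where you diverge from the paper's proof, which never works on the measure side. The paper factors $M_t=\Phi^\dagger(B^\top-\gamma P_t^\top)$, invokes ``an argument very similar to Lemma 5 of \cite{MTR23}'' to claim $\sigma_{\min}(B^\top-\gamma P_t^\top)\ge\sqrt{A}(1-\gamma)$ --- a bound over all $u\in\R^S$, i.e.\ on the value-function side, where the transition term genuinely is a stochastic average and a contraction argument is available --- and then multiplies this with $\sigma_{\min}(\Phi^\dagger)\ge 1/\sqrt{\frakm}$. So the Bellman factor the paper bounds is a different (transposed, unrestricted) object from yours, and the imported tabular lemma gives no direct handle on your restricted inequality. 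To your credit, your reduction is the honest formalization of $\lambda_{\min}(M_tM_t^\top)$, and it surfaces a real alignment requirement between $\mathrm{col}(\Phi)$ and $\ker(B-\gamma P_t)$ that the paper's product-of-smallest-singular-values step passes over silently (that step is only valid when the range of $B^\top-\gamma P_t^\top$ stays inside $\mathrm{col}(\Phi)$). But as a proof of the lemma, your proposal fails precisely at its decisive step.
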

\begin{proof}
    Since $M_t = \Phi^\dagger B^\top - \gamma \bm{\mu}_t^\top$ and $\Phi^\top \Phi$ is invertible (by assumption~\ref{asn:features}), we obtain the following expression for $M_t$.
    \begin{align*}
        M_t &= \Phi^\dagger B^\top - \gamma \bm{\mu}_t^\top\\
        &= (\Phi^\top \Phi)^{-1} \left[ \Phi^\top B^\top - \gamma \Phi^\top \Phi \bm{\mu}_t^\top \right]\\
        &= (\Phi^\top \Phi)^{-1} \Phi^\top [B^\top - \gamma P^\top]\\
        &= \Phi^\dagger [B^\top - \gamma P_t^\top]
    \end{align*}
    where $P_t = \bm{\mu}_t \Phi^\top$ is the probability transition matrix. The singular values of $\Phi^\dagger$ are obtained by inverting all the non-zero singular values of $\Phi$ and leaving zero singular values as they are. By assumption \ref{asn:features} $\Phi$ has rank $d$ and hence only non-zero singular values. Moreover, since $\lambda_{\max} (\Phi^\top \Phi) \le \frakm$, we have $\sigma_{\max}(\Phi) \le \sqrt{\frakm}$ and $\sigma_{\min}(\Phi^\dagger) \ge \frac{1}{\sqrt{\frakm}}$. By using an argument very similar to lemma 5 of \cite{MTR23} we can show that $\sigma_{\min}(B^\top - \gamma P_t^\top) \ge \sqrt{A}(1-\gamma)$. Therefore,
    $$
    \sigma_{\min}(M_t) \ge \sigma_{\min}(\Phi^\dagger) \sigma_{\min}(B^\top - \gamma P_t^\top) \ge \frac{\sqrt{A}(1-\gamma)}{\sqrt{\frakm}}
    $$
    Since the singular values of $M_t^\dagger$ are formed by inverting the non-zero singular values of $M_t$ and leaving the zero singular values as they are, we obtain $\norm{M_t^\dagger}_2 \le \frac{\sqrt{\frakm}}{\sqrt{A}(1-\gamma)}$.
\end{proof}

\begin{lemma}\label{lem:bound-dual-solution}
    Let $(h^\star, g^\star)$ be an optimal solution of the optimization problem~\eqref{eq:dual-regularized} of the minimum norm. Then $\norm{h^\star}_2 \le \alpha \left( \lambda \alpha + \sqrt{D}\right)$, where
    $
    \alpha = \frac{\sqrt{\frakm}}{\sqrt{A}(1-\gamma)}.
    $
\end{lemma}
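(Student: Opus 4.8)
The plan is to read off $h^\star$ from the first-order stationarity condition of the dual \eqref{eq:dual-regularized} in the \emph{unconstrained} variable $h$, and then to invert $M_t := \Phi^\dagger B^\top - \gamma \bm{\mu}_t^\top$ using the spectral bound of \Cref{lem:bound-M-t-pseudoinv}. Since $h$ ranges over all of $\R^S$, at an optimum we must have $\nabla_h \calF_t(h^\star, g^\star) = 0$, i.e.
\[
M_t^\top\bigl(M_t h^\star + \Phi^\dagger g^\star + \theta_t\bigr) = \lambda \rho .
\]
Boundedness of the dual forces $\rho$ to be orthogonal to $\ker M_t$ (otherwise $-h^\top\rho$ is unbounded below along $\ker M_t$), so $\rho$ lies in the row space of $M_t$ and the displayed equation can be solved exactly as in the proof of \Cref{thm:convergence-rpo}, giving $M_t h^\star + \Phi^\dagger g^\star + \theta_t = \lambda (M_t^\dagger)^\top \rho$. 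Taking norms and using $\norm{M_t^\dagger}_2 \le \alpha$ together with $\norm{\rho}_2 \le \norm{\rho}_1 = 1$ yields $\norm{M_t h^\star + \Phi^\dagger g^\star + \theta_t}_2 \le \lambda \alpha$.

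Next I would exploit the minimum-norm choice of $(h^\star, g^\star)$. Moving $h^\star$ inside $\ker M_t$ changes neither $M_t h^\star$ nor, by the orthogonality just noted, the value $h^{\star\top}\rho$, so it leaves the objective unchanged; hence the minimum-norm optimum satisfies $h^\star \perp \ker M_t$, i.e. $h^\star = M_t^\dagger M_t h^\star$. This gives $\norm{h^\star}_2 \le \norm{M_t^\dagger}_2 \, \norm{M_t h^\star}_2 \le \alpha \, \norm{M_t h^\star}_2$, so it remains to bound $\norm{M_t h^\star}_2$. Rearranging the stationarity identity to $M_t h^\star = \lambda(M_t^\dagger)^\top\rho - \theta_t - \Phi^\dagger g^\star$ and applying the triangle inequality with $\norm{\theta_t}_2 \le \sqrt{D}$ produces the target $\lambda\alpha + \sqrt{D}$, up to the contribution of the term $\Phi^\dagger g^\star$.

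The main obstacle is precisely this last term, the multiplier $g^\star \ge 0$ of the constraint $d \ge 0$: the clean bound $\alpha(\lambda\alpha + \sqrt{D})$ is exactly what one gets after discarding $\Phi^\dagger g^\star$, so the crux is to show this term cannot inflate $\norm{M_t h^\star}_2$. I would attack this through complementary slackness $\langle g^\star, d^\star\rangle = 0$ together with the primal--dual identity $\Phi^\top d^\star = (M_t^\dagger)^\top\rho$ coming from \eqref{eq:dual-to-primal-solution}, arguing that along the row space of $M_t$ (equivalently, after applying $M_t^\dagger$) the nonnegative multiplier either drops out or only decreases the norm, so that $\norm{M_t h^\star}_2 \le \norm{\lambda(M_t^\dagger)^\top\rho - \theta_t}_2 \le \lambda\alpha + \sqrt{D}$. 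If a direct sign/orthogonality argument fails, the fallback is to bound $g^\star$ separately via the stationarity condition in $d$, namely $g^\star = \lambda \Phi\Phi^\top d^\star - \Phi\theta_t - (B^\top - \gamma P_t^\top)h^\star$, using $\norm{d^\star}_2 \le \tfrac{1}{1-\gamma}$ and $\norm{\theta_t}_2 \le \sqrt{D}$, and then combine it with the already-derived estimate on $\norm{M_t h^\star + \Phi^\dagger g^\star + \theta_t}_2$. Either way, controlling the inequality multiplier $g^\star$ is the only nontrivial step; everything else is spectral bookkeeping via \Cref{lem:bound-M-t-pseudoinv}.
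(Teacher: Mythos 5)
Your scaffolding matches the paper's own proof up to the crux: stationarity in the unconstrained variable $h$ gives $M_t h^\star + \Phi^\dagger g^\star + \theta_t = \lambda (M_t^\dagger)^\top \rho$, and your observation that the minimum-norm optimum satisfies $h^\star = M_t^\dagger M_t h^\star$, hence $\norm{h^\star}_2 \le \alpha \norm{M_t h^\star}_2$, is correct (given your remark that boundedness forces $\rho \perp \ker M_t$). But the step you yourself flag as the crux --- showing $\Phi^\dagger g^\star$ cannot inflate $\norm{M_t h^\star}_2$ --- is exactly where both of your proposed attacks fail. The orthogonality that complementary slackness actually yields (from the variational inequality for $g \ge 0$ in the dual) is ${g^\star}^\top (\Phi^\dagger)^\top (M_t^\dagger)^\top \rho = 0$, i.e.\ $\Phi^\dagger g^\star \perp (M_t^\dagger)^\top \rho$; note this is \emph{not} your $\langle g^\star, d^\star \rangle = 0$, since $\langle \Phi^\dagger g^\star, \Phi^\top d^\star \rangle = {g^\star}^\top \Phi (\Phi^\top \Phi)^{-1} \Phi^\top d^\star$ involves only the projection of $d^\star$ onto the column space of $\Phi$. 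Using this orthogonality one gets $\norm{M_t h^\star}_2^2 = \norm{\lambda (M_t^\dagger)^\top \rho - \theta_t}_2^2 + \norm{\Phi^\dagger g^\star}_2^2 + 2 \langle \theta_t, \Phi^\dagger g^\star \rangle$, and the cross term with $\theta_t$ has no sign, so nothing in the KKT system prevents $\norm{M_t h^\star}_2$ from exceeding $\lambda \alpha + \sqrt{D}$ at the minimum-norm optimum; the ``drops out or only decreases the norm'' claim is unsubstantiated. The fallback is circular: $g^\star = \lambda \Phi \Phi^\top d^\star - \Phi \theta_t - (B^\top - \gamma P_t^\top) h^\star$ contains $h^\star$, and when you close the loop the coefficient multiplying $\norm{h^\star}_2$ is $\alpha \norm{\Phi^\dagger}_2 \norm{B^\top - \gamma P_t^\top}_2 \ge \alpha \norm{M_t}_2 \ge \alpha\, \sigma_{\min}(M_t) \ge 1$, so the resulting self-bound gives nothing (and would in any case produce extra $\lambda \frakm/(1-\gamma)$-type terms, not the claimed clean bound).

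The idea you are missing is to stop trying to control $g^\star$ at the minimum-norm solution and instead use the orthogonality to \emph{shift it away}: set $\Delta = M_t^\dagger \Phi^\dagger g^\star$ and consider the pair $(h^\star + \Delta,\, 0)$. Since $M_t M_t^\dagger = \Identity$, the quadratic part of $\calF_t$ is unchanged ($M_t(h^\star + \Delta) + \theta_t = M_t h^\star + \Phi^\dagger g^\star + \theta_t$), and the linear part is unchanged because $\Delta^\top \rho = {g^\star}^\top (\Phi^\dagger)^\top (M_t^\dagger)^\top \rho = 0$ --- this is precisely what the complementary-slackness orthogonality buys. Hence $(h^\star + \Delta, 0)$ is also optimal; with $g = 0$ the stationarity equation reads $M_t h + \theta_t = \lambda (M_t^\dagger)^\top \rho$, whose minimum-norm solution $M_t^\dagger\left(\lambda (M_t^\dagger)^\top \rho - \theta_t\right)$ has norm at most $\alpha(\lambda \alpha + \sqrt{D})$ by \Cref{lem:bound-M-t-pseudoinv}, $\norm{\rho}_2 \le 1$, and $\norm{\theta_t}_2 \le \sqrt{D}$. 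Minimality of the norm of $(h^\star, g^\star)$ over the optimal set then transfers this bound to $\norm{h^\star}_2$. This shift-and-compare construction is the paper's proof, and it is the one nontrivial ingredient your proposal lacks.
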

\begin{proof}
    At an optimal solution $(h^\star, g^\star)$ we must have $\nabla_h \calF_t(h^\star, g^\star) = 0$ and $\left \langle \nabla_g \calF_t(h^\star, g^\star), g - g^\star \right \rangle \ge 0$ for any $g \ge 0$. Let $M_t = \Phi^\dagger B^\top - \gamma \bm{\mu}_t^\top$. Then we have,
    \begin{align}
        \nabla_h \calF_t(h^\star, g^\star) &= \frac{1}{\lambda} M_t^\top M_t h^\star + \frac{1}{\lambda} M_t^\top \left( \Phi^\dagger g^\star + \theta_t\right) - \rho = 0\nonumber \\
    \Rightarrow &M_t h^\star + \theta_t = \left( M_t^\dagger\right)^\top \lambda \rho - \Phi^\dagger g^\star. \label{eq:zero-derivative}
    \end{align}
    On the other hand, 
    $$
    \nabla_g \calF_t(h^\star, g^\star) = \frac{1}{\lambda} (\Phi^\dagger)^\top \Phi^\dagger g^\star + \frac{1}{\lambda} (\Phi^\dagger)^\top \left( M_t h^\star + \theta_t\right) =  (\Phi^\dagger)^\top \left( M_t^\dagger\right)^\top \rho
    $$
    Let the $j$-th coordinate of the vector $(\Phi^\dagger)^\top \left( M_t^\dagger\right)^\top \rho$ is non-zero. This implies that the $j$-th coordinate of $g^\star$ is zero, as $\left \langle \nabla_g \calF_t(h^\star, g^\star), g - g^\star \right \rangle \ge 0$ for any $g \ge 0$ and one can take $g = g^\star \pm \frac{g^\star_j}{2}\cdot \bm{e}_j$ to conclude that $g^\star_j = 0$. This is also equivalent to the condition ${g^{\star}}^\top (\Phi^\dagger)^\top \left( M_t^\dagger\right)^\top \rho = 0$.

    We now show that without loss of generality, one can choose $g^\star = 0$. Given a solution $(h^\star, g^\star)$ let us choose $h = h^\star + \Delta$ where $\Delta =  M_t^\dagger \Phi^\dagger g^\star$. Then the objective at the solution $(h,0)$ is the following.
    \begin{align*}
        \calF_t(h, 0) &= \frac{1}{2\lambda}\norm{M_t h +   \theta_t}_2^2  - h^\top \rho\\
        &= \frac{1}{2\lambda}\norm{M_t (h^\star + \Delta) +   \theta_t}_2^2  - (h^\star + \Delta)\rho\\
        &= \frac{1}{2\lambda}\norm{M_t h^\star + M_t \Delta +   \theta_t}_2^2  - (h^\star)^\top \rho - \Delta^\top \rho\\
        &= \frac{1}{2\lambda}\norm{M_t h^\star + \Phi^\dagger g^\star +   \theta_t}_2^2  - (h^\star)^\top \rho - {g^{\star}}^\top (\Phi^\dagger)^\top \left( M_t^\dagger\right)^\top \rho\\
        &= \frac{1}{2\lambda}\norm{M_t h^\star + \Phi^\dagger g^\star +   \theta_t}_2^2  - (h^\star)^\top \rho = \calF_t(h^\star, g^\star)
    \end{align*}
    Substituting $g^\star = 0$ in \cref{eq:zero-derivative} we get the following equation: $M_t h^\star + \theta_t = \lambda \left( M_t^\dagger \right)^\top  \rho$. The solution of this equation is $h^\star = M_t^\dagger \left(\lambda \left( M_t^\dagger \right)^\top  \rho - \theta_t \right)$ and we can bound its norm as follows.
    \begin{align*}
        \norm{h^\star}_2 &\le \norm{M_t^\dagger}_2 \left(\lambda \norm{\left( M_t^\dagger \right)^\top}_2  \norm{\rho}_2 + \norm{ \theta_t}_2 \right)\\
        &\le \frac{\sqrt{\frakm}}{\sqrt{A}(1-\gamma)} \left( \lambda \frac{\sqrt{\frakm}}{\sqrt{A}(1-\gamma)} + \sqrt{D}  \right)\\
    \end{align*}
    The last inequality uses \Cref{lem:bound-M-t-pseudoinv}.
    \end{proof}
\section{Proof of Theorem \texorpdfstring{\ref{thm:apx-stability}}{}  }\label{sec:apx-stability-proof}

\begin{proof}
    Let $\calC(d^\lambda_S)$ be the set of occupancy measures that are feasible with respect to the measure $\bm{\mu}_\lambda = \calF_\mu(d^\lambda_S)$ i.e. $\calC(d^\lambda_S)= \set{d :\ Bd = \rho + \gamma \cdot \bm{\mu}_\lambda \Phi^\top d,\ d \ge 0}$. As $d^\lambda_S$ maximizes the objective~\eqref{eq:regularized-rl-matrix-dual}, we have the following bound.
    \begin{equation}
        d^{\lambda^\top}_S\Phi \theta_S - \frac{\lambda}{2} d^{\lambda^\top}_S \Phi \Phi^\top  d^{\lambda}_S \ge \max_{d \in \calC(d^\lambda_S)} d^\top \Phi \theta_S - \frac{\lambda}{2} d^\top \Phi \Phi^\top d
    \end{equation}
    After rearranging and using assumption \ref{asn:features} we get the following lower bound.
    \begin{align*}
        d^{\lambda^\top}_S\Phi \theta_S &\ge \max_{d \in \calC(d^\lambda_S)} d^\top \Phi \theta_S - \frac{\lambda}{2} d^\top \Phi \Phi^\top d + \frac{\lambda}{2} d^{\lambda^\top}_S \Phi \Phi^\top  d^{\lambda}_S\\
        &\ge \max_{d \in \calC(d^\lambda_S)} d^\top \Phi \theta_S -  \frac{\lambda}{2} d^\top \Phi \Phi^\top d\\
        &\ge \max_{d \in \calC(d^\lambda_S)} d^\top \Phi \theta_S - \frac{\lambda}{2} \frakm \sum_a d(\cdot, a)^\top d(\cdot, a)\\
        &\ge \max_{d \in \calC(d^\lambda_S)} d^\top \Phi \theta_S - \frac{\lambda \frakm}{2 (1-\gamma)^2}
    \end{align*}
    The last line uses $\norm{d}_2^2 = \sum_{s,a} d^2(s,a) = (1-\gamma)^{-2} \sum_{s,a} ((1-\gamma) d(s,a))^2 \le (1-\gamma)^{-2} \sum_{s,a} (1-\gamma) d(s,a) = (1-\gamma)^{-2}$. Now we substitute $\lambda = \frac{25\left(\varepsilon_\theta + \alpha \gamma \sqrt{D} \varepsilon_\mu \right)}{8\sqrt{\kappa}}$ and obtain the following inequality.
    $$
    d^{\lambda^\top}_S\Phi \theta_S \ge \max_{d \in \calC(d^\lambda_S)} d^\top \Phi \theta_S - \frac{25\frakm \left( \varepsilon_\theta + \alpha \gamma \sqrt{D} \varepsilon_\mu \right) }{16 \sqrt{\kappa}(1-\gamma)^2}
    $$
\end{proof}
\section{Proof of Theorem \texorpdfstring{\ref{thm:apx-optimality}}{} }\label{sec:apx-optimality-proof}
In order to formally state, and prove \Cref{thm:apx-optimality} we will need to introduce some definitions. 
Let $d^\lambda_\PO$ be the performatively optimal occupancy measure, and $\pi^\lambda_\PO$ be the performatively optimal policy which is defined as follows.
$$
\pi^\lambda_\PO(a \mid s) =  \left\{\begin{array}{cc}
    \frac{d^\lambda_\PO(s,a)}{\sum_b d^\lambda_\PO(s,b)} & \textrm{ if } \sum_b d^\lambda_\PO(s,b) > 0 \\
     \frac{1}{A} & \textrm{ o.w. } 
\end{array} \right.
$$
Let us also define $d^\lambda_{\PO \rightarrow \PO}$ (resp. $d^\lambda_{\PO \rightarrow S}$) to be the occupancy measure of the policy $\pi^\lambda_\PO$ in the MDP $M^\lambda_\PO$ (resp. $M^\lambda_S$). Note that, $d^\lambda_{\PO \rightarrow \PO}$ need not be equal to $d^\lambda_\PO$. 
\begin{theorem}[Formal statement of \Cref{thm:apx-optimality}]\label{thm:apx-optimality-full}
    Suppose the assumptions of \Cref{thm:convergence-rpo} hold, and $\Delta = \frac{3 \gamma \varepsilon_\mu \frakm \sqrt{D} }{(1-\gamma)^2}  + \varepsilon_\theta \sqrt{\frakm }$, and $\lambda_0 = \frac{25}{8\sqrt{\kappa}}\left( \varepsilon_\theta + \alpha \gamma \sqrt{D}\varepsilon_\mu\right)$ (required lower bound from \Cref{thm:convergence-rpo}). Then there exists a choice of regularization parameter $(\lambda)$ such that repeatedly optimizing objective~(\ref{eq:regularized-rl-matrix-dual}) converges to a solution $d^\lambda_S$ with the following guarantee.
    $$
    d_{\PO\rightarrow\PO}^\top \Phi \theta_\PO - d_S^{\lambda^\top} \Phi \theta_S^\lambda \le  4\sqrt{\frac{(1+\Delta) \Delta }{\kappa} \cdot \frac{\frakm}{(1-\gamma)^2}}+ \lambda_0 \cdot \frac{\frakm}{(1-\gamma)^2}
    $$
\end{theorem}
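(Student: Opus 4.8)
The plan is to bound the performative value gap by inserting the occupancy measure $d^\lambda_{\PO\rightarrow S}$ of the performatively optimal policy $\pi^\lambda_\PO$ \emph{deployed in the stable environment} $M^\lambda_S$, and to split
\begin{align*}
d_{\PO\rightarrow\PO}^\top \Phi\theta_\PO - \big(d^\lambda_S\big)^\top \Phi\theta^\lambda_S
&= \underbrace{\Big(d_{\PO\rightarrow\PO}^\top \Phi\theta_\PO - (d^\lambda_{\PO\rightarrow S})^\top \Phi\theta^\lambda_S\Big)}_{(\mathrm I)}\\
&\quad + \underbrace{\Big((d^\lambda_{\PO\rightarrow S})^\top \Phi\theta^\lambda_S - \big(d^\lambda_S\big)^\top \Phi\theta^\lambda_S\Big)}_{(\mathrm{II})}.
\end{align*}
The point of this particular insertion is that $(\mathrm{II})$ compares two occupancy measures that are both feasible in one and the same environment $M^\lambda_S$, whereas $(\mathrm I)$ keeps the policy $\pi^\lambda_\PO$ fixed and only shifts the environment from $M^\lambda_\PO$ to $M^\lambda_S$. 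The two terms are then controlled by entirely different mechanisms — regularization strength and performative sensitivity — and I would fix $\lambda=\lambda_0$, the smallest value for which \Cref{thm:convergence-rpo} still guarantees convergence to a stable $d^\lambda_S$.

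For $(\mathrm{II})$, since $d^\lambda_{\PO\rightarrow S}\in\calC(d^\lambda_S)$ is feasible in $M^\lambda_S$ and $d^\lambda_S$ maximizes the regularized objective there, the very same manipulation as in the proof of \Cref{thm:apx-stability} applies: dropping the nonnegative penalty $\tfrac\lambda2 (d^\lambda_S)^\top\Phi\Phi^\top d^\lambda_S$ and bounding $\tfrac\lambda2 d^\top\Phi\Phi^\top d\le \tfrac{\lambda\frakm}{2(1-\gamma)^2}$ via $\lambda_{\max}(\Phi^\top\Phi)\le\frakm$ and $\norm{d}_2\le (1-\gamma)^{-1}$ gives $(\mathrm{II})\le \tfrac{\lambda_0\frakm}{2(1-\gamma)^2}$. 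This is precisely the additive $\lambda_0\cdot\frac{\frakm}{(1-\gamma)^2}$ term in the statement.

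The heart of the argument is $(\mathrm I)$, where the policy is held fixed but the environment is perturbed. I would pass to the level of feature expectations, writing the gap in terms of $\theta_\PO-\theta^\lambda_S$ and of the shift in the discounted occupancy of the fixed policy $\pi^\lambda_\PO$ caused by changing only the kernel from $\bm\mu_\PO$ to $\bm\mu^\lambda_S$. \Cref{asn:lipschitzness} controls $\norm{\theta_\PO-\theta^\lambda_S}_2\le\varepsilon_\theta\norm{d_\PO-d^\lambda_S}_2$ and $\norm{\bm\mu_\PO-\bm\mu^\lambda_S}_2\le\varepsilon_\mu\norm{d_\PO-d^\lambda_S}_2$, and a simulation-lemma estimate converts the kernel perturbation into a $\tfrac{\gamma}{1-\gamma}$-amplified occupancy shift; together with $\lambda_{\max}(\Phi^\top\Phi)\le\frakm$ and $\bm\mu(\calS)\le\sqrt D$ this produces a feature/objective-level discrepancy governed by $\Delta=\tfrac{3\gamma\varepsilon_\mu\frakm\sqrt D}{(1-\gamma)^2}+\varepsilon_\theta\sqrt\frakm$. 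The characteristic square-root shape, together with the factor $\tfrac{1}{\kappa}\cdot\tfrac{\frakm}{(1-\gamma)^2}$, then emerges from turning this feature-level discrepancy into a reward-value gap: the quadratic-growth half of \Cref{asn:features}, $(d-d')^\top\Phi\Phi^\top(d-d')\ge\kappa\norm{d-d'}_2^2$, lets one pass from a feature gap to an $\ell_2$ occupancy distance (dividing by $\sqrt\kappa$), while $\lambda_{\max}(\Phi^\top\Phi)\le\frakm$ and $\norm{d}_2\le(1-\gamma)^{-1}$ convert that distance back into reward value (multiplying by $\tfrac{\sqrt\frakm}{1-\gamma}$). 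The extra $(1+\Delta)$ factor is expected to come from the cross term in the quadratic together with the crude a priori bound $\norm{d_\PO-d^\lambda_S}_2\le(1-\gamma)^{-1}$, which is also what produces the $\max\{1,\Delta\}$ behaviour recorded in the informal statement.

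I expect $(\mathrm I)$ to be the main obstacle, for two reasons. First, bounding the environment shift apparently requires $\norm{d_\PO-d^\lambda_S}_2$, which is essentially the quantity being controlled, and there is no strong concavity of the \emph{unregularized} objective — nor is the performative optimum the maximizer of any single fixed MDP — to pin it down; I would break the apparent circularity with the trivial a priori occupancy bound rather than a self-consistent fixed-point estimate, which is exactly why the $(1-\gamma)^{-2}$ and $(1+\Delta)$ factors appear rather than a cleaner linear-in-$\varepsilon$ dependence. Second, the simulation-lemma step must be carried out in the infinite-dimensional, linearly parametrized setting, keeping the feature bookkeeping $\nu=\Phi^\top d$ compatible with the quadratic-growth conversion. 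Once $(\mathrm I)$ is bounded by $4\sqrt{\frac{(1+\Delta)\Delta}{\kappa}\cdot\frac{\frakm}{(1-\gamma)^2}}$ and $(\mathrm{II})$ by $\lambda_0\frac{\frakm}{(1-\gamma)^2}$, summing the two yields the claimed guarantee.
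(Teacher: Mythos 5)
Your skeleton matches the paper's: insert $d^\lambda_{\PO\rightarrow S}$, control the same-environment piece (II) by the regularization slack $\tfrac{\lambda\frakm}{2(1-\gamma)^2}$ exactly as in \Cref{thm:apx-stability}, and reduce the environment-shift piece (I) via \Cref{asn:lipschitzness} plus a perturbation estimate; your treatment of (II) is sound. The genuine gap lies in the two choices you make for (I): fixing $\lambda=\lambda_0$ and replacing the ``self-consistent fixed-point estimate'' by the a priori bound $\norm{d_\PO-d^\lambda_S}_2\le (1-\gamma)^{-1}$. These are precisely the two ingredients the paper cannot do without. Its proof reduces everything to $x:=\norm{d^\lambda_{\PO\rightarrow S}-d^\lambda_S}_2$ and proves in \Cref{lem:distance-optimality-stability} that $x\le \frac{4\Delta}{\kappa\lambda}$, by combining the sandwich $RR(d^\lambda_{\PO\rightarrow\PO};M^\lambda_\PO)\ge RR(d^\lambda_S;M^\lambda_S)\ge RR(d^\lambda_{\PO\rightarrow S};M^\lambda_S)$ with the $\kappa\lambda$-quadratic growth of the regularized objective at its maximizer $d^\lambda_S$, yielding $\frac{\kappa\lambda}{2}x^2\le \Delta x+\frac{\lambda\frakm}{2(1-\gamma)^2}$ and hence the $1/\lambda$ decay. (The circularity you worried about is closed not by a crude bound but by \Cref{asn:measure-to-parameters}: since $d_{\PO\rightarrow S}$ induces the policy $\pi_\PO$, one has $\calF_\theta(d_{\PO\rightarrow S})=\theta_\PO$ and $\calF_\mu(d_{\PO\rightarrow S})=\bm{\mu}_\PO$, so Lipschitzness is applied directly to $\norm{d_{\PO\rightarrow S}-d_S}_2$, the same unknown $x$.) The final bound is then $\frac{S_1}{\lambda}+\lambda S_2$ with $S_1=\frac{4(1+\Delta)\Delta}{\kappa}$ and $S_2=\frac{\frakm}{(1-\gamma)^2}$, and the theorem's square-root term is $2\sqrt{S_1S_2}$, obtained by choosing $\lambda=\sqrt{S_1/S_2}$ when this exceeds $\lambda_0$ and $\lambda=\lambda_0$ otherwise; the phrase ``there exists a choice of $\lambda$'' in the statement is exactly this tuning, and $\lambda$ is not $\lambda_0$ in general.

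Your mechanism cannot produce the stated bound. A $\sqrt{\Delta}$ dependence cannot emerge from a linear chain of conversions (Lipschitzness, one application of quadratic growth, $\lambda_{\max}(\Phi^\top\Phi)\le\frakm$) applied to a $\Delta$-sized discrepancy: quadratic growth gives the linear relation $\norm{d-d'}_2\le\kappa^{-1/2}\norm{\Phi^\top(d-d')}_2$, so your route outputs something of order $(1+\Delta)\Delta/(1-\gamma)$ for (I). That quantity is below the claimed $\frac{4}{1-\gamma}\sqrt{(1+\Delta)\Delta\,\frakm/\kappa}$ only when $(1+\Delta)\Delta\lesssim \frakm/\kappa$, and nothing in the hypotheses of \Cref{thm:convergence-rpo} caps $\Delta$ (in particular $\varepsilon_\theta$ is unrestricted), so in the large-sensitivity regime your argument proves a different, incomparable bound rather than the stated one. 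Relatedly, the $(1+\Delta)$ factor does not come from ``the cross term in the quadratic'': in the paper it comes from the triangle inequality $\norm{d_{\PO\rightarrow\PO}-d_S}_2\le\norm{d_{\PO\rightarrow\PO}-d_{\PO\rightarrow S}}_2+\norm{d_{\PO\rightarrow S}-d_S}_2$, where the first term is itself bounded by ($\Delta$ times) the second via the matrix-inverse perturbation bound. To repair your proof, keep the decomposition but (i) bound $x$ by the quadratic-growth argument of \Cref{lem:distance-optimality-stability}, and (ii) optimize over $\lambda\ge\lambda_0$ at the end instead of freezing $\lambda=\lambda_0$.
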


The suboptimality gap established in \Cref{thm:apx-optimality} asymptotically scales as $O(\max\set{1,\Delta} \Delta + \lambda_0)$. As $\varepsilon_\theta, \varepsilon_\mu \rightarrow 0$, both $\lambda_0$ and $\Delta$ converge to zero, and $d^\lambda_S$ approaches a performatively optimal solution with respect to the unregularized objective.

The proof of \Cref{thm:apx-optimality} first upper bounds the suboptimality gap in terms of $\norm{d^\lambda_{\PO \rightarrow S} - d^\lambda_S}_2$ the distance between the occupancy measures resulting from deploying (regularized) optimal policy $(\pi^\lambda_\PO)$ and stable policy $(\pi^\lambda_S)$ in the stable environment. Then \Cref{lem:distance-optimality-stability} provides an upper bound on the distance between these two measures that scales as $O\left({\frac{\Delta}{\lambda}} \right)$ under certain conditions. Substituting this upper bound and then choosing an appropriate value of the regularizer $\lambda$ gives the desired bound.
\begin{proof}
Given a policy $\pi$, we define the matrix $\Pi \in \R^{SA \times S}$ as follows.
$$
\Pi = \begin{bmatrix}
    \pi(\cdot \mid s_1) & 0 & \ldots\\
    0 & \pi(\cdot \mid s_2) & \ldots\\
    \vdots & \vdots & \ddots
\end{bmatrix}
$$
Given a measure $\bm{\mu}$ (i.e. probability transition $P = \bm{\mu} \Phi^\top$ the occupancy measure $d$ of the deployed policy is given as
\begin{equation}\label{eq:policy-to-measure}
d = \Pi \rho + \gamma \Pi \bm{\mu} \Phi^\top \Pi \rho + \gamma^2 \left( \Pi \bm{\mu} \Phi^\top \right)^2 \Pi \rho + \ldots = \left( \Identity - \gamma \Pi \bm{\mu} \Phi^\top \right)^{-1} \Pi \rho
\end{equation}
Furthermore, following objective~\cref{eq:regularized-rl-matrix-dual}, given an MDP $M = (\theta, \bm{\mu})$ we will write the regularized reward of an occupancy measure $d$ as
$$
RR(d; M) = d^\top \Phi \theta - \frac{\lambda}{2} d^\top \Phi \Phi^\top d.
$$
Then, using the definition of performative optimality and stability, we get the following sequence of inequalities.
\begin{equation}\label{eq:seq-optimality-stability}
RR(d^\lambda_{\PO \rightarrow \PO}; M^\lambda_\PO) \ge RR(d^\lambda_S; M^\lambda_S) \ge RR(d^\lambda_{\PO \rightarrow S}; M^\lambda_S)
\end{equation}

\begin{align}
    &d_{\PO\rightarrow\PO}^\top \Phi \theta_\PO - d_S^{\lambda^\top} \Phi \theta_S^\lambda\nonumber\\
    &= \left(d_{\PO\rightarrow\PO}^\top \Phi \theta_\PO - \frac{\lambda}{2}  d_{\PO\rightarrow\PO}^\top \Phi \Phi^\top d_{\PO\rightarrow\PO} \right) +\frac{\lambda}{2}d_{\PO\rightarrow\PO}^\top \Phi \Phi^\top d_{\PO\rightarrow\PO}\nonumber\\
    &-  \left(d_{S}^{\lambda^\top} \Phi \theta_S^\lambda - \frac{\lambda}{2}  d_{S}^{\lambda^\top} \Phi \Phi^\top d_{S}^{\lambda^\top} \right) -\frac{\lambda}{2}d_{S}^{\lambda^\top} \Phi \Phi^\top d_{S}^{\lambda^\top}\nonumber\\
    &\le \left(d_{\PO\rightarrow\PO}^{\lambda^\top} \Phi \theta^\lambda_\PO - \frac{\lambda}{2}  d_{\PO\rightarrow\PO}^{\lambda^\top} \Phi \Phi^\top d_{\PO\rightarrow\PO}^\lambda \right) +\frac{\lambda}{2}d_{\PO\rightarrow\PO}^{\top} \Phi \Phi^\top d_{\PO\rightarrow\PO}\nonumber\\
    &-  \left(d_{S}^{\lambda^\top} \Phi \theta_S^\lambda - \frac{\lambda}{2}  d_{S}^{\lambda^\top} \Phi \Phi^\top d_{S}^{\lambda^\top} \right) \nonumber\\
    &= \underbrace{RR(d^\lambda_{\PO \rightarrow \PO}; M^\lambda_\PO) - RR(d^\lambda_S; M^\lambda_S)}_{:= T_1} + \frac{\lambda}{2}d_{\PO\rightarrow\PO}^{\top} \Phi \Phi^\top d_{\PO\rightarrow\PO} \label{eq:gap-performative-optimality}
\end{align}
Using \cref{eq:seq-optimality-stability} we can upper bound the term $T_1$ as follows.
\begin{align}
    T_1 &\le RR(d^\lambda_{\PO \rightarrow \PO}; M^\lambda_\PO) - RR(d^\lambda_{S}; M^\lambda_S)\nonumber \\
    &\le \left(d_{\PO\rightarrow\PO}^{\lambda^\top} \Phi \theta^\lambda_\PO - \frac{\lambda}{2}  d_{\PO\rightarrow\PO}^{\lambda^\top} \Phi \Phi^\top d_{\PO\rightarrow\PO}^\lambda \right) 
    - \left(d_{S}^{\lambda^\top} \Phi \theta^\lambda_S - \frac{\lambda}{2}  d_{S}^{\lambda^\top} \Phi \Phi^\top d_{S}^\lambda \right)\nonumber \\
    &= \left(d_{\PO\rightarrow\PO}^{\lambda^\top} \Phi \theta^\lambda_\PO -  d_{ S}^{\lambda^\top} \Phi \theta^\lambda_{\PO}\right) + \left(d_{ S}^{\lambda^\top} \Phi \theta^\lambda_{\PO} - d_{S}^{\lambda^\top} \Phi \theta^\lambda_{S} \right)\nonumber \\
    &- \frac{\lambda}{2}  d_{\PO\rightarrow\PO}^{\lambda^\top} \Phi \Phi^\top d_{\PO\rightarrow\PO}^\lambda + \frac{\lambda}{2}  d_{S}^{\lambda^\top} \Phi \Phi^\top d_{S}^\lambda\nonumber  \\
    &\le \norm{d_{\PO\rightarrow\PO}^\lambda - d_{S}^\lambda}_2\norm{\Phi \theta^\lambda_\PO}_2 + \norm{d_{S}^{\lambda^\top} \Phi}_2 \norm{\theta^\lambda_\PO - \theta^\lambda_S}_2 + \frac{\lambda}{2}  d_{S}^{\lambda^\top} \Phi \Phi^\top d_{S}^\lambda\nonumber \\
    &\le \left(\norm{d_{\PO\rightarrow\PO}^\lambda - d_{\PO \rightarrow S}^\lambda}_2 + \norm{d_{\PO\rightarrow S}^\lambda - d_{S}^\lambda}_2\right)\norm{\Phi \theta^\lambda_\PO}_2 + \norm{d_{S}^{\lambda^\top} \Phi}_2 \norm{\theta^\lambda_\PO - \theta^\lambda_S}_2 + \frac{\lambda}{2}  d_{S}^{\lambda^\top} \Phi \Phi^\top d_{S}^\lambda \label{eq:temp-bound-T1}
\end{align}

Using \cref{eq:policy-to-measure} we get,
\begin{align*}
    &\norm{d_{\PO\rightarrow\PO}^\lambda - d_{\PO\rightarrow S}^\lambda}_2 = \norm{\left\{\left(\Identity - \gamma \Pi_{\PO} \bm{\mu}_{\PO} \Phi^\top \right)^{-1} - \left(\Identity - \gamma \Pi_{\PO} \bm{\mu}_{S} \Phi^\top \right)^{-1}\right\} \Pi_{\PO}\rho}_2\\
    &\le \norm{\left\{\left(\Identity - \gamma \Pi_{\PO} \bm{\mu}_{\PO} \Phi^\top \right)^{-1} - \left(\Identity - \gamma \Pi_{\PO} \bm{\mu}_{S} \Phi^\top \right)^{-1}\right\}}_2 \norm{ \Pi_{\PO}\rho}_2\\
    &\le 3\gamma \norm{\Pi_{\PO}\left(\bm{\mu}_{\PO} - \bm{\mu}_S \right) \Phi^\top}_2 \norm{\left(\Identity - \gamma \Pi_{\PO} \bm{\mu}_{\PO} \Phi^\top \right)^{-1}}_2 \norm{\left(\Identity - \gamma \Pi_{\PO} \bm{\mu}_{\PO} \Phi^\top \right)^{-1}}_2  \norm{ \Pi_{\PO}\rho}_2
\end{align*}
The last inequality uses perturbation bound for the inverse of a matrix, in particular $\norm{A^{-1}}_2 - \norm{B^{-1}}_2 \le 3 \norm{A - B}_2 \norm{A^{-1}}_2 \norm{B^{-1}}_2$. We will now use the following set of observations.
\begin{enumerate}
    \item $\norm{\Pi_{\PO}}_2 \le 1$ as $\norm{\Pi_{\PO}v}_2^2 = \sum_{s,a} \pi^2(a\mid s) v^2_{s,a} \le \norm{v}_2^2 $.
    \item Since $\bm{\mu}_S \Phi^\top$ is a probability transition function we have $\norm{\Pi_{\PO}\bm{\mu}_S \Phi^\top}_2 \le \norm{\Pi_{\PO}}_2 \norm{\bm{\mu}_S \Phi^\top}_2 \le 1 $. This also implies that $\Identity - \gamma \Pi_{\PO}\bm{\mu}_S \Phi^\top \succcurlyeq (1-\gamma) \cdot \Identity$.
    \item By assumption~\ref{asn:features}, we have $\norm{\Phi^\top}_2 \le \sqrt{\frakm}$.
\end{enumerate}
\begin{equation*}
    \norm{d_{\PO\rightarrow\PO}^\lambda - d_{\PO\rightarrow S}^\lambda}_2 \le \frac{3 \gamma \sqrt{\frakm }}{(1-\gamma)^2} \norm{\bm{\mu}_{\PO} - \bm{\mu}_S}_2  \le \frac{3 \gamma \varepsilon_\mu \sqrt{\frakm }}{(1-\gamma)^2} \norm{d^\lambda_{\PO\rightarrow S} - d^\lambda_S}_2
\end{equation*}
The last inequality uses assumption~\eqref{asn:lipschitzness} and the fact that the measure $d_{\PO \rightarrow S}$ induces policy $\pi_{\PO}$ i.e. $\pi_{d_{\PO \rightarrow S}}$, as defined in \eqref{eq:measure-to-policy} equals $\pi_{\PO}$.
Substituting the above bound in \cref{eq:temp-bound-T1} we get the following upper bound on $T_1$.
\begin{align*}
    T_1 &\le \left(1 + \frac{3 \gamma \varepsilon_\mu \sqrt{\frakm }}{(1-\gamma)^2} \right) \norm{d^\lambda_{\PO\rightarrow S} - d^\lambda_S}_2 \norm{\Phi}_2 \norm{\theta^\lambda_{\PO}}_2 + \norm{d^\lambda_{S}}_2 \norm{\Phi}_2 \varepsilon_\theta \norm{d^\lambda_{\PO\rightarrow S} - d^\lambda_S}_2 + \frac{\lambda}{2} \norm{d^\lambda_{ S}}_{\Phi \Phi^\top}\\
    &\le \left(1 + \underbrace{\frac{3 \gamma \varepsilon_\mu \frakm \sqrt{D} }{(1-\gamma)^2}  + \varepsilon_\theta \sqrt{\frakm }}_{:=\Delta} \right) \norm{d^\lambda_{\PO\rightarrow S} - d^\lambda_S}_2 + \frac{\lambda}{2} \norm{d^\lambda_{S}}_{\Phi \Phi^\top}
\end{align*}
Substituting the above bound in \cref{eq:gap-performative-optimality} and using the observation that for any occupancy measure $d$ we have $d^\top \Phi \Phi^\top d \le \frac{\frakm }{(1-\gamma)^2}$ we get the following bound.
\begin{align}
    d^\top_{\PO \rightarrow \PO} \Phi \theta_{\PO} - d^{\lambda^\top}_S \Phi \theta^\lambda_S &\le \left( 1 + \Delta\right) \norm{d^\lambda_{\PO \rightarrow S} - d^\lambda_S}_2 + \frac{\lambda\frakm }{(1-\gamma)^2}\nonumber \\
    &\le \left(1 + \Delta\right)\frac{4\Delta}{\kappa \lambda}  + \frac{\lambda\frakm }{(1-\gamma)^2}\ \textrm{[By \Cref{lem:distance-optimality-stability}]}\ \label{eq:temp-bound-optimality-difference}
\end{align}
Note that the above expression can be written as $\frac{S_1}{{\lambda}} + \lambda \cdot S_2$ where 
$$
S_1 = \left(1 + \Delta\right)\frac{4\Delta}{\kappa} \ \textrm{ and } \ S_2 = \frac{\frakm }{(1-\gamma)^2}
$$
Let $\lambda_0 = \frac{25}{8\sqrt{\kappa}}\left(\varepsilon_\theta + \alpha \gamma \sqrt{D} \varepsilon_\mu \right)$. There are two cases to consider. If $\left( \frac{T_1}{T_2}\right)^{1/2} \ge \lambda_0$, we can substitute $\lambda = \left( \frac{T_1}{T_2}\right)^{1/2}$ in \cref{eq:temp-bound-optimality-difference} to obtain 
$$
d^\top_{\PO \rightarrow \PO} \Phi \theta_{\PO} - d^{\lambda^\top}_S \Phi \theta^\lambda_S \le 2 \sqrt{S_1 S_2} .
$$
On the other hand, if $\left( \frac{S_1}{S_2}\right)^{1/2} < \lambda_0$, we can substitute $\lambda = \lambda_0$ in \cref{eq:temp-bound-optimality-difference} to obtain the following bound.
$$
d^\top_{\PO \rightarrow \PO} \Phi \theta_{\PO} - d^{\lambda^\top}_S \Phi \theta^\lambda_S \le \frac{S_1}{\sqrt{\lambda_0}} + \lambda_0 \cdot S_2 \le \sqrt{S_1 S_2} + \lambda_0 \cdot S_2
$$
Combining the two bounds above, we are always guaranteed an upper bound of $2 \sqrt{S_1 S_2} + \lambda_0 \cdot S_2$ on the suboptimality gap. 
\end{proof}

\subsection{Distance Between Performatively Optimal and Stable Solution}
\begin{lemma}\label{lem:distance-optimality-stability}
    Let $\Delta = \left(\frac{3 \gamma \varepsilon_\mu  \frakm \sqrt{D} }{(1-\gamma)^2}  + \varepsilon_\theta \sqrt{\frakm } \right)$ and $c \cdot \Delta \ge \lambda \ge \frac{25}{8\sqrt{\kappa}}\left( \varepsilon_\theta + \alpha \gamma \sqrt{D}\varepsilon_\mu \right)$ for a constant $c \ge \frac{4}{\sqrt{\kappa \frakm}}$. Then $$\norm{d^\lambda_{\PO \rightarrow S} - d^\lambda_S}_2 \le \frac{4\Delta}{\kappa \lambda}.$$
\end{lemma}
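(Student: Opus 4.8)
The plan is to exploit the strong concavity of the stable regularized objective together with the performative optimality--stability chain already recorded in \eqref{eq:seq-optimality-stability}. Write $f_S(d) = RR(d; M^\lambda_S)$. By Assumption~\ref{asn:features} the quadratic form $d \mapsto d^\top \Phi\Phi^\top d$ is $\kappa$-strongly convex in $\norm{\cdot}_2$, so $f_S$ is $\lambda\kappa$-strongly concave, and $d^\lambda_{\PO\rightarrow S}$ lies in the same polytope $\calC(d^\lambda_S)$ over which $d^\lambda_S$ is the constrained maximizer (being the occupancy measure of $\pi^\lambda_\PO$ in $M^\lambda_S$, it satisfies $Bd = \rho + \gamma \bm{\mu}_S \Phi^\top d$ with $d \ge 0$). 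First I would combine the first-order optimality condition $\langle \nabla f_S(d^\lambda_S),\, d^\lambda_{\PO\rightarrow S} - d^\lambda_S\rangle \le 0$ with strong concavity to obtain the quadratic lower bound $f_S(d^\lambda_S) - f_S(d^\lambda_{\PO\rightarrow S}) \ge \tfrac{\lambda\kappa}{2}\norm{d^\lambda_{\PO\rightarrow S} - d^\lambda_S}_2^2$.

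Next I would produce a matching linear \emph{upper} bound on the same gap. Using the first inequality of \eqref{eq:seq-optimality-stability}, namely $RR(d^\lambda_S; M^\lambda_S) \le RR(d^\lambda_{\PO\rightarrow\PO}; M^\lambda_\PO)$, I get $f_S(d^\lambda_S) - f_S(d^\lambda_{\PO\rightarrow S}) \le RR(d^\lambda_{\PO\rightarrow\PO}; M^\lambda_\PO) - RR(d^\lambda_{\PO\rightarrow S}; M^\lambda_S)$, a cross-environment gap for the single policy $\pi^\lambda_\PO$. I would decompose it by inserting $RR(d^\lambda_{\PO\rightarrow S}; M^\lambda_\PO)$. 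The resulting ``same measure, shifted reward'' piece equals $\langle \Phi^\top d^\lambda_{\PO\rightarrow S},\, \theta_\PO - \theta_S\rangle$, since the regularizer $\tfrac{\lambda}{2}d^\top\Phi\Phi^\top d$ does not depend on the environment; because $d^\lambda_{\PO\rightarrow S}$ induces the policy $\pi^\lambda_\PO$, Assumptions~\ref{asn:measure-to-parameters} and~\ref{asn:lipschitzness} bound it by $O(\varepsilon_\theta\sqrt{\frakm})\,\norm{d^\lambda_{\PO\rightarrow S} - d^\lambda_S}_2$, contributing the $\varepsilon_\theta\sqrt{\frakm}$ term of $\Delta$. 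The remaining ``same policy, shifted environment'' piece $RR(d^\lambda_{\PO\rightarrow\PO}; M^\lambda_\PO) - RR(d^\lambda_{\PO\rightarrow S}; M^\lambda_\PO)$ I would control using the occupancy perturbation bound already established in the proof of Theorem~\ref{thm:apx-optimality-full}, $\norm{d^\lambda_{\PO\rightarrow\PO} - d^\lambda_{\PO\rightarrow S}}_2 \le \tfrac{3\gamma\varepsilon_\mu\sqrt{\frakm}}{(1-\gamma)^2}\norm{d^\lambda_{\PO\rightarrow S} - d^\lambda_S}_2$; its linear-reward part yields the $\tfrac{3\gamma\varepsilon_\mu\frakm\sqrt{D}}{(1-\gamma)^2}$ term of $\Delta$.

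The step I expect to be the main obstacle is the quadratic part of this last piece, $\tfrac{\lambda}{2}\bigl((d^\lambda_{\PO\rightarrow\PO})^\top\Phi\Phi^\top d^\lambda_{\PO\rightarrow\PO} - (d^\lambda_{\PO\rightarrow S})^\top\Phi\Phi^\top d^\lambda_{\PO\rightarrow S}\bigr)$: factoring it as $(d^\lambda_{\PO\rightarrow\PO} - d^\lambda_{\PO\rightarrow S})^\top\Phi\Phi^\top(d^\lambda_{\PO\rightarrow\PO} + d^\lambda_{\PO\rightarrow S})$ and using $\lambda_{\max}(\Phi^\top\Phi)\le\frakm$ together with $\norm{d}_2 \le (1-\gamma)^{-1}$ leaves an explicit factor of $\lambda$ multiplying $\norm{d^\lambda_{\PO\rightarrow S} - d^\lambda_S}_2$. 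This is exactly where the \emph{upper} bound $\lambda \le c\Delta$ in the hypothesis (with $c \ge 4/\sqrt{\kappa\frakm}$) enters: it converts the $\lambda$-dependent contribution into an $O(\Delta)\,\norm{d^\lambda_{\PO\rightarrow S} - d^\lambda_S}_2$ term, so that the whole cross-environment gap is at most $2\Delta\,\norm{d^\lambda_{\PO\rightarrow S} - d^\lambda_S}_2$. Without this cap the final bound would retain a residual term independent of $\lambda$.

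Finally I would chain the quadratic lower bound against the linear upper bound, cancel one factor of $\norm{d^\lambda_{\PO\rightarrow S} - d^\lambda_S}_2$ (the degenerate case $d^\lambda_{\PO\rightarrow S} = d^\lambda_S$ being trivial), and rearrange $\tfrac{\lambda\kappa}{2}\norm{d^\lambda_{\PO\rightarrow S} - d^\lambda_S}_2 \le 2\Delta$ into the claimed estimate $\norm{d^\lambda_{\PO\rightarrow S} - d^\lambda_S}_2 \le \tfrac{4\Delta}{\kappa\lambda}$.
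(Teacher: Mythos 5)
Your proposal follows the paper's skeleton almost verbatim up to one step: the strong-concavity lower bound $\frac{\kappa\lambda}{2}\norm{d^\lambda_S-d^\lambda_{\PO\rightarrow S}}_2^2 \le RR(d^\lambda_S;M^\lambda_S)-RR(d^\lambda_{\PO\rightarrow S};M^\lambda_S)$, the upgrade to the cross-environment gap via \eqref{eq:seq-optimality-stability}, the insertion of the intermediate term for the policy $\pi^\lambda_\PO$, the use of Assumptions~\ref{asn:measure-to-parameters} and~\ref{asn:lipschitzness} to control $\theta_\PO-\theta_S$, and the perturbation estimate $\norm{d^\lambda_{\PO\rightarrow\PO}-d^\lambda_{\PO\rightarrow S}}_2\le \frac{3\gamma\varepsilon_\mu\sqrt{\frakm}}{(1-\gamma)^2}\norm{d^\lambda_{\PO\rightarrow S}-d^\lambda_S}_2$ are all exactly the paper's. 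The genuine gap is in your handling of the quadratic (regularizer) piece. Writing $x=\norm{d^\lambda_{\PO\rightarrow S}-d^\lambda_S}_2$, your factoring gives $\frac{\lambda}{2}\abs{(d^\lambda_{\PO\rightarrow\PO}-d^\lambda_{\PO\rightarrow S})^\top\Phi\Phi^\top(d^\lambda_{\PO\rightarrow\PO}+d^\lambda_{\PO\rightarrow S})}\le \lambda K x$ with $K=\frac{3\gamma\varepsilon_\mu\frakm^{3/2}}{(1-\gamma)^3}$, and the hypothesis $\lambda\le c\Delta$ only converts this into $cK\Delta\,x$. Your claimed conclusion (whole gap at most $2\Delta x$, hence $x\le\frac{4\Delta}{\kappa\lambda}$) requires $cK\le 1$; but $c$ is bounded \emph{below} by $4/\sqrt{\kappa\frakm}$, so even at its smallest admissible value this demands $\frac{12\gamma\varepsilon_\mu\frakm}{\sqrt{\kappa}(1-\gamma)^3}\le 1$, i.e.\ $\varepsilon_\mu\le\frac{\sqrt{\kappa}(1-\gamma)^3}{12\gamma\frakm}$. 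No such smallness condition on $\varepsilon_\mu$ appears in the lemma's hypotheses, nor is it implied by the condition $\varepsilon_\mu<\frac{2\sqrt{\kappa}}{25\gamma\alpha^2}$ of \Cref{thm:convergence-rpo} (that condition is less restrictive by a factor of order $A/(1-\gamma)$). Without it, your chained inequality only yields $\frac{\kappa\lambda}{2}x^2\le(1+cK)\Delta x$, i.e.\ $x\le\frac{2(1+cK)\Delta}{\kappa\lambda}$, and $cK$ can be arbitrarily large.

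The paper sidesteps this precisely by \emph{not} making the quadratic contribution proportional to $x$: it drops the negative term $-\frac{\lambda}{2}\norm{\Phi^\top d^\lambda_{\PO\rightarrow\PO}}_2^2$ outright and bounds the remaining one by the absolute constant $\frac{\lambda}{2}(d^\lambda_{\PO\rightarrow S})^\top\Phi\Phi^\top d^\lambda_{\PO\rightarrow S}\le\frac{\lambda\frakm}{2(1-\gamma)^2}$, which gives the mixed inequality $\frac{\kappa\lambda}{2}x^2\le\Delta x+\frac{\lambda\frakm}{2(1-\gamma)^2}$. This is then resolved by a two-case analysis: either $\Delta x\le\frac{\lambda\frakm}{2(1-\gamma)^2}$, in which case $x\le\frac{\lambda\frakm}{2\Delta(1-\gamma)^2}$, or else $\frac{\kappa\lambda}{2}x^2\le2\Delta x$, in which case $x\le\frac{4\Delta}{\kappa\lambda}$; the cap $\lambda\le c\Delta$ enters only at this final stage, to argue that the first candidate bound is dominated by $\frac{4\Delta}{\kappa\lambda}$ (a condition of the form $\lambda\lesssim\frac{(1-\gamma)\Delta}{\sqrt{\kappa\frakm}}$, i.e.\ a cap on $\lambda$ of the same shape as the hypothesis, rather than a new constraint on $\varepsilon_\mu$). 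To repair your write-up, keep everything else and replace your treatment of the quadratic piece by this additive bound plus the case split; as written, your argument proves a weaker statement than the lemma claims.
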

\begin{proof}
    We first provide a lower bound on the difference $RR(d^\lambda_S; M^\lambda_S) - RR(d^\lambda_{\PO \rightarrow S}; M^\lambda_S)$. Since the occupancy measure $d^\lambda_S$ maximizes $RR(\cdot; M^\lambda_S)$ we have $\left \langle \nabla_d RR(d^\lambda_S;M^\lambda_S), d^\lambda_S - d^\lambda_{\PO \rightarrow S} \right \rangle \ge 0$. This implies the following inequality.
    \begin{align}
        &\left(d^\lambda_S - d^\lambda_{\PO \rightarrow S} \right)^\top \left( \Phi \theta^\lambda_S - \lambda \Phi \Phi^\top d^\lambda_S\right)\ge 0\nonumber \\
        \Rightarrow\ & d^{\lambda^\top}_S \Phi \theta^\lambda_S - \lambda d^{\lambda^\top}_S \Phi \Phi^\top d^\lambda_S \ge d^{\lambda^\top}_{\PO \rightarrow S} \Phi \theta^\lambda_S - \lambda d^{\lambda^\top}_{\PO \rightarrow S} \Phi \Phi^\top d^\lambda_S\nonumber \\
        \Rightarrow\ &RR(d^\lambda_S; M^\lambda_S) - \frac{\lambda}{2} d^{\lambda^\top}_S \Phi \Phi^\top d^\lambda_S \ge RR(d^\lambda_{\PO \rightarrow S}; M^\lambda_S) - \lambda d^{\lambda^\top}_{\PO \rightarrow S} \Phi \Phi^\top d^\lambda_S + \frac{\lambda}{2} d^{\lambda^\top}_{\PO \rightarrow S} \Phi \Phi^\top d^\lambda_{\PO \rightarrow S}\nonumber\\
        \Rightarrow\ &RR(d^\lambda_S; M^\lambda_S) - RR(d^\lambda_{\PO \rightarrow S}; M^\lambda_S) \ge \frac{\lambda}{2} \left(d^\lambda_S - d^\lambda_{\PO \rightarrow S} \right)^\top \Phi \Phi^\top \left(d^\lambda_S - d^\lambda_{\PO \rightarrow S} \right) \ge \frac{\kappa \lambda}{2} \norm{d^\lambda_S - d^\lambda_{\PO \rightarrow S}}_2^2 \label{eq:lower-bound-diff}
    \end{align}
The last inequality uses assumption~\eqref{asn:features}. We now provide an upper bound on the term $RR(d^\lambda_S; M^\lambda_S) - RR(d^\lambda_{\PO \rightarrow S}; M^\lambda_S)$. Note that $RR(d^\lambda_S; M^\lambda_S) - RR(d^\lambda_{\PO \rightarrow S}; M^\lambda_S) \le RR(d^\lambda_{\PO \rightarrow \PO}; M^\lambda_{\PO}) - RR(d^\lambda_{\PO \rightarrow S}; M^\lambda_S)$, and we upper bound the latter.
\begin{align}
    &RR(d^\lambda_{\PO \rightarrow \PO}; M^\lambda_{\PO}) - RR(d^\lambda_{\PO \rightarrow S}; M^\lambda_S)\nonumber\\
     &\le \left(d_{\PO\rightarrow\PO}^{\lambda^\top} \Phi \theta^\lambda_\PO - \frac{\lambda}{2}  d_{\PO\rightarrow\PO}^{\lambda^\top} \Phi \Phi^\top d_{\PO\rightarrow\PO}^\lambda \right) 
    - \left(d_{\PO\rightarrow S}^{\lambda^\top} \Phi \theta^\lambda_S - \frac{\lambda}{2}  d_{\PO\rightarrow S}^{\lambda^\top} \Phi \Phi^\top d_{\PO\rightarrow S}^\lambda \right)\nonumber \nonumber\\
    &= \left(d_{\PO\rightarrow\PO}^{\lambda^\top} \Phi \theta^\lambda_\PO -  d_{\PO\rightarrow S}^{\lambda^\top} \Phi \theta^\lambda_{\PO}\right) + \left(d_{\PO\rightarrow S}^{\lambda^\top} \Phi \theta^\lambda_{\PO} - d_{\PO\rightarrow S}^{\lambda^\top} \Phi \theta^\lambda_{S} \right)\nonumber \nonumber\\
    &- \frac{\lambda}{2}  d_{\PO\rightarrow\PO}^{\lambda^\top} \Phi \Phi^\top d_{\PO\rightarrow\PO}^\lambda + \frac{\lambda}{2}  d_{\PO\rightarrow S}^{\lambda^\top} \Phi \Phi^\top d_{\PO\rightarrow S}^\lambda\nonumber  \nonumber\\
    &\le \norm{d_{\PO\rightarrow\PO}^\lambda - d_{\PO\rightarrow S}^\lambda}_2\norm{\Phi \theta^\lambda_\PO}_2 + \norm{d_{\PO\rightarrow S}^{\lambda^\top} \Phi}_2 \norm{\theta^\lambda_\PO - \theta^\lambda_S}_2 + \frac{\lambda}{2}  d_{\PO\rightarrow S}^{\lambda^\top} \Phi \Phi^\top d_{\PO\rightarrow S}^\lambda \label{eq:upper-bound-diff}
\end{align}
Using \cref{eq:policy-to-measure} we get,
\begin{align*}
    &\norm{d_{\PO\rightarrow\PO}^\lambda - d_{\PO\rightarrow S}^\lambda}_2 = \norm{\left\{\left(\Identity - \gamma \Pi_{\PO} \bm{\mu}_{\PO} \Phi^\top \right)^{-1} - \left(\Identity - \gamma \Pi_{\PO} \bm{\mu}_{S} \Phi^\top \right)^{-1}\right\} \Pi_{\PO}\rho}_2\\
    &\le \norm{\left\{\left(\Identity - \gamma \Pi_{\PO} \bm{\mu}_{\PO} \Phi^\top \right)^{-1} - \left(\Identity - \gamma \Pi_{\PO} \bm{\mu}_{S} \Phi^\top \right)^{-1}\right\}}_2 \norm{ \Pi_{\PO}\rho}_2\\
    &\le 3\gamma \norm{\Pi_{\PO}\left(\bm{\mu}_{\PO} - \bm{\mu}_S \right) \Phi^\top}_2 \norm{\left(\Identity - \gamma \Pi_{\PO} \bm{\mu}_{\PO} \Phi^\top \right)^{-1}}_2 \norm{\left(\Identity - \gamma \Pi_{\PO} \bm{\mu}_{\PO} \Phi^\top \right)^{-1}}_2  \norm{ \Pi_{\PO}\rho}_2
\end{align*}
The last inequality uses perturbation bound for the inverse of a matrix, in particular $\norm{A^{-1}}_2 - \norm{B^{-1}}_2 \le 3 \norm{A - B}_2 \norm{A^{-1}}_2 \norm{B^{-1}}_2$. We will now use the following set of observations.
\begin{enumerate}
    \item $\norm{\Pi_{\PO}}_2 \le 1$ as $\norm{\Pi_{\PO}v}_2^2 = \sum_{s,a} \pi^2(a\mid s) v^2_{s,a} \le \norm{v}_2^2 $.
    \item Since $\bm{\mu}_S \Phi^\top$ is a probability transition function we have $\norm{\Pi_{\PO}\bm{\mu}_S \Phi^\top}_2 \le \norm{\Pi_{\PO}}_2 \norm{\bm{\mu}_S \Phi^\top}_2 \le 1 $. This also implies that $\Identity - \gamma \Pi_{\PO}\bm{\mu}_S \Phi^\top \succcurlyeq (1-\gamma) \cdot \Identity$.
    \item By assumption \ref{asn:features} we have, $\norm{\Phi^\top}_2 \le \sqrt{\frakm}$.
\end{enumerate}
\begin{equation*}
    \norm{d_{\PO\rightarrow\PO}^\lambda - d_{\PO\rightarrow S}^\lambda}_2 \le \frac{3 \gamma \sqrt{\frakm}}{(1-\gamma)^2} \norm{\bm{\mu}_{\PO} - \bm{\mu}_S}_2  \le \frac{3 \gamma \varepsilon_\mu \sqrt{\frakm}}{(1-\gamma)^2} \norm{d^\lambda_{\PO\rightarrow S} - d^\lambda_S}_2
\end{equation*}
The last inequality uses assumption~\ref{asn:lipschitzness} and the fact that the measure $d_{\PO \rightarrow S}$ induces policy $\pi_{\PO}$ i.e. $\pi_{d_{\PO \rightarrow S}}$, as defined in \eqref{eq:measure-to-policy} equals $\pi_{\PO}$.
Substituting the above bound in \cref{eq:upper-bound-diff} we get the following upper bound.
\begin{align*}
     &RR(d^\lambda_{\PO \rightarrow \PO}; M^\lambda_{\PO}) - RR(d^\lambda_{\PO \rightarrow S}; M^\lambda_S) \\ &\le \frac{3 \gamma \varepsilon_\mu \sqrt{\frakm}}{(1-\gamma)^2} \norm{d^\lambda_{\PO\rightarrow S} - d^\lambda_S}_2 \norm{\Phi}_2 \norm{\theta^\lambda_{\PO}}_2 + \norm{d^\lambda_{\PO \rightarrow S}}_2 \norm{\Phi}_2 \varepsilon_\theta \norm{d^\lambda_{\PO} - d^\lambda_S}_2 + \frac{\lambda}{2} \norm{d^\lambda_{\PO \rightarrow S}}_{\Phi \Phi^\top}\\
    &\le \left(\frac{3 \gamma \varepsilon_\mu \frakm \sqrt{D} }{(1-\gamma)^2}  + \varepsilon_\theta \sqrt{\frakm} \right) \norm{d^\lambda_{\PO\rightarrow S} - d^\lambda_S}_2 + \frac{\lambda}{2} \norm{d^\lambda_{\PO \rightarrow S}}_{\Phi \Phi^\top}\\
    &\le \underbrace{\left(\frac{3 \gamma \varepsilon_\mu \frakm \sqrt{D} }{(1-\gamma)^2}  + \varepsilon_\theta \sqrt{\frakm} \right)}_{:= \Delta } \norm{d^\lambda_{\PO\rightarrow S} - d^\lambda_S}_2 + \frac{\lambda \frakm}{2(1-\gamma)^2}
\end{align*}
Using the lower bound established in \cref{eq:lower-bound-diff} we obtain the following inequality.
\begin{align*}
    \frac{\kappa \lambda}{2} \norm{d^\lambda_S - d^\lambda_{\PO \rightarrow S}}_2^2 \le \Delta \norm{d^\lambda_{\PO\rightarrow S} - d^\lambda_S}_2 + \frac{\lambda \frakm}{2(1-\gamma)^2}
\end{align*}
Now there are two cases to consider. First, $\Delta \norm{d^\lambda_{\PO\rightarrow S} - d^\lambda_S}_2 \le \frac{\lambda \frakm }{2(1-\gamma)^2}$. Then the upper bound on $\norm{d^\lambda_{\PO\rightarrow S} - d^\lambda_S}_2$ is $\frac{\lambda \frakm }{2 \Delta (1-\gamma)^2}$. Second, $\Delta \norm{d^\lambda_{\PO\rightarrow S} - d^\lambda_S}_2 > \frac{\lambda \frakm }{2(1-\gamma)^2}$. Then we have $\frac{\kappa \lambda}{2} \norm{d^\lambda_{\PO\rightarrow S} - d^\lambda_S}_2^2 \le 2 \Delta \norm{d^\lambda_{\PO\rightarrow S} - d^\lambda_S}_2 $ and the upper bound on $\norm{d^\lambda_{\PO\rightarrow S} - d^\lambda_S}_2 $ is $\frac{4\Delta}{\kappa \lambda}$.

Let $\lambda_0 = \frac{25}{8\sqrt{\kappa}}\left(\varepsilon_\theta + \gamma \sqrt{D} \varepsilon_\mu \frac{\sqrt{\frakm}}{\sqrt{A}(1-\gamma)}\right)$ be the required lower bound on $\lambda$. From the definition of $\Delta$ we have, $\frac{\Delta}{\sqrt{\frakm}} \ge \varepsilon_\theta + \gamma \sqrt{D} \varepsilon_\mu \frac{\sqrt{\frakm}}{\sqrt{A}(1-\gamma)} = \lambda_0 \cdot \frac{8 \sqrt{\kappa}}{25}$. Therefore, the inequality $c \cdot \Delta \ge \lambda \ge \lambda_0$ is feasible as long as $c \ge \frac{25}{8\sqrt{\kappa \frakm}}$. Now note that, if $\lambda < \frac{2\sqrt{2} \Delta (1-\gamma)}{\sqrt{\kappa \frakm}}$ then $\frac{4\Delta}{\kappa \lambda} > \frac{\lambda \frakm}{2\Delta(1-\gamma)^2}$ and the upper bound on $\norm{d^\lambda_{\PO\rightarrow S} - d^\lambda_S}_2 $ is $\frac{4\Delta}{\kappa \lambda}$. Therefore, we need the constant $c$ to satisfy the following inequality.
$$
c \ge \max \left\{\frac{25}{8\sqrt{\kappa \frakm}},  \frac{2\sqrt{2}  (1-\gamma)}{\sqrt{\kappa \frakm}}\right\}
$$
Therefore, it is sufficient to take $c \ge 4/\sqrt{\kappa \frakm}$.
\end{proof} 
\section{Proof of Theorem \texorpdfstring{\ref{thm:finite-sample-convergence}}{} }
\begin{proof}
We first construct the dual problem of the optimization problem~\cref{eq:regularized-rl-matrix-dual-new}. Let $\calF_t$ be the function defined as $\calF_t(g,\omega) = \max_{d \ge 0, \nu} \calL_t(d,\nu; g, \omega)$. Then the dual optimization problem defined in \cref{eq:regularized-rl-matrix-dual-new} is given as
$$
\min_{g,\omega} \calF_t(g,\omega)
$$
Fix a choice of $g$ and $\omega$. An optimal solution $\nu$ satisfies
$$
\nabla_\nu \calF_t(d,\nu; g, \omega) = \theta_t - \lambda \nu + \gamma \cdot \bm{\mu}_t^\top g - \omega = 0 \ \textrm{ and } \ \nu = \frac{1}{\lambda} \left( \theta_t + \gamma \cdot \bm{\mu}_t^\top g - \omega \right)
$$
On the other hand, the derivative with respect to $d$ is given as follows.
$$
\nabla_d \calF_t(d,\nu; g, \omega) = -B^\top g + \Phi \omega
$$
If any entry of $-B^\top g + \Phi \omega$ is positive, we can choose the corresponding entry of $d$ to be arbitrarily large, and the value $\calF_t(g,\omega)$ would be unbounded. Therefore, we must have $-B^\top g + \Phi \omega \le 0$. Now substituting the choice of $\nu$ derived above we get the following dual optimization problem.
\begin{align*}
    \min_{g, \omega}\ & \frac{1}{2\lambda} \norm{\theta_t + \gamma \cdot \bm{\mu}_t^\top g - \omega}_2^2 + \left \langle g, \rho \right \rangle\\
    \textrm{s.t.}\ &  -B^\top g + \Phi \omega \le 0
\end{align*}
We now apply \Cref{lem:bound-optima-quadpro} to bound the norm of the optimal solution to the above optimization problem. Note that the objective can be written in the form $x^\top A_t x + b_t^\top v$ where 
$$
A_t = \frac{1}{2\lambda} \cdot \begin{bmatrix}
    \Identity_D & - \gamma \cdot \bm{\mu}_t^\top \\
    -\gamma \cdot \bm{\mu}_t & \gamma^2 \cdot \bm{\mu}_t \bm{\mu}_t^\top 
\end{bmatrix}
$$
and
$$
b_t = \frac{1}{\lambda} \cdot [-\theta_t;\ \lambda \rho + \bm{\mu}_t \theta_t].
$$
Suppose the eigenvalues of $\bm{\mu}_t \bm{\mu}_t^\top$ are $\sigma_1, \ldots, \sigma_D$. Then we claim that the eigenvalues of $A_t$ are $\frac{1}{2\lambda},\ldots, \frac{1}{2\lambda}, \frac{\gamma^2}{2\lambda} \sigma_1, \ldots, \frac{\gamma^2}{2\lambda} \sigma_D$. Indeed, let $v_i$ be the $i$-th eigenvector of $\bm{\mu}_t \bm{\mu}_t^\top$ and let $u_i =[0_D, v_i]$. Then $u_i^\top A_t u_i = \frac{\gamma^2}{2\lambda} u_i^\top \bm{\mu}_t \bm{\mu}_t^\top u_i = \frac{\gamma^2}{2\lambda} \sigma_i \norm{u_i}_2^2$. Therefore, the smallest positive eigenvalue of the matrix $A_t$, denoted as $\sigma^\star_{\min}(A_t)$ is bounded below as
$$
\sigma_t = \sigma^\star_{\min}(A_t) \ge  \frac{\min\set{1, {\gamma^2} \cdot \sigma^\star_{\min}(\bm{\mu}_t \bm{\mu}_t^\top) }}{2\lambda} \ge \frac{\min \set{1, {\gamma^2} \underline{\sigma} }}{2\lambda}
$$
where $\underline{\sigma} = \min_d \sigma^\star_{\min} \bm{\mu}_d \bm{\mu}_d^\top$. Furthermore, $\norm{b}_2 \le \frac{1}{\lambda}\left( \norm{\theta_t}_2 + \lambda \norm{\rho}_2 + \norm{\bm{\mu}_t \theta_t}_2\right) \le \frac{1}{\lambda}\left( \sqrt{D} + \lambda + D\right)$. Therefore, we can apply \Cref{lem:bound-optima-quadpro} to obtain the following bound on the optimal dual solution.
\begin{equation*}
    \norm{\omega^\star}_2 + \norm{g^\star}_2 \le \frac{\lambda + 2D}{\min \set{1, {\gamma^2} \underline{\sigma} } } 
\end{equation*}
We will write $c_1$ as $\min \set{1, \gamma^2 \underline{\sigma}}$. 

Let $(d^\star_t, \nu^\star_t; g^\star_t, \omega^\star_t) \in \argmax_{d,\nu} \min_{g,\omega} \calL_t(d,\nu; g,\omega)$. In the previous paragraph, we showed that it is sufficient to consider $\norm{\omega^\star_t}_2, \norm{g^\star_t}_2 \le \frac{\lambda + 2D}{\gamma^2 \cdot \underline{\sigma} }$. Since $d^\star_t$ is an occupancy measure and $\sum_{s,a} d^\star_t(s,a) = \frac{1}{1-\gamma}$ we have
$$
\sqrt{\sum_{s,a}\left( d^\star_t(s,a)\right)^2} = \frac{1}{1-\gamma}\sqrt{\sum_{s,a}\left( (1-\gamma) \cdot d^\star_t(s,a)\right)^2} \le \frac{1}{1-\gamma}\sqrt{\sum_{s,a} (1-\gamma) \cdot d^\star_t(s,a)} = \frac{1}{1-\gamma}.
$$
On the other hand, note that $\nu^\star_t = \Phi^\top d^\star_t$ and by assumption ~\ref{asn:bounded-coverage} we obtain the following bound.
$$
\norm{ \Sigma_t^{-1} \nu^\star_t}_2^2 = \norm{ \Sigma_t^{-1} \Phi^\top d^\star_t}_2^2 = d^{\star^\top}_t \Phi \Sigma_t^{-2} \Phi^\top d^\star_t = \E_{(s,a) \sim d^\star_t}[\phi(s,a)^\top] \Sigma_t^{-2}  \E_{(s,a) \sim d^\star_t}[\phi(s,a)]  \le B
$$
Therefore, $\norm{\Sigma_t^{-1} \nu^\star_t}_2 \le \sqrt{B}$ and for solving the empirical Lagrangian we can restrict the parameters so that $\norm{\Sigma_t^{-1} \nu^\star_t}_2 \le \sqrt{B}$. Therefore, we can apply \Cref{lem:apx-saddle-point} with $m_t = O\left(\frac{D^5 B \lambda^4}{(1-\gamma)^2 c_1^4 \varepsilon^2}\log \frac{DB\lambda t}{c_1 \varepsilon p} \right)$, and obtain that the following bound holds with probability at least $1-\frac{p}{t^2} \cdot \frac{6}{2\pi^2}$.
\begin{equation} \label{eq:apx-saddle-time-t}
\calL_t(d^\star_t, \nu^\star_t; g^\star_t, \omega^\star_t) - \calL_t(\widehat{d}_t, \widehat{\nu}_t; g^\star_t, \omega^\star_t) \le 2 \varepsilon
\end{equation}
Therefore, by a union bound the bound in eq. holds for any $t$ with probability at least $1-\sum_{t} \frac{p}{t^2} \cdot \frac{6}{2\pi^2} = 1- p/2$. Now observe that the objective $\calL_t(d,\nu;g,\omega)$ is strongly concave in $\nu$ as $\nabla^2_\nu \calL_t(d,\nu;g,\omega) = -\lambda \cdot \Identity_D$. Since given $g^\star_t, \omega^\star_t$, $(d^\star_t, \nu^\star_t)$ is an optimal solution of $\calL_t(\cdot, g^\star_t, \omega^\star_t)$ we have, 
\begin{equation}\label{eq:diff-nu}
\norm{\nu^\star_t - \widehat{\nu}_t}_2  \le \sqrt{\frac{\calL_t(d^\star_t, \nu^\star_t; g^\star_t, \omega^\star_t) - \calL_t(\widehat{d}_t, \widehat{\nu}_t; g^\star_t, \omega^\star_t)}{2\lambda} } \le \sqrt{\frac{\varepsilon}{\lambda}}
\end{equation}
Now note that $\nu^\star_t = \Phi^\top d^\star_t$, but $\widehat{\nu}_t \neq \Phi^\top \widehat{d}_t$. However, observe that given $\widehat{d}, \widehat{\nu}, \widehat{g}$, $\widehat{\omega}$ is an optimal solution to the optimization problem $\min_\omega \widehat{\calL}_t(\widehat{d}, \widehat{\nu}, \widehat{g}, \omega)$. Therefore, $\nabla_{\omega} \widehat{\calL}_t(\widehat{d}, \widehat{\nu}, \widehat{g}, \widehat{\omega}) = 0$ and we obtain the following equality.
$$
\frac{1}{m_t} \sum_{j=1}^{m_t} \phi(s_j,a_j) \phi(s_j,a_j)^\top \Sigma_t^{-1} \widehat{\nu}_t = \Phi^\top \widehat{d}_t
$$
This gives us the following bound with probability at least $1-\frac{p}{t^2} \cdot \frac{6}{2\pi^2}$
\begin{align*}
    \norm{\widehat{\nu}_t - \Phi^\top \widehat{d}_t}_2 &= \norm{\left(\frac{1}{m_t} \sum_{j=1}^{m_t} \phi(s_j,a_j) \phi(s_j,a_j)^\top - \Identity_d \right) \Sigma_t^{-1} \widehat{\nu}}_2\\
    &\le \norm{\left(\frac{1}{m_t} \sum_{j=1}^{m_t} \phi(s_j,a_j) \phi(s_j,a_j)^\top - \Identity_d \right)} \norm{ \Sigma_t^{-1} \widehat{\nu}}_2\\
    &\le 8\sqrt{B}\max \left\{ \sqrt{\frac{D + \log(t/p)}{m_t}}, \frac{D + \log(t/p)}{m_t}\right\}
\end{align*}
The last inequality uses standard concentration inequality for sample covariance matrix (e.g. see theorem 1.6.2 of \cite{tropp2015}). Now substituting $m_t \ge O\left(\frac{D^5 B \lambda^4}{(1-\gamma)^2 c_1^4 \varepsilon^2}\log \frac{DB\lambda t}{c_1 \varepsilon p} \right)$ we obtain $\norm{\widehat{\nu}_t - \Phi^\top \widehat{d}_t}_2 \le  8\sqrt{B}\varepsilon$ with probability at least $1-\frac{p}{t^2} \cdot \frac{6}{2\pi^2}$. Therefore, by a union bound, we obtain that with probability at least $1-p/2$, for any $t$ we have $\norm{\widehat{\nu}_t - \Phi^\top \widehat{d}_t}_2 \le 8 \sqrt{B} \varepsilon$. This result, along with \cref{eq:diff-nu} gives us the following inequality.
\begin{align*}
    \sqrt{\kappa} \norm{d^\star_t - \widehat{d}_t} \le \norm{\Phi^\top d^\star_t - \Phi^\top \widehat{d}_t} \le \norm{\nu^\star_t - \widehat{\nu}_t}_2 + \norm{\widehat{\nu}_t - \Phi^\top \widehat{d}_t}_2 \le \sqrt{\frac{\varepsilon}{\lambda}    } + 8\sqrt{B} \varepsilon
\end{align*}
After rearranging we obtain,
\begin{equation}
    \label{eq:diff-apx-exact-d}
    \norm{d^\star_t - \widehat{d}_t} \le \sqrt{\frac{\varepsilon}{\lambda \kappa}} + 8 \sqrt{\frac{B}{\kappa}} \varepsilon.
\end{equation}
The proof of theorem~\eqref{thm:convergence-rpo} establishes the following recurrence relation.
\begin{equation}
    \norm{d_{t+1}^\star - d_S}_2 \le \beta \left( \norm{d_{t}^\star - d_S}_2 + \norm{d_{t-1}^\star - d_S}_2\right)
\end{equation}
for a constant $\beta$. Using the two inequalities above, we can establish a recurrence relation on the norm of the difference between $\widehat{d}_t$ and $d_S$.
\begin{align*}
    \norm{\widehat{d}_{t+1} - d_S}_2 &\le \norm{d^\star_{t+1} - \widehat{d}_{t+1}} + \norm{d_{t+1}^\star - d_S}_2 \\
    &\le \beta \left( \norm{d_{t}^\star - d_S}_2 + \norm{d_{t-1}^\star - d_S}_2\right) + \sqrt{\frac{\varepsilon}{\lambda \nu}} + 8 \sqrt{\frac{B}{\nu}} \varepsilon\\
    &\le \beta \left( \norm{\widehat{d}_{t} - d_S}_2 + \norm{\widehat{d}_{t-1} - d_S}_2\right) + 3\sqrt{\frac{\varepsilon}{\lambda \kappa}} + 24 \sqrt{\frac{B}{\kappa}} \varepsilon
\end{align*}
Now if $\lambda > \frac{1}{\sqrt{\kappa B} }$ and $\varepsilon < \frac{\beta^2 \delta^2}{48} \sqrt{\frac{\kappa}{B}}$ then it can be easily checked that the recurrence relation is the following.
$$
\norm{\widehat{d}_{t+1} - d_S}_2  \le \beta \left( \norm{\widehat{d}_{t} - d_S}_2 + \norm{\widehat{d}_{t-1} - d_S}_2\right) + \beta \delta 
$$

Now suppose $\beta < 1/3$. Then there are two cases to consider. First, if $\max\set{\norm{\widehat{d}_{t} - d_S}_2 , \norm{\widehat{d}_{t-1} - d_S}_2} < \delta$ then $\norm{\widehat{d}_{t+1} - d_S}_2 < \delta$ and for all subsequent $t' > t+1$ we also have $\norm{\widehat{d}_{t'} - d_S}_2 < \delta$. Otherwise, we have $\norm{\widehat{d}_{t+1} - d_S}_2  \le 2\beta \left( \norm{\widehat{d}_{t} - d_S}_2 + \norm{\widehat{d}_{t-1} - d_S}_2\right)$. Now following an argument very similar to the proof of Theorem~\eqref{thm:convergence-rpo} we can establish that $\norm{\widehat{d}_{t+1} - d_S}_2 \le \frac{2}{1-\gamma}r^t$ for $r = \beta \left(1 + \frac{1}{2}\sqrt{1 + \frac{2}{\beta}}\right)$. As $\beta < 1/3$ we have $r < \beta\left(1 + \frac{1}{2} + \frac{1}{\sqrt{2\beta}}\right) = \frac{3\beta}{2} + \sqrt{\frac{\beta}{2}} < 1$. Therefore, as long as $t \ge \ln \left( \frac{2}{\delta(1-\gamma)}\right) / \ln(1/r)$, we are guaranteed that $\norm{\widehat{d}_t - d_S}_2 \le \delta$.

Now we determine the sufficient conditions for ensuring $\beta < \frac{1}{3}$ and $\varepsilon < \frac{\beta^2 \delta^2}{48} \sqrt{\frac{\kappa}{B}}$. The proof of theorem~\eqref{thm:convergence-rpo} establishes
$$
\beta = \frac{\varepsilon_\theta + \alpha \gamma \sqrt{D} \varepsilon_\mu   }{\lambda \sqrt{\kappa}} + \frac{4 \gamma \varepsilon_\mu \alpha^2}{\sqrt{\kappa}}
$$
Therefore, if $\lambda > \frac{6(\varepsilon_\theta + \alpha \gamma \sqrt{D} \varepsilon_\mu)   }{\sqrt{\kappa B}}$ and $\varepsilon_\mu < \frac{\sqrt{\kappa}}{24 \gamma \alpha^2}$ then $\beta < 1/3$. Finally for the upper bound on $\varepsilon$ we need the number of samples $m_t = O\left( \frac{D^5 B \lambda^4 }{(1-\gamma)^2 c_1^4\varepsilon^2}\log \frac{DB\lambda t}{c_1 \varepsilon p}\right) = O\left( \frac{D^5 B^2 \lambda^4 }{(1-\gamma)^2 c_1^4\delta^4 \kappa}\log \frac{DB\lambda t}{c_1 \delta \kappa p}\right)$.
\end{proof}

\begin{lemma}\label{lem:apx-saddle-point}
    Let us define the saddle points $(d^\star, \nu^\star, g^\star, \omega^\star)$ and $(\widehat{d}, \widehat{\nu}, \widehat{g}, \widehat{\omega})$ as
    $$(d^\star, \nu^\star; g^\star, \omega^\star) \in \argmax_{d, \nu} \min_{g, \omega}\calL_t(d,\nu;g,\omega),\ \textrm{and} \ (\widehat{d}, \widehat{\nu}; \widehat{g}, \widehat{\omega}) \in \argmax_{d, \nu} \min_{g, \omega}\widehat{\calL}_t(d,\nu;g,\omega).$$ Suppose $\max \set{\norm{\Sigma_t^{-1} \nu^\star}_2, \norm{\Sigma_t^{-1} \widehat{\nu}}_2} \le \sqrt{B}$, and $\max\set{\norm{g^\star}_2, \norm{\widehat{g}}_2, \norm{\omega^\star}_2, \norm{\widehat{\omega}}_2} \le \frac{\lambda + 2D}{\min \set{1,\gamma^2 \cdot \underline{\sigma} }}$, and the number of samples $m_t \ge O\left( \frac{D^5 B \lambda^4 }{(1-\gamma)^2 c_1^4\varepsilon^2}\log \frac{DB\lambda }{c_1 \varepsilon \delta_0}\right)$ where $c_1 = \min\set{1, {\gamma^2} \cdot \sigma^\star_{\min}(\bm{\mu}_d \bm{\mu}_d^\top) }$. 
    Then with probability at least $1-\delta_0$ we have,
    $$
    \calL_t(d^\star, \nu^\star; g^\star, \omega^\star) - \calL_t(\widehat{d}, \widehat{\nu}; g^\star, \omega^\star) \le 2 \varepsilon.
    $$
\end{lemma}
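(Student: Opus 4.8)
The plan is to peel off all the sampling-dependent content into a single \emph{uniform closeness} estimate between the empirical and true Lagrangians, after which the lemma follows from a purely soft saddle-point sandwiching argument. Concretely, over the bounded region singled out by the hypotheses (namely $\norm{\Sigma_t^{-1}\nu}_2\le\sqrt{B}$ and $\norm{g}_2,\norm{\omega}_2\le(\lambda+2D)/c_1$, which by the earlier norm computations contains both saddle points, so that restricting the $\max$ and $\min$ to it does not change either saddle value), I would first establish
$$
\sup\ \abs{\widehat{\calL}_t(d,\nu;g,\omega)-\calL_t(d,\nu;g,\omega)}\le\varepsilon,
$$
where $g$ is additionally restricted to minimizers of $\widehat{\calL}_t(d,\nu;\cdot,\omega)$. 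Granting this, the rest is formal.

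For the soft part, write $V^\star=\calL_t(d^\star,\nu^\star;g^\star,\omega^\star)=\max_{d,\nu}\min_{g,\omega}\calL_t$ and $\widehat{V}=\widehat{\calL}_t(\widehat{d},\widehat{\nu};\widehat{g},\widehat{\omega})=\max_{d,\nu}\min_{g,\omega}\widehat{\calL}_t$ for the two (constrained) saddle values. Since $\max_{d,\nu}\min_{g,\omega}(\cdot)$ is $1$-Lipschitz with respect to the sup norm on the objective, uniform closeness gives $\abs{V^\star-\widehat{V}}\le\varepsilon$. I would then chain
$$
\calL_t(\widehat{d},\widehat{\nu};g^\star,\omega^\star)\ \ge\ \widehat{\calL}_t(\widehat{d},\widehat{\nu};g^\star,\omega^\star)-\varepsilon\ \ge\ \widehat{\calL}_t(\widehat{d},\widehat{\nu};\widehat{g},\widehat{\omega})-\varepsilon\ =\ \widehat{V}-\varepsilon\ \ge\ V^\star-2\varepsilon,
$$
where the second inequality uses that $(\widehat{g},\widehat{\omega})$ minimizes $\widehat{\calL}_t(\widehat{d},\widehat{\nu};\cdot,\cdot)$. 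Rearranging yields $V^\star-\calL_t(\widehat{d},\widehat{\nu};g^\star,\omega^\star)\le2\varepsilon$, which is exactly the claim.

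The work is in the uniform closeness estimate. The difference $\widehat{\calL}_t-\calL_t$ only collects the terms in which a sample average replaces an expectation; in particular the bilinear term $\langle d,\Phi\omega-B^\top g\rangle$ is identical in both and cancels, so the difference is \emph{independent of} $d$. It splits into a reward term and a $\phi(s,a)\phi(s,a)^\top\omega$ term, both bilinear in the $D$-dimensional vectors $\Sigma_t^{-1}\nu$ and $\omega$, plus the two $g$-dependent pieces coming from $\gamma\,\nu^\top\Sigma_t^{-1}\phi(s,a)g(s')$ and $g(s^0)$. The finite-dimensional terms I would control by an $\varepsilon$-net over the balls containing $\Sigma_t^{-1}\nu$ and $\omega$ together with matrix/vector Bernstein applied to $\widehat{\Sigma}_t-\Sigma_t$ and the reward moments. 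For the $g$-pieces I would exploit the linear-MDP structure: since $\E_{s'\sim P_t(\cdot\mid s,a)}[g(s')]=\phi(s,a)^\top(\bm{\mu}_t g)$, the conditional mean of $\phi(s,a)g(s')$ equals $\phi(s,a)\phi(s,a)^\top(\bm{\mu}_t g)$, so the whole \emph{bias} of the empirical $g$-term depends on $g$ only through $w_g=\bm{\mu}_t g\in\R^D$, and can be netted over a $D$-dimensional ball; the true value reduces to $\gamma\,\nu^\top w_g+\langle\rho,g\rangle$.

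The main obstacle is the uniformity over the infinite-dimensional $g$ in the residual martingale term $\tfrac{1}{m_t}\sum_j\big((\Sigma_t^{-1}\nu)^\top\phi(s_j,a_j)\big)\big(g(s'_j)-\phi(s_j,a_j)^\top\bm{\mu}_t g\big)$: its summands are conditionally mean-zero and bounded by $O(R_g)$, so for a \emph{fixed} $g$ Bernstein suffices, but a naive net over $\{g:\norm{g}_2\le R_g\}\subseteq\R^S$ has size growing with $S$. I would circumvent this exactly as the sketch indicates, bounding the deviation only at minimizers $\tilde{g}\in\argmin_g\widehat{\calL}_t(d,\nu;\cdot,\omega)$. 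Because $\widehat{\calL}_t$ is linear in $g$ over the radius-$R_g$ ball, each minimizer is the normalized negative of its (data-dependent) linear coefficient vector, whose relevant coordinates—its values at the sampled states $s'_j,s^0_j$ and the weights $(\Sigma_t^{-1}\nu)^\top\phi(s_j,a_j)$—are parametrized by the low-dimensional quantity $\Sigma_t^{-1}\nu$ (a $\sqrt{B}$-ball) and the empirical occupancy, so a net over these few degrees of freedom controls the residual uniformly. Feeding the norm bounds $\norm{\Sigma_t^{-1}\nu}_2\le\sqrt{B}$ and $R_g=(\lambda+2D)/c_1$ into the Bernstein tail and balancing the net resolution against the union bound over the net and the failure budget $\delta_0$ yields the stated sample size $m_t=O\!\big(\tfrac{D^5B\lambda^4}{(1-\gamma)^2c_1^4\varepsilon^2}\log\tfrac{DB\lambda}{c_1\varepsilon\delta_0}\big)$, completing the estimate and hence the lemma.
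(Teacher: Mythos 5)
Your overall architecture --- a uniform empirical-versus-true estimate over the bounded region, with $g$ restricted to the low-complexity class of minimizers of $\widehat{\calL}_t(d,\nu;\cdot,\omega)$, followed by a soft sandwiching step --- is the same as the paper's, and your concentration plan (the cancellation of the $d$-dependent bilinear term, $\varepsilon$-nets over the $D$-dimensional balls for $\Sigma_t^{-1}\nu$ and $\omega$, and the low-dimensional parametrization of the empirical minimizers in $g$) matches the paper's netting argument. The gap is precisely in the part you call ``formal''. Your chain opens with
$$
\calL_t(\widehat{d},\widehat{\nu};g^\star,\omega^\star)\ \ge\ \widehat{\calL}_t(\widehat{d},\widehat{\nu};g^\star,\omega^\star)-\varepsilon ,
$$
i.e.\ it invokes your closeness estimate at the point $(\widehat{d},\widehat{\nu};g^\star,\omega^\star)$. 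But $g^\star$ is the $g$-component of the \emph{true} saddle point; it is not a minimizer of $\widehat{\calL}_t(\widehat{d},\widehat{\nu};\cdot,\omega^\star)$, so the estimate you established --- which you correctly restricted to $g\in\argmin_{g'}\widehat{\calL}_t(d,\nu;g',\omega)$, exactly because uniformity over all of $g$-space is unattainable --- does not cover it. The same restriction undercuts the appeal to $1$-Lipschitzness of $\max\min$ for $\abs{V^\star-\widehat{V}}\le\varepsilon$: that argument needs closeness over the whole feasible set, not just at empirical minimizers. Only the direction $\widehat{V}\ge V^\star-\varepsilon$ survives (pass through the joint empirical minimizer $(\widehat{g}(d^\star,\nu^\star),\widehat{\omega}(d^\star,\nu^\star))$ at $(d^\star,\nu^\star)$, where your estimate does apply); fortunately that is the only direction your chain uses, so this second issue is a matter of writing the argument out rather than a flaw.

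The first issue is the genuinely delicate point of the lemma, and it is why the paper does not use a three-step chain: its \Cref{lem:saddle-point-approximation} telescopes $\calL_t(d^\star,\nu^\star;g^\star,\omega^\star)-\calL_t(\widehat{d},\widehat{\nu};g^\star,\omega^\star)$ into six terms, inserting both the empirical minimizer at $(d^\star,\nu^\star)$ and the true minimizer $\tilde{g}\in\argmin_{g}\calL_t(\widehat{d},\widehat{\nu};g,\omega^\star)$ at $(\widehat{d},\widehat{\nu})$, so that empirical-versus-true comparisons occur only at minimizers and every remaining term is nonpositive by optimality. The cleanest repair of \emph{your} chain is different and arguably simpler: prove one additional, pointwise concentration bound at the single pair $(g^\star,\omega^\star)$, uniform over $\nu$ with $\norm{\Sigma_t^{-1}\nu}_2\le\sqrt{B}$. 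This is easy because $(g^\star,\omega^\star)$ is determined by the population Lagrangian $\calL_t$, hence fixed before the data of round $t$ are drawn, the difference $\widehat{\calL}_t-\calL_t$ is independent of $d$, and the $\nu$-dependence enters only through $\nu^\top\Sigma_t^{-1}(\cdot)$, so Cauchy--Schwarz plus vector concentration on the $D$-dimensional sample averages suffices; with that extra estimate your three inequalities all hold and give the $2\varepsilon$ bound. As written, however, the central display of your proposal is not justified by the estimate it rests on.
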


\begin{proof}
    First, note that the expected value of the empirical Lagrangian equals $\calL(d,\nu; g, \omega)$.
    \begin{align*}
    &\E_{(s,a) \sim d_{\pi_t}}\left[ \widehat{\calL}_t(d,\nu; g, \omega) \right] \\
    &= \nu^\top \Sigma_t^{-1} \cdot \frac{1}{m_t} \sum_{j=1}^{m_t} \E\left[ \phi(s_j,a_j) \left( r_t(s_j,a_j)  + \gamma \cdot g(s'_j) - \phi(s_j,a_j)^\top \omega \right) \right] \nonumber \\
    &- \frac{\lambda}{2}\nu^\top \nu + \frac{1}{m_t} \sum_{j=1}^{m_t} \E\left[g(s^0_j)\right] + \left \langle d, \Phi w - B^\top g\right \rangle \\
    &= \nu^\top \Sigma_t^{-1} \cdot \frac{1}{m_t} \sum_{j=1}^{m_t} \E\left[ \phi(s_j,a_j) \left(\phi(s_j, a_j)^\top \theta_t + \sum_{s'} P(s' \mid s_j, a_j)g(s') - \phi(s_j, a_j)^\top \omega \right)\right]\\
     &- \frac{\lambda}{2}\nu^\top \nu + \frac{1}{m_t} \sum_{j=1}^{m_t} \sum_s \rho(s) g(s) + \left \langle d, \Phi w - B^\top g\right \rangle \\
     &= \nu^\top \Sigma_t^{-1} \cdot \frac{1}{m_t} \sum_{j=1}^{m_t} \E\left[ \phi(s_j,a_j) \left(\phi(s_j, a_j)^\top \theta_t + \sum_{s'} \phi(s_j, a_j)^\top \bm{\mu}_t(s') g(s') - \phi(s_j, a_j)^\top \omega \right)\right]\\
     &- \frac{\lambda}{2}\nu^\top \nu + \rho^\top g + \left \langle d, \Phi w - B^\top g\right \rangle \\
     &= \nu^\top \Sigma_t^{-1} \cdot \frac{1}{m_t} \sum_{j=1}^{m_t} \E\left[ \phi(s_j, a_j) \phi(s_j, a_j)^\top \right] \left( \theta_t + \gamma \cdot \bm{\mu}_t g - \omega \right) \\
     &-\frac{\lambda}{2}\nu^\top \nu + \rho^\top g + \left \langle d, \Phi w - B^\top g\right \rangle \\
     &= \nu^\top \left( \theta_t + \gamma \cdot \bm{\mu}_t g - \omega \right) -\frac{\lambda}{2}\nu^\top \nu + \rho^\top g + \left \langle d, \Phi w - B^\top g\right \rangle\\
     &= \calL_t(d,\nu; g, \omega)
    \end{align*}

    Since $\norm{g}_2 \le \frac{\lambda + 2D}{\min \set{1, \gamma^2 \underline{\sigma}}}$, we can apply the Chernoff-Hoeffding inequality and obtain the following bound.
    \begin{align}
        \Pr\left(\abs{\frac{1}{m_t}\sum_{j=1}^{m_t} g(s_j^0) - \rho^\top g} \ge \frac{\lambda + 2D}{\min \set{1, \gamma^2 \underline{\sigma}}}\sqrt{\frac{\log\left( {4}/{\delta_1}\right)}{m_t} }\right) \le \frac{\delta_1}{2} \label{eq:prob-bound-term-1}
    \end{align}
    Moreover, for any $j$ we have,
    \begin{align*}
    &\nu^\top \Sigma_t^{-1} \phi(s_j, a_j) \left( r_t(s_j, a_j) + \gamma \cdot g(s'_j) - \phi(s_j, a_j)^\top \omega \right) \\
    &\le \norm{\nu^\top \Sigma_t^{-1}}_2 \norm{\phi(s_j,a_j)}_2 \abs{r_t(s_j, a_j) + \gamma \cdot g(s'_j) - \phi(s_j,a_j)^\top \omega}\\
    &\le \sqrt{BD}\left(\abs{r_t(s_j, a_j)} + \gamma \cdot \abs{g(s'_j)} + \norm{\phi(s_j,a_j)}_2 \norm{\omega}_2\right)\\
    &\le \sqrt{BD} \left(1 + \gamma \cdot \frac{\lambda + 2D}{\min \set{1, \gamma^2 \underline{\sigma}}} + \sqrt{D}  \cdot \frac{\lambda + 2D}{\min \set{1, \gamma^2 \underline{\sigma}}} \right) = \underbrace{ \sqrt{BD} \left(1 + \frac{(\gamma + \sqrt{D})(\lambda + 2D)}{\min \set{1, \gamma^2 \underline{\sigma}}} \right)}_{:= H}
    \end{align*}
    Then we can apply Chernoff-Hoeffding inequality and obtain the following bound.
    \begin{align}
        &\Pr\left(\abs{\frac{1}{m_t}\sum_{j=1}^{m_t} \nu^\top \Sigma_t^{-1} \phi(s_j, a_j) \left( r_t(s_j, a_j) + \gamma \cdot g(s'_j) - \phi(s_j, a_j)^\top \omega \right) - \nu^\top \left( \theta_t + \gamma \cdot \bm{\mu}_t^\top g - \omega \right)} \right.\nonumber\\
        &\quad \left.\ge H\sqrt{\frac{\log\left( {4}/{\delta_1}\right)}{m_t} }\right) \le \frac{\delta_1}{2} \label{eq:prob-bound-term-2}
    \end{align}
    Using the bounds derived in equations~\eqref{eq:prob-bound-term-1} and \eqref{eq:prob-bound-term-2} we get the following bound.
    \begin{align}\label{eq:preliminary-bound}
        \Pr\left(\abs{\widehat{\calL}_t(d,\nu; g, \omega) - \calL_t(d,\nu;g,\omega)} \ge \left( H + \frac{\lambda + 2D}{\min \set{1, \gamma^2 \underline{\sigma}}}\right) \sqrt{\frac{\log\left( {4}/{\delta_1}\right)}{m_t} }\right) \le \delta_1
    \end{align}
    Note that the difference term $\abs{\widehat{\calL}_t(d,\nu; g, \omega) - \calL_t(d,\nu;g,\omega)}$ is independent of $d$ and the bound derived in \cref{eq:preliminary-bound} holds for any $d$. We now extend the bound for any $\nu$ and $\omega$. 
    
    We can assume that $\omega \in \Omega = \set{\omega: \norm{\omega}_2 \le \frac{\lambda + 2D}{\min \set{1, \gamma^2 \underline{\sigma}}}}$. By lemma 5.2 of \cite{Vershy10} there is an $\varepsilon$-net $\Omega_\varepsilon$ of size at most $\left(1 + \frac{2(\lambda + 2D)}{\min \set{1, \gamma^2 \underline{\sigma}} \cdot \varepsilon} \right)^D$, of the set $\Omega$ so that for any $\omega$ there exists $\omega'$ so that $\norm{\omega - \omega'}_2 \le \varepsilon$. 

    On the other hand, $\nu \in \calV = \set{\nu: \norm{\Sigma_t^{-1} \nu}_2 \le \sqrt{B}}$. In order to construct an $\varepsilon$-net of the set $\calV$, consider the set $\set{y : \norm{y} \le \sqrt{B}}$. There is an $\varepsilon$-net  $\calC_\varepsilon$ of this set of cardinality $\left(1 + \frac{2\sqrt{B}}{\varepsilon} \right)^D$. Now consider the following set $\calV_\varepsilon = \set{\Sigma_t y: y \in \calC_\varepsilon}$. Given any $\nu \in \calV$ we know there exists $\tilde{y} \in \calC_\varepsilon$ so that $\norm{\Sigma_t^{-1} \nu - \tilde{y}}_2 \le \varepsilon$. Equivalently, there exists $\tilde{\nu}$ so that $\norm{\Sigma_t^{-1} \left(\nu - \tilde{\nu}\right)}_2 \le \varepsilon$.

    Moreover, for a pair of $(\omega, \nu) \in \Omega_\varepsilon \times \calV_\varepsilon$, we will fix a choice of $g$ which is given as the maximizer of the following optimization problem.
    \begin{align}\label{eq:defn-g}
    g = g(\omega, \nu) \in \argmin_{g': \norm{g'}_2 \le G} \widehat{\calL}(d, \nu; g', \omega)
    \end{align}
    Note that, from the definition of the empirical Lagrangian~\eqref{eq:empirical-Lagrangian}, we can choose $g(\omega, \nu)$ to be $G$ times unit vector with support in the set $\set{s_j^0: j \in [m_t]}$ or $ \cup \set{s_j': j  \in [m_t]}$. In particular let $\calG = \set{G \cdot \one_s : s \in \set{s^0_j, s'_j},\ j \in [m_t]}$. Now,
    by union bound over $\abs{\Omega_\varepsilon} \times \abs{\calV_\varepsilon}$ tuples, we can extend \cref{eq:preliminary-bound} i.e. the following event holds with probability at least $1-\delta_0$ for any $\omega \in \Omega_\varepsilon$, $\nu \in \calV_\varepsilon$, and $g \in \calG$ defined above.
    \begin{align}\label{eq:prelim-concentraion-inequality}
\abs{\widehat{\calL}_t(d,\nu; g, \omega) - \calL_t(d,\nu;g,\omega)} \le \underbrace{\left( H + \frac{\lambda + 2D}{\min \set{1, \gamma^2 \underline{\sigma}}}\right) \sqrt{\frac{D}{m_t}}\sqrt{ \log\left(\frac{32DB(\lambda + 2D)m_t}{\min \set{1, \gamma^2 \underline{\sigma}} \cdot \varepsilon \cdot \delta_0}\right)}}_{:= f(\varepsilon)} 
    \end{align}

Now given any $\nu, \omega$ let us pick $\tilde{\omega} \in \Omega_\varepsilon$ and $\tilde{\nu} \in \calV_\varepsilon$ so that $\norm{\omega - \tilde{\omega}}_2 \le \varepsilon$ and $\norm{\Sigma_t^{-1}\left(\nu - \tilde{\nu} \right) }_2 \le \varepsilon$. Moreover, we define $g$  as $g = g(\omega, \nu)$  (as defined in \cref{eq:defn-g}). Then we have,
\begin{align*}
    \calL_t(d,\nu;g,\omega) - \widehat{\calL}_t(d,\nu; g, \omega) &\le \abs{\calL_t(d,\nu;g,\omega) - \calL_t(d,\tilde{\nu};{g},\tilde{\omega})} + \abs{{\calL}_t(d,\tilde{\nu};g,\tilde{\omega}) - \widehat{\calL}_t(d,\tilde{\nu};{g},\tilde{\omega})} \\
    &+ \abs{\widehat{\calL}_t(d,{\nu};g,{\omega}) - \widehat{\calL}_t(d,\tilde{\nu};{g},\tilde{\omega})}
\end{align*}
Since $g \in \calG$, the second term is bounded by $f(\varepsilon)$, by \cref{eq:prelim-concentraion-inequality}. Moreover, we can apply \Cref{lem:bound-diff-nu-omega} to bound the first and the third term as follows.

$$ \abs{\calL_t(d,\nu;g,\omega) - {\calL}_t(d,\tilde{\nu};g,\tilde{\omega})} \le \varepsilon  D \left(\sqrt{D} + \left(1 + \gamma \sqrt{D} \frac{\lambda + 2D}{\min \set{1, \gamma^2 \underline{\sigma}}} \right) \cdot  \frac{\lambda + 2D}{\min \set{1, \gamma^2 \underline{\sigma}}} + \lambda D \sqrt{B} \right) + \varepsilon  \left(D\sqrt{B} + \frac{1}{1-\gamma} \right)
    $$
    and
    $$
    \abs{\widehat{\calL}_t(d,\nu;g,\omega) - \widehat{\calL}_t(d,\tilde{\nu};g,\tilde{\omega})} \le \varepsilon \left( \sqrt{D} + \left(1 + \gamma \sqrt{D} \frac{\lambda + 2D}{\min \set{1, \gamma^2 \underline{\sigma}}} \right) \cdot  \frac{\lambda + 2D}{\min \set{1, \gamma^2 \underline{\sigma}}} + \lambda \cdot D^2 \sqrt{B}\right) + \varepsilon  \left(\sqrt{B} + \frac{1}{1-\gamma}\right)
    $$
    Suppose $m_t$ is chosen so that 
    \begin{equation}\label{defn:eps_1}
    \frac{D}{m_t} \log \frac{64 D^2 B m_t}{\min \set{1, \gamma^2 \underline{\sigma}} \varepsilon \delta_0} \le \varepsilon^2_1
    \end{equation} 
    then it can be easily seen that $f(\varepsilon_1) \le \left( H + \frac{\lambda + 2D}{\min \set{1, \gamma^2 \underline{\sigma}}}\right)  \varepsilon_1 \sqrt{2 \log \lambda}$. Then the following bound holds for any $\nu \in \calV$, $\omega \in \Omega$ and $g = g(\omega, \nu)$.
    \begin{align*}
            \abs{\calL_t(d,\nu;g,\omega) - \widehat{\calL}_t(d,\nu; g, \omega) } &\le 2\varepsilon_1  \left( H + \frac{\lambda + 2D}{\min \set{1, \gamma^2 \underline{\sigma}}}\right) \left(1 + \gamma \sqrt{D} \frac{\lambda + 2D}{\min \set{1, \gamma^2 \underline{\sigma}}} + \sqrt{2 \log \lambda} \right) \\
            &+ 2\varepsilon_1 \left(\lambda D^2 \sqrt{B} + D^{3/2}\sqrt{B} + \frac{1}{1-\gamma} \right)\\
            &\le 8\varepsilon_1 \sqrt{B} D^{3/2}\left( \frac{\lambda + 2D}{\min \set{1, \gamma^2 \underline{\sigma}}}\right)^2 + \frac{4\varepsilon\cdot \lambda D^2 \sqrt{B}}{1-\gamma}\\
            &\le \frac{16 \varepsilon_1 D^2 \sqrt{B}}{1-\gamma} \left( \frac{\lambda + 2D}{\min \set{1, \gamma^2 \underline{\sigma}}}\right)^2
    \end{align*}
    Now we can substitute $\varepsilon = \frac{16 \varepsilon_1 D^2 \sqrt{B}}{1-\gamma} \left( \frac{\lambda + 2D}{\min \set{1, \gamma^2 \underline{\sigma}}}\right)^2$ in \cref{defn:eps_1}, and
     apply \cref{lem:saddle-point-approximation} to obtain the required bound.
\end{proof}

\begin{lemma}\label{lem:bound-diff-nu-omega}
    Suppose, we are given $\nu, \tilde{\nu}$ such that $\norm{\Sigma^{-1}(\nu -\tilde{\nu}) }_2 \le \varepsilon_1$ and $\max\set{\norm{\Sigma^{-1}\nu}_2, \norm{\Sigma^{-1}\tilde{\nu}}_2}\le V$, and $\omega, \tilde{\omega}$ such that $\norm{\omega - \tilde{\omega}}_2 \le \varepsilon_2$, and $\max \set{\norm{\omega}_2, \norm{\tilde{\omega}}_2} \le W$. Then for any $g$ with $\norm{g}_2 \le G$ we have,
    $$
    \abs{\calL(d,\nu;g,\omega) - {\calL}(d,\tilde{\nu};g,\tilde{\omega})} \le \varepsilon_1 D \left(\sqrt{D} + \gamma \sqrt{D} G + W + \lambda D V \right) + \varepsilon_2 \left(DV + \frac{1}{1-\gamma} \right)
    $$
    and
    $$
    \abs{\widehat{\calL}(d,\nu;g,\omega) - \widehat{\calL}(d,\tilde{\nu};g,\tilde{\omega})} \le \varepsilon_1\left( \sqrt{D} + \gamma G + W + \lambda \cdot D^2 V\right) + \varepsilon_2 \left( V + \frac{1}{1-\gamma}\right)
    $$
\end{lemma}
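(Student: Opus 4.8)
The plan is to bound each difference by a telescoping decomposition that changes one perturbed argument at a time, exploiting that both $\calL$ and $\widehat{\calL}_t$ are quadratic (strongly concave) in $\nu$ but affine in $d$, $g$, and $\omega$. Since $d$ and $g$ are held fixed throughout, I write
$$\calL(d,\nu;g,\omega) - \calL(d,\tilde{\nu};g,\tilde{\omega}) = \big[\calL(d,\nu;g,\omega) - \calL(d,\tilde{\nu};g,\omega)\big] + \big[\calL(d,\tilde{\nu};g,\omega) - \calL(d,\tilde{\nu};g,\tilde{\omega})\big],$$
and the identical split for $\widehat{\calL}_t$. The first bracket isolates the $\nu$-dependent part and the second the $\omega$-dependent part; each reduces to a linear or quadratic form handled by Cauchy--Schwarz.

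For the $\nu$-step, collect the $\nu$-terms of $\calL$, namely $\nu^\top(\theta_t + \gamma \bm{\mu}_t^\top g - \omega) - \frac{\lambda}{2}\norm{\nu}_2^2$. The linear part contributes $(\nu-\tilde{\nu})^\top(\theta_t + \gamma \bm{\mu}_t^\top g - \omega)$; since its coefficient is \emph{not} preconditioned, I pass to the Euclidean norm via $\norm{\nu-\tilde{\nu}}_2 \le \norm{\Sigma}_2 \norm{\Sigma^{-1}(\nu-\tilde{\nu})}_2 \le \norm{\Sigma}_2 \varepsilon_1$ and bound the coefficient by $\norm{\theta_t}_2 \le \sqrt{D}$, $\norm{\bm{\mu}_t^\top g}_2 \le \sqrt{D}\,G$, and $\norm{\omega}_2 \le W$. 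For the quadratic part I use the identity $\norm{\nu}_2^2 - \norm{\tilde{\nu}}_2^2 = \big(\Sigma^{-1}(\nu-\tilde{\nu})\big)^\top \Sigma^2 \big(\Sigma^{-1}(\nu+\tilde{\nu})\big)$, bounded by $\varepsilon_1 \norm{\Sigma}_2^2 \cdot 2V$, so its contribution is at most $\lambda \norm{\Sigma}_2^2 V \varepsilon_1$. In $\widehat{\calL}_t$ the same $\nu$-terms appear but with coefficient $\Sigma^{-1} \frac{1}{m_t}\sum_j \phi(s_j,a_j)\big(r_t(s_j,a_j) + \gamma g(s_j') - \phi(s_j,a_j)^\top\omega\big)$; here $\Sigma^{-1}$ is attached to the data vector, so the linear difference is $\big(\Sigma^{-1}(\nu-\tilde{\nu})\big)^\top \frac{1}{m_t}\sum_j \phi(\cdots)$ and is controlled \emph{directly} by $\varepsilon_1$ times the unpreconditioned norm of the empirical average --- this is precisely why the empirical bound omits the extra factor $\norm{\Sigma}_2$.

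For the $\omega$-step, after fixing $\nu=\tilde{\nu}$ the $\omega$-dependence of $\calL$ collapses to $\omega^\top(\Phi^\top d - \tilde{\nu})$, so the difference equals $(\omega-\tilde{\omega})^\top(\Phi^\top d - \tilde{\nu})$ and is at most $\varepsilon_2\big(\norm{\Phi^\top d}_2 + \norm{\tilde{\nu}}_2\big)$; I then invoke $\norm{\Phi^\top d}_2 \le \frac{1}{1-\gamma}$ (from boundedness of $\Phi$ and the $\ell_1$ bound $\norm{d}_1 \le \frac{1}{1-\gamma}$) and $\norm{\tilde{\nu}}_2 \le \norm{\Sigma}_2 V$. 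For $\widehat{\calL}_t$ the relevant vector is instead $\Phi^\top d - \frac{1}{m_t}\sum_j \phi(s_j,a_j)\phi(s_j,a_j)^\top \Sigma^{-1}\tilde{\nu}$, whose second term has norm at most $\norm{\frac{1}{m_t}\sum_j \phi(s_j,a_j)\phi(s_j,a_j)^\top}_2 \norm{\Sigma^{-1}\tilde{\nu}}_2 \le V$, giving the empirical $\varepsilon_2$ bound. Summing the $\nu$- and $\omega$-contributions and comparing with the stated prefactors yields both inequalities.

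The main obstacle --- and the only genuinely delicate bookkeeping --- is that $\nu$ is controlled in the $\Sigma^{-1}$-weighted norm whereas $\omega$ is controlled in the Euclidean norm, while the preconditioner $\Sigma^{-1}$ sits in different places in $\calL$ (where $\Sigma^{-1}\E_{\pi_t}[\phi\phi^\top] = \Identity$ makes it disappear) versus $\widehat{\calL}_t$ (where it remains attached to the empirical average). In each term I must decide whether to keep $\Sigma^{-1}$ adjacent to $\nu-\tilde{\nu}$, so that $\varepsilon_1$ is consumed directly, or to expand $\norm{\nu-\tilde{\nu}}_2 \le \norm{\Sigma}_2 \varepsilon_1$ and pay a spectral factor; making this choice consistently is exactly what accounts for the extra factor of $D$ separating the true and empirical bounds. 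Everything else is the triangle inequality, Cauchy--Schwarz, and the uniform norm bounds on $\theta_t$, $\bm{\mu}_t$, $\Phi$, and $d$ already recorded in the model section.
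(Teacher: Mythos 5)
Your proposal is correct and follows essentially the same route as the paper's proof: the identical one-variable-at-a-time telescoping split (first $\nu$, then $\omega$), Cauchy--Schwarz on each bracket, and the same norm bounds $\norm{\nu-\tilde{\nu}}_2 \le \norm{\Sigma}_2\varepsilon_1 \le D\varepsilon_1$, $\norm{\nu}_2 \le DV$, $\norm{\theta_t}_2, \norm{\bm{\mu}_t}_2 \le \sqrt{D}$, $\norm{\Phi^\top d}_2 \le \frac{1}{1-\gamma}$, with the same observation that keeping $\Sigma^{-1}$ attached to $\nu-\tilde{\nu}$ in the empirical Lagrangian is what removes the extra spectral factor. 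The only cosmetic difference is that you handle the quadratic term via the identity $(\nu-\tilde{\nu})^\top(\nu+\tilde{\nu}) = \bigl(\Sigma^{-1}(\nu-\tilde{\nu})\bigr)^\top\Sigma^2\bigl(\Sigma^{-1}(\nu+\tilde{\nu})\bigr)$ separately, whereas the paper folds it into a single Cauchy--Schwarz with the linear part; both yield the same $\lambda D^2 V \varepsilon_1$ contribution.
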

\begin{proof}
    \begin{align*}
        \abs{\calL(d,\nu;g,\omega) - {\calL}(d,\tilde{\nu};g,\tilde{\omega})} \le \abs{\calL(d,\nu;g,\omega) - {\calL}(d,\tilde{\nu};g,{\omega})} + \abs{\calL(d,\tilde{\nu};g,\omega) - {\calL}(d,\tilde{\nu};g,\tilde{\omega})}
    \end{align*}
    Using the definition of $\calL(\cdot)$~\cref{defn:Lagrangian} we obtain the following bound.
    \begin{align*}
        &\abs{\calL(d,\nu;g,\omega) - {\calL}(d,\tilde{\nu};g,\tilde{\omega})} \le \abs{(\nu -\tilde{\nu})^\top\left( \theta_t + \gamma \cdot \bm{\mu}_t^\top g - \omega\right) - \frac{\lambda}{2}\left( \norm{\nu}_2^2 - \norm{\tilde{\nu}}_2^2\right)}\\
        &+ \abs{(\tilde{\omega}-\omega)^\top \tilde{\nu} + d^\top \Phi (\omega -\tilde{\omega})}\\
        &\le \norm{\nu - \tilde{\nu}}_2 \norm{\theta_t + \gamma \cdot \bm{\mu}_t^\top g - \omega - \frac{\lambda}{2}\left( \nu + \tilde{\nu}\right) }_2 + \norm{\omega - \tilde{\omega}}_2 \norm{- \tilde{\nu} + \Phi^\top d}_2\\
        &\le D \varepsilon_1 \left(\sqrt{D} + \gamma \sqrt{D} G + W + \lambda D V \right) + \varepsilon_2 \left(DV + \frac{1}{1-\gamma} \right)
    \end{align*}
The last inequality uses (a) $\norm{\nu - \tilde{\nu}}_2 = \norm{\Sigma \Sigma^{-1} (\nu - \tilde{\nu})}_2 \le \varepsilon_1 \norm{\Sigma}_2 \le D \varepsilon_1$,  (b) for a linear MDP, $\norm{\theta_t}_2 \le \sqrt{D}$ and $\norm{\bm{\mu}_t}_2 \le \sqrt{D}$. Second, $\norm{\Phi^\top d}_2 = \norm{\sum_{s,a} d(s,a) \phi(s,a)}_2 \le \sum_{s,a} d(s,a) \norm{\phi(s,a)}_2 \le \frac{1}{1-\gamma}$.

Now let us write $\phi_j = \phi(s_j, a_j)$, and $r_j = r(s_j, a_j)$. Then from the definition of empirical Lagrangian~\eqref{eq:empirical-Lagrangian} we obtain the following bound.
  \begin{align*}
        &\abs{\widehat{\calL}(d,\nu;g,\omega) - \widehat{\calL}(d,\tilde{\nu};g,\tilde{\omega})} \le \abs{\widehat{\calL}(d,\nu;g,\omega) - \widehat{\calL}(d,\tilde{\nu};g,{\omega})} + \abs{\widehat{\calL}(d,\tilde{\nu};g,\omega) - \widehat{\calL}(d,\tilde{\nu};g,\tilde{\omega})}\\
        &\le \abs{(\nu - \tilde{\nu})^\top \Sigma^{-1} \cdot \frac{1}{m} \sum_{j=1}^m \phi_j\left( r_j + \gamma \cdot g(s'_j) - \phi_j^\top \omega \right)  - \frac{\lambda}{2}\left(\norm{\nu}_2^2 - \norm{\tilde{\nu}}_2^2\right) }\\
        &+ \abs{  \tilde{\nu}^\top \Sigma^{-1} \cdot \frac{1}{m} \sum_{j=1}^m \phi_j  \phi_j^\top \left(\tilde{\omega} - \omega \right)} + \abs{\left \langle d, \Phi(\omega - \tilde{\omega})\right \rangle}\\
        &\le \norm{(\nu - \tilde{\nu})^\top \Sigma^{-1}}_2 \left(\norm{\frac{1}{m} \sum_{j=1}^m \phi_j   \left(r_j + \gamma \cdot g(s'_j) - \phi_j^\top \omega \right)}_2 + \frac{\lambda}{2} \norm{\Sigma(\nu + \tilde{\nu})}_2\right)\\
        &+ \left( \norm{\tilde{\nu}^\top \Sigma^{-1}}_2 \norm{\frac{1}{m}\sum_{j=1}^m \phi_j \phi_j^\top }_2 + \norm{\Phi^\top d}_2 \right) \norm{\omega - \tilde{\omega}}_2\\
        &\le \varepsilon_1\left( \sqrt{D} + \gamma G + W + \lambda \cdot D^2 V\right) + \varepsilon_2 \left( V + \frac{1}{1-\gamma}\right)
    \end{align*}
The last inequality uses the following observations -- (a) $\abs{r_j} = \abs{\phi_j^\top \theta_t} \le \norm{\phi_j}_2 \norm{\theta_t}_2 \le \sqrt{D}$, (b) $\norm{\Phi^\top d}_2 = \norm{\sum_{s,a} d(s,a) \phi(s,a)}_2 \le \sum_{s,a} d(s,a) \norm{\phi(s,a)}_2 \le \frac{1}{1-\gamma}$, and (c)$\norm{\nu}_2 \le \norm{\Sigma}_2 \norm{\Sigma^{-1}\nu}_2 \le DV$.
\end{proof}

\begin{lemma}\label{lem:saddle-point-approximation}
    Suppose $$(d^\star, \nu^\star; g^\star, \omega^\star) \in \argmax_{d, \nu} \min_{g, \omega}\calL(d,\nu;g,\omega)$$ and $$(\widehat{d}, \widehat{\nu}; \widehat{g}, \widehat{\omega}) \in \argmax_{d, \nu} \min_{g, \omega}\widehat{\calL}(d,\nu;g,\omega).$$ Moreover, given any $d,\nu, \omega$ suppose the following inequality holds.
    $$
    \abs{\calL(d,\nu;\widehat{g},\omega) - \widehat{\calL}(d,\nu; \widehat{g},\omega)} \le \varepsilon \quad \textrm{ where } \widehat{g} \in \argmin_g \widehat{\calL}(d,\nu; g, \omega)
    $$
    Then we have,
    $$
    \calL(d^\star, \nu^\star; g^\star, \omega^\star) - \calL(\widehat{d}, \widehat{\nu}; g^\star, \omega^\star) \le 2 \varepsilon.
    $$
\end{lemma}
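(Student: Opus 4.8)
The plan is to avoid comparing $\calL$ and $\widehat{\calL}$ directly at the prescribed dual point $(g^\star,\omega^\star)$ --- which the hypothesis does not control --- and instead to route the whole argument through the inner-minimized value functions. Define $G(d,\nu):=\min_{g,\omega}\calL(d,\nu;g,\omega)$ and $\widehat{G}(d,\nu):=\min_{g,\omega}\widehat{\calL}(d,\nu;g,\omega)$. By the definition of the two saddle points, $G(d^\star,\nu^\star)=\calL(d^\star,\nu^\star;g^\star,\omega^\star)$ and $\widehat{G}(\widehat d,\widehat\nu)=\widehat{\calL}(\widehat d,\widehat\nu;\widehat g,\widehat\omega)=:\widehat V$, and moreover $(d^\star,\nu^\star)\in\argmax_{d,\nu}G(d,\nu)$ and $(\widehat d,\widehat\nu)\in\argmax_{d,\nu}\widehat{G}(d,\nu)$. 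I would first show that $G$ and $\widehat{G}$ are uniformly $\varepsilon$-close, and then a short chain of ``value $\ge$ min'' and ``max $\ge$ value'' inequalities yields the claim; only these saddle-point definitions are used, not strong duality.

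The crux is the uniform closeness $\abs{G(d,\nu)-\widehat{G}(d,\nu)}\le\varepsilon$. Fix $(d,\nu,\omega)$ and let $\widehat g\in\argmin_g\widehat{\calL}(d,\nu;g,\omega)$, so the hypothesis gives $\abs{\calL(d,\nu;\widehat g,\omega)-\widehat{\calL}(d,\nu;\widehat g,\omega)}\le\varepsilon$. Since $\widehat g$ is a feasible competitor for the inner minimization of $\calL$, we get $\min_g\calL(d,\nu;g,\omega)\le\calL(d,\nu;\widehat g,\omega)\le\widehat{\calL}(d,\nu;\widehat g,\omega)+\varepsilon=\min_g\widehat{\calL}(d,\nu;g,\omega)+\varepsilon$; conversely $\min_g\widehat{\calL}(d,\nu;g,\omega)=\widehat{\calL}(d,\nu;\widehat g,\omega)\ge\calL(d,\nu;\widehat g,\omega)-\varepsilon\ge\min_g\calL(d,\nu;g,\omega)-\varepsilon$. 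Hence $\abs{\min_g\calL(d,\nu;g,\omega)-\min_g\widehat{\calL}(d,\nu;g,\omega)}\le\varepsilon$ for every $\omega$, and taking the minimum over $\omega$ (the minimum of two functions that are everywhere within $\varepsilon$ stays within $\varepsilon$) gives $\abs{G(d,\nu)-\widehat{G}(d,\nu)}\le\varepsilon$.

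Finally I would chain the estimates. Because $(g^\star,\omega^\star)$ is merely one feasible dual point, $\calL(\widehat d,\widehat\nu;g^\star,\omega^\star)\ge\min_{g,\omega}\calL(\widehat d,\widehat\nu;g,\omega)=G(\widehat d,\widehat\nu)$. Uniform closeness and the empirical optimality of $(\widehat d,\widehat\nu)$ then give $G(\widehat d,\widehat\nu)\ge\widehat{G}(\widehat d,\widehat\nu)-\varepsilon=\widehat V-\varepsilon$. Since $(\widehat d,\widehat\nu)$ maximizes $\widehat{G}$, we have $\widehat V=\widehat{G}(\widehat d,\widehat\nu)\ge\widehat{G}(d^\star,\nu^\star)\ge G(d^\star,\nu^\star)-\varepsilon=\calL(d^\star,\nu^\star;g^\star,\omega^\star)-\varepsilon$, using uniform closeness once more together with $G(d^\star,\nu^\star)=\calL(d^\star,\nu^\star;g^\star,\omega^\star)$. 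Combining, $\calL(\widehat d,\widehat\nu;g^\star,\omega^\star)\ge\calL(d^\star,\nu^\star;g^\star,\omega^\star)-2\varepsilon$, which is exactly the desired bound.

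The main obstacle, and the reason the proof is organized this way, is the asymmetry baked into the hypothesis: $\calL$ and $\widehat{\calL}$ are guaranteed to be $\varepsilon$-close \emph{only} at the minimizer $\widehat g$ of the empirical Lagrangian, and in particular not at the true optimal dual $g^\star$. A naive attempt to bound $\abs{\calL(\widehat d,\widehat\nu;g^\star,\omega^\star)-\widehat{\calL}(\widehat d,\widehat\nu;g^\star,\omega^\star)}$ therefore fails outright. Passing to the inner-minimized functions $G,\widehat{G}$ sidesteps this entirely, because there the only comparison ever needed is between each Lagrangian and its \emph{own} inner minimizer --- precisely what the hypothesis controls --- and the ``minimizer-as-feasible-competitor'' sandwich upgrades that to the two-sided uniform bound.
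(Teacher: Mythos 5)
Your reduction to the inner-minimized value functions $G(d,\nu)=\min_{g,\omega}\calL(d,\nu;g,\omega)$ and $\widehat{G}(d,\nu)=\min_{g,\omega}\widehat{\calL}(d,\nu;g,\omega)$ is a clean organizing idea, but the crux of your argument --- the two-sided uniform bound $\abs{G-\widehat{G}}\le\varepsilon$ --- is not actually established. Look closely at your two displayed chains: the first gives $\min_g\calL\le\min_g\widehat{\calL}+\varepsilon$, and the ``conversely'' chain, $\min_g\widehat{\calL}\ge\min_g\calL-\varepsilon$, is the \emph{same} inequality rearranged. Neither proves the reverse direction $\min_g\widehat{\calL}\le\min_g\calL+\varepsilon$, and that direction genuinely does not follow from the hypothesis: to prove it you would use $g^{\mathrm{true}}\in\argmin_g\calL(d,\nu;g,\omega)$ as the feasible competitor and would need $\widehat{\calL}(d,\nu;g^{\mathrm{true}},\omega)\le\calL(d,\nu;g^{\mathrm{true}},\omega)+\varepsilon$, i.e., closeness at a minimizer of the \emph{true} Lagrangian, which the hypothesis (closeness only at minimizers of the empirical Lagrangian) does not supply. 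This gap is load-bearing: in your final chain, the step $G(\widehat{d},\widehat{\nu})\ge\widehat{G}(\widehat{d},\widehat{\nu})-\varepsilon$ is exactly the unproven direction, evaluated at $(\widehat{d},\widehat{\nu})$ (your other closeness step, at $(d^\star,\nu^\star)$, uses the proven direction and is fine). Trying to patch it with the hypothesis at $(\widehat{d},\widehat{\nu},\widehat{\omega})$ only reproduces the proven direction again:
\begin{equation*}
\widehat{G}(\widehat{d},\widehat{\nu})=\widehat{\calL}(\widehat{d},\widehat{\nu};\widehat{g},\widehat{\omega})\ge\calL(\widehat{d},\widehat{\nu};\widehat{g},\widehat{\omega})-\varepsilon\ge G(\widehat{d},\widehat{\nu})-\varepsilon .
\end{equation*}

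For comparison, the paper telescopes $\calL(d^\star,\nu^\star;g^\star,\omega^\star)-\calL(\widehat{d},\widehat{\nu};g^\star,\omega^\star)$ into six terms $T_1,\ldots,T_6$; your missing direction is precisely its term $T_5=\widehat{\calL}(\widehat{d},\widehat{\nu};\tilde{g},\omega^\star)-\calL(\widehat{d},\widehat{\nu};\tilde{g},\omega^\star)$, where $\tilde{g}\in\argmin_g\calL(\widehat{d},\widehat{\nu};g,\omega^\star)$ is a minimizer of the \emph{true} Lagrangian, and the paper bounds it by invoking the closeness hypothesis at that point --- in effect reading the hypothesis as covering true-Lagrangian minimizers as well (the neighboring terms $T_4$ and $T_6$ then close the chain by optimality of $(\widehat{g},\widehat{\omega})$ for $\widehat{\calL}$ and of $\tilde{g}$ for $\calL$). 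So your closing remark, that only empirical-minimizer closeness is controlled, correctly identifies the delicate point; but your proposal then asserts a two-sided bound that this very observation rules out deriving. To repair the argument you need the hypothesis to hold also at minimizers of $\calL(d,\nu;\cdot,\omega)$ (as the paper's own proof implicitly requires); under that strengthened hypothesis your $G$-versus-$\widehat{G}$ argument does go through and is arguably tidier than the six-term decomposition.
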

\begin{proof}
Given $d,\nu$ let us define $\widehat{g}(d,\nu)$ and $\widehat{\omega}(d,\nu)$ as follows.
$$
\left(\widehat{g}(d,\nu), \widehat{\omega}(d,\nu) \right) \in \argmin_{g,\omega} \widehat{\calL}(d,\nu; g, \omega)
$$
Moreover, let $\Tilde{g} = \argmin_g \calL(\hat{d},\hat{\nu}; g,\omega^\star)$.
    \begin{align*}
        &\calL(d^\star, \nu^\star; g^\star, \omega^\star) - \calL(\widehat{d}, \widehat{\nu}; g^\star, \omega^\star)\\
        &= \underbrace{\calL(d^\star, \nu^\star; g^\star, \omega^\star) - \calL(d^\star, \nu^\star; \widehat{g}(d^\star, \nu^\star), \widehat{\omega}(d^\star, \nu^\star))}_{:= T_1} \\&+ \underbrace{\calL(d^\star, \nu^\star; \widehat{g}(d^\star, \nu^\star), \widehat{\omega}(d^\star, \nu^\star)) - \widehat{\calL}(d^\star, \nu^\star; \widehat{g}(d^\star, \nu^\star), \widehat{\omega}(d^\star, \nu^\star))}_{:= T_2}\\
        &+\underbrace{\widehat{\calL}(d^\star, \nu^\star; \widehat{g}(d^\star, \nu^\star), \widehat{\omega}(d^\star, \nu^\star)) - \widehat{\calL}(\widehat{d}, \widehat{\nu}; \widehat{g}, \widehat{\omega})}_{:= T_3}\\
        &+\underbrace{\widehat{\calL}(\widehat{d}, \widehat{\nu}; \widehat{g}, \widehat{\omega}) - \widehat{\calL}(\widehat{d}, \widehat{\nu}; \tilde{g}, \omega^\star)}_{:= T_4}
        + \underbrace{\widehat{\calL}(\widehat{d}, \widehat{\nu}; \tilde{g}, \omega^\star) - \calL(\widehat{d}, \widehat{\nu}; \tilde{g}, \omega^\star)}_{:= T_5} + \underbrace{\calL(\widehat{d}, \widehat{\nu}; \tilde{g}, \omega^\star) - \calL(\widehat{d}, \widehat{\nu}; g^\star, \omega^\star)}_{:= T_6}
    \end{align*}
    Given $(d^\star, \nu^\star)$, $(g^\star, \omega^\star)$ is the optimal minimizer, and $T_1 \le 0$. Given $d^\star, \nu^\star, \widehat{\omega}_\star$, $\widehat{g}(d^\star, \nu^\star)$ minimizes the function $\widehat{\calL}(d^\star, \nu^\star; \cdot, \widehat{\omega}_\star)$, and $T_2 \le \varepsilon$.
    Since $(\widehat{d}, \widehat{\nu}; \widehat{g}, \widehat{\omega})$ is a saddle point, $(\widehat{d}, \widehat{\nu})$ is the maximizer of the following optimization problem $\max_{d,\nu} \widehat{\calL}(d,\nu; \widehat{g}(d,\nu), \widehat{\omega}(d,\nu))$. This implies that $T_3 \le 0$.

    Since $(\widehat{g}, \widehat{\omega})$ minimizes $\widehat{\calL}(\widehat{d}, \widehat{\nu}; \cdot)$ the term $T_4 \le 0$. By a similar argument as $T_2$, we have $T_5 \le 0$. Finally, given $\widehat{d}, \widehat{\nu}, \omega^\star$, $\tilde{g}$ is the minimizer of $\calL(\widehat{d}, \widehat{\nu}; \cdot, \omega^\star)$ and $T_6 \le 0$. Combining the six inequalities we get the desired result.
\end{proof}

\begin{lemma}
    Let $g \in \argmin_{g'} \norm{g'}_2^2\ \textrm{ s.t. } v^\top g' = c$ and $\tilde{g} \in \argmin_{g'} \norm{g'}_2^2\ \textrm{ s.t. } \tilde{v}^\top g' = \tilde{c}$. Then $\norm{g - \tilde{g}}_2 \le \abs{c - \tilde{c}} + 2 \abs{\tilde{c}} \frac{\norm{\tilde{v}  - {v}}_2}{\norm{\tilde{v}}_2}$.
\end{lemma}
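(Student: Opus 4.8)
The plan is to solve both optimization problems in closed form and then compare the two solutions directly. Each is a least-norm problem with a single scalar equality constraint, i.e.\ the Euclidean projection of the origin onto an affine hyperplane. A Lagrange/KKT argument shows the minimizer is parallel to the constraint normal: writing $g' = \mu v$ and enforcing $v^\top g' = c$ forces $\mu = c/\norm{v}_2^2$, so
\[
g = \frac{c}{\norm{v}_2^2}\,v, \qquad \tilde{g} = \frac{\tilde{c}}{\norm{\tilde{v}}_2^2}\,\tilde{v},
\]
the second identity following verbatim for the perturbed data $(\tilde v,\tilde c)$.

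Next I would bound $\norm{g-\tilde g}_2$ by an add-and-subtract argument that isolates the change of target value from the change of constraint normal. In the regime where the normals are normalized, $\norm{v}_2=\norm{\tilde v}_2=1$ (the situation in the application, where $v$ selects a single coordinate and $g$ is supported on an indicator), the solutions reduce to $g=c\,v$ and $\tilde g=\tilde c\,\tilde v$, and inserting the intermediate vector $\tilde c\,v$ gives
\[
\norm{g-\tilde g}_2 = \norm{(c-\tilde c)\,v + \tilde c\,(v-\tilde v)}_2 \le \abs{c-\tilde c}\,\norm{v}_2 + \abs{\tilde c}\,\norm{v-\tilde v}_2 = \abs{c-\tilde c} + \abs{\tilde c}\,\frac{\norm{v-\tilde v}_2}{\norm{\tilde v}_2},
\]
which already lies below the claimed bound, the factor $2$ being slack.

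The main obstacle is the general (unnormalized) case, where the $1/\norm{v}_2^2$ and $1/\norm{\tilde v}_2^2$ factors must be handled. Here the same split applies, but the direction term requires a perturbation bound for the map $v\mapsto v/\norm{v}_2^2$: adding and subtracting $\tilde v/\norm{v}_2^2$ and using the reverse triangle inequality $\bigl|\,\norm{v}_2-\norm{\tilde v}_2\,\bigr|\le\norm{v-\tilde v}_2$ to control $\abs{\norm{v}_2^{-2}-\norm{\tilde v}_2^{-2}}$, one obtains two contributions each of order $\norm{v-\tilde v}_2/\norm{\tilde v}_2$; their sum is precisely what produces the constant $2$ in $2\abs{\tilde c}\,\norm{\tilde v-v}_2/\norm{\tilde v}_2$, while the target-change term contributes $\abs{c-\tilde c}$. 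The only delicate bookkeeping is tracking which of $\norm{v}_2,\norm{\tilde v}_2$ sits in each denominator, and the normalization of the constraint normals is what collapses this to the stated inequality.
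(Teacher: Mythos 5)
Your closed form $g = \frac{c}{\norm{v}_2^2}\,v$ is the correct minimizer, and your normalized-case computation is correct --- it even gives the bound with constant $1$ in place of $2$. The genuine gap is your claim that the unnormalized case can then be closed by perturbation bookkeeping for the map $v \mapsto v/\norm{v}_2^2$: it cannot, because with the correct closed form the stated inequality is simply false for general $v,\tilde{v}$. Take $v = \tilde{v}$ with $\norm{v}_2 = 1/2$, $c = 1$, $\tilde{c} = 0$: then $\norm{g - \tilde{g}}_2 = \abs{c}/\norm{v}_2 = 2$, while the right-hand side of the lemma is $\abs{c - \tilde{c}} = 1$. Concretely, in your splitting the target-change term is $\abs{c - \tilde{c}}/\norm{v}_2$, not $\abs{c - \tilde{c}}$, and the direction term picks up factors $1/\norm{v}_2^2$ and $1/(\norm{v}_2^2\norm{\tilde{v}}_2)$; none of these can be absorbed into $\norm{v - \tilde{v}}_2/\norm{\tilde{v}}_2$ without a lower bound on $\norm{v}_2$. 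So no bookkeeping produces the constant $2$ in general, and the assertion that the two contributions are ``each of order $\norm{v - \tilde{v}}_2/\norm{\tilde{v}}_2$'' is where your argument breaks.

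For comparison, the paper's own proof takes the minimizer to be $\frac{c}{\norm{v}_2}\,v$, i.e.\ $c$ times the unit direction, which satisfies the constraint $v^\top g = c$ only when $\norm{v}_2 = 1$; given that closed form, the remaining algebra (splitting off $c - \tilde{c}$, then bounding the difference of unit directions by $\norm{v/\norm{v}_2 - \tilde{v}/\norm{\tilde{v}}_2}_2 \le 2\,\norm{v - \tilde{v}}_2/\norm{\tilde{v}}_2$ via the reverse triangle inequality, which is exactly where the factor $2$ arises) is sound. In other words, the lemma is really a statement about unit constraint normals --- the regime of the paper's application, where the candidate $g$'s are multiples of indicator vectors --- and in that regime your first computation already proves it, with a sharper constant. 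The correct fix is therefore not to ``handle'' the general case but to exclude it: state the normalization $\norm{v}_2 = \norm{\tilde{v}}_2 = 1$ (or an equivalent lower bound on the norms) as a hypothesis, rather than asserting the unnormalized claim.
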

\begin{proof}
    The solution to the optimization problem $\min_{g'} \norm{g'}_2^2\ \textrm{ s.t. } v^\top g' = c$ is $\frac{c}{\norm{v}_2} \cdot v$. Therefore,
    \begin{align*}
        \norm{g - \tilde{g}}_2 &= \norm{\frac{c}{\norm{v}_2} \cdot v - \frac{\tilde{c}}{\norm{\tilde{v}}_2} \cdot \tilde{v}}_2 \le \abs{c - \tilde{c}} \norm{v / \norm{v}_2}_2 + \abs{\tilde{c}}\norm{\frac{v}{\norm{v}_2} - \frac{\tilde{v}}{\norm{\tilde{v}}_2}}_2\\
        &\le \abs{c - \tilde{c}} + \abs{\tilde{c}} \frac{\norm{v \cdot \norm{\tilde{v}}_2 - \tilde{v} \cdot \norm{v}_2 }_2}{\norm{v}_2 \norm{\tilde{v}}_2}\\
        &\le \abs{c - \tilde{c}} + \abs{\tilde{c}} \frac{\norm{v \left( \norm{\tilde{v}}_2 - \norm{v}_2 \right) + \left(v - \tilde{v} \right) \cdot \norm{v}_2 }_2}{\norm{v}_2 \norm{\tilde{v}}_2}\\
        &\le \abs{c - \tilde{c}} + \abs{\tilde{c}} \left(\frac{\abs{\norm{\tilde{v}}_2 - \norm{v}_2}}{\norm{\tilde{v}}_2} + \frac{\norm{\tilde{v}  - {v}}_2}{\norm{\tilde{v}}_2}\right)\\
        &\le \abs{c - \tilde{c}} + 2 \abs{\tilde{c}} \frac{\norm{\tilde{v}  - {v}}_2}{\norm{\tilde{v}}_2}
    \end{align*}
\end{proof}

\begin{lemma}\label{lem:bound-optima-quadpro}
    Let $A$ be a positive semidefinite matrix, $Q$ be a matrix with full row-rank and $x^\star$ be an optimal solution of the following optimization problem.
    \begin{align*}
        \min_{x \in \R^n}&\ x^\top A x + b^\top x\\
        \textrm{s.t. }&\ Q x \ge d
    \end{align*}
    Then we have $$\norm{x^\star}_2 \le \frac{\norm{b}_2}{2 \sigma^\star_{\min}(A)} + \frac{\norm{A}_2 \norm{d}_2}{\sigma^\star_{\min}(A) \lambda_{\min}(QQ^\top)},$$ where $\sigma^\star_{\min}(A)$ is the smallest positive eigenvalue of the matrix $A$.
\end{lemma}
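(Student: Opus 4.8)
The program is a convex quadratic program (since $A\succeq 0$), so the natural plan is to combine its first-order optimality conditions with the spectral decomposition $\R^n=\mathrm{Range}(A)\oplus\ker(A)$, reading the two summands of the target bound as two separate mechanisms: the curvature of the objective, which is available only along $\mathrm{Range}(A)$ and is measured by $\sigma^\star_{\min}(A)$, and the geometry of the constraints, measured by $\lambda_{\min}(QQ^\top)$. Let $\Pi$ be the orthogonal projector onto $\mathrm{Range}(A)$ and split $x^\star=x_R+x_N$ with $x_R=\Pi x^\star$ and $x_N=(\Identity-\Pi)x^\star$. Since $v^\top A v\ge \sigma^\star_{\min}(A)\norm{\Pi v}_2^2$ for all $v$, the objective is $\sigma^\star_{\min}(A)$-strongly convex only along $\mathrm{Range}(A)$ and is purely linear along $\ker(A)$; consequently I would take $x^\star$ to be the \emph{minimum-norm} optimizer (as in \Cref{lem:bound-dual-solution}), without which the kernel component, and hence $\norm{x^\star}_2$, need not be bounded at all. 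As a comparison point I would use the least-norm feasible vector $\bar x=Q^\top(QQ^\top)^{-1}d$, which satisfies $Q\bar x=d$ and $\norm{\bar x}_2\le \norm{d}_2/\sqrt{\lambda_{\min}(QQ^\top)}$.

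For the range component I would invoke the variational inequality $\langle 2Ax^\star+b,\,x-x^\star\rangle\ge 0$, valid for every feasible $x$, specialized to $x=\bar x$, and combine it with the curvature lower bound $(x^\star-\bar x)^\top A(x^\star-\bar x)\ge \sigma^\star_{\min}(A)\norm{\Pi(x^\star-\bar x)}_2^2$; this controls $\norm{\Pi x^\star}_2$ by $\norm{\nabla f(\bar x)}_2/\sigma^\star_{\min}(A)$, whose $b$-part yields the first term $\norm{b}_2/(2\sigma^\star_{\min}(A))$. For the kernel component I would pass to the active set $\mathcal A=\set{i:(Qx^\star)_i=d_i}$; complementary slackness gives $Q^\top\lambda=Q_{\mathcal A}^\top\lambda_{\mathcal A}$ and $Q_{\mathcal A}x^\star=d_{\mathcal A}$. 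Because every subset of the rows of a full-row-rank matrix is linearly independent, $Q_{\mathcal A}$ has full row rank, and by Cauchy interlacing on the Gram matrix $\lambda_{\min}(Q_{\mathcal A}Q_{\mathcal A}^\top)\ge\lambda_{\min}(QQ^\top)$. The minimum-norm choice forces $x_N$ into the row space of $Q_{\mathcal A}$, so $Q_{\mathcal A}x^\star=d_{\mathcal A}$ together with $(Q_{\mathcal A}Q_{\mathcal A}^\top)^{-1}$ bounds $\norm{x_N}_2$ by $\norm{d}_2$ divided by $\lambda_{\min}(QQ^\top)$; the extra factor $\norm{A}_2/\sigma^\star_{\min}(A)$ appears when the active-set stationarity $Q_{\mathcal A}^\top\lambda_{\mathcal A}=2Ax_R+b$ is used to transfer the estimate on $x_R$ into the kernel bound. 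Adding the two estimates gives the claimed inequality.

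The crux, and the step I expect to be most delicate, is the kernel bound: the degeneracy of $A$ means the objective alone gives no handle on $x_N$, so all of the control must be extracted from the constraints through the minimum-norm selection and complementary slackness, and the careful bookkeeping of constants when coupling $\lambda_{\mathcal A}$, $x_R$, and $x_N$ is where the factors $\norm{A}_2/\sigma^\star_{\min}(A)$ and $1/\lambda_{\min}(QQ^\top)$ must be made to line up exactly. By contrast the remaining ingredients---the KKT/variational-inequality setup, the range/kernel split, full row rank of the active submatrix, and interlacing---are routine. I would finally note the sanity check that in the one place the lemma is used, \Cref{thm:finite-sample-convergence}, the constraint is $-B^\top g+\Phi\omega\le 0$ with right-hand side $d=0$, so the second term vanishes and the bound collapses to $\norm{b}_2/(2\sigma^\star_{\min}(A))$, reproducing the stated $(\lambda+2D)/c_1$.
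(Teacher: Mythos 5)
Your proposal has a genuine gap, and it sits exactly at the step you flagged as the crux: the kernel-component bound. The minimum-norm selection does \emph{not} force $x_N$ into the row space of $Q_{\mathcal{A}}$. What it gives (via the normal cone of the optimal set $\set{x : Ax = A x^\star,\ b^\top x = b^\top x^\star,\ Qx \ge d}$) is $x^\star \in \mathrm{Range}(A) + \mathrm{span}(b) + \mathrm{Range}(Q_{\mathcal{A}}^\top)$, hence $x_N$ lies only in the \emph{projection} of $\mathrm{Range}(Q_{\mathcal{A}}^\top)$ onto $\ker(A)$. Consequently the Gram matrix governing the equation $Q_{\mathcal{A}} x_N = d_{\mathcal{A}} - Q_{\mathcal{A}} x_R$ on that set is the projected one, $Q_{\mathcal{A}}(\Identity - \Pi)Q_{\mathcal{A}}^\top$, not $Q_{\mathcal{A}}Q_{\mathcal{A}}^\top$, and Cauchy interlacing says nothing about it: projecting the rows onto $\ker(A)$ can collapse its least positive eigenvalue even when $\lambda_{\min}(QQ^\top)$ is of order one. (Your range-component step has the same coupling problem: the variational inequality controls $u^\top A u$ by $\norm{\nabla f(\bar x)}_2\norm{u}_2$ with $u = x^\star - \bar x$, and $\norm{u}_2$ contains the unbounded kernel part.) No bookkeeping can repair this, because the statement itself is false. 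Take $n=2$, $A = \mathrm{diag}(1,0)$, $b = 0$, $Q = \begin{pmatrix} 1 & \epsilon \end{pmatrix}$ (full row rank), $d = 1$: the objective is $x_1^2$, the optimal set is $\set{(0,x_2) : x_2 \ge 1/\epsilon}$, so \emph{every} optimal solution, including the minimum-norm one, has norm at least $1/\epsilon$, while the claimed bound is $\frac{\norm{A}_2 \norm{d}_2}{\sigma^\star_{\min}(A)\lambda_{\min}(QQ^\top)} = \frac{1}{1+\epsilon^2} < 1$. Your closing sanity check for $d=0$ fails too: with $A = \mathrm{diag}(1,0)$, $b = (0,\epsilon^2)^\top$, $Q = \begin{pmatrix} 1 & \epsilon \end{pmatrix}$, $d=0$, the unique optimizer is $(\epsilon/2, -1/2)$, of norm larger than $1/2$, while the claimed bound is $\epsilon^2/2$.

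For comparison, the paper's own proof takes a different route (parametrizing the active affine subspace as $x = Q_C^\dagger d_C + (\Identity - Q_C^\dagger Q_C)w$ and bounding the minimum-norm $w^\star$), but it founders on the same rock: it needs $\norm{\bigl(A(\Identity - Q_C^\dagger Q_C)\bigr)^\dagger}_2 \le 1/\sigma^\star_{\min}(A)$, which it derives from the inequality $\sigma^\star_{\min}(AB) \ge \sigma^\star_{\min}(A)\,\sigma^\star_{\min}(B)$. That inequality is false: the step $\norm{ABx}_2/\norm{Bx}_2 \ge \sigma^\star_{\min}(A)$ requires $Bx \perp \ker(A)$, which nothing guarantees. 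In the first counterexample above, $\sigma^\star_{\min}(A) = \sigma^\star_{\min}(\Identity - Q_C^\dagger Q_C) = 1$ yet $\sigma^\star_{\min}\bigl(A(\Identity - Q_C^\dagger Q_C)\bigr) = \epsilon/\sqrt{1+\epsilon^2}$. So your instinct about where the difficulty lies was exactly right, but neither your route nor the paper's closes it: any correct version of this lemma must pay for the alignment between $\ker(A)$ and the active constraint normals (e.g.\ through the least positive singular value of $Q_{\mathcal{A}}(\Identity-\Pi)$), a quantity that cannot be bounded in terms of $\sigma^\star_{\min}(A)$ and $\lambda_{\min}(QQ^\top)$ alone.
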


\begin{proof}
Let us write $q_i$ to denote the rows of the matrix $Q$. We will say that constraint $i$ is active for $x^\star$ if $q_i^\top x^\star = d_i$. Let $C$ be the set of rows that correspond to active constraints for $x^\star$. If ${C} = \emptyset$ then $x^\star$ coincides with the optimal solution of the unconstrained problem i.e. $2A x^\star + b = 0$. In that case, $-\frac{1}{2} A^{\dagger} b$ is the solution of the minimum norm. If $A = U\Sigma U^\top$ is the singular value decomposition of $A$ then $A^\dagger = U\Sigma^{+} U^\top$ where $\Sigma^+$ is obtained by inverting all non-zero eigenvalues of $A$ and leaving the zero eignevalues as they are. 
This gives us $\norm{x^\star}_2 \le \frac{1}{2}  \norm{A^\dagger}_2 \norm{b}_2 \le \frac{\norm{b}_2}{2 \sigma^\star_{\min} (A)}$ 
where $\sigma^{\star}_{\min}(A)$ is the smallest positive eigenvalue of $A$.

Now we consider the case where $\abs{C} = r \le n$. Then the optimal solution lies in the subspace $\Pi = \set{x : Q_C x = d_C}$ where $Q_C$ (resp. $d_C$) is the submatrix with the rows of the matrix $Q$ (resp. vector $d$) indexed by the set $C$. We can also assume that $Q_C$ has full row rank, otherwise, we can eliminate some of the rows, and yet the subspace $\Pi$ will remain the same. Any element of the subspace $\Pi$ is given as $x = Q_C^\dagger d_C + [\Identity - Q_C^\dagger Q_C]w$ for arbitrary $w$. Substituting this value of $x$ we get the following unconstrained problem.
\begin{align*}
    \min_{w \in \R^n} w^\top (\Identity -  Q_C^\dagger Q_C)^\top A (\Identity - Q_C^\dagger Q_C) w + \left(b^\top + 2d_C^\top Q_C^{\dagger^\top} A \right) (I - Q_C^\dagger Q_C)  w
\end{align*}
At an optimal solution $w^\star$ we have the following equality.
$$
2A(\Identity - Q_C^\dagger Q_C) w^\star + (b + 2A^\top Q_C^\dagger d_C) = 0
$$
The minimum norm solution to this problem is the following.
$$
w^\star = - \frac{1}{2}\left(A (\Identity - Q_C^\dagger Q_C) \right)^\dagger (b + 2A^\top Q_C^\dagger d_C)
$$
and
\begin{equation}\label{eq:bound-w-star}
\norm{w^\star}_2 \le \frac{1}{2} \norm{\left( A(\Identity - Q_C^\dagger Q_C) \right)^\dagger }_2 \left(\norm{b}_2 + 2 \norm{A}_2 \norm{Q_C^\dagger}_2 \norm{d_C}_2 \right)
\end{equation}
We first claim that $ \norm{\left( A(\Identity - Q_C^\dagger Q_C) \right)^\dagger }_2 \le \frac{1}{\sigma^\star_{\min}(A)}$. Let $Q_C = U^1 \Sigma^1 V^{1^\top}$. Then $\Identity - Q_C^\dagger Q_C = \Identity - V^1 V^{1^\top}$. Since the columns of $V^1$ are orthonormal the eigenvalues of $\Identity - Q_C^\dagger Q_C$ are either $0$ and $1$. Moreover, we can assume that not all eigenvalues are zero, otherwise, the claim already holds. Now given two matrices $A$ and $B$ the smallest non-zero eigenvalue of $AB$ i.e. $\sigma^{\star}_{\min}(AB)$ is bounded from below by $\sigma^{\star}_{\min}(A) \times \sigma^{\star}_{\min}(B)$ since
$$
\sigma^{\star}_{\min}(AB) = \min_{x : ABx \neq 0} \frac{\norm{ABx}_2}{\norm{x}_2} = \min_{x : ABx \neq 0} \frac{\norm{ABx}_2}{\norm{Bx}_2} \frac{\norm{Bx}_2}{\norm{x}_2} \ge \sigma^{\star}_{\min}(A) \sigma^{\star}_{\min}(B).
$$
Therefore,
$$
\norm{\left( A(\Identity - Q_C^\dagger Q_C) \right)^\dagger }_2 \le \frac{1}{\sigma^\star_{\min}\left( A(\Identity - Q_C^\dagger Q_C) \right)} \le \frac{1}{\sigma^\star_{\min}(A) \sigma^\star_{\min}(\Identity - Q_C^\dagger Q_C)} = \frac{1}{\sigma^\star_{\min}(A)}
$$
Substituting the bound above in \cref{eq:bound-w-star} we obtain the following bound.
$$
\norm{w^\star}_2 \le \frac{1}{2 \sigma^\star_{\min}(A)} \left( \norm{b}_2 + 2 \norm{A}_2 \norm{Q_C^\dagger}_2 \norm{d}_2\right)
$$
As $Q_C$ has full row rank we can write $Q_C = UDV^\top$ for some invertible matrix $D \in \R^{r \times r}$. Then $Q_C^\dagger = V D^{-1} U^\top$ and $\norm{Q_C^\dagger}_2$ is inverse of the smallest singular value of $Q_C$ i.e. $1/\lambda_{\min}(Q_C Q_C^\top)$. Now recall that the matrix $Q_C$ was formed by removing rows of the matrix $Q$. Therefore, given any $x \in \R^r$ we can choose $y = [x\ 0_{n-r}]$ so that $QQ^\top y = Q_C Q_C^\top x$. This means that $\min_{x \neq 0} \frac{\norm{Q_CQ_C^\top x}_2}{\norm{x}_2} \ge \min_{y \neq 0} \frac{\norm{Q Q^\top y}_2}{\norm{y}_2} = \lambda_{\min}(QQ^\top)$. Therefore, we can upper bound $\norm{Q_C^\dagger}_2$ by $1/\lambda_{\min}(QQ^\top)$ and get the desired bound.
\end{proof}
\section{Missing Proofs from Section \texorpdfstring{\ref{sec:finite-sample}}{} }

\subsection{Equivalent Expression of the Lagrangian}\label{subsec:derivation-lagrangian}
\begin{align}
    \calL(d,\nu; g, \omega) &= \nu^\top \left( \theta_t  + \gamma \cdot \bm{\mu}^\top_t g - \omega \right) - \frac{\lambda}{2} \nu^\top \nu + \left \langle g, \rho \right \rangle + \left \langle d, \Phi w - B^\top g\right \rangle \nonumber\\
    &= \nu^\top \Sigma_t^{-1} \Sigma_t \left( \theta_t  + \gamma \cdot \bm{\mu}_t^\top g - \omega \right) - \frac{\lambda}{2} \nu^\top \nu + \left \langle g, \rho \right \rangle + \left \langle d, \Phi w - B^\top g\right \rangle\nonumber \\
    &= \nu^\top \Sigma_t^{-1} \E_{(s,a) \sim \pi_t}\left[ \phi(s,a) \phi(s,a)^\top \left( \theta_t  + \gamma \cdot \bm{\mu}_t\nu - \omega \right) \right] - \frac{\lambda}{2} \nu^\top \nu + \E_{s^0 \sim \rho}\left[ g(s^0) \right] + \left \langle d, \Phi w - B^\top g\right \rangle\nonumber \\
    &= \nu^\top \Sigma_t^{-1} \E_{(s,a) \sim \pi_t}\left[ \phi(s,a) r_t(s,a)  + \gamma \cdot g^\top P_t(\cdot \mid s,a) - \phi(s,a) \phi(s,a)^\top \omega \right] \nonumber \\&- \frac{\lambda}{2} \nu^\top \nu + \E_{s^0 \sim \rho}\left[ g(s^0) \right] + \left \langle d, \Phi w - B^\top g\right \rangle\nonumber \\
    &= \nu^\top \Sigma_t^{-1} \E_{(s,a) \sim \pi_t, s' \sim P_t(\cdot \mid s,a)}\left[ \phi(s,a) r_t(s,a)  + \gamma \cdot g(s') - \phi(s,a) \phi(s,a)^\top \omega \right] \nonumber \\&- \frac{\lambda}{2} \nu^\top \nu + \E_{s^0 \sim \rho}\left[ g(s^0) \right] + \left \langle d, \Phi w - B^\top g\right \rangle\nonumber 
\end{align}
\subsection{Proof of \texorpdfstring{\Cref{thm:convergence-offline-primal-dual}}{} }

\textbf{Definition of Regret.} 
Given $\omega$ let us define the policy $\pi$ as,
$$
\pi(a \mid s) = \frac{\exp(\phi(s,a)^\top \omega)}{\sum_b \exp(\phi(s,b)^\top \omega )}
$$
Moreover, given policy $\pi$ we can define $d^{\pi,\nu}$ and $g^{\pi,\omega}$ as follows.
\begin{equation}\label{defn:d-pi-nu}
d^{\pi,\nu}(s,a) = \pi(a \mid s) \cdot \left(\rho(s) + \gamma \cdot \bm{\mu}_t(s)^\top \nu \right) 
\end{equation}
$$
g^{\pi,\omega}(s) = \sum_a \pi(a \mid s) \phi(s,a)^\top \omega
$$
Let us define the function $f(\pi,\nu,\omega) = \calL_t(d^{\pi,\nu}, \nu, g^{\pi,\omega}, \omega)$. We now define the following notion of regret.
\begin{equation}
    \calR(\nu^\star, \pi^\star, \omega^\star_{1:T_{\inner} }) = \frac{1}{T_{\inner}} \sum_{\ell=1}^{T_{\inner}} f(\nu^\star, \pi^\star, \omega_{\ell-1} ) - f(\nu_{\ell}, \pi_{\ell-1}, \omega^\star_{\ell-1})
\end{equation}

\begin{proof}
    
Given policies $\{\pi_{\ell-1}\}_{\ell=1}^{T_\inner}$ let us define $d^{\tilde{\pi}} = \frac{1}{T_\inner} \sum_{\ell=1}^{T_\inner} d^{\pi_{\ell-1}}$ and $\tilde{\nu} = \Phi^\top d^{\tilde{\pi}}$. Then $\left \langle \tilde{\nu}, \theta \right \rangle = \frac{1}{T_\inner} \sum_{\ell=1}^{T_\inner} \left \langle d^{\pi_{\ell-1}}, \Phi \theta \right \rangle = \frac{1}{T_\inner} \sum_{\ell=1}^{T_\inner} \left \langle V^{\pi_{\ell-1}}, \rho\right \rangle$. Now using \Cref{lem:offline-primal-dual} we obtain the following bound.
\begin{align}
    \left \langle \tilde{\nu}, \theta \right \rangle - \frac{\lambda}{2} \frac{1}{T_\inner} \sum_{\ell=1}^{T_\inner} \nu_\ell^\top \nu_\ell = \frac{1}{T_\inner} \sum_{\ell=1}^{T_\inner} \left \langle V^{\pi_{\ell - 1}}, \rho\right \rangle - \frac{\lambda}{2} \nu_\ell^\top \nu_\ell = \nu^{\star^\top} \theta - \frac{\lambda}{2} \norm{\nu^\star}_2^2 - \calR(\nu^\star, \pi^\star, \omega^\star_{1:T_\inner}) \label{eq:temp-regret}
\end{align}
Let us also define the policy $\pi^\dagger$ as follows $\pi^\dagger(a \mid s) = \frac{d^{\tilde{\pi}}(s,a)}{\sum_b d^{\tilde{\pi}}(s,b)}$. Then $\phi^\top d^{\pi^\dagger} = \Phi^\top d^{\tilde{\pi}} = \tilde{\nu}$. We now apply \Cref{lem:offline-primal-dual} with policy $\pi^\dagger$.
\begin{align*}
    \calR(\tilde{\nu}, \pi^\dagger, \tilde{\omega}_{1:T_\inner}) &= \tilde{\nu}^\top \theta - \frac{\lambda}{2} \norm{\tilde{\nu}}_2^2 - \frac{1}{T_\inner} \sum_{\ell=1}^{T_\inner} \left(\left \langle V^{\pi_{\ell-1}}, \rho\right \rangle - \frac{\lambda}{2} \norm{\nu_\ell}_2^2 \right)\\
    &= - \frac{\lambda}{2} \norm{\tilde{\nu}}_2^2 + \frac{\lambda}{2} \frac{1}{T_\inner}\sum_{\ell=1}^{T_\inner} \norm{\nu_\ell }_2^2 
\end{align*}
Substituting the value of $\frac{\lambda}{2} \frac{1}{T_\inner}\sum_{\ell=1}^{T_\inner} \norm{\nu_\ell}_2^2 $ in \cref{eq:temp-regret} we obtain the following identity.
\begin{align}
    \left \langle \tilde{\nu}, \theta \right \rangle- \frac{\lambda}{2} \norm{\tilde{\nu}}_2^2 = \nu^{\star^\top} \theta - \frac{\lambda}{2} \norm{\nu^\star}_2^2 - \calR(\nu^\star, \pi^\star, \omega^\star_{1:T_\inner}) + \calR(\tilde{\nu}, \pi^\dagger, \tilde{\omega}_{1:T_\inner}) \label{eq:bound-temp-regret-2}
\end{align}
It can be easily verified that with the choices of $\eta_\omega$ and $\eta_\pi$ in algorithm~\eqref{alg:offline-primal-dual}, \cref{lem:regret-upper-bound} provides the following upper bound on the regret.
$$
\calR(\nu^\star, \pi^\star, \omega^\star_{1:T_\inner}) \le   2\sqrt{\frac{D^2 B}{K} \left( B + \frac{1}{(1-\gamma)^2}\right)} + \frac{4D}{1-\gamma} \sqrt{\frac{\log A}{T_\inner}}
$$
If $K \ge \frac{144 D^2 B}{\varepsilon^2}\left(B + (1-\gamma)^{-2}\right)$ and $T \ge \frac{576D^2}{\varepsilon^2} \cdot \frac{\log A}{(1-\gamma)^2}$, then each term above is bounded by $\varepsilon / 6$ and the regret is bounded by $\varepsilon / 3$. Moreover, the result of \Cref{lem:regret-upper-bound} applies for any target policy, and in particular for $\pi^\dagger$, and so $\calR(\tilde{\nu}, \pi^\dagger, \tilde{\omega}_{1:T_\inner}) \le \varepsilon / 3$. Then by \cref{eq:bound-temp-regret-2} we have the following inequality.
$$
 \left \langle \tilde{\nu}, \theta \right \rangle- \frac{\lambda}{2} \norm{\tilde{\nu}}_2^2 \ge \nu^{\star^\top} \theta - \frac{\lambda}{2} \norm{\nu^\star}_2^2 - \varepsilon
$$
\end{proof}

\begin{lemma}\label{lem:offline-primal-dual}
     Given a policy $\pi^\star$ let us define $\nu^\star = \Phi^\top d^{\pi^\star}$, and let $\omega_\ell^\star = \theta_t + \gamma \cdot \bm{\mu}_t^\top V^{\pi_\ell}$. Then we have, $$\calR(\nu^\star, \pi^\star, \omega^\star_{1:T_{\inner}}) = \frac{1}{T_{\inner}} \sum_{\ell=1}^{T_{\inner}} \left( \nu^{\star^\top} \theta - \frac{\lambda}{2} \norm{\nu^\star}_2^2 -  \left \langle V^{\pi_{\ell-1}}, \rho \right \rangle + \frac{\lambda}{2} \norm{\nu_\ell}_2^2 \right).$$
\end{lemma}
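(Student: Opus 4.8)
The plan is to prove the identity by directly expanding the two terms that constitute the regret $\calR(\nu^\star,\pi^\star,\omega^\star_{1:T_\inner})$ and showing that, term by term, most contributions cancel. The central observation I would establish first is a \emph{reduction} for the Lagrangian evaluated at a flow-consistent pair: for any policy $\pi$, any $\nu$, and \emph{any} $g$, the quantity $\calL_t(d^{\pi,\nu},\nu;g,\omega)$ does not depend on $g$ and equals
\[
\calL_t(d^{\pi,\nu},\nu;g,\omega) = \nu^\top \theta_t - \nu^\top\omega - \tfrac{\lambda}{2}\norm{\nu}_2^2 + (d^{\pi,\nu})^\top \Phi\omega .
\]
To see this I would start from the expression for $\calL$ in \cref{defn:Lagrangian}, and use $(B^\top g)(s,a)=g(s)$ together with $\sum_a \pi(a\mid s)=1$ and the definition of $d^{\pi,\nu}$ in \cref{defn:d-pi-nu} to compute $(d^{\pi,\nu})^\top B^\top g = \langle g,\rho\rangle + \gamma\,\nu^\top\bm{\mu}_t^\top g$. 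Substituting this back cancels the two $g$-dependent terms $\gamma\nu^\top\bm{\mu}_t^\top g + \langle g,\rho\rangle$ exactly, which is the algebraic heart of the argument: $d^{\pi,\nu}$ is built to satisfy the Bellman-flow constraint that $g$ dualizes, so the dual term vanishes.

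Next I would evaluate the first summand $f(\nu^\star,\pi^\star,\omega_{\ell-1})$. Applying the reduction with $\pi=\pi^\star$, $\nu=\nu^\star$, $\omega=\omega_{\ell-1}$, it remains to handle $(d^{\pi^\star,\nu^\star})^\top\Phi\omega_{\ell-1}$. Here I would use that, because $\nu^\star=\Phi^\top d^{\pi^\star}$, the constructed measure $d^{\pi^\star,\nu^\star}$ coincides with the true occupancy measure $d^{\pi^\star}$ in $M_t$: the state-occupancy flow identity gives $\sum_{s',a'}P_t(s\mid s',a')d^{\pi^\star}(s',a') = \bm{\mu}_t(s)^\top\nu^\star$, so $d^{\pi^\star}(s,a)=\pi^\star(a\mid s)(\rho(s)+\gamma\bm{\mu}_t(s)^\top\nu^\star)=d^{\pi^\star,\nu^\star}(s,a)$. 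Consequently $\Phi^\top d^{\pi^\star,\nu^\star}=\nu^\star$ and $(d^{\pi^\star,\nu^\star})^\top\Phi\omega_{\ell-1}=\nu^{\star\top}\omega_{\ell-1}$, which cancels the $-\nu^{\star\top}\omega_{\ell-1}$ term and leaves $f(\nu^\star,\pi^\star,\omega_{\ell-1}) = \nu^{\star\top}\theta_t - \tfrac{\lambda}{2}\norm{\nu^\star}_2^2$, independent of $\ell$.

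For the second summand $f(\nu_\ell,\pi_{\ell-1},\omega^\star_{\ell-1})$ the key is to recognize $\omega^\star_{\ell-1}=\theta_t+\gamma\bm{\mu}_t^\top V^{\pi_{\ell-1}}$ as the linear parameter of the state-action value function, i.e.\ $Q^{\pi_{\ell-1}}(s,a)=\phi(s,a)^\top\omega^\star_{\ell-1}$ in the linear MDP $M_t$, so that $\sum_a\pi_{\ell-1}(a\mid s)\phi(s,a)^\top\omega^\star_{\ell-1}=V^{\pi_{\ell-1}}(s)$. Applying the reduction and expanding $(d^{\pi_{\ell-1},\nu_\ell})^\top\Phi\omega^\star_{\ell-1}=\sum_{s,a}d^{\pi_{\ell-1},\nu_\ell}(s,a)Q^{\pi_{\ell-1}}(s,a)$ via the definition of $d^{\pi_{\ell-1},\nu_\ell}$ yields $\langle\rho,V^{\pi_{\ell-1}}\rangle+\gamma\,\nu_\ell^\top\bm{\mu}_t^\top V^{\pi_{\ell-1}}$. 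Since $\nu_\ell^\top\omega^\star_{\ell-1}=\nu_\ell^\top\theta_t+\gamma\nu_\ell^\top\bm{\mu}_t^\top V^{\pi_{\ell-1}}$, the $\nu_\ell^\top\theta_t$ and $\gamma\nu_\ell^\top\bm{\mu}_t^\top V^{\pi_{\ell-1}}$ contributions cancel, leaving $f(\nu_\ell,\pi_{\ell-1},\omega^\star_{\ell-1}) = \langle V^{\pi_{\ell-1}},\rho\rangle - \tfrac{\lambda}{2}\norm{\nu_\ell}_2^2$.

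Subtracting the two evaluated terms and averaging over $\ell=1,\dots,T_\inner$ reproduces the claimed identity. I expect the only real obstacle to be the two bookkeeping cancellations — confirming that $d^{\pi^\star,\nu^\star}$ is genuinely the occupancy measure $d^{\pi^\star}$ and that $\omega^\star_\ell$ is exactly the $Q$-function parameter; both are structural consequences of the linear-MDP parametrization and the flow constraint rather than deep facts, so once the reduction identity above is in place the remainder is routine linear algebra.
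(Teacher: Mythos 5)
Your proof is correct and follows essentially the same route as the paper: both evaluations hinge on the same three structural facts — that $d^{\pi,\nu}$ satisfies the Bellman-flow constraint so the $g$-term of the Lagrangian vanishes, that $\nu^\star=\Phi^\top d^{\pi^\star}$ forces $d^{\pi^\star,\nu^\star}=d^{\pi^\star}$, and that $\omega^\star_{\ell-1}$ parametrizes $Q^{\pi_{\ell-1}}$ so that $g^{\pi_{\ell-1},\omega^\star_{\ell-1}}=V^{\pi_{\ell-1}}$ and the advantage-type term cancels. Your only organizational difference is stating the $g$-independence as a single upfront reduction identity rather than cancelling it separately in each summand, which is cosmetic.
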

\begin{proof}
\begin{align*}
  f(\nu^\star, \pi^\star, \omega_\ell) &= \calL(d^{\pi^\star, \nu^\star}, \nu^\star, g^{\pi^\star, \omega_\ell}, \omega_\ell)\\
    &= \nu^{\star^\top} \theta_t - \frac{\lambda}{2} \norm{\nu^\star}_2^2 + \left\langle g^{\pi^\star, \omega_\ell}, \rho + \gamma \cdot \bm{\mu}_t \nu^\star - B d^{\pi^\star, \nu^\star} \right \rangle + \left \langle \omega_\ell, \Phi^\top d^{\pi^\star, \nu^\star} - \nu^\star \right \rangle\\
    &= \nu^{\star^\top} \theta_t - \frac{\lambda}{2} \norm{\nu^\star}_2^2 
\end{align*}
The last equality uses two observations. First, from the definition of $d^{\pi^\star, \nu^\star}$ in \cref{defn:d-pi-nu}, we have $Bd^{\pi^\star, \nu^\star} = \rho + \gamma \cdot \bm{\mu}_t \nu^\star$, and the third term vanishes. Moreover, from the definition of $\nu^\star$ we can show that $d^{\pi^\star, \nu^\star}(s,a) = d^{\pi^\star}(s,a)$, and the fourth term vanishes.
\begin{align*}
d^{\pi^\star, \nu^\star}(s,a) &= \pi(a \mid s) \cdot \left( \rho(s) + \gamma \cdot \bm{\mu}_t(s)^\top \sum_{s',b} \phi(s',b) d^{\pi^\star}(s',b)\right)\\
&= \pi(a \mid s) \cdot \left( \rho(s) + \gamma \cdot \sum_{s',b} P(s \mid s',b) d^{\pi^\star}(s',b)\right)\\
&= \pi(a \mid s) \cdot d^{\pi^\star}(s) = d^{\pi^\star}(s,a)
\end{align*}
\begin{align}
    f(\nu_{\ell}, \pi_{\ell-1}, \omega^\star_{\ell-1})  &= \calL(d^{\pi_{\ell-1}, \nu_{\ell}}, \nu_{\ell}, g^{\pi_{\ell-1}, \omega^\star_{\ell-1} }, \omega^\star_{\ell-1})\nonumber \\
    &= \nu_{\ell}^\top \left( \theta_t  + \gamma \cdot \bm{\mu}_t^\top g^{\pi_{\ell-1}, \omega^\star_{\ell-1}} - \omega^\star_{\ell-1} \right) - \frac{\lambda}{2} \nu_{\ell}^\top \nu_{\ell} + \left \langle g^{\pi_{\ell-1}, \omega^\star_{\ell-1}}, \rho \right \rangle + \left \langle d^{\pi_{\ell-1}, \nu_{\ell}}, \Phi \omega^\star_{\ell-1} - B^\top g^{\pi_{\ell-1}, \omega^\star_{\ell-1}} \right \rangle \label{eq:f_nu_omega_t}
\end{align}
From the definition of $\omega_{\ell-1}^\star$ we have,
\begin{align*}
\phi(s,a)^\top \omega_{\ell-1}^\star &= r_t(s,a) + \gamma \phi(s,a)^\top \bm{\mu}_t^\top V^{\pi_{\ell-1}}\\
&= r_t(s,a) + \gamma \sum_{s'} P(s' \mid s,a) V^{\pi_{\ell-1}}(s') = Q^{\pi_{\ell-1}}(s,a).
\end{align*}
This also implies $g^{\pi_{\ell-1}, \omega_{\ell-1}^\star}(s) = \sum_a \pi_{\ell-1}(a \mid s) \phi(s,a)^\top \omega_\ell^\star = V^{\pi_{\ell-1}}(s)$. Therefore, the first and the fourth term in \cref{eq:f_nu_omega_t} vanish and we obtain the following identity.
$$
f(\nu_{\ell}, \pi_{\ell-1}, \omega^\star_{\ell-1}) = \left \langle V^{\pi_{\ell-1}}, \rho \right \rangle - \frac{\lambda}{2} \nu_{\ell}^\top \nu_{\ell}
$$

Now the result follows from the definition of the regret $\calR(\nu^\star, \pi^\star, \omega^\star_{1:T_{\inner}})$.

\end{proof}

\begin{lemma}\label{lem:regret-upper-bound}
     Given a policy $\pi^\star$ let us define $\nu^\star = \Phi^\top d^{\pi^\star}$, and let $\omega_\ell^\star = \theta_t + \gamma \cdot \bm{\mu}_t^\top V^{\pi_\ell}$. Then we have,
     $$
    \calR(\nu^\star, \pi^\star, \omega^\star_{1:T_\inner}) \le  \frac{D^2 B}{\eta_\omega K} + \eta_\omega \left( B + \frac{1}{(1-\gamma)^2}\right) + \frac{\log A}{T_\inner\ \eta_\pi} + \frac{2 \eta_\pi D^2}{(1-\gamma)^2} 
    $$
\end{lemma}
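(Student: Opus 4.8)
The plan is to read off Lemma~\ref{lem:regret-upper-bound} as a convex--concave duality-gap bound for the game $\min_\omega \max_{\nu,\pi} f$ with $f(\nu,\pi,\omega)=\calL_t(d^{\pi,\nu},\nu,g^{\pi,\omega},\omega)$, and to control $\calR(\nu^\star,\pi^\star,\omega^\star_{1:T_\inner})$ by summing the regrets of three no-regret learners: the $\omega$-player (projected gradient descent, inner loop), the policy player $\pi$ (exponential weights, outer loop), and the $\nu$-player (exact best response). The first thing I would record is the structural simplification already used in \Cref{lem:offline-primal-dual}: once $d^{\pi,\nu}$ and $g^{\pi,\omega}$ are substituted, the constraint term $\langle g,\rho+\gamma\bm{\mu}_t\nu-Bd\rangle$ vanishes (since $Bd^{\pi,\nu}=\rho+\gamma\bm{\mu}_t\nu$) and the $g$-terms cancel, leaving $f(\nu,\pi,\omega)=\nu^\top\theta_t-\frac{\lambda}{2}\norm{\nu}_2^2+\omega^\top(\Phi^\top d^{\pi,\nu}-\nu)$. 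This exhibits $f$ as linear in $\omega$ with gradient $\Phi^\top d^{\pi,\nu}-\nu$, strongly concave in $\nu$, and a per-state linear functional in $\pi$ whose effective per-action reward is $\phi(s,a)^\top\omega$. I would then split
\begin{align*}
\calR(\nu^\star,\pi^\star,\omega^\star_{1:T_\inner})
&= \frac{1}{T_\inner}\sum_{\ell=1}^{T_\inner}\Big(f(\nu^\star,\pi^\star,\omega_{\ell-1}) - f(\nu_\ell,\pi_{\ell-1},\omega_{\ell-1})\Big) \\
&\quad + \frac{1}{T_\inner}\sum_{\ell=1}^{T_\inner}\Big(f(\nu_\ell,\pi_{\ell-1},\omega_{\ell-1}) - f(\nu_\ell,\pi_{\ell-1},\omega^\star_{\ell-1})\Big),
\end{align*}
the first sum being the primal-player regret and the second the dual-player regret.

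For the dual regret I would invoke the standard averaged-iterate guarantee of the inner-loop projected gradient descent over its $K$ steps. By linearity of $f$ in $\omega$ and $\omega_\ell=\frac1K\sum_k\omega_{\ell,k}$, each summand is at most $\frac{\norm{\omega_{\ell,0}-\omega^\star_{\ell-1}}_2^2}{2\eta_\omega K}+\frac{\eta_\omega}{2K}\sum_k\norm{\tilde g_{\omega_{\ell,k}}}_2^2$; here I would verify the stochastic-gradient bound $\norm{\tilde g_{\omega_{\ell,k}}}_2^2=O(B+(1-\gamma)^{-2})$ using $\norm{\Phi^\top d}_2\le(1-\gamma)^{-1}$ and $\norm{\phi\phi^\top\Sigma^{-1}\nu}_2\le\sqrt{B}$, together with a domain-radius bound of order $D^2B$, producing the first two terms. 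For the primal regret I would let the $\nu$-player play the exact closed-form maximizer (line~5 of \Cref{alg:offline-primal-dual} solves $\nabla_\nu\calL_t=0$), so that the strongly concave $\nu$-quadratic contributes only the $\frac{\lambda}{2}\norm{\nu_\ell}_2^2$ bookkeeping terms that \Cref{lem:offline-primal-dual} has already isolated, and let the $\pi$-player run exponential weights on the cumulative per-state rewards $\phi(s,a)^\top\omega_{j-1}$. The per-state mirror-descent regret against $\pi^\star(\cdot\mid s)$ is at most $\frac{\log A}{\eta_\pi T_\inner}+\frac{\eta_\pi}{2T_\inner}\sum_\ell\max_a(\phi(s,a)^\top\omega_{\ell-1})^2$; bounding $|\phi(s,a)^\top\omega|\le 2D/(1-\gamma)$ on $\calW$ and integrating against the state-occupancy measure gives the last two terms $\frac{\log A}{T_\inner\eta_\pi}+\frac{2\eta_\pi D^2}{(1-\gamma)^2}$. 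Summing the four contributions yields the claim.

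The hard part will be making the primal regret rigorous despite the coupling between $\nu$ and $\pi$: because $\nu_\ell$ is the exact best response computed from the \emph{previous} iterate's quantities $(\pi_{\ell-1},\omega_{\ell-1})$ while the comparator $\nu^\star$ is tied to $\pi^\star$, one cannot naively declare the $\nu$-gap nonpositive. The resolution I would pursue is to avoid a per-variable gap argument entirely and instead work from the exact identity of \Cref{lem:offline-primal-dual}, which reduces $\calR$ to an average policy-value suboptimality $\frac{1}{T_\inner}\sum_\ell\langle V^{\pi^\star}-V^{\pi_{\ell-1}},\rho\rangle$ plus the separable regularization terms; the policy-value gap is then exactly what the exponential-weights regret controls, with $\phi^\top\omega_{\ell-1}$ playing the role of an approximate action-value and the $\omega$-regret absorbing the approximation error. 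A secondary obstacle, which I would address by reading the whole bound as an expectation over the draws from $\calD$, is that $d$ and $g$ are infinite-dimensional and never instantiated, so the $\omega$-gradient is only available through the unbiased estimator $\tilde g_\omega$ built from $\widehat d^{\pi,\nu}$; taking expectations over the inner-loop samples suffices, since \Cref{thm:convergence-offline-primal-dual} only requires the averaged regret to drop below $\varepsilon$ under the stated choices of $\eta_\omega$ and $\eta_\pi$.
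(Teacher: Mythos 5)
Your machinery for the $\omega$- and $\pi$-players coincides with the paper's proof: the same structural simplification $f(\nu,\pi,\omega)=\nu^\top\theta_t-\frac{\lambda}{2}\norm{\nu}_2^2+\omega^\top(\Phi^\top d^{\pi,\nu}-\nu)$, the same averaged-iterate stochastic-gradient bound with gradient norm $O\bigl(\sqrt{B}+\frac{1}{1-\gamma}\bigr)$ producing the first two terms, and the same per-state mirror-descent/exponential-weights bound on $q_\ell(s,a)=\phi(s,a)^\top\omega_{\ell-1}$ weighted by $d^{\pi^\star}(s)$ producing the last two. The gap is in how you assemble the primal regret. You assert that the per-variable split fails because "$\nu_\ell$ best-responds to $(\pi_{\ell-1},\omega_{\ell-1})$ while $\nu^\star$ is tied to $\pi^\star$" — but this is only an obstruction if you insert the hybrid point in the wrong order. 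The paper writes the primal regret as
$$\left[f(\nu^\star,\pi^\star,\omega_{\ell-1})-f(\nu^\star,\pi_{\ell-1},\omega_{\ell-1})\right]+\left[f(\nu^\star,\pi_{\ell-1},\omega_{\ell-1})-f(\nu_\ell,\pi_{\ell-1},\omega_{\ell-1})\right],$$
so the $\nu$-comparison pits $\nu^\star$ against $\nu_\ell$ at the \emph{same} pair $(\pi_{\ell-1},\omega_{\ell-1})$ — exactly the pair $\nu_\ell$ maximizes against, hence that bracket is $\le 0$ — while the $\pi^\star$-versus-$\pi_{\ell-1}$ change is isolated in the first bracket at fixed $\nu=\nu^\star$. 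Fixing $\nu^\star$ there is not cosmetic: it is what makes $\rho+\gamma\bm{\mu}_t\nu^\star$ equal the state marginal of $d^{\pi^\star}$, so that the bracket collapses to $\sum_s d^{\pi^\star}(s)\sum_a(\pi^\star(a\mid s)-\pi_{\ell-1}(a\mid s))\phi(s,a)^\top\omega_{\ell-1}$, which is precisely the form \Cref{lem:mirror-descent} handles (that lemma needs round-independent state weights, which $\rho+\gamma\bm{\mu}_t\nu_\ell$ would not provide).

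Your proposed substitute — rewrite $\calR$ via \Cref{lem:offline-primal-dual} as average value suboptimality, run exponential weights on the approximate action-values $\phi^\top\omega_{\ell-1}$, and let "the $\omega$-regret absorb the approximation error" — does not close as stated. By the performance-difference argument, the approximation error at round $\ell$ is $\sum_s d^{\pi^\star}(s)\sum_a(\pi^\star-\pi_{\ell-1})(a\mid s)\,\phi(s,a)^\top(\omega^\star_{\ell-1}-\omega_{\ell-1})$, i.e.\ the vector $\omega^\star_{\ell-1}-\omega_{\ell-1}$ tested against the direction $\Phi^\top(d^{\pi^\star}-q_{\ell-1})$, where $q_{\ell-1}(s,a)=d^{\pi^\star}(s)\pi_{\ell-1}(a\mid s)$ and $d^{\pi^\star}(s)$ denotes the state marginal. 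The $\omega$-player's regret, however, measures $\omega_{\ell-1}-\omega^\star_{\ell-1}$ only along its own gradient direction $\Phi^\top d^{\pi_{\ell-1},\nu_\ell}-\nu_\ell$. These are different linear functionals (different state weights and a different role for the policies), so the dual regret does not control that error without an additional argument; chasing the mismatch leads you back to the three-term hybrid decomposition above. Once you adopt that decomposition, the components you already have (best response in $\nu$, SGD in $\omega$, exponential weights in $\pi$) sum exactly to the stated bound.
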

\begin{proof}

As we set $\omega_\ell^\star = \theta_t + \gamma \cdot \bm{\mu}_t^\top V^{\pi_\ell}$, its norm is bounded by $\norm{\theta_t}_2 + \gamma \cdot \norm{\bm{\mu}_t}_2 \abs{V^{\pi_\ell}} \le \sqrt{D} + \gamma \cdot \sqrt{D} \frac{\sqrt{D} }{1-\gamma} = \frac{2D}{1-\gamma}$. On the other hand, $\norm{\Sigma^{-1}\nu_\ell}_2 \le \sqrt{B}$. If we write $y = \Sigma^{-1} \nu_\ell$ then $\norm{y}_2 \le \sqrt{B}$. Moreover, $\norm{\nu_\ell}_2 = \norm{\Sigma y}_2 \le \mathrm{Tr}(\Sigma) \norm{y}_2 \le D \sqrt{B}$. Therefore, we can choose the norm of each $\nu_\ell$ to be bounded by at most $D\sqrt{B}$.

We will use the following decomposition of regret.

\begin{align*}
    \calR(\nu^\star, \pi^\star, \omega^\star_{1:T_\inner }) &= \frac{1}{T_\inner} \sum_{\ell=1}^{T_\inner} f(\nu^\star, \pi^\star, \omega_{\ell - 1}) - f(\nu_\ell, \pi_{\ell - 1}, \omega^\star_{\ell - 1})\\
    &= \frac{1}{T_\inner} \sum_{\ell=1}^{T_\inner} f(\nu^\star, \pi_{\ell - 1}, \omega_{\ell - 1}) - f(\nu_\ell, \pi_{\ell - 1}, \omega_{\ell - 1})\\
    &+ \frac{1}{T_\inner} \sum_{\ell=1}^{T_\inner} f(\nu_\ell, \pi_{\ell - 1}, \omega_{\ell - 1}) - f(\nu_\ell, \pi_{\ell - 1}, \omega^\star_{\ell - 1})\\
    &+ \frac{1}{T_\inner} \sum_{\ell=1}^{T_\inner} f(\nu^\star, \pi^\star, \omega_{\ell - 1}) - f(\nu^\star, \pi_{\ell - 1}, \omega_{\ell - 1} )
\end{align*}

\textbf{Regret in $\nu$}: Define $\calR_\nu = \frac{1}{T_\inner} \sum_{\ell=1}^{T_\inner} f(\nu^\star, \pi_{\ell - 1}, \omega_{\ell - 1}) - f(\nu_\ell, \pi_{\ell-1}, \omega_{\ell-1})$. The function $f(\cdot, \pi_{\ell - 1}, \omega_{\ell - 1})$ is $\lambda$-strongly concave, and algorithm \ref{alg:offline-primal-dual} sets $\nu_\ell$ to be a maximize of this function. Therefore, $\calR_\nu \le 0$.

\textbf{Regret in $\omega$}: Define $\calR_\omega = \frac{1}{T_\inner} \sum_{\ell=1}^{T_\inner} f(\nu_\ell, \pi_{\ell-1}, \omega_{\ell - 1}) - f(\nu_\ell, \pi_{\ell - 1}, \omega^\star_{\ell - 1})$. Since $f(\nu_t, \pi_{\ell}, \cdot)$ is a linear function we have, 
$$\calR_\omega = \frac{1}{T_\inner} \sum_{\ell=1}^{T_\inner} \left \langle \omega_{\ell-1} - \omega^\star_{\ell-1}, \nabla_{\omega_{\ell-1}} f(\nu_{\ell}, \pi_{\ell-1}, \omega_{\ell-1}) \right \rangle = \frac{1}{T_\inner} \sum_{\ell=1}^{T_\inner} \frac{1}{K} \sum_{k=1}^K \left \langle \omega_{\ell-1,k} - \omega^\star_{\ell-1}, \nabla_{\omega_{\ell-1}} f(\nu_{\ell}, \pi_{\ell-1}, \omega_{\ell-1}) \right \rangle$$
Since the $k$-th sample is drawn uniformly at random from the dataset, we have, 
$$
\E\left[\widetilde{g}_{\omega_{\ell,k}}\right] = \E\left[ \Phi^\top d^{\pi_{\ell-1}, \nu_{\ell-1}} -  \frac{1}{m_{t}} \sum_{j=1}^{m_{t}} \phi(s_j,a_j) \phi(s_j,a_j)^\top \Sigma_{\ell}^{-1}  \nu_{\ell-1}\right] = \Phi^\top d^{\pi_{\ell-1}, \nu_{\ell-1}} - \nu_{\ell - 1}
$$
We now bound the norm of $\widetilde{g}_{\omega_{\ell,k}}$. 
\begin{align*}
    \norm{\widetilde{g}_{\omega_{\ell,k}}}_2 &\le \norm{\Phi^\top d_{\ell}}_2 + \norm{\phi(s_{j}, a_{j}) \phi(s_{j}, a_{j})^\top \Sigma^{-1} \nu_{\ell-1}}_2\\
    &\le \sum_{s,a} d_{\ell}(s,a) \norm{\phi(s,a)}_2 + \norm{\Sigma^{-1} \nu_{\ell-1}}_2\\
    &\le \frac{1}{1-\gamma} + \sqrt{B}
\end{align*}
Therefore, we can apply \Cref{lem:online-sgd} to obtain the following bound.
$$
\frac{1}{K} \sum_{k=1}^K \left \langle \omega_{\ell-1,k } - \omega^\star_{\ell-1}, \nabla_{\omega_{\ell-1}} f(\nu_{\ell}, \pi_{\ell-1}, \omega_{\ell-1}) \right \rangle \le \frac{\norm{\omega_{\ell-1,1} - \omega^\star_{\ell-1}}_2^2}{2\eta_\omega K} + {\eta_\omega  (B + (1-\gamma)^{-2})}
$$
Summing over all $\ell$ and using the fact $\norm{\omega_{\ell,k}}_2 \le D \sqrt{B}$ we obtain the following upper bound on the regret $\calR_\omega$.
$$
\calR_\omega \le \frac{D^2 B}{\eta_\omega K} + \eta_\omega \left( B + \frac{1}{(1-\gamma)^2}\right)
$$

\textbf{Regret in $\pi$}: Let us define $\calR_\pi = \frac{1}{T_\inner} \sum_{t=1}^{T_\inner} f(\nu^\star, \pi^\star, \omega_{\ell-1}) - f(\nu^\star, \pi_{\ell-1}, \omega_{\ell-1})$. Using the definition of $f(\nu, \pi, \omega)$ we can rewrite the regret as follows.
\begin{align*}
    &f(\nu^\star, \pi^\star, \omega_{\ell}) - f(\nu^\star, \pi_{\ell}, \omega_{\ell}) = \calL(d^{\pi^\star, \nu^\star}, \nu^\star, g^{\pi^\star, \omega_{\ell}}, \omega_{\ell}) - \calL(d^{\pi_{\ell}, \nu^\star}, \nu^\star, g^{\pi_{\ell}, \omega_{\ell}}, \omega_{\ell})\\
    &= \underbrace{\gamma \cdot \nu^{\star^\top} \bm{\mu}^\top \left( g^{\pi^\star, \omega_{\ell}} - g^{\pi_{\ell}, \omega_{\ell}} \right) + \rho^\top \left( g^{\pi^\star, \omega_{\ell}} - g^{\pi_{\ell}, \omega_{\ell}} \right)}_{:= T_1} \\
    &+ \underbrace{\left \langle d^{\pi^\star, \nu^\star}, \Phi \omega_{\ell} - B^\top g^{\pi^\star, \omega_{\ell}} \right \rangle - \left \langle d^{\pi_{\ell}, \nu^\star}, \Phi \omega_{\ell} - B^\top g^{\pi_{\ell}, \omega_{\ell}} \right \rangle }_{:= T_2}
\end{align*}
Now from the definition of $g^{\pi, \omega}$ we have $g^{\pi^\star, \omega_{\ell}} - g^{\pi_{\ell}, \omega_{\ell}} = \sum_a (\pi^\star(a \mid s) - \pi_{\ell}(a \mid s)) \phi(s,a)^\top \omega_{\ell}$. Moreover, with $\nu^\star = \Phi^\top d^{\pi^\star}$, $\rho + \gamma \bm{\mu} \nu^\star = d^{\pi^\star}$. This implies that the term $T_1$ equals $\sum_s d^{\pi^\star}(s) \sum_a (\pi^\star(a \mid s) - \pi_{\ell}(a \mid s)) \phi(s,a)^\top \omega_{\ell}$. In order to bound the term $T_2$ note that
\begin{align*}
    &\left \langle d^{\pi^\star, \nu^\star}, \Phi \omega_{\ell} - B^\top g^{\pi^\star, \omega_{\ell}} \right \rangle - \left \langle d^{\pi_{\ell}, \nu^\star}, \Phi \omega_{\ell} - B^\top g^{\pi_{\ell}, \omega_{\ell}} \right \rangle \\
    &= \sum_{s} d^{\pi^\star}(s) \sum_a \pi^\star(a \mid s) \left( \phi(s,a)^\top \omega_{\ell} - \sum_{b} \pi^\star(b \mid s) \phi(s,b)^\top \omega_{\ell} \right) = 0
\end{align*}
Similarly the second term in $T_2$ evaluates to zero. Therefore, we have the following expression of regret.
\begin{align}
    \calR_\pi &= \frac{1}{T_\inner} \sum_{\ell=1}^{T_\inner} f(\nu^\star, \pi^\star, \omega_{\ell-1}) - f(\nu^\star, \pi_{\ell-1}, \omega_{\ell-1}) \\
    &= \frac{1}{T_\inner} \sum_{t=1}^{T_\inner} \sum_s d^{\pi^\star}(s) \sum_a (\pi^\star(a \mid s) - \pi_{\ell-1}(a \mid s)) \phi(s,a)^\top \omega_{\ell-1}
\end{align}
Now we can apply \Cref{lem:mirror-descent} to obtain the following bound on regret.
$$
\calR_\pi \le \frac{\calH(\pi^\star \lVert \pi_1)}{T_\inner \eta_\pi} + \frac{2 \eta_\pi D^2}{(1-\gamma)^2} \le \frac{\log A}{T_\inner\ \eta_\pi} + \frac{2 \eta_\pi D^2}{(1-\gamma)^2} 
$$
Therefore, the total regret can be bounded as follows.
\begin{align*}
    &\calR(\nu^\star, \pi^\star, \omega^\star_{1:T}) = \calR_\nu + \calR_\omega + \calR_\pi \\
    &\le \frac{D^2 B}{\eta_\omega K} + \eta_\omega \left( B + \frac{1}{(1-\gamma)^2}\right) + \frac{\log A}{T_\inner\cdot \eta_\pi} + \frac{2 \eta_\pi D^2}{(1-\gamma)^2} 
\end{align*}

\end{proof}

\begin{lemma}[Online Stochastic Gradient Descent]\label{lem:online-sgd}
    Let $y_1 \in W$, and $\eta > 0$. Define the sequence $y_2, \ldots, y_{n+1}$ and $h_1,\ldots,h_n$ such that for $k=1,\ldots,n$ 
    $$
    y_{k+1} = \textrm{Proj}_W\left( y_k + \eta \widehat{h}_k\right)
    $$
    and $\widehat{h}_k$ satisfies $\E\left[ \widehat{h}_k \mid \calF_{k-1}\right] = h_k$ and $\E\left[ \norm{\widehat{h}_k}_2^2 \mid \calF_{k-1}\right] \le G^2$. Then for any $y^\star \in W$,
    $$
    \E\left[ \sum_{k=1}^n \left\langle y^\star - y_k, h_k \right \rangle  \right] \le \frac{\norm{y_1 - y^\star}_2^2}{2\eta} + \frac{\eta n G^2}{2}.
    $$
\end{lemma}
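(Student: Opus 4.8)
The plan is to run the standard projected stochastic-gradient drift argument, using $\norm{y_k - y^\star}_2^2$ as a potential function that telescopes. The two structural facts I would rely on are: (i) $y^\star \in W$, so $y^\star = \textrm{Proj}_W(y^\star)$; and (ii) the Euclidean projection onto the convex set $W$ is non-expansive, i.e.\ $\norm{\textrm{Proj}_W(u) - \textrm{Proj}_W(v)}_2 \le \norm{u - v}_2$.

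First I would apply non-expansiveness to one step of the recursion, writing $\norm{y_{k+1} - y^\star}_2^2 = \norm{\textrm{Proj}_W(y_k + \eta \widehat{h}_k) - \textrm{Proj}_W(y^\star)}_2^2 \le \norm{y_k + \eta \widehat{h}_k - y^\star}_2^2$, and then expanding the square:
\[
\norm{y_{k+1} - y^\star}_2^2 \le \norm{y_k - y^\star}_2^2 + 2\eta \langle \widehat{h}_k, y_k - y^\star\rangle + \eta^2 \norm{\widehat{h}_k}_2^2 .
\]
Rearranging isolates the per-step inner product, giving $2\eta \langle \widehat{h}_k, y^\star - y_k\rangle \le \norm{y_k - y^\star}_2^2 - \norm{y_{k+1} - y^\star}_2^2 + \eta^2 \norm{\widehat{h}_k}_2^2$.

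Next I would pass to conditional expectation given $\calF_{k-1}$. Since $y_k$ and $y^\star$ are $\calF_{k-1}$-measurable, the unbiasedness hypothesis $\E[\widehat{h}_k \mid \calF_{k-1}] = h_k$ yields $\E[\langle \widehat{h}_k, y^\star - y_k\rangle \mid \calF_{k-1}] = \langle h_k, y^\star - y_k\rangle$, while the second-moment hypothesis bounds $\E[\norm{\widehat{h}_k}_2^2 \mid \calF_{k-1}] \le G^2$. Applying the tower property to remove the conditioning and then summing over $k = 1, \dots, n$, the potential terms telescope and the final term $-\norm{y_{n+1} - y^\star}_2^2 \le 0$ is dropped, producing $2\eta \sum_{k=1}^n \E[\langle h_k, y^\star - y_k\rangle] \le \norm{y_1 - y^\star}_2^2 + \eta^2 n G^2$. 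Dividing by $2\eta$ gives exactly the claimed bound, since $\langle h_k, y^\star - y_k\rangle = \langle y^\star - y_k, h_k\rangle$.

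The argument is entirely standard, so I do not expect a genuine obstacle; the only point requiring care is the measurability bookkeeping when taking conditional expectations — specifically that the iterate $y_k$ produced \emph{before} the $k$-th stochastic step is $\calF_{k-1}$-measurable, which is what lets $h_k$ and $y^\star - y_k$ be pulled out of $\E[\,\cdot \mid \calF_{k-1}]$. Everything else is the telescoping of the squared-distance potential.
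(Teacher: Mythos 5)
Your proof is correct: the non-expansiveness of the projection, the expansion of the squared distance, the conditional-expectation step (using that $y_k$ is $\calF_{k-1}$-measurable), and the telescoping sum are exactly the standard argument for projected online stochastic gradient ascent/descent. The paper itself gives no proof of this lemma --- it is stated as a standard result, with the companion strongly-convex version pointing to Hazan's book (chapter 3) --- and your argument is precisely the textbook proof that such a reference supplies, so there is nothing to reconcile; the only implicit hypothesis you should note is that $W$ is closed and convex, which is what makes $\textrm{Proj}_W$ well defined and non-expansive (and holds in the paper's application, where $W$ is a Euclidean ball).
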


\begin{lemma}[Online Stochastic Gradient Descent for Strongly Convex Loss Function]\label{lem:online-sgd-strongly-convex}
    Let $y_1 \in W$, and $\eta > 0$. Define the sequence $y_2, \ldots, y_{n+1}$ and $h_1,\ldots,h_n$ such that for $k=1,\ldots,n$ 
    $$
    y_{k+1} = \textrm{Proj}_W\left( y_k + \eta_k \widehat{h}_k\right)
    $$
    and $\widehat{h}_k$ satisfies $\E\left[ \widehat{h}_k \mid \calF_{k-1}\right] = \nabla f_k(y_k)$ and $\E\left[ \norm{\widehat{h}_k}_2^2 \mid \calF_{k-1}\right] \le G^2$. Moreover, suppose $f_k$ is $\alpha$-strongly convex for each $k$ and stepsize $\eta_k = \frac{1}{\alpha k}$. Then for any $y^\star \in W$,
    $$
    \E\left[ \sum_{k=1}^n f_k(y^\star) - f_k(y_k) \right] \le \frac{G^2}{\alpha} \log n.
    $$
\end{lemma}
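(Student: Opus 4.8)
The plan is to follow the classical strongly-convex online gradient descent analysis (in the spirit of the proof of \Cref{lem:online-sgd}), but to use the harmonic step size $\eta_k = \frac{1}{\alpha k}$ to turn the strong-convexity penalty into a telescoping sum and thereby obtain the logarithmic rate. First I would exploit that the projection $\textrm{Proj}_W$ onto the convex set $W$ is non-expansive and that $y^\star \in W$, which gives, for each $k$,
\begin{align*}
\norm{y_{k+1} - y^\star}_2^2 &\le \norm{y_k + \eta_k \widehat{h}_k - y^\star}_2^2 \\
&= \norm{y_k - y^\star}_2^2 + 2 \eta_k \langle \widehat{h}_k, y_k - y^\star\rangle + \eta_k^2 \norm{\widehat{h}_k}_2^2 .
\end{align*}
Rearranging produces the same one-step bound used in \Cref{lem:online-sgd}, namely $\langle \widehat{h}_k, y^\star - y_k\rangle \le \frac{\norm{y_k - y^\star}_2^2 - \norm{y_{k+1} - y^\star}_2^2}{2\eta_k} + \frac{\eta_k}{2}\norm{\widehat{h}_k}_2^2$.

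Next I would take the conditional expectation given $\calF_{k-1}$, substituting $\E[\widehat{h}_k \mid \calF_{k-1}] = \nabla f_k(y_k)$ and $\E[\norm{\widehat{h}_k}_2^2 \mid \calF_{k-1}] \le G^2$, and then invoke the $\alpha$-strong-convexity inequality in the form $f_k(y^\star) - f_k(y_k) \le \langle \nabla f_k(y_k), y^\star - y_k\rangle - \frac{\alpha}{2}\norm{y_k - y^\star}_2^2$, which is what supplies the crucial negative quadratic term. Combining the two ingredients yields the per-step estimate
$$
\E\left[ f_k(y^\star) - f_k(y_k)\right] \le \E\left[ \frac{\norm{y_k - y^\star}_2^2 - \norm{y_{k+1} - y^\star}_2^2}{2\eta_k} - \frac{\alpha}{2}\norm{y_k - y^\star}_2^2\right] + \frac{\eta_k}{2} G^2 .
$$

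The heart of the argument is the summation over $k = 1, \ldots, n$. Writing $a_k = \norm{y_k - y^\star}_2^2$ and substituting $\frac{1}{2\eta_k} = \frac{\alpha k}{2}$, the position terms become $\frac{\alpha}{2}\sum_{k=1}^n \left( k\, a_k - k\, a_{k+1} - a_k\right)$; a summation-by-parts rearrangement shows that the coefficient of $a_k$ for every $1 \le k \le n$ is $k - (k-1) - 1 = 0$, so the whole block collapses to the single residual $-\frac{\alpha n}{2}\, a_{n+1} \le 0$. The only surviving contribution is the gradient-noise term $\sum_{k=1}^n \frac{\eta_k}{2} G^2 = \frac{G^2}{2\alpha}\sum_{k=1}^n \frac{1}{k} \le \frac{G^2}{2\alpha}(1 + \ln n) \le \frac{G^2}{\alpha}\ln n$ (the last step for $n \ge 3$), which establishes the claimed bound.

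The main obstacle is precisely this telescoping / summation-by-parts step: it is where the choice $\eta_k = \frac{1}{\alpha k}$ is indispensable, since one must verify that the growing weights $\frac{\alpha k}{2}$ on the consecutive differences combine exactly with the per-step strong-convexity penalty $-\frac{\alpha}{2}a_k$ so that all intermediate terms cancel and the leftover is non-positive. Once that cancellation is confirmed, the remainder is routine: bounding the harmonic sum and applying the noise bound $G^2$ term by term.
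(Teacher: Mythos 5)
Your proof is correct in substance and is essentially the canonical argument: the paper itself gives no self-contained proof of this lemma, deferring entirely to Hazan's \emph{Online Convex Optimization} (Chapter 3), and the proof there is exactly your chain of non-expansiveness of the projection, the one-step inequality, the curvature term from strong convexity/concavity, and the telescoping enabled by $\eta_k = \frac{1}{\alpha k}$. Your verification that the coefficient of each $a_k = \norm{y_k - y^\star}_2^2$ vanishes (leaving only $-\frac{\alpha n}{2} a_{n+1} \le 0$) and the harmonic-sum bound $\frac{G^2}{2\alpha}\sum_{k=1}^n \frac{1}{k}$ are exactly the right mechanics.

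One caveat you should be aware of, though it originates in the paper's statement rather than in your argument. The inequality you invoke,
\begin{equation*}
f_k(y^\star) - f_k(y_k) \le \left\langle \nabla f_k(y_k), y^\star - y_k \right\rangle - \frac{\alpha}{2}\norm{y_k - y^\star}_2^2,
\end{equation*}
is the strong \emph{concavity} inequality, not strong convexity: for an $\alpha$-strongly convex $f_k$ the first-order bound goes the other way. In fact, the lemma as literally stated (ascent update $y_{k+1} = \mathrm{Proj}_W(y_k + \eta_k \widehat{h}_k)$, regret measured as $f_k(y^\star) - f_k(y_k)$, but $f_k$ strongly \emph{convex}) is false: take $f_k(y) = \frac{\alpha}{2}y^2$ on $W = [-1,1]$ with $y_1 = 0$, so all gradients vanish, $y_k \equiv 0$, $G$ can be taken to be $0$, yet the comparator $y^\star = 1$ incurs regret $n\alpha/2$. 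The plus-sign update and the sign of the regret are maximization conventions (matching the paper's actual use case, where the Lagrangian is $\lambda$-strongly concave in $\nu$), so the coherent reading of the lemma is for strongly concave $f_k$, and that is precisely the version your argument proves. Two minor quantitative points: your derivation yields $\frac{G^2}{2\alpha}(1 + \ln n)$, which dominates the stated $\frac{G^2}{\alpha}\log n$ only for $n \ge 3$ (as you note); indeed the stated bound cannot hold at $n = 1$ since $\log 1 = 0$, so the $(1+\log n)$ form is the honest statement.
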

\begin{proof}
    See \citet{hazan2016} (chapter 3) for a proof.
\end{proof}
\begin{lemma}[Mirror Descent, Lemma D.2 of \cite{GNOP23}]\label{lem:mirror-descent}
    Let $q_1,q_2,\ldots,q_T$ be a sequence of functions from $\calS \times \calA \rightarrow \R$ so that $\norm{q_t}_\infty \le D$. Given an initial policy $\pi_1$, and a learning rate $\alpha > 0$, define a sequence of policies 
    $$\pi_{t+1}(a \mid s) \propto \pi_t(a \mid s) e^{\alpha q_t(s,a)}$$
    for $t=1,2,\ldots,T-1$. Then for any comparator policy $\pi^\star$,
    $$
    \frac{1}{T}\sum_{t=1}^T \sum_{s \in \calS} q^{\pi^\star}(s) \left \langle \pi^\star(\cdot \mid s) - \pi_t(\cdot \mid s), q_t(s,\cdot) \right \rangle \le \frac{\calH(\pi^\star \lVert \pi_1)}{T \alpha} + \frac{\alpha D^2}{2}
    $$
\end{lemma}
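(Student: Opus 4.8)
The plan is to recognize that the stated update is exactly the exponential-weights (Hedge) scheme run \emph{independently at each state} $s \in \calS$, with per-step gains $q_t(s,\cdot)$ living on the simplex over $\calA$. Since $\pi_{t+1}(\cdot\mid s)$ depends only on the history $\{q_\tau(s,\cdot)\}_{\tau\le t}$ at that same state, the argument decouples across states. I would first prove, for every fixed $s$, the standard online-mirror-descent regret bound
$$\sum_{t=1}^T \langle \pi^\star(\cdot\mid s) - \pi_t(\cdot\mid s), q_t(s,\cdot)\rangle \le \frac{\KL(\pi^\star(\cdot\mid s)\,\|\,\pi_1(\cdot\mid s))}{\alpha} + \frac{\alpha T D^2}{2},$$
and then aggregate over states weighted by $q^{\pi^\star}(s)$.

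For the per-state bound I would use the log-partition potential argument. Unrolling the recursion gives the closed form $\pi_{t+1}(a\mid s) \propto \pi_1(a\mid s)\exp(\alpha\sum_{\tau\le t}q_\tau(s,a))$, so it is natural to set $\Phi_t(s) = \tfrac1\alpha \log\sum_a \pi_1(a\mid s)\exp(\alpha\sum_{\tau=1}^t q_\tau(s,a))$ with $\Phi_0(s)=0$. The one-step increment equals $\tfrac1\alpha\log \E_{a\sim\pi_t(\cdot\mid s)}[e^{\alpha q_t(s,a)}]$, and since $q_t(s,\cdot)\in[-D,D]$ Hoeffding's lemma gives $\Phi_t(s)-\Phi_{t-1}(s)\le \langle \pi_t(\cdot\mid s), q_t(s,\cdot)\rangle + \tfrac{\alpha D^2}{2}$; telescoping over $t$ bounds $\Phi_T(s)$ above by the cumulative play plus $\tfrac{\alpha T D^2}{2}$. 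For the matching lower bound I would invoke the Gibbs variational principle, $\log\sum_a \pi_1(a\mid s)e^{z_a} \ge \sum_a p(a)z_a - \KL(p\,\|\,\pi_1(\cdot\mid s))$ for any distribution $p$, applied with $z_a=\alpha\sum_{\tau}q_\tau(s,a)$ and $p=\pi^\star(\cdot\mid s)$, yielding $\Phi_T(s) \ge \sum_t\langle\pi^\star(\cdot\mid s),q_t(s,\cdot)\rangle - \tfrac1\alpha\KL(\pi^\star(\cdot\mid s)\,\|\,\pi_1(\cdot\mid s))$. Combining the two bounds produces the displayed per-state regret.

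Finally, I would multiply the per-state inequality by the nonnegative weight $q^{\pi^\star}(s)$, sum over $s$, and divide by $T$. Writing $\calH(\pi^\star\|\pi_1) := \sum_s q^{\pi^\star}(s)\,\KL(\pi^\star(\cdot\mid s)\,\|\,\pi_1(\cdot\mid s))$ and using that $q^{\pi^\star}$ is a state distribution (so $\sum_s q^{\pi^\star}(s)=1$) collapses the second term to $\tfrac{\alpha D^2}{2}$ and the first to $\tfrac{\calH(\pi^\star\|\pi_1)}{T\alpha}$, which is the claim. I expect the only genuinely delicate step to be the one-step potential increment: one must bound $\log\E_{\pi_t}[e^{\alpha q_t}]$ by $\E_{\pi_t}[\alpha q_t]+\tfrac{\alpha^2 D^2}{2}$ uniformly in $\alpha$, which is precisely Hoeffding's lemma and (unlike the cruder $e^x\le 1+x+x^2$ estimate) avoids any side condition such as $\alpha D\le 1$; everything else is bookkeeping together with the state-separability of the updates.
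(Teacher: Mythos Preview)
Your proposal is correct: the per-state potential/Hoeffding argument followed by aggregation with the state weights $q^{\pi^\star}$ is the standard and complete proof of this multiplicative-weights regret bound. The paper itself does not prove this lemma---it is quoted verbatim as Lemma~D.2 of \cite{GNOP23} and used as a black box---so there is no in-paper argument to compare against; your derivation would serve as a self-contained proof.
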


\subsection{Proof of  \texorpdfstring{\Cref{cor:apx-empirical-saddle-point}}{} }
\begin{proof}
The proof of \Cref{lem:apx-saddle-point} shows that as long as the number of samples $m_t \ge  O\left( \frac{D^5 B \lambda^4 }{(1-\gamma)^2 c_1^4\varepsilon^2}\log \frac{DB\lambda }{c_1 \varepsilon \delta_0}\right)$
 \begin{align*}
            \abs{\calL_t(d,\nu;g,\omega) - \widehat{\calL}_t(d,\nu; g, \omega) } &\le \varepsilon  
    \end{align*}
    for any $d, \nu \in \calV, \omega \in \calW$, and $g = \argmin_{g'} \widehat{\calL}_t(d,\nu;g',\omega)$. 

  Given $\tilde{d}, \tilde{\nu}$ let us pick $(\tilde{g}, \tilde{\omega}) \in \argmin_{g,\omega} \widehat{\calL}_t(\tilde{d}, \tilde{\nu}; g, \omega)$. Then, we have
  $
  \widehat{\calL}_t(\tilde{d}, \tilde{\nu}; \tilde{g}, \tilde{\omega} ) \le \min_{g,\omega} \widehat{\calL}_t(\tilde{d}, \tilde{\nu}; g, \omega)
  $
  and
  \begin{align*}
      \widehat{\calL}_t(\tilde{d}, \tilde{\nu}; \tilde{g}, \tilde{\omega} ) &\ge {\calL}_t(\tilde{d}, \tilde{\nu}; \tilde{g}, \tilde{\omega} ) -\varepsilon = \tilde{\nu}^\top \theta_t - \frac{\lambda}{2} \norm{\tilde{\nu}}_2^2 - \varepsilon \ge \nu^{\star^\top} \theta_t - \frac{\lambda}{2} \norm{\nu^\star}_2^2 - 2\varepsilon  \\
      &= \calL_t(d^{\pi^\star}, {\nu}^\star, \tilde{g}, \tilde{\nu}) -2\varepsilon \ge \max_{d,\nu} \calL_t(d, {\nu}, \tilde{g}, \tilde{\nu}) - 2\varepsilon
  \end{align*}
\end{proof}

\subsection{Proof of  \texorpdfstring{\Cref{cor:apx-oracle-convergence}}{}}
\label{subsec:proof-cor-apx-oracle-convergence}
\begin{proof}
We first bound the distance between $\tilde{\nu}_t$ and $\widehat{\nu}_t$. Since $(\widehat{d}_t, \widehat{\nu}_t)$ is the optimal solution to $\widehat{\calL}_t(\cdot, \widehat{g}_t, \widehat{\omega}_t)$ we obtain the following inequality.

\begin{align*}
    \norm{\widehat{\nu}_t - \tilde{\nu}_t}_2 &\le \sqrt{\frac{\widehat{\calL}_t(\widehat{d}_t, \widehat{\nu}_t, \widehat{g}_t, \widehat{\omega}_t) - \widehat{\calL}_t(\tilde{d}_t, \tilde{\nu}_t, \widehat{g}_t, \widehat{\omega}_t)}{2\lambda}}\\
    &\le \sqrt{\frac{\widehat{\calL}_t(\widehat{d}_t, \widehat{\nu}_t, \tilde{g}_t, \tilde{\omega}_t) - \widehat{\calL}_t(\tilde{d}_t, \tilde{\nu}_t, \widehat{g}_t, \widehat{\omega}_t)}{2\lambda}}\\
    &\le \sqrt{\frac{\widehat{\calL}_t(\tilde{d}_t, \tilde{\nu}_t, \tilde{g}_t, \tilde{\omega}_t) - \widehat{\calL}_t(\tilde{d}_t, \tilde{\nu}_t, \widehat{g}_t, \widehat{\omega}_t) + 2\varepsilon}{2\lambda}}\\
    &\le \sqrt{\frac{4\varepsilon}{2\lambda}}
\end{align*}
The first inequality follows from the fact that $(\widehat{d}_t, \widehat{\nu}_t, \widehat{g}_t, \widehat{\omega}_t)$ is an exact saddle point of the objective $\widehat{\calL}_t(\cdot, \cdot)$. The last two inequalities follow because $(\tilde{d}_t, \tilde{\nu}_t, \tilde{g}_t, \tilde{\omega}_t)$ is a $2\varepsilon$-approximate saddle point of the objective $\widehat{\calL}_t(\cdot, \cdot)$. 

Now we bound the distance between $\widehat{d}_t$ and $\tilde{d}_t$. First note that, the proof of \cref{thm:finite-sample-convergence} shows that if $m_t \ge O\left(\frac{D^5 B \lambda^4}{(1-\gamma)^2 c_1^4 \varepsilon^2}\log \frac{DB\lambda t}{c_1 \varepsilon p} \right)$ we obtain $\norm{\widehat{\nu}_t - \Phi^\top \widehat{d}_t}_2 \le  8\sqrt{B}\varepsilon$ with probability at least $1-\frac{p}{t^2} \cdot \frac{6}{2\pi^2}$. Furthermore, $\tilde{\nu}_t = \Phi^\top \tilde{d}_t$, and we obtain the following inequality.
\begin{align*}
    \sqrt{\kappa} \norm{\widehat{d}_t - \tilde{d}_t}_2 \le \norm{\Phi^\top \widehat{d}_t - \Phi^\top \tilde{d}_t}_2 \le \norm{\tilde{\nu}_t - \widehat{\nu}_t}_2 + \norm{\widehat{\nu}_t - \Phi^\top \widehat{d}_t}_2 \le \sqrt{\frac{4\varepsilon}{2\lambda} } + 8 \sqrt{B} \varepsilon
\end{align*}
After rearranging we obtain,
\begin{align}
    \norm{\widehat{d}_t - \tilde{d}_t}_2 \le \sqrt{\frac{2\varepsilon}{\lambda \kappa} } + 8 \sqrt{\frac{B}{\kappa}} \varepsilon
\end{align}
The proof of \cref{thm:finite-sample-convergence} also establishes the following inequality.
\begin{align*}
    \norm{{d}^\star_t - \widehat{d}_t}_2 \le \sqrt{\frac{\varepsilon}{\lambda \kappa} } + 8 \sqrt{\frac{B}{\kappa}} \varepsilon
\end{align*}
The above two inequalities imply the following bound on the distance between $d^\star_t$ and $\tilde{d}_t$.
\begin{align*}
    \norm{d_t^\star - \tilde{d}_t}_2 \le 3\sqrt{\frac{\varepsilon}{\lambda \kappa} } + 16 \sqrt{\frac{B}{\kappa}} \varepsilon
\end{align*}
Now we can proceed very similarly to the proof of \cref{thm:finite-sample-convergence} and establish that the sequence $\{\tilde{d}_t\}_{t \ge 1}$ converges to $d_S$ if the regularization constant $\lambda$ is chosen slightly larger than required in \cref{thm:finite-sample-convergence}.
\end{proof}

\section{Missing Proofs from Section \texorpdfstring{\ref{sec:applications}}{} }

\subsection{Proof of \texorpdfstring{\Cref{lem:two-agent-mdp}}{} }
\begin{proof}
    Suppose $Q_2^\star$ (resp. $\tilde{Q}_2^\star$) be the optimal $Q$-function when the first agent deploys policy $\pi_1$ (resp. $\tilde{\pi}_1$). Then we have,
    \begin{align}\label{eq:Q-to-policy}
        \abs{\pi_2(a \mid s) - \tilde{\pi}_2(a \mid s)} = \abs{\frac{\exp(\beta Q_2^\star(s,a))}{\sum_b \exp(\beta Q_2^\star(s,b))} - \frac{\exp(\beta \tilde{Q}_2^\star(s,a))}{\sum_b \exp(\beta \tilde{Q}_2^\star(s,b))}} \le \sqrt{2}\beta \norm{Q_2^\star(s,\cdot ) - \tilde{Q}_2^\star(s,\cdot)}_2
    \end{align}
    The last inequality follows from the observation that for any $L$-Lipschitz function $f$ we have $\abs{f(x) - f(y)} \le L \norm{x-y}_2$, and for the function $f_j(x) = \frac{\exp(\beta x_j)}{\sum_i \exp(\beta x_i)}$ we have
    \begin{align*}
        \nabla_{x_k} f_j(x) = \left\{ \begin{array}{cc}
            \beta f_j(x)\left(1 - f_j(x) \right) &  \textrm{ if } k=j\\
            -\beta f_j(x) f_k(x) & \textrm{ if } k \neq j
        \end{array}\right.
    \end{align*}
    which implies $\norm{\nabla_x f_j(x)}_2 \le \sqrt{2} \beta$.

    Now we provide a bound on the norm $\norm{Q_2^\star(s,\cdot ) - \tilde{Q}_2^\star(s,\cdot)}_2$. Let $\overline{r}_2$ (resp. $\tilde{r}_2$) be the reward function of agent $2$ in response to policy $\pi_1$ (resp. $\tilde{\pi}_1$). 
    \begin{align*}
        \abs{\overline{r}_2(s,a) - \tilde{r}_2(s,a)} &\le \sum_{a_1} \abs{\pi_1(a_1 \mid s) - \tilde{\pi}_1(a_1 \mid s)} \abs{r_2(s, a_1, a_2)}\\
        &\le \norm{\pi_1(\cdot \mid s) - \tilde{\pi}_1(\cdot \mid s)}_1 \max_{a_1, a_2} \abs{r_2(s, a_1, a_2)}\\
        &\le \delta \cdot r_{\max}
    \end{align*}
    Similarly, let $\bar{P}_2$ (resp. $\tilde{P}_2$) be the probability transition function of agent $2$ in response to policy $\pi_1$ (resp. $\tilde{\pi}_1$). Then for any state $s$, and action $a_2$ we have,
    \begin{align*}
        \sum_{s'} \abs{\overline{P}_2(s' \mid s,a_2) - \tilde{P}_2(s' \mid s,a_2)} &\le \sum_{s'}\sum_{a_1} \abs{\pi_1(a_1 \mid s) - \tilde{\pi}_1(a_1 \mid s)} P(s' \mid s, a_1, a_2)\\
        &\le \norm{\pi_1(\cdot \mid s) - \tilde{\pi}_1(\cdot \mid s)}_1 \sum_{a_1} \sum_{s'} P(s' \mid s, a_1, a_2)\\
        &\le \delta \cdot A_1 
    \end{align*}

    Now consider the problem of computing the optimal policies of agent $2$ starting from the state, action pair $(s,a)$. Let $\bar{\pi}_2$ (resp. $\tilde{\pi}_2$) be the optimal policy with reward $\overline{r}_2$, and transition $\overline{P}_2$ (resp. reward $\tilde{r}_2$, and transition $\tilde{P}_2$). Moreover, let $\bar{d}_2$ (resp. $\bar{q}_2$) be the state, action occupancy measure of the policy $\bar{\pi}_2$ under probability transition $\overline{P}_2$ (resp. $\tilde{P}_2$). Similarly, let $\tilde{d}_2$ (resp. $\tilde{q}_2$) be the state, action occupancy measure of the policy $\tilde{\pi}_2$ under probability transition $\tilde{P}_2$ (resp. $\overline{P}_2$).
    
    Let $\bar{d}_2^\star$ (resp. $\tilde{d}_2^\star$) be the optimal state, action occupancy measure under reward function $\overline{r}_2$ (resp. $\tilde{r}_2$). Then we have, 
    \begin{align*}
        Q_2^\star(s,a) &= \sum_{s',b'} \overline{r}_2(s',b') \bar{d}_2 (s',b') \ge \sum_{s',b'} \overline{r}_2(s',b') \tilde{q}_2 (s',b') \\
        & = \sum_{s',b'} \overline{r}_2(s',b') \tilde{d}_2 (s',b') + \sum_{s',b'} \overline{r}_2(s',b') \left(\tilde{q}_2 (s',b') - \tilde{d}_2(s',b') \right)\\
        &\ge \sum_{s',b'} \tilde{r}_2(s',b') \tilde{d}_2(s',b') + \sum_{s',b'} \left(\overline{r}_2(s',b') - \tilde{r}_2(s',b') \right) \tilde{d}_2 (s',b') - r_{\max} \cdot \norm{\tilde{q}_2 - \tilde{d}_2}_1\\
        &\ge \tilde{Q}_2^\star(s,a) - \delta \cdot r_{\max} \sum_{s',b'} \tilde{d}_2(s',b') - r_{\max} \cdot \frac{\delta A_1 \gamma}{(1-\gamma)^2}\\
        &\ge \tilde{Q}_2^\star(s,a) - \frac{\delta \cdot r_{\max}}{1-\gamma} - r_{\max} \cdot \frac{\delta A_1 \gamma}{(1-\gamma)^2}
    \end{align*}
    The first inequality follows because the policy $\tilde{\pi}_2$ is sub-optimal under the reward $\bar{r}_2$ and transition $\overline{P}_2$. The third inequality uses \Cref{lem:transition-to-measure-bound} and $\tilde{Q}_2^\star(s,a) = \sum_{s',b'} \tilde{r}_2(s',b') \tilde{d}_2(s',b')$. The final inequality uses that $\sum_{s',a'} d(s',a') = 1/(1-\gamma)$ for any occupancy measure $d$. Similarly we can show that $\tilde{Q}_2^\star(s,a) \ge Q_2^\star(s,a) - \frac{\delta \cdot r_{\max}}{1-\gamma} - r_{\max} \cdot \frac{\delta A_1 \gamma}{(1-\gamma)^2}$. Therefore, 
    $$\norm{Q_2^\star(s,\cdot) - \tilde{Q}_2^\star(s,\cdot)}_2 \le \sqrt{A_2} r_{\max} \frac{2\delta A_1}{(1-\gamma)^2}.$$ 
    Therefore, using \Cref{eq:Q-to-policy} we obtain the following bound.
    $$
    \abs{\pi_2(a \mid s) - \tilde{\pi}_2(a \mid s)} \le 2\sqrt{2} r_{\max} \frac{A_1 \sqrt{A_2} \beta \delta}{(1-\gamma)^2}
    $$
    Now, let $r_1$ (resp. $\tilde{r}_1$) be the reward function of agent $1$ when agent $2$ adopts policy $\pi_2$ (resp. $\tilde{\pi}_2$).
    \begin{align*}
        \abs{{r}_1(s,a) - \tilde{r}_1(s,a)} &\le \sum_{a_2} \abs{\pi_2(a_2 \mid s) - \tilde{\pi}_2(a_2 \mid s)} \abs{r_1(s, a_1, a_2)}\\
        &\le \delta \cdot \frac{2\sqrt{2} \beta A_1 A_2^{3/2} r^2_{\max} }{(1-\gamma)^2}
    \end{align*}
    Similarly, let $P_1$ (resp. $\tilde{P}_1$) be the probability transition function of agent $1$ when agent $2$ adopts policy $\pi_2$ (resp. $\tilde{\pi}_2$).
    \begin{align*}
        \abs{P_1(s' \mid s,a) - \tilde{P}_1(s' \mid s,a)} &\le \sum_{a_2} \abs{\pi_2(a_2 \mid s) - \tilde{\pi}_2(a_2 \mid s)} P(s, a_1, a_2)\\
        &\le \delta \cdot \frac{2\sqrt{2} \beta A_1 A_2^{3/2} r_{\max} }{(1-\gamma)^2}
    \end{align*}
\end{proof}

\begin{lemma}\label{lem:transition-to-measure-bound}
    Suppose $\norm{P(\cdot \mid s,a) - \tilde{P}(\cdot \mid s,a)}_1 \le \beta$ for any state,action pair $(s,a)$. Then for any policy $\pi$ and any starting state distribution $\rho$ we have $\norm{d^\pi_\rho - \tilde{d}^\pi_\rho}_1 \le \frac{\beta \gamma}{(1-\gamma)^2}$. 
\end{lemma}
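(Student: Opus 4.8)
The plan is to reduce the state-action occupancy measures to state occupancy measures and then control the latter by a step-by-step coupling of the two Markov chains. Since the same policy $\pi$ is deployed in both MDPs, the discounted (unnormalized) state-action occupancy factors as $d^\pi_\rho(s,a) = \pi(a\mid s)\,\mu(s)$ and $\tilde{d}^\pi_\rho(s,a) = \pi(a\mid s)\,\tilde{\mu}(s)$, where $\mu,\tilde{\mu}$ are the state occupancy measures under $P$ and $\tilde{P}$ respectively. Summing the absolute differences over $a$ and using $\sum_a \pi(a\mid s)=1$ gives $\norm{d^\pi_\rho - \tilde{d}^\pi_\rho}_1 = \norm{\mu - \tilde{\mu}}_1$, so it suffices to bound the difference of the state occupancies.

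First I would introduce the policy-induced state transition matrices $P_\pi(s'\mid s) = \sum_a \pi(a\mid s)\,P(s'\mid s,a)$, and $\tilde{P}_\pi$ analogously, and write $\mu = \sum_{k\ge 0}\gamma^k \mu_k$, where $\mu_k$ (respectively $\tilde{\mu}_k$) is the time-$k$ state distribution, satisfying $\mu_0 = \tilde{\mu}_0 = \rho$ and $\mu_{k+1} = \mu_k P_\pi$. A short calculation using the triangle inequality and $\sum_a \pi(a\mid s)\norm{P(\cdot\mid s,a)-\tilde{P}(\cdot\mid s,a)}_1 \le \beta$ shows that $\norm{P_\pi(\cdot\mid s)-\tilde{P}_\pi(\cdot\mid s)}_1 \le \beta$ for every $s$.

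The key step is a telescoping recursion. Writing $\mu_{k+1}-\tilde{\mu}_{k+1} = (\mu_k-\tilde{\mu}_k)P_\pi + \tilde{\mu}_k(P_\pi - \tilde{P}_\pi)$ and using that a row-stochastic matrix is nonexpansive in $\norm{\cdot}_1$ on the left, together with $\norm{\tilde{\mu}_k}_1 = 1$, I obtain $\norm{\mu_{k+1}-\tilde{\mu}_{k+1}}_1 \le \norm{\mu_k - \tilde{\mu}_k}_1 + \beta$. Since $\mu_0=\tilde{\mu}_0$, induction gives $\norm{\mu_k - \tilde{\mu}_k}_1 \le k\beta$. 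Summing against the discount weights and using $\sum_{k\ge 0}k\gamma^k = \gamma/(1-\gamma)^2$ yields $\norm{\mu-\tilde{\mu}}_1 \le \beta\sum_{k\ge 0}k\gamma^k = \beta\gamma/(1-\gamma)^2$, which is exactly the claimed bound.

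This is essentially routine perturbation bookkeeping; the only points requiring care are the nonexpansiveness of row-stochastic transitions in the $\norm{\cdot}_1$ norm (so the accumulated error grows only linearly rather than blowing up) and keeping each $\mu_k$ normalized to total mass $1$ so that the per-step injected error is at most $\beta$. An equivalent route is the resolvent form $\mu = \rho(I-\gamma P_\pi)^{-1}$ combined with the identity $A^{-1}-B^{-1}=A^{-1}(B-A)B^{-1}$ and the bound $\norm{(I-\gamma\tilde{P}_\pi)^{-1}}_{1\to 1}\le (1-\gamma)^{-1}$ acting on row vectors; I would use whichever formulation is cleaner, and I do not expect a genuine obstacle in either.
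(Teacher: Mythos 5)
Your proposal is correct and follows essentially the same argument as the paper: both track the time-$k$ state distributions under the two kernels, split the one-step difference into a term controlled by the kernel perturbation (contributing $\beta$ per step) and a term controlled by the previous distributions' difference (nonexpansive under a stochastic transition), conclude $\norm{\mu_k - \tilde{\mu}_k}_1 \le k\beta$ by induction, and sum against $\gamma^k$ to get $\beta\gamma/(1-\gamma)^2$. Your reduction to state occupancies and the matrix-form recursion are only notational streamlining of the paper's coordinate-wise computation.
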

\begin{proof}
    Let $P^\pi_h$ (resp. $\tilde{P}^\pi$) be the state distribution at time-step $h$ resulting from the starting distribution $\rho$ under the probability transition function $P$ (resp. $\tilde{P}$). 
    \begin{align*}
        P^\pi_h(s') - \tilde{P}^\pi_h(s') &= \sum_{s,a} \left(P^\pi_{h-1}(s) P(s' \mid s, a) - \tilde{P}^\pi_{h-1}(s) \tilde{P}(s' \mid s,a) \right) \pi(a \mid s)\\
        &= \sum_{s} P^\pi_{h-1}(s) \sum_a \left(P(s' \mid s, a)  -  \tilde{P}(s' \mid s,a)\right) \pi(a \mid s)\\
        &+ \sum_s \left(P^\pi_{h-1}(s) - \tilde{P}^\pi_{h-1}(s) \right) \sum_a \tilde{P}(s' \mid s,a) \pi(a \mid s)
    \end{align*}
    Taking absolute values and summing over all the states we obtain the following inequality.
    \begin{align*}
        \sum_{s'} \abs{P^\pi_h(s') - \tilde{P}^\pi_h(s') } &\le \sum_{s,a}\norm{P(\cdot \mid s,a) - \tilde{P}(\cdot \mid s,a)}_1 P^\pi_{h-1}(s) \pi(a \mid s)\\
        &+ \sum_{s'} \sum_s \abs{P^\pi_{h-1}(s) - \tilde{P}^\pi_{h-1}(s)} \sum_a \tilde{P}(s' \mid s,a) \pi(a \mid s)\\
        &\le \beta \cdot \sum_{s,a}P^\pi_{h-1}(s) \pi(a \mid s) + \sum_s \abs{P^\pi_{h-1}(s) - \tilde{P}^\pi_{h-1}(s)} \sum_a  \pi(a \mid s)\\
        &\le \beta + \sum_s \abs{P^\pi_{h-1}(s) - \tilde{P}^\pi_{h-1}(s)}
    \end{align*}
    Since $P^pi_0(s) = \rho(s)$ we have $\norm{P^\pi_0 - \tilde{P}^\pi_0}_1 = 0$, and $\norm{P^\pi_h - \tilde{P}^\pi_h}_1 = \beta\cdot h$. 
    Now using the definition of state, action occupancy measure we get,
    \begin{align*}
        d^\pi(s,b) - \tilde{d}^{\pi}(s,b) = \sum_{h} \gamma^h \left( P^\pi_h(s) - \tilde{P}^\pi_h(s) \right) \pi(b \mid s) \le \sum_{h} \gamma^h \abs{P^\pi_h(s) - \tilde{P}^\pi_h(s) } \pi(b \mid s).
    \end{align*}
    Therefore,
    \begin{align*}
        \norm{d^\pi - \tilde{d}^\pi}_1 \le \sum_h \sum_{s'}\gamma^h \abs{P^\pi_h(s) - \tilde{P}^\pi_h(s) } \sum_b \pi(b \mid s) =\sum_h \beta \gamma^h h = \frac{\beta \gamma}{(1-\gamma)^2}
    \end{align*}
\end{proof}
\subsection{Proof of \texorpdfstring{\Cref{lem:multi-agent-mdp} }{} }

\newcommand{\br}{\bar{r}}
\newcommand{\tr}{\tilde{r}}

\begin{proof}
    Suppose $Q_f^\star$ (resp. $\tilde{Q}_f^\star$) be the optimal state,action welfare-function when the first agent deploys policy $\pi_1$ (resp. $\tilde{\pi}_1$). Similar to the proof of \Cref{lem:two-agent-mdp} we can establish the following  inequality.
    \begin{align*}
        \abs{\pi_f(\bm{a} \mid s) - \tilde{\pi}_f(\bm{a} \mid s)} \le \sqrt{2}\beta \norm{Q_f^\star(s,\cdot ) - \tilde{Q}_f^\star(s,\cdot)}_2
    \end{align*}

Let us also write $\bar{r}_j(s, \bm{a})$ (resp. $\tilde{r}_j(s,\bm{a})$ be the reward function of agent $j \in \set{2,\ldots,m+1}$ when the first agent plays policy $\pi_1$ (resp. $\tilde{\pi}_1$). Then we have,
\begin{align*}
    \abs{\br_j(s,\bm{a}) - \tr_j(s,\bm{a})} &\le \sum_{a_1} \abs{\pi_1(a_1\mid s) - \tilde{\pi}_1(a_1 \mid s)} \abs{r_j(s,a_1,\bm{a}}\\
    &\le \norm{\pi_1(\cdot \mid s) - \tilde{\pi}_1(\cdot \mid s)} r_{\max}\\
    &\le \delta \cdot r_{\max}.
\end{align*}

Let us also write $\bar{P}$ (resp. $\tilde{P}$) to be the probability transition function when the first agent plays policy $\pi_1$ (resp. $\tilde{\pi}_1$). Then we have,
 \begin{align*}
        \sum_{s'} \abs{\overline{P}_2(s' \mid s,\bm{a}) - \tilde{P}_2(s' \mid s,\bm{a})} &\le \sum_{s'}\sum_{a_1} \abs{\pi_1(a_1 \mid s) - \tilde{\pi}_1(a_1 \mid s)} P(s' \mid s, a_1,\bm{a})\\
        &\le \norm{\pi_1(\cdot \mid s) - \tilde{\pi}_1(\cdot \mid s)}_1 \sum_{a_1} \sum_{s'} P(s' \mid s, a_1, \bm{a})\\
        &\le \delta \cdot A_1.
    \end{align*}
    
    Now consider the problem of computing the optimal CCE of agents $\set{2,\ldots,m+1}$ starting from the state, action pair $(s,\bm{a})$. Let $\bar{\pi}_f$ (resp. $\tilde{\pi}_f$) be the optimal policy with reward $\overline{r}_j$, and transition $\overline{P}$ (resp. reward $\tilde{r}_j$, and transition $\tilde{P}_j$). There are two cases to consider. First, the policy $\tilde{\pi}_f$ is a CCE under reward $\bar{r}_j$ and transition $\bar{P}$. In that case, we can proceed similar to the proof \Cref{lem:two-agent-mdp} and establish the following inequality.
    \begin{align}\label{eq:prelim-bound}
        Q^\star_f(s,\bm{a}) \ge \tilde{Q}^\star_f(s,\bm{a}) - \frac{\delta \cdot r_{\max}}{1-\gamma} - \frac{\delta A_1 \gamma \cdot r_{\max}}{(1-\gamma)^2}
    \end{align}
    The second case occurs when $\tilde{\pi}_f$ is no longer a CCE. In that case, let $P \subseteq \set{2,\ldots,m+1}$ be the set of agents that can improve under $\tilde{\pi}_f$ i.e. for each $j \in P$ there exists a strategy $\pi_j' : S \rightarrow \Delta(A_j)$ so that $Q^{\pi_j', \tilde{\pi}_{f,-j}}_j(s,\bm{a}) > Q^{\tilde{\pi}_{f}}_j(s,\bm{a}) $. However, $\tilde{\pi}_f$ is a CCE under reward $\tilde{r}_j$ and transition $\tilde{P}$, which implies $\tilde{Q}_j^{\tilde{\pi}_f}(s,\bm{a}) \ge \tilde{Q}^{\pi_j', \tilde{\pi}_{f,-j}}_j(s,\bm{a})$. Now using \Cref{lem:transition-to-measure-bound} we get   and $Q_j^{\tilde{\pi}_f}(s,\bm{a}) \ge \tilde{Q}_j^{\tilde{\pi}_f}(s,\bm{a}) - r_{\max} \cdot \frac{\delta A_1 \gamma}{(1-\gamma)^2}$. Therefore, we obtain the following inequality.
    \begin{align*}
    Q_j^{\tilde{\pi}_f}(s,\bm{a}) + 2 r_{\max} \cdot \frac{\delta A_1 \gamma}{(1-\gamma)^2} &\ge \tilde{Q}_j^{\tilde{\pi}_f}(s,\bm{a}) + r_{\max} \cdot \frac{\delta A_1 \gamma}{(1-\gamma)^2} \\
    &\ge \tilde{Q}^{\pi_j', \tilde{\pi}_{f,-j}}_j(s,\bm{a})+ r_{\max} \cdot \frac{\delta A_1 \gamma}{(1-\gamma)^2} \\
    &\ge Q^{\pi_j', \tilde{\pi}_{f,-j}}_j(s,\bm{a}) > Q^{\tilde{\pi}_{f}}_j(s,\bm{a})
    \end{align*}
    This inequality shows that $\tilde{\pi}_f$ is $\varepsilon$-CCE under reward $\bar{r}_j$ and transition $\bar{P}$ for $\varepsilon = 2 r_{\max} \cdot \frac{\delta A_1 \gamma}{(1-\gamma)^2}$. Since the optimal welfare over the set of $\varepsilon$-CCE is at most $\varepsilon m$ plus the optimal welfare over the set of exact CCE, we have the following inequality.
    \begin{align*}
        Q^\star_f(s,\bm{a}) &\ge Q^{\tilde{\pi}_f}(s,\bm{a}) -  2 r_{\max} \cdot \frac{m \delta A_1 \gamma}{(1-\gamma)^2} \\
        &= \sum_j Q^{\tilde{\pi}_f}_j(s,\bm{a}) -  2 r_{\max} \cdot \frac{m \delta A_1 \gamma}{(1-\gamma)^2}\\
        &\ge \sum_j \tilde{Q}^{\tilde{\pi}_f}_j(s,\bm{a}) -  3 r_{\max} \cdot \frac{m \delta A_1 \gamma}{(1-\gamma)^2}\\
        &= \tilde{Q}^\star_f(s,\bm{a}) - 3 r_{\max} \cdot \frac{m \delta A_1 \gamma}{(1-\gamma)^2}
    \end{align*}
    The above inequality and \cref{eq:prelim-bound} imply $Q^\star_f(s,\bm{a}) \ge \tilde{Q}^\star_f(s,\bm{a}) - 3 r_{\max} \cdot \frac{m \delta A_1}{(1-\gamma)^2}$. Similarly we can show $\tilde{Q}^\star_f(s,\bm{a}) \ge {Q}^\star_f(s,\bm{a}) - 3 r_{\max} \cdot \frac{m \delta A_1}{(1-\gamma)^2}$, and we get the following bound.
    $$
    \abs{\pi_f(\bm{a} \mid s) - \tilde{\pi}_f(\bm{a} \mid s)} \le \sqrt{2} \beta \norm{Q^\star_f(s,\cdot) - \tilde{Q}^\star_f(s,\cdot) }_2 \le \beta \delta \cdot \frac{3 \sqrt{2} m A^{m/2+1} \cdot r_{\max}}{(1-\gamma)^2}
    $$
    Now, let $r_1$ (resp. $\tilde{r}_1$) be the reward function of agent $1$ when the follower agents adopt policy $\pi_f$ (resp. $\tilde{\pi}_f$).
    \begin{align*}
        \abs{{r}_1(s,a_1) - \tilde{r}_1(s,a_1)} &\le \sum_{\bm{a}} \abs{\pi_f(\bm{a} \mid s) - \tilde{\pi}_f(\bm{a} \mid s)} \abs{r_1(s, a_1, \bm{a})}\\
        &\le A^m \cdot  \beta \delta \cdot \frac{3 \sqrt{2} m A^{m/2+1} \cdot r_{\max}}{(1-\gamma)^2}\cdot r_{\max}\\
        &\le \delta \cdot \frac{3 \sqrt{2} \beta m A^{3m/2+1}  r^2_{\max}}{(1-\gamma)^2}
    \end{align*}
    Similarly, let $P_1$ (resp. $\tilde{P}_1$) be the probability transition function of agent $1$ when the follower agents adopt policy $\pi_f$ (resp. $\tilde{\pi}_f$).
    \begin{align*}
        \abs{P_1(s' \mid s,a_1) - \tilde{P}_1(s' \mid s,a_1)} &\le \sum_{\bm{a}} \abs{\pi_f(\bm{a} \mid s) - \tilde{\pi}_f(\bm{a} \mid s)} P(s' \mid s, a_1, \bm{a})\\
        &\le A^m \cdot  \beta \delta \cdot \frac{3 \sqrt{2} m A^{m/2+1} \cdot r_{\max}}{(1-\gamma)^2}\cdot 1\\
        &\le \delta \cdot \frac{3 \sqrt{2} \beta m A^{3m/2+1}  r_{\max}}{(1-\gamma)^2}
    \end{align*}
\end{proof}

\end{document}